\documentclass{article}
\usepackage{scalefnt}
\usepackage{lipsum}
\usepackage{amsfonts}
\usepackage{graphicx}
\usepackage{epstopdf}
\usepackage{algorithmic}
\ifpdf
  \DeclareGraphicsExtensions{.eps,.pdf,.png,.jpg}
\else
  \DeclareGraphicsExtensions{.eps}
\fi

\usepackage{enumitem}
\setlist[enumerate]{leftmargin=.5in}
\setlist[itemize]{leftmargin=.5in}

\usepackage{amsopn}

\usepackage[
    backend=biber,
    style=authoryear-comp,
    maxcitenames=1,
    uniquename=false,
    sortcites=ynt,
    uniquelist=false,
    natbib=true,
    hyperref=true,
    maxbibnames=10,
    giveninits=true,
    uniquename=init,
    url=false,
    isbn=false,
    doi=false
 ]{biblatex}
\addbibresource{references.bib} \DeclareNameAlias{sortname}{last-first}
\DeclareFieldFormat[article]{title}{``#1''} \DeclareFieldFormat[article]{journaltitle}{\mkbibemph{#1},} \DeclareFieldFormat[article]{volume}{\textbf{#1}} \DeclareFieldFormat[article]{number}{\mkbibparens{#1}}  \AtEveryBibitem{\clearfield{doi} \clearfield{issn} \clearfield{url} \clearfield{editor} }
\DefineBibliographyStrings{english}{ editor  = {}, editors = {}, }

\DeclareFieldFormat{editortype}{\ifboolexpr{ test {\ifnumequal{\value{listcount}}{1}} and test {\ifnumless{\value{liststop}}{2}} }
  {} {\mkbibparens{#1}} }

\usepackage{makecell}
\usepackage{booktabs}
\usepackage{amsmath}
\usepackage{amssymb}
\usepackage{mathtools}
\usepackage{bbm}
\usepackage{pgfplots}
\usepackage{subcaption}
\usepackage{hyperref}
\usepackage{cleveref}

\def\cl{L_W}
\def\cmin{\mathrm{d}_{\mathrm{min}}}
\def\cmax{\|W\|_\infty}
\def\Lipf{L_f}
\def\Lipfl{L_{f^{(l)}}}

\def\d{\mathrm{dist}}

\def\A{\mathbf{A}}
\def\X{\mathbf{X}}

\DeclareMathOperator{\Var}{Var}
\renewcommand{\P}{\mu}
 \renewcommand{\L}{\mathcal{L}}

\def\L2L{L^2(\mathcal{J}) \rightarrow L^2(\mathcal{J})}

\def\W{\tilde{W}}
\def\E{\mathbb{E}} 
\def\dd{\mathrm{d}}

 \usepackage{PRIMEarxiv}

\usepackage{amsthm}
\newtheorem{definition}{Definition}[section]
\newtheorem{proposition}{Proposition}[section]
\newtheorem{theorem}{Theorem}[section]

\newtheorem{lemma}{Lemma}[section]

\newtheorem{corollary}{Corollary}[section]

\newtheorem{remark}{Remark}[section]

\newcommand*\widebar[1]{\@ifnextchar^{{\wide@bar{#1}{0}}}{\wide@bar{#1}{1}}}

\pagestyle{fancy}

\fancyhf{}

\fancyhead[LE]{S. Maskey and G. Kutyniok and R. Levie} \fancyhead[LO]{Generalization Bounds for Message Passing Networks on Mixture of Graphons} 

\fancyfoot[C]{\thepage}

\title{Generalization Bounds for Message Passing Networks on Mixture of Graphons
}

\author{
  Sohir Maskey \\
  Ludwig-Maximilians-Universität München \\
  Munich, 
  Germany\\
  \texttt{maskey@math.lmu.de} \\
\And
  Gitta Kutyniok \\
  Ludwig-Maximilians-Universität München \\
  Munich Center for Machine Learning (MCML) \\
  Munich, 
  Germany\\
  \texttt{kutyniok@math.lmu.de} \\
     \And
  Ron Levie \\
  Technion - Israel Institute of Technology \\
   Haifa, 
  Israel\\
  \texttt{levieron@technion.ac.il } \\
}

\begin{document}
\maketitle

\begin{abstract}
We study the generalization capabilities of Message Passing Neural Networks (MPNNs), a prevalent class of Graph Neural Networks (GNN). We derive generalization bounds specifically for MPNNs with normalized sum aggregation and mean aggregation. Our analysis is based on a data generation model incorporating a finite set of template graphons. Each graph within this framework is generated by sampling from one of the graphons with a certain degree of perturbation.
In particular, we extend previous MPNN generalization results to a more realistic setting, which includes the following modifications: 1)  we analyze simple random graphs with Bernoulli-distributed edges instead of weighted graphs; 2) we sample both graphs and graph signals from perturbed graphons instead of clean graphons; and 3) we analyze sparse graphs instead of dense graphs.
In this more realistic and challenging scenario, we provide a generalization bound that decreases as the average number of nodes in the graphs increases. Our results imply that MPNNs with higher complexity than the size of the training set can still generalize effectively, as long as the graphs are sufficiently large.
\end{abstract}

\keywords{graph neural networks\and message passing neural networks\and generalization bound\and random graphs.}

\section{Introduction}
Graph Neural Networks (GNNs) \citep{scarselliGNNs, Bronstein_2017, hamilton2017inductive}  have become a powerful tool for analyzing graph-structured data. They have been widely adopted in recent years as a general-purpose tool across various fields of applied science and many industries.
 In particular, the class of GNNs called \emph{message passing neural networks (MPNNs)} \citep{gilmermpnn} has achieved considerable success  in various areas, such as protein and drug design \citep{STOKES2020688, alphafold}, molecular docking \citep{corso2023diffdock}, material sciences \citep{merchant2023scaling}, guiding human intuition \citep{davies2021advancing}  and many more. In most of these problems, the input to the MPNN is a graph with features on the nodes, and the MPNN either returns a single feature for the entire graph or an output feature for each node.  
 
 MPNNs are deep architectures that perform a spatial convolution-like operation to update the node features of the graph at each layers. A single layer of a MPNN updates the node features  of the input graph by first computing messages along edges, using learnable functions. In a second step, at each layer, every node aggregates all incoming messages in a permutation invariant manner  and updates its embedding accordingly. These node-level operations are often followed by a global pooling layer, after the last message passing layer, and a standard multi-layer-perceptron (MLP) to get a single feature output for the whole graph. 

Due to the significant practical success of MPNNs, there is a growing interest in understanding their theoretical properties. Researchers have delved into various aspects of MPNNs, including expressivity, assessed notably through the $1$-WL test \citep{xu2018how, Morris_Ritzert_Fey_Hamilton_Lenssen_Rattan_Grohe_2019}, oversmoothing \citep{Li_Oversmoothing1, Li_Oversmoothing2, nt_gnnsarelowpassfilters, keriven_oversmoothing}, and convergence properties \citep{levie2021transferability, keriven2020convergence, ruiz_transferability, maskey2023transferability}. While there is some understanding of why MPNNs generalize well to unseen graphs in supervised learning tasks, it remains somewhat limited. Additional details can be found in the subsequent discussion.

\subsection{Uniform Generalization Bounds}

The ability of MPNNs to generalize in supervised learning tasks can be described using a general approach from statistical learning called \emph{uniform convergence/generalization bounds}. Given a data--label pair $(\mathbf{x}, \mathbf{y})$ that is jointly drawn from an unkown distribution $\mu$, we consider a loss function $\mathcal{L}:\mathbb{R}^{d} \to \mathbb{R}_{+}$  that measures the discrepancy between the true label $\mathbf{y}$ and the output of the MPNN $\Theta$ on $\mathbf{x}$, via $\mathcal{L}(\Theta(\mathbf{x}),\mathbf{y})$. In statistical learning, an important quantity is the \emph{expected loss},  also called the \emph{statistical risk}, defined as
\[
R_{\mathrm{exp}} = \mathbb{E}_{(x,y) \sim \mu} \left[ \mathcal{L}\left( \Theta(x), y \right) \right].
\]
In practice, we only have access to a training set $\mathcal{T}= \{(\mathbf{x}^1, \mathbf{y}^1), \ldots, (\mathbf{x}^m, \mathbf{y}^m)\}$ that was sampled randomly and independently from the data distribution $\mu$.  To approximate the statistical risk, we define the \emph{empirical risk} as
\[
R_{\mathrm{emp}} = \frac{1}{m}\sum_{i=1}^m \mathcal{L}\left( \Theta(x_i), y_i \right).
\]
While the expected loss is the quantity of interest, in practice, we usually perform \emph{empirical risk minimization} (ERM), which involves minimizing the empirical risk over some hypothesis class $\mathcal{H}$ of MPNNs. This approximation is justified if the \emph{generalization error} $\mathrm{GE}$ (also called the \emph{representativeness}), defined as
\begin{equation}
\label{eq:GE}
   \mathrm{GE} = \sup_{\Theta \in \mathcal{H}} \left|R_{\mathrm{exp}}(\Theta) - R_{\mathrm{emp}}(\Theta) \right|, 
\end{equation}
decays to zero with increasing training set size $m$. This decay should hold in high probability with respect to the random choice of the training set.
 Note that in \Cref{eq:GE}, we cannot fix $\Theta$ to be the trained network $\Theta_{\mathcal{T}}$, and treat the empirical risk as a Monte Carlo approximation of the statistical risk. Indeed, the trained network  $\Theta_{\mathcal{T}}$ depends on the training set $\mathcal{T}$, and hence varies over the probability space of all training sets.

Most existing works that bound the generalization error for MPNNs are only applicable to specific types of MPNNs, and many are vacuous (generalization error $ >> 1$) even in simple scenarios \citep{scarselli2018vapnik, verma2019stability, pmlr-v119-garg20c, liao2021a, levie2023graphonsignal}. An exception is the work of the authors \citep{maskey2022generalization}, which shows non-vacuous generalization bounds for general  MPNNs under the assumption that graphs are generated by random graph models, and graphs that are generated by the same random graph model belong to the same class. However, that work is also limited. For example, graphs are assumed to be sampled without error from the random graph model, and the analysis is limited to dense weighted graphs.  The goal of the current paper is to extend the results of \citep{maskey2022generalization} to a more realistic setting, where the sampled graphs are unweighted, sparse, and noisy.

\subsection{Main Contribution}

The objective of this paper is to derive uniform generalization bounds for general MPNNs which are more realistic than previously proposed bounds, without any specific assumption about the architecture of the MPNN (e.g., number of parameters and types of message functions). To derive such bounds, we consider a generative model of sparse graphs with node features. 
We suppose that graphs with node features are sampled from a collection of \emph{graphons} with \emph{signals}. 
 We choose graphons as they are universal models that can approximate any graph, both in the sense of modelling exchangeable sequences (the  Aldous-Hoover representation theorem \citep{ALDOUS81,ALDOUS85,Hoover79,Hoover82}) and in the sense of graph limit theory \citep{Borgs2007graph,lovasz}. 
 For example, graphons   encompass traditional random graph models, such as Erdös-Rényi graphs \citep{erdos59a}, stochastic block models \citep{holland1983stochastic}, and random geometric graph models \citep{RGGPenrose}.

Our generative model is defined as follows. We consider a finite set of graphons and continuous signals (called the \emph{template graphon-signals}), and assume that the domains of the graphons and the continuous signals are compact metric spaces with finite Minkowski dimensions.  Graphs and their node features are randomly sampled via the following steps. First, one of the graphon-signals is randomly sampled. Then, the graphon-signal is perturbed randomly and attenuated to introduce sparsity. Next, points are randomly and independently sampled from the metric space. On these points,  the continuous signal is evaluated to obtain a discrete signal. Finally, by connecting pairs of points with edges--using probabilities from the sparsified graphon--a simple  random graph is obtained.  In this manner, noisy, random, sparse and simple graphs with noisy signals are sampled, which we call \emph{graph-signals}.  We consider a classification setting where graph-signals that are sampled from the same template graphon-signal belong to the same class. Based on this data generation model, we derive non-asymptotic generalization bounds for supervised graph classification tasks. For further details, refer to  \Cref{subsec:assumptions} and \Cref{Graph Classification}.

We support our theoretical results with numerical experiments in which we compare our generalization bound, as presented in \Cref{thm:main_gen_bound_deformed_graphon}, with existing bounds over several graph-signal classification datasets. 
The numerical results illustrate that our bounds are significantly tighter, by several orders of magnitude.

As stated above, the results in this paper expand upon and strengthen the preliminary results presented in \citep{maskey2022generalization}, by analyzing the more realistic setting of sparse simple random noisy graph-signals instead of dense weighted clean graph-signals. These extended results requires non-trivial extensions to the proof techniques.

\subsection{Related Work}
In this subsection we briefly survey different approaches for studying the convergence and generalization rates of GNNs. 

\paragraph{Convergence and Transferability of GNNs} 
The generalization capabilities of graph neural networks (GNNs) are closely linked to their convergence, as demonstrated in \citet{maskey2022generalization}. The concept of convergence in GNNs was first studied in \citet{levie2021transferability}, where the authors model graphs as samples from a limit object. They showed that as the number of nodes in the sampled graphs grows, the output of a GNN applied to these graphs converges to the output of the same GNN applied to the limit object. 
A result of GNN convergence is GNN \emph{transferability}, which refers to the ability to transfer a fixed GNN between different graphs that are sampled from the same limit object.
Many other studies have shown that spectral-based GNNs are linearly stable with respect to perturbations of the input graphs \citep{levie2021transferability, Gama_2020, kenlay2021stability}. Additionally, some works have shown that spectral-based GNNs are transferable when the input graphs approximate the same limit graphon, as seen in \citep{keriven2020convergence,https://doi.org/10.48550/arxiv.2112.04629, 9356126, maskey2022generalization, maskey2023transferability, cordonnier2023convergence}.  Following this analysis, \citet{cervino2023learning} demonstrated that gradients of spectral-based GCNNs are transferable under graphs approximating the same graphon. \citet{cai2022convergenceIGN} extended convergence results for spectral-based GNNs to invariant graph networks \citep{maron2018invariant}. Finally, \citet{le2024limits} presented transferability results for graphops, a generalization of graphons that accounts for sparse graphs.

\paragraph{Generalization bounds of GNNs}  \citet{scarselli2018vapnik} derived generalization bounds for implicitly defined GNNs by computing their VC-dimension. The work of \citet{du_graphNTK} analyzed the generalization capabilities of GNNs in the infinite-width limit. Additionally, \citet{pmlr-v119-garg20c} and \citet{liao2021a} derived data-dependent generalization bounds for specific MPNNs with sum aggregation using Rademacher complexity and PAC-Bayes approaches. 
\citet{morris2023wl} showed a connection between the number of graphs distinguishable by the 1-WL test and GNNs' VC dimension. Additionally, \citet{franks2024weisfeiler} derived conditions under which increasing the expressivity of GNNs beyond 1-WL leads to a reduction in their VC dimension.
The work of \citet{levie2023graphonsignal} introduced the concept of graphon-signal cut distance and demonstrated that any MPNN exhibits Lipschitz properties concerning this distance metric. Consequently,  \citet{levie2023graphonsignal}  established generalization bounds for MPNNs in the context of arbitrary graph-signal distributions. Notably, due to the broad generality of the data distribution,   the generalization bound in \citep{levie2023graphonsignal}  exhibits the slow convergence rate $O(1/\log\log(\sqrt{m}))$, where $m$ denotes the number of graphs in the training set. This behavior is asymptotically considerably slower than the generalization bound proposed in this paper, which follows a $m^{-1/2}$ behavior, under a certain prior on the data distribution.

\section{MPNNs and their Generalization on Mixtures of Graphons}
We denote simple or weighted graphs by $G=(V,E)$, where $V=[N]:=\{1, \ldots, N\}$ is the node set, and $E$ denotes the set of edges. The adjacency matrix of $G$ is denoted by $\mathbf{A} \in \mathbb{R}^{N \times N}$. If $G$ is simple, $\mathbf{A}$ has entries $a_{i,j} = 1$ if $(i,j) \in E$, and $0$ otherwise, for every $i,j\in V$. Weighted graphs have general edge weights in $[0,1]$. For $i \in V$, we define the neighbourhood $\mathcal{N}(i)$ of node $i$ by $\mathcal{N}(i)=\{ j \in V \, | \, a_{i,j} > 0 \}$.
Given a graph $G$ with $N$ nodes and adjacency matrix   $\mathbf{A} = (a_{i,j})_{i,j=1}^N$, we define the \emph{degree} of each node $i\in[N]$ to be
\[\mathrm{d}_i = \sum_{j=1}^N a_{i,j}.\]
For simple graphs, we have $\mathrm{d}_i = |\mathcal{N}(i)|$.

We study graphs with a feature  $\mathbf{f}_i\in\mathbb{R}^F$ at each node $i\in V$, where $F\in\mathbb{N}$ is called the feature dimension. We call the vector $\mathbf{f}= \{\mathbf{f}_1, \ldots, \mathbf{f}_N\} \in \mathbb{R}^{N \times F}$ the signal. We call $\mathcal{G}_N := \{0,1\}^{N \times N}$ the set of directed graphs with size $N \in \mathbb{N}$. We further define $\mathcal{S}^{N, F} := \mathcal{G}_N \times \mathbb{R}^{N\times F}$ as the space of all graph-signals with $N$ nodes and signals with feature dimension $F$. Finally, we define the space of all graph-signals as $\mathcal{S}^{F} := \bigcup_{N \in \mathbb{N}} \mathcal{S}^{N, F}$.

Given metric spaces $(\mathcal{X}, d_\mathcal{X})$ and $(\mathcal{Y}, d_\mathcal{Y})$, a function $g: \mathcal{X} \to \mathcal{Y}$ is called Lipschitz continuous if there exists a constant  $L_g\geq 0$ such that for every $x,x' \in \mathcal{X}$, we have
\[
d_\mathcal{Y}( g(x),g(x') ) \leq L_g d_\mathcal{X}(x,x').
\]
If the spaces are subsets of $\mathbb{R}^d$ for some $d\in\mathbb{N}$, 
we always endow them with the $L^{\infty}$-metric.
 The $\varepsilon$-covering number $\mathcal{C}(\chi, d_\chi; \varepsilon)$ of the metric space $(\chi, d_\chi)$ is defined as the minimum number of balls with radius $\varepsilon$ necessary to cover $\chi$, if there exists such a finite number.

For a metric-space signal $f: \chi \rightarrow \mathbb{R}^F$ and sample points $\X=(X_1, \ldots, X_N) \in \chi^N$, we define the sampling operator $S^\X$ by
\[
S^\X f \coloneqq \big(f(X_i) \big)_{i=1}^N \in \mathbb{R}^{N \times F}. 
\]
Given  a graph  $\mathbf{f} \in \mathbb{R}^{N \times F}$, we define its norm $\|\mathbf{f}\|_{\infty;\infty}$ by
\begin{equation*}
    \|\mathbf{f}\|_{\infty;\infty} \coloneqq \max_{i=1, \ldots, N} \max_{j=1, \ldots, F}  |\mathbf{f}_{i,j}|.
\end{equation*}
Finally, we define the distance $\d(f, \mathbf{f} )$  between $\mathbf{f}$ and $f$  as
\begin{equation}
\label{eq:main distGraphMetric}
    \d(f, \mathbf{f} ) \coloneqq  \|\mathbf{f} -  (S^\X f) \|_{\infty;\infty}.
\end{equation}

\subsection{Random Graph Models}
\label{subsec: RGM}
In this subsection, we define generative models of graphs, called \emph{random graph-signal models (RGSMs)}. RGSMs are based on a choice of a domain from which nodes are sampled. This domain is taken to be a \emph{metric-probability space} $(\chi, d, \mu)$, where  $\chi$ is a set, $d$ is a metric and $\mu$ is a Borel probability measure.  The nodes of random graphs are modeled as random independent samples from the metric probability space $\chi$. To model the connectivity of random graphs, we consider an adjacency structure on the space $\chi$, namely, a measurable function $W:\chi \times \chi \to [0,1]$, called a \emph{graphon}. We also consider a signal $f:\chi\rightarrow \mathbb{R}^F$ over the graphon domain. A \emph{random graph-signal} is then sampled from a RGSM as defined next. 

\begin{definition}
\label{def:RGM}
A \emph{random graph-signal model (RGSM)}  is defined as a tuple $\{\chi,W,f,\alpha\}$, or, in short, $\{W,f,\alpha\}$
of metric-probability space $\chi$, a measurable function $W:\chi \times \chi \to [0,1]$, called a \emph{graphon}, a measurable function  $f: \chi \to \mathbb{R}^F$, called a \emph{metric-space signal} and a parameter $\alpha \geq 0$, called the \emph{sparsity parameter}.  For every $N\in\mathbb{N}$,
a \emph{random graph-signal $\{G,\mathbf{f}\}$ of size $N$ sampled from the RGSM} is defined as follows. Let $X_1, \ldots, X_N$ be $N$  random independent samples  from $(\chi,\mu)$. The adjacency matrix  $\A=(a_{i,j})_{i,j=1}^N$ of $G$ is defined as a random variable, where each entry $a_{i,j}$ is a Bernoulli random variable $\mathrm{Ber}(N^{-\alpha} W(X_i, X_j))$ with $\mathbb{P}(a_{i,j}=1)=N^{-\alpha}W(X_i, X_j)$ and $\mathbb{P}(a_{i,j}=0)=1-N^{-\alpha}W(X_i, X_j)$.
The random signal $\mathbf{f}=(f_i)_{i=1}^N$ is defined by $\mathbf{f}_i = f(X_i)$. We say that $\{G,\mathbf{f}\}$ is \emph{drawn} from $W$, and denote $\{G,\mathbf{f}\} \sim_{\alpha} \{W,f\}$.  
\end{definition}

The parameter $\alpha$ in \Cref{def:RGM} controls the sparsity level of the graphs sampled from the RGSM. Setting $\alpha=0$ corresponds to sampling dense graphs, with average degree behaving like $N$ as $N\rightarrow\infty$.  Choosing $\alpha = 1$ corresponds to sampling sparse graphs, where the average degree is constant as $N\rightarrow\infty$.  

We furthermore note that the difference to the random graph models $\{\chi, W,f\}$ considered by \cite{maskey2022generalization} is that edges are sampled randomly via $\mathrm{Ber}(X_i,X_j)$ in our work. Note that \cite{maskey2022generalization} only sampled edge weights between two nodes $i$ and $j$ by defining them as the values $W(X_i,X_j)$.  By sampling simple graphs with Bernoulli edges, we can model sparse graphs, which is impossible in the weighted graph approach. 

\paragraph{Random Noise}
\Cref{def:RGM} assumes that the RGSM $\{W,f, \alpha\}$  is observed without noise. To make the model more realistic, we include noise as a parameter in the generative models. This adjustment allows for a more accurate representation of real-world scenarios where data may be corrupted or uncertain.
Denote the balls 
\begin{equation*}
    B_\varepsilon^\infty(\chi^2)= \left\{ U \in L^\infty(\chi^2) \; | \; \|U\|_{L^\infty(\chi^2)} \leq \varepsilon \right\}, ~B_\varepsilon^\infty(\chi)= \left\{ g \in L^\infty(\chi) \; | \; \|g\|_{L^\infty(\chi)} \leq \varepsilon \right\}.
\end{equation*}
We extend \Cref{def:RGM} as follows.

\begin{definition}
\label{def:RGM with noise}
Let $\{\chi,W,f,\alpha\}$ be a RGSM. Let $(V,g)$ be a random variable with values in $B_\varepsilon^\infty(\chi^2) \times B_\varepsilon^\infty(\chi)$ (where the balls are endowed with any Borel probability measure $\sigma$). We call the tuple $\{\chi,W,f,V,g,\alpha\}$ a \emph{noisy random graph-signal model}, and denote it in short by $\{W,f,\alpha,\varepsilon\}$. We define a \emph{random noisy graph-signal}  $\{G, \mathbf{f}\}$ as a random graph-signal  sampled from the random graph-signal model $\{\chi,W+V,f+g,\alpha\}$.
 We say that $\{G, \mathbf{f}\}$ is \emph{drawn} from $\{W,f,\alpha,\varepsilon\}$  \emph{with noise}, and denote $\{G, \mathbf{f}\} \sim_\alpha \{W,f,\varepsilon\}$.
\end{definition}

When $\alpha=0$ in \Cref{def:RGM} and  \Cref{def:RGM with noise} we drop the subscript in $\{G, \mathbf{f}\} \sim_\alpha \{W,f\}$ and $\{G, \mathbf{f}\} \sim_\alpha \{W,f,\varepsilon\}$, respectively.

\subsection{Message Passing Neural Networks}
\emph{Message passing neural networks (MPNNs)} are mappings between graph-signals to some finite dimensional space $\mathbb{R}^{d}$ based on a sequence of  local computations along a number of layers. At each layer, MPNNs update the signal value at each node  by local computations on the graph, in which messages are sent between nodes and their neighbors along the edges of the graph. We call the mapping that assigns to each edge in a graph $G$ a \emph{message} in $\mathbb{R}^{H}$ a \emph{message kernel}. A message kernel can be represented by a vector $\mathbf{U} \in \mathbb{R}^{N^2 \times H}$.
In this work we consider MPNNs in which all messages sent to each node are either averaged or summed and divided by $N$ to obtain the updated node feature for the next layer. For a given graph $G$ with adjacency matrix $\mathbf{A} = (a_{i,j})_{i,j=1}^N$,  the \emph{mean aggregation operator} $M_\A$ maps message kernels $\mathbf{U} \in \mathbb{R}^{N^2 \times H}$ to signals by 
\begin{equation*}
\begin{aligned}
M_{\A}: \mathbb{R}^{N^2 \times H}  \to \mathbb{R}^{N \times H}, \ \ \ 
\mathbf{U}  \mapsto \mathbf{f} \coloneqq \left(\frac{1}{\mathrm{d}_i} \sum_{j=1}^N a_{i,j}\mathbf{U}_{i,j,:} \right)_{i=1}^N
\end{aligned}
\end{equation*}
and the \emph{normalized sum aggregation} $S_\A$ is defined by
\begin{equation*}
\begin{aligned}
S_{\A}: \mathbb{R}^{N^2 \times H}  \to \mathbb{R}^{N \times H}, \ \ \ 
\mathbf{U}  \mapsto \mathbf{f} \coloneqq \left(\frac{1}{N} \sum_{j=1}^N a_{i,j}\mathbf{U}_{i,j,:} \right)_{i=1}^N.
\end{aligned}
\end{equation*}
The full definition of MPNNs is presented next.

\begin{definition}
\label{def:MPNN}
Let $\mathcal{A}_\A$ be either $M_\A$ or $S_\A$.
Let $T \in \mathbb{N}$ be a parameter called the \emph{number of layers}. For $t=1, \ldots, T$, let  $\Phi^{(t)}: \mathbb{R}^{2 F_{t-1}} \to \mathbb{R}^{H_{t-1}}$ and $\Psi^{(t)}:\mathbb{R}^{F_{t-1} + H_{t-1}} \to \mathbb{R}^{F_{t}}$ be functions called the \emph{message} and \emph{update} functions, where $F_t \in \mathbb{N}$ is called the \emph{feature dimension} of layer $t$, and $H_t \in \mathbb{N}$  the \emph{message dimension}. Let $\Upsilon:\mathbb{R}^{F_T}\to \mathbb{R}^{F_{T+1}}$, where $F_{T+1} \in \mathbb{N}$ is called the \emph{output dimension}, be a mapping called the \emph{post pooling layer}.
The corresponding \emph{parameters of the message passing neural network} 
are defined to be the tuple $
((\Phi^{(t)}, \Psi^{(t)})_{t=1}^T, \Upsilon)
$.

The corresponding \emph{message passing neural network (MPNN)} is  the mapping $\Theta$ that takes graph-signals as inputs, and returns outputs in $\mathbb{R}^{F_{T+1}}$, defined by the following sequence of operations. Let $\{G,\mathbf{f}\}$ be a graph-signal with $N$ nodes and signal $\mathbf{f}\in\mathbb{R}^{N\times F_0}$, where $N \in \mathbb{N}_{+}$. Let  $\mathbf{A} = (a_{i,j})_{i,j=1}^N$ be the adjacency matrix of $G$.
 For each $t \in \{ 1, \ldots, T \}$, we define  \emph{layer $t$ of the MPNN} $\Theta^{(t)}$, as the function that maps the input $\{G,\mathbf{f}\}$ to the graph-signal $\{G,\mathbf{f}^{(t)}\}$, where $\mathbf{f}^{(t)}\in\mathbb{R}^{N\times F_t}$, $t\in[T]$,  are defined sequentially by 
\[
\{G,\mathbf{f}^{(0)}\}=\{G,\mathbf{f}\},
\]
and for every node $i,j \in [N]$
\begin{equation}
\label{eq:graphAgg}
    \begin{aligned}
 \mathbf{u}_{i,j}^{(t)} & := \Phi^{(t)}(\mathbf{f}_i^{(t-1)}, \mathbf{f}_j^{(t-1)}) \\
 \mathbf{m}_i^{(t)} & := \mathcal{A}_\A(\mathbf{u}^{(t)})_i \\
 \mathbf{f}_i^{(t)} & := \Psi^{(t)}(\mathbf{f}_i^{(t-1)}, \mathbf{m}_i^{(t)})\\
\end{aligned}
\end{equation}
for every $i\in [N]$.  
The MPNN $\Theta$, applied to $\{G,\mathbf{f}\}$, is then defined by 
\[
\Theta(G,\mathbf{f}) =  \Upsilon \left( \frac{1}{N}\sum_{i=1}^N \mathbf{f}^{(T)}_i\right)\in\mathbb{R}^{T+1}.
\] 
\end{definition}

Next, we define notations for the mappings between consecutive layers of a MPNN.  For $t=1, \ldots, T$, we define the mapping from the $(t-1)$'th layer to the $t$'th layer of the MPNN as 
\begin{equation}
\label{eq:mpnn_layerwise_mapping}
\begin{aligned}
 \Lambda^{(t)}: \mathcal{S}^{F_{t-1}} &\to \mathcal{S}^{F_{t}}, \ \ \ 
\{G, \mathbf{f}^{(t-1)} \} & \mapsto \{G, \mathbf{f}^{(t)} \}.
\end{aligned} 
\end{equation}
We can then write a MPNN (without post pooling layer) as a composition of message passing layers, i.e., 
\[
\Theta^{(T)} =  \Lambda^{(T)} \circ  \Lambda^{(T-1)} \circ \ldots \circ  \Lambda^{(1)}. 
\]

While we only consider MPNNs with mean or normalized sum aggregation, as defined in \Cref{eq:graphAgg}, we note that other popular choices are sum, max or min aggregation.

 \subsection{Continuous MPNNs}
\emph{Continuous message passing neural networks (cMPNNs)} are applications of MPNNs on RGSMs. The definition of cMPNNs  is akin to the definition of cGCNs by \cite{keriven2020convergence} and follows Definition 2.4 in \citep{maskey2022generalization}. 

We first generalize the discrete mean and normalized sum aggregations to  their continuous versions. 
 For this, let $U:\chi \times \chi \to \mathbb{R}^H$ be a function, where $U(x,y)$ is interpreted as a message sent from point $y$ to $x$ in the metric space $\chi$. We call such $U$ a \emph{message kernel} as before.   For a graphon $W:\chi \times \chi \to [0,1]$, the continuous mean aggregation operator $M_W: L^\infty(\chi^2) \to L^\infty(\chi)$ is then defined by 
\[
M_W(U)(x) := \int_\chi \frac{W(x,y)}{\mathrm{d}_W(x)} U(x,y) d\mu(y),
\]
where 
\begin{equation}
\label{eq:Wdeg}
 \mathrm{d}_W(x) = \int_\chi W(x,y) d\mu(y)   
\end{equation}
 is the graphon degree of $W$ at $x \in \chi$, and normalized sum aggregation, or \emph{integral aggregation} is defined by 
 \[
S_W(U)(x) := \int_\chi W(x,y)U(x,y) d\mu(y),
\]
The following definition is similar to Definition 2.4 in \citep{maskey2022generalization}.
\begin{definition}
\label{def:cMPNN}
Let $\mathcal{A}_W$ be either $M_W$ or $S_W$. Consider a list of parameters of a MPNN, with message and update functions $\Phi^{(t)}: \mathbb{R}^{2 F_{t-1}} \to \mathbb{R}^{H_{t-1}}$ and $\Psi^{(t)}:\mathbb{R}^{F_{t-1} + H_{t-1}} \to \mathbb{R}^{F_{t}}$ with $T$ layers  and post pooling layer $\Upsilon$. For each $t \in \{ 1, \ldots, T \}$, we define $\Theta^{(t)}$  as the mapping 
that maps the input graphon $W$ with metric-space signal $f=f^{(0)}: \chi \to \mathbb{R}^{F_0}$ 
to the signal in the $t$-th layer by
\begin{equation}
    \Theta^{(t)}: \mathcal{W} \times L^\infty(\chi) \rightarrow  \mathcal{W} \times  L^\infty(\chi), \ \ \ \{W, f\} \mapsto \{W,f^{(t)}\},
    \label{cMPNNdef}
\end{equation}
where $f^{(t)}$ are defined sequentially as follows. For any $x,y\in\chi$, 
\begin{equation}
    \begin{aligned}
& \mu^{(t)}(x,y) = \Phi^{(t)}\big(f^{(t-1)}(x), f^{(t-1)}(y)\big) \\
& g^{(t)} = \mathcal{A}_W \mu^{(t)} \\
& f^{(t)}(x) = \Psi^{(t)}\Big(f^{(t-1)}(x), g^{(t)}(x)\Big).
\end{aligned}
\end{equation}
Here, $\mu^{(t)}$ is called the \emph{message kernel} at layer $t$, and $g^{(t)}$ the aggregated message.
The \emph{continuous message passing neural network (cMPNN)} $\Theta$ is then defined by 
\[
\Theta(W,f) =  \Upsilon \left(  \int_\chi f^{(T)}(x) d\P(x)\right) .
\] 
\end{definition}

Note that the output of a cMPNN is a single vector $\Theta_\mathcal{W}(W,f) \in \mathbb{R}^{F_{T+1}}$. Therefore, it is possible to compare the output of a graph MPNN and a cMPNN after pooling by computing their distance in any chosen norm in $\mathbb{R}^{F_{T+1}}$. In this paper, we use the supremum norm for this purpose.

Similarly to the graph MPNN case (see \Cref{eq:mpnn_layerwise_mapping}), and using a slight abuse of notation, we define $\Lambda^{(t)}$ as the mapping from the $(t-1)$'th layer to the $t$'th layer of the cMPNN: $f^{(t-1)}\mapsto f^{(t)}$.
Thus, we can express
\begin{equation}
\label{def:LayerMapping_main}
\Theta^{(T)} =  \Lambda^{(T)} \circ  \Lambda^{(T-1)} \circ \ldots \circ  \Lambda^{(1)}.
\end{equation}

\subsection{Graph Classification setting}
\label{Graph Classification}
We study graph classification tasks where each class is defined by a finite set of RGSMs. In this context, each graph-signal is generated by sampling a RGSM, subject to noise, and the class of the signal is determined by the underlying RGSM. For simplicity, we assume that each class is uniquely associated with a single RGSM, without loss of generality. It's noteworthy that the analysis remains unaffected by the specific number of classes; what matters is the total number of RGSMs involved.

\subsubsection{Data Distribution: Mixture of Graphons}
As outlined above, the generalization analysis we present in this paper is data-dependent. That is, we focus on a probability measure $\mu$ on the graph-signal space $\mathcal{S}^F$, from which we sample graph-signal pairs along with their respective labels. The construction of this probability space and the related distribution is detailed in \Cref{sec:appendix_prob_measure}, and for the sake of brevity, we provide an outline of a sampling procedure for a graph-signal pair consistent with the distribution defined there.

We consider a multi-class graph classification scenario with classes $j=1, \ldots, \Gamma$. To sample a graph-signal $\mathbf{x}$ with class label $\mathbf{y}$, we first select the class according to the probability $\gamma_j$, that is, for $(\mathbf{x},\mathbf{y}) \sim \mu$ and $j = 1, \ldots, \Gamma$, we have $\gamma_j = \mathbb{P}(\mathbf{y} = j)$. Independently of this class selection, we sample the number of nodes, $N \in \mathbb{N}$, from a discrete distribution. Once a class label $\mathbf{y} \in \{1, \ldots, \Gamma\}$ and a graph size $N$ have been selected, a random graph-signal $\{G,\mathbf{f}\}$ with $N$ nodes, is drawn from $\{W^{\mathbf{y}}, f^{\mathbf{y}}, \varepsilon\}$ with noise. We call the distribution $\mu$ a \emph{mixture of graphons}.

Lastly, we write $\mathcal{T} \sim \mu^m$ to describe a dataset $\mathcal{T}$ of $m$ samples $(\mathbf{x}^1, \mathbf{y}^1), \ldots, (\mathbf{x}^m, \mathbf{y}^m)$, which are drawn independently  from $\mu$.

\subsubsection{Assumptions on RGSMs}
\label{subsec:assumptions}
In this subsection, we introduce some restrictions that we assume on all RGSMs discussed in this paper. 
We start by restricting the space of graphons.

\begin{definition}
\label{ass:graphon}
Let $(\chi,d, \mu)$ be a metric-probability space  and  $W:\chi^2\rightarrow[0,1]$ be a graphon. We say that $W$ is an \emph{admissible graphon} if the following holds. \begin{enumerate}
\item \label{ass:graphon1}
The space $\chi$ is compact, and there exist $D_\chi, C_\chi \geq 0 $ such that the covering number of $\chi$ satisfies  $\mathcal{C}(\chi, d_\chi; r) \leq C_\chi r^{-D_\chi} $ for every $r>0$. \footnote{This is related to the  Minkowski dimension of $\chi$, which is defined as the infimum over the set of all such possible $D_\chi$.}  
\item \label{ass:diamChi}
The diameter of $\chi$ is bounded by 1. Namely, $\mathrm{diam}(\chi):=\sup_{x,y\in\chi}d(x,y) \leq 1$.
\item \label{ass:graphon11}
    For every $y \in \chi$, the function $W(\cdot, y)$ is Lipschitz continuous (with respect to its first variable) with Lipschitz constant $L_W$. 
    \item \label{ass:GraphonLip2nd} For every $x \in \chi$, the function $W(x,\cdot)$ is Lipschitz continuous (with respect to its second variable) with Lipschitz constant $L_W$. 
    \item \label{ass:graphon12}There exists a constant $\cmin > 0$ such that for every $x \in \chi$  the degree of the graphon (see \Cref{eq:Wdeg}) satisfies $\mathrm{d}_W(x) \geq \cmin$.
    \item \label{ass:graphon_diagonal_1} For all $x \in \chi$ we have $W(x,x) = 1$.
\end{enumerate}
\end{definition}
If not stated otherwise, when we consider an admissible graphon $W$, we assume that it satisfies the assumption in \Cref{ass:graphon} with underlying space $(\chi, d, \mu)$, constants $C_\chi, D_\chi$ for \Cref{ass:graphon1}, Lipschitz constant $L_W$, and $\cmin$ for \Cref{{ass:graphon12}}. We note that \Cref{{ass:graphon12}} is only required for MPNNs with mean aggregation. 

In the remainder of the paper, we consider a classification setting as described in \Cref{Graph Classification}. We assume that the the graphon $W^j$ corresponding to each class  $j=1, \ldots, \Gamma$ is admissible over the metric-probability space $\chi^j$. We denote by $C,D$ the maximal covering parameters over all classes, namely, $\mathcal{C}(\chi^j, d_{\chi^j}; r) \leq C r^{-D}$ for every class $j$. We similarly denote the maximal Lipschitz  constant of all graphons by $L_W$ and minimal degree by $\mathrm{d}_{\min}$. We assume that for every class $j$,  the metric-space signal $f^j:\chi^j\rightarrow \mathbb{R}^F$ is Lipschitz continuous with Lipschitz constant $L_{f^j}$. We denote by $L_f$ the maximal signal Lipschitz constant of all classes.

\subsubsection{Assumptions on the Loss Function and MPNN}
For the graph classification task we only consider loss functions $\mathcal{L}$ that are Lipschitz continuous with Lipschitz constant $L_\mathcal{L} \in \mathbb{R}$. We note that cross-entropy is not Lipschitz, but cross-entropy composed on softmax is. Hence, to implement standard classification, we consider the loss to be this composition, and the last layer of the network is a linear classifier without softmax. 
Lastly, we define for $L,B,K, T \in \mathbb{N}$  the hypothesis space $\mathrm{Lip}_{L, B, K, T}$ that comprises all MPNNs $\Theta = ((\Phi^{(t)}, \Psi^{(t)})_{t=1}^T, \Upsilon )$ that satisfy the following conditions: for every $t=1,\ldots, T$, the message and updates functions $\Phi^{(t)}$ and $\Psi^{(t)}$ are Lipschitz continuous with
 $L_{\Phi^{(t)}}, L_{\Psi^{(t)}}  \leq L$  and  $\|\Phi^{(t)}(0,0)\|_\infty , \|\Psi^{(t)}(0,0)\|_\infty \leq B$, and  $\Upsilon$ is Lipschitz continuous with $L_{\Upsilon} \leq K$.

\subsection{The Main Generalization Result}
We now present our main result on generalization of MPNNs on mixture of graphon models

\begin{theorem}
\label{thm:main_gen_bound_deformed_graphon}
There exist constants $C, C'>0$ such that 
\begin{equation}
\label{eq:main_gen_bound_deformed_graphon}
\begin{aligned}   &   \E_{\mathcal{T}\sim \mu^m}\left[\sup_{\Theta \in \mathrm{Lip}_{L,B}} \Big(R_{emp}(\Theta)   - R_{exp}(\Theta) \Big)^2  \right]   \leq   \frac{2^\Gamma8\|\mathcal{L}\|_\infty^2\pi}{m} 
\\ &   + \frac{2^\Gamma L_{\mathcal{L}}^2   }{m}  \left( C \cdot \E_{N \sim \nu} \left[ \frac{1+\log(N)}{N}N^{2\alpha} + \frac{1 + \log(N)}{N^{1/(D_{\chi} + 1)}}N^{2\alpha} + \mathcal{O}\left( \exp(-N) \right) \right] + C'\varepsilon\right),
\end{aligned}
\end{equation}
where $C$ and $C'$ are specified in \Cref{appendix: Proof Gen Bound} in the Appendix.
\end{theorem}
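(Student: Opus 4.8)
The plan is to bound the generalization error via a chain of reductions, moving from the discrete MPNN on a random graph-signal, through an intermediate continuous MPNN on the clean graphon, and finally exploiting the finiteness of the template set. First I would reduce the quantity $\E_{\mathcal{T}\sim\mu^m}\big[\sup_\Theta (R_{emp}-R_{exp})^2\big]$ to a concentration problem. Since $\mathcal{T}$ consists of $m$ i.i.d.\ samples and the loss is bounded by $\|\mathcal{L}\|_\infty$, one can split $R_{emp}-R_{exp}$ into a "mean-zero fluctuation" part and a "bias" part. For the fluctuation part, conditioning on the graph sizes and the latent classes, the empirical average over the $m$ samples concentrates around its conditional mean; a McDiarmid/bounded-differences argument (or a direct second-moment computation for i.i.d.\ sums, which is where the $8\|\mathcal{L}\|_\infty^2\pi/m$ term with its Gaussian-type constant comes from) controls this uniformly because — and this is the crucial structural point — conditioned on the class labels there are only finitely many templates, so the supremum over $\Theta\in\mathrm{Lip}_{L,B}$ of the conditional expectation $\E[\mathcal{L}(\Theta(\mathbf{x}),\mathbf{y})\mid \mathbf{y}=j]$ is a deterministic function of $j$ and not something that needs a uniform bound over an infinite hypothesis class. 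The factor $2^\Gamma$ presumably arises from a union/summation over subsets of classes appearing in the training set.

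The second and main block of work is the bias term: bounding $\big|\E[\mathcal{L}(\Theta(G,\mathbf{f}),j)\mid \mathbf{y}=j] - \mathcal{L}(\Theta(W^j,f^j)\text{-ish}, j)\big|$ uniformly over $\Theta$. Here I would use $L_\mathcal{L}$-Lipschitzness of the loss to pass to $\E\big[\|\Theta(G,\mathbf{f}) - \Theta(\text{clean object})\|_\infty\big]$, and then invoke the layer-wise decomposition $\Theta^{(T)} = \Lambda^{(T)}\circ\cdots\circ\Lambda^{(1)}$ together with the Lipschitz bounds $L_{\Phi^{(t)}},L_{\Psi^{(t)}}\le L$, $L_\Upsilon\le K$ to reduce everything to controlling, layer by layer, the discrepancy between (i) the discrete aggregation $\mathcal{A}_{\mathbf{A}}$ on the sampled Bernoulli graph and (ii) the continuous aggregation $\mathcal{A}_{W}$ on the graphon evaluated at the sample points. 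This is a sampling-plus-concentration estimate: $\frac{1}{N}\sum_j a_{i,j}\mathbf{U}_{i,j}$ versus $\int W(X_i,y)U(X_i,y)\,d\mu(y)$. I would split it as (a) Bernoulli noise around the conditional mean $N^{-\alpha}W(X_i,X_j)$ — a sum of $N$ independent bounded variables scaled by $1/N$, contributing a variance of order $N^{-\alpha}/N$ so that after the $N^{-\alpha}$ renormalization built into $S_W$ the effective fluctuation scales like $N^{2\alpha}(1+\log N)/N$ (the $\log N$ from a union bound over the $N$ nodes to make the aggregation error uniform in $i$); (b) Monte-Carlo error of $\frac{1}{N}\sum_j W(X_i,X_j)U(X_i,X_j)$ around $\int W(X_i,y)U(X_i,y)d\mu(y)$ — handled by a covering-number argument over $\chi$, using admissibility $\mathcal{C}(\chi;r)\le C_\chi r^{-D_\chi}$ and the Lipschitz continuity of $W$ and of the signals, which yields the $N^{-1/(D_\chi+1)}(1+\log N)$ term after optimizing the covering radius $r\sim N^{-1/(D_\chi+1)}$; and (c) the mean-aggregation case additionally needs $\mathrm{d}_i$ to concentrate around $N^{1-\alpha}\mathrm{d}_W(X_i)\ge N^{1-\alpha}\mathrm{d}_{\min}$, which is where $\mathrm{d}_{\min}>0$ and the $\mathcal{O}(\exp(-N))$ bad-event term enter (the event that some degree is abnormally small). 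Propagating these per-layer errors through the $T$ compositions multiplies constants but preserves the rate, and the pooling/post-layer $\Upsilon$ only contributes the factor $K$; all of this gets absorbed into $C$.

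The noise contribution is comparatively clean: replacing $\{W,f\}$ by $\{W+V,f+g\}$ with $\|V\|_\infty,\|g\|_\infty\le\varepsilon$ perturbs the cMPNN output by at most $C'\varepsilon$ by a straightforward Lipschitz-stability argument for cMPNNs with respect to the graphon and signal in $L^\infty$ (again propagated layer-by-layer through the $L$-Lipschitz message/update functions and the Lipschitz dependence of $M_W,S_W$ on $W$, using $\mathrm{d}_{\min}$ in the mean case), so this step just produces the additive $C'\varepsilon$ inside the bracket. Finally I would assemble: square the triangle inequality decomposition (fluctuation $+$ bias $+$ noise), use $(a+b+c)^2\le 3(a^2+b^2+c^2)$ or a weighted version, take $\E_{N\sim\nu}$ inside the bias terms, and collect constants to match the claimed form; tracking the precise constants $C,C'$ is deferred to the appendix.

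\smallskip
I expect the main obstacle to be step (b)/(c): making the aggregation-error estimate \emph{uniform over the infinite hypothesis class} $\mathrm{Lip}_{L,B}$. Naively the message kernel $\mathbf{U}^{(t)}$ depends on $\Theta$, so one cannot fix $U$ before sampling. The resolution — and the non-trivial extension over \citet{maskey2022generalization} — is to note that because each $\Phi^{(t)},\Psi^{(t)}$ is $L$-Lipschitz and the signals live in a bounded region (controlled inductively using $\|\Phi^{(t)}(0,0)\|_\infty,\|\Psi^{(t)}(0,0)\|_\infty\le B$), the relevant message kernels all lie in a fixed Lipschitz/bounded class depending only on $L,B,T$ and the input signal bound; one then runs the covering argument on $\chi$ (not on the hypothesis class) and uses the common Lipschitz modulus to get a single error bound valid for every $\Theta$ simultaneously. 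Handling the Bernoulli sparsification and the degree-concentration event \emph{on top of} this uniform Monte-Carlo bound — rather than for clean weighted graphs as in the earlier work — is the genuinely new technical content.
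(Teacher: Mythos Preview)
Your layer-wise uniform convergence analysis (the ``second block'') is essentially correct and matches the paper closely: the covering argument on $\chi$ with radius $r\sim N^{-1/(2(D_\chi+1))}$, the Bernoulli-edge Hoeffding bound with a union over the $N$ receiving nodes (producing the $\log N$), the degree concentration giving the $\exp(-N)$ bad-event term for mean aggregation, the $L^\infty$-stability of the cMPNN in $(W,f)$ for the additive $\varepsilon$, and the propagation through $T$ layers via the Lipschitz constants of $\Phi^{(t)},\Psi^{(t)},\Upsilon$. You also correctly identify the crucial point that uniformity over $\Theta\in\mathrm{Lip}_{L,B}$ is obtained by covering $\chi$ rather than the hypothesis class, using the common Lipschitz modulus.

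However, your high-level decomposition into ``fluctuation'' and ``bias'' is off in two places. First, the term $2^\Gamma\cdot 8\|\mathcal{L}\|_\infty^2\pi/m$ does \emph{not} come from a McDiarmid-type concentration of $R_{emp}$ around its conditional mean. It comes from the Bretagnolle--Huber--Carol inequality applied to the multinomial vector $(m_1,\ldots,m_\Gamma)$ of class counts: one stratifies over the events $\mathcal{M}_k=\{2\sqrt{m}k\le\sum_j|m_j-m\gamma_j|<2\sqrt{m}(k+1)\}$, uses $\mathbb{P}(\mathcal{M}_k)\le 2^\Gamma e^{-2k^2}$, and then the deviation $\sum_j|m_j/m-\gamma_j|\cdot\|\mathcal{L}\|_\infty$ is summed as a Gaussian-type series in $k$ --- this is where the $2^\Gamma$ and the factor $\pi$ originate. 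After correcting the class counts (by adding or removing ``ghost'' samples so each class has exactly $m\gamma_j$), what remains is a genuine i.i.d.\ average within each class.

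Second, the MPNN-to-cMPNN comparison is not a bias term in the sense you describe. There is no bias: $\E[R_{emp}]=R_{exp}$. The role of the cMPNN is as a \emph{$\Theta$-dependent centering constant} to bound the per-class variance:
\[
\Var\big[\mathcal{L}(\Theta(G^j,\mathbf{f}^j),j)\big]\ \le\ \E\big[\big(\mathcal{L}(\Theta(G^j,\mathbf{f}^j),j)-\mathcal{L}(\Theta(W^j,f^j),j)\big)^2\big]\ \le\ L_\mathcal{L}^2\,\E\big[\|\Theta(G^j,\mathbf{f}^j)-\Theta(W^j,f^j)\|_\infty^2\big].
\]
This is the step that tames the supremum over $\Theta$: since the centering point depends on $\Theta$, one passes $\sup_\Theta$ inside and lands precisely on the quantity controlled by your uniform convergence result. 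So you need the \emph{second moment} of $\|\Theta(G,\mathbf{f})-\Theta(W,f)\|_\infty$, not the first-moment quantity $|\E[\mathcal{L}(\Theta(G,\mathbf{f}),j)]-\mathcal{L}(\Theta(W,f),j)|$ that you wrote. Your claim that ``the supremum over $\Theta$ of the conditional expectation is deterministic and doesn't need a uniform bound'' misses the point: the difficulty is the sup of the centred \emph{random} fluctuation, and it is resolved by this variance-centering trick, not by any finiteness argument.
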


The constants $C$ and $C'$ in \Cref{thm:main_gen_bound_deformed_graphon} represent the complexity associated with the hypothesis space $\mathrm{Lip}_{L,B,K,T}$. This complexity depends on several parameters: the upper bound $L$ of the Lipschitz constants for the message and update functions, the Lipschitz constant $K$ of the final classifier layer, the depth $T$ of the MPNN, and the regularity of the underlying RGSMs $\{\chi^j, W^j\}_{j=1}^\Gamma$. Asymptotically, the complexity of $\mathrm{Lip}_{L,B,K,T}$ is bounded by $\mathcal{O}\left( L^{2T}\max_{j=1, \ldots, \Gamma}\left( \sqrt{\log(C_{\chi^j})} + \sqrt{D_{\chi^j}}\right)L_{W^j}\|W^j\|_\infty^T K \right)$.

To interpret the bound in \Cref{eq:main_gen_bound_deformed_graphon}, we first note that in typical MPNN learning settings the complexity terms associated with the hypothesis class are very high. Therefore, the first term of the bound in \Cref{eq:main_gen_bound_deformed_graphon} is negligible. 
From the second term of the bound in \Cref{eq:main_gen_bound_deformed_graphon}, we observe that data distributions with larger graphs tend to generalize better. To understand this, note that typical uniform generalization bounds tend to converge to zero like $m^{-1}$ -- they decay in the number of samples of the training set. In the MPNN case,  since message passing is a computation which is shared among all neighborhoods of the graphs, the generalization error does not only treat   graph-signals as samples, but rather treats each neighborhood  of each graph-signal as one sample. However, since neighborhoods are correlated, and the amount of correlation depends on the dimension $D_{\chi}$ of the underlying metric space, the decay behaves like $m^{-1}N^{-1/(D_{\chi}+1)+2\alpha}$ rather than $m^{-1}N^{-1}$. Still, when the sparsity level satisfies $\alpha < 1/(2D_{\chi}+2)$, large graphs lead to lower generalization error.

We provide a concise proof for the case when $\Theta$ is a MPNN with mean aggregation.
The proof for the scenario where $\Theta$ is a MPNN with normalized sum aggregation is simpler and
follows the same steps. For the sake of brevity, we omit this latter proof.

The proof of \Cref{thm:main_gen_bound_deformed_graphon} involves multiple steps, detailed in \Cref{sec:unif_cov} and \Cref{sec:gen_ana}, and elaborated in full in the Appendix. First, we prove a uniform convergence results in \Cref{sec:unif_cov}:  in high probability, for every MPNN, the difference between its output on a graphon-signal and a sampled graph-signal is of the order \(\mathcal{O}(N^{-\frac{1}{2(D_\chi+1)}})\). This result, outlined in \Cref{cor: main convergence}, assumes a lower bound on the number of nodes in the sampled graph-signal. We overcome this limitation by bounding the worst-case error between the
graph MPNN and the cMPNN. Given that this ``worst-case'' event has exponentially small probability, we bypass the requirement for a lower bound on the number of nodes. We moreover convert the analysis from a high-probability framework to an expectation-based framework, leading to  \Cref{cor:main_unifExpValue}. Finally, the proof of \Cref{thm:main_gen_bound_deformed_graphon} follows from \Cref{cor:main_unifExpValue} and by invoking the Bretagnolle-Huber-Carol inequality (see \Cref{lemma:BHCineq} in the Appendix).

\section{Uniform Convergence of graph MPNNs to corresponding cMPNNs}
\label{sec:unif_cov}
In this section given a RGSM and a randomly sampled graph-signal, we show that there exists an event of high probability (with respect to the sampling of the graph-signal), in which every MPNN from the hypothesis class $\mathrm{Lip}_{L,B,K}$, when applied to the sampled graph-signal, approximates the MPNN applied on the graphon-signal. We stress that the event of high probability is uniform in  $\mathrm{Lip}_{L,B,K}$. In particular, this result represents a stronger outcome compared to standard transferability analysis \citep{ruiz_transferability, keriven2020convergence, maskey2023transferability}, where different MPNNs require different events.  We note that the uniform analysis is required for generalization analysis, since in learning settings the network depends on the sampled dataset, and cannot be treated as fixed and predefined.

Our primary goal is to show that as the number of nodes in graph sampled from the RGSM increases, the difference between the outputs of the graph MPNN and the corresponding cMPNN  decreases. To accomplish this, we conduct a layer-wise analysis. 

 \begin{proposition}
 \label{prop: main uniform monte carlo}
Let $W$ be an admissible graphon. Suppose that $\mathbf{X}=\{X_1, \ldots, X_N\}$ are drawn i.i.d. via $\mu^N$, noise $(V,g)$ is drawn via $\sigma$, where $\sigma$ is a Borel probability measure over $B_\varepsilon^\infty(\chi^2)\times B_\varepsilon^\infty(\chi)$,
and $\A_{i,j} \sim \mathrm{Ber}\big(W(X_i, X_j) + V(X_i,X_j)\big)$ for $i,j=1, \ldots, N$.  Then, for every $p \in (0,1/2)$, there exists an event of probability at least $1-2p$ such that for every $f:\chi \to \mathbb{R}^F$ and $\Phi: \mathbb{R}^{2F} \to \mathbb{R}^H$ with Lipschitz constants bounded by $L_f$ and $L_\phi$ respectively,
 \[
\begin{aligned}
&  \max_{X_i\in \mathbf{X}} \left\| \frac{1}{N} \sum_{j=1}^N \A_{i,j} \Phi\big(
 f(X_i), f(X_j)
 \big) - \int_\chi W(X_i,y) \Phi\big(
 f(X_i), f(y)\big) d\mu(y) \right\|_{\infty} \\ &  \leq C(L_f, L_\Phi, L_W, \|W\|_\infty, D_\chi, C_\chi)\frac{  \sqrt{\log(2N/p)}}{N^{\frac{1}{2(D_\chi+1)}}}    +    C(L_f, L_\Phi) \varepsilon,
\end{aligned}
\]
where $C(L_f, L_\Phi, L_W, \|W\|_\infty, D_\chi, C_\chi)$ and $ C(L_f, L_\Phi)$ are constants that depend linearly on the parameters specified in the respective brackets.
\end{proposition}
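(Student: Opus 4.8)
The plan is to bound the deviation between the empirical neighborhood aggregation and the graphon aggregation by splitting into three sources of error: (i) the Bernoulli sampling of edges $\A_{i,j}$ around their conditional means $W(X_i,X_j)+V(X_i,X_j)$, (ii) the Monte-Carlo error of replacing the integral $\int_\chi (W(X_i,y)+V(X_i,y))\Phi(f(X_i),f(y))\,d\mu(y)$ by the empirical average $\frac1N\sum_j (W(X_i,X_j)+V(X_i,X_j))\Phi(f(X_i),f(X_j))$, and (iii) the deterministic noise bias coming from $V$, which by $\|V\|_{L^\infty(\chi^2)}\le\varepsilon$ and Lipschitzness of $\Phi$ contributes at most a term linear in $\varepsilon$ with constant depending on $L_f,L_\Phi$ (this is the $C(L_f,L_\Phi)\varepsilon$ term). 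So I would first write, for each fixed $i$,
\[
\frac1N\sum_{j=1}^N \A_{i,j}\Phi\big(f(X_i),f(X_j)\big) - \int_\chi W(X_i,y)\Phi\big(f(X_i),f(y)\big)\,d\mu(y)
\]
as a sum of: a martingale-type term $\frac1N\sum_j(\A_{i,j}-\E[\A_{i,j}\mid X_i,X_j])\Phi(\cdots)$; the empirical-vs-integral term for the kernel $W+V$; and the $\varepsilon$-bias term. The first and the noise pieces are handled by concentration plus the crude bound $\|\Phi(a,b)\|_\infty\le L_\Phi(\|a\|_\infty+\|b\|_\infty)+\|\Phi(0,0)\|_\infty$, controlled via the Lipschitz constant $L_f$ of $f$ over the bounded domain $\chi$ (diameter $\le 1$), so that $\Phi\circ(f,f)$ is bounded and Lipschitz on $\chi^2$ with constants controlled by $L_f,L_\Phi$.

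The core of the argument is step (ii): the uniform (over $i$) Monte-Carlo bound for a Lipschitz function on a doubling metric space. Here I would fix $X_i=x$ and apply a concentration inequality (Bernstein/Hoeffding) to $\frac1N\sum_{j\ne i}h(X_j)$ where $h(y)=(W(x,y)+V(x,y))\Phi(f(x),f(y))$ is bounded and Lipschitz in $y$ uniformly in $x$, giving a deviation of order $\sqrt{\log(2/p')/N}$ for a single $x$. To upgrade this to a bound that holds simultaneously at $x=X_1,\dots,X_N$ with failure probability $p$, I would take a union bound over the $N$ sample points (yielding $\log(2N/p)$) — but a plain union bound over a net of $\chi$ would give a dimension-dependent net size. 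The trick that produces the exponent $\tfrac{1}{2(D_\chi+1)}$ is to balance a covering-net discretization of the "anchor" variable $x$ against the Monte-Carlo rate: cover $\chi$ by $\mathcal{C}(\chi,d_\chi;r)\le C_\chi r^{-D_\chi}$ balls, control the fluctuation uniformly over the (finitely many) net centers at cost $\log(C_\chi r^{-D_\chi})$ in the square-root, pay an additional Lipschitz error $\lesssim L_W L_\Phi L_f\, r$ for moving $x$ within a ball of radius $r$, and optimize $r$ so that $r \asymp \sqrt{\log(\cdots)/N}^{\,\theta}$; the optimal choice makes both contributions of order $N^{-1/(2(D_\chi+1))}\sqrt{\log(2N/p)}$ after absorbing constants. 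Since the evaluation points $X_i$ themselves lie in $\chi$, the same net controls them, and each net center is within $r$ of some $X_i$, which is why the logarithmic factor collapses to $\log(2N/p)$ rather than $\log(C_\chi)$.

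Finally I would assemble the three estimates: the Bernoulli/martingale term and the Monte-Carlo term are each $\lesssim C(L_f,L_\Phi,L_W,\|W\|_\infty,D_\chi,C_\chi)\,\sqrt{\log(2N/p)}\,N^{-1/(2(D_\chi+1))}$ on an event of probability at least $1-2p$ (one $p$ for each of the two concentration steps, hence the "$1-2p$"), while the deterministic $V$-bias is $\le C(L_f,L_\Phi)\varepsilon$; taking the maximum over $i\in[N]$ is already built into the union bound, and combining gives the claimed inequality. I expect the main obstacle to be the uniform-over-$i$ Monte-Carlo step — getting the covering-net/Monte-Carlo tradeoff right so that the exponent is exactly $\tfrac{1}{2(D_\chi+1)}$ and checking that all constants depend only linearly (as claimed) on $L_f$, $L_\Phi$, $L_W$, $\|W\|_\infty$, $C_\chi$ and on $D_\chi$ through the exponent; the Bernoulli-sampling and noise-bias steps are comparatively routine given the boundedness of the integrand.
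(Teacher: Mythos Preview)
Your decomposition into Bernoulli error, Monte--Carlo error, and noise bias is reasonable, but there is a genuine gap in step~(ii): the statement requires a \emph{single} event (depending only on $\mathbf{X}$, $V$, and the Bernoulli draws) on which the bound holds simultaneously for \emph{every} Lipschitz pair $(f,\Phi)$ with the prescribed constants. Your plan applies Hoeffding/Bernstein to $h(y)=(W(x,y)+V(x,y))\Phi(f(x),f(y))$, which depends on $f$ and $\Phi$; the resulting high-probability event therefore also depends on $(f,\Phi)$, and there is no union bound available over the uncountable class of all such pairs. Your covering net is placed on the anchor variable $x$, which buys uniformity over $i$ (already cheap: a direct union bound over the $N$ sample points gives the $\log(2N/p)$) but does nothing for uniformity over $(f,\Phi)$.

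The paper resolves this by discretizing the \emph{integration} variable $y$ rather than the anchor $x$: it partitions $\chi$ into pieces $(I_j)_{j\in\mathcal{J}}$ of diameter $\le 2r$, with $|\mathcal{J}|\le C_\chi r^{-D_\chi}$, and applies concentration only to the $(f,\Phi)$-free quantities
\[
\frac{1}{N}\sum_{i}\mathbbm{1}_{I_j}(X_i)-\mu(I_j)
\quad\text{and}\quad
\frac{1}{N}\sum_{i}\bigl(\A_{k,i}-\tilde W(X_k,X_i)\bigr)\mathbbm{1}_{I_j}(X_i).
\]
The Lipschitz constants $L_f,L_\Phi$ then enter only through deterministic approximation errors (replacing $\Phi(f(X_k),f(X_i))$ by $\Phi(f(X_k),f(z_j))$ on each piece, and similarly for $W$). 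This is precisely what makes the event uniform. It also explains the exponent $\tfrac{1}{2(D_\chi+1)}$: because one bounds each piece separately and then \emph{sums} over $j\in\mathcal{J}$, the net size $|\mathcal{J}|\sim r^{-D_\chi}$ appears \emph{multiplicatively} in front of $N^{-1/2}$, not merely inside a logarithm as in your $x$-net scheme. Balancing the Lipschitz error $r$ against $r^{-D_\chi}N^{-1/2}$ yields $r=N^{-1/(2(D_\chi+1))}$. Under your approach the net size would sit only inside the log, which would give a rate near $N^{-1/2}$ for a fixed $(f,\Phi)$---but without the required uniformity.
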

 \Cref{prop: main uniform monte carlo} provides a bound on the difference between discrete and continuous aggregation of messages. The bound is proportional to  $N^{-1/2(D_\chi+1)}$, where $D_\chi$ is the Minkowski dimension of $\chi$. While standard Monte Carlo results provide bounds that are proportional to $N^{-1/2}$ and do not depend on the dimension of the underlying space, we emphasize that the bound of \Cref{prop: main uniform monte carlo} holds uniformly for any choice of the message function and metric-space signal. 

The complete statement of \Cref{prop: main uniform monte carlo} is provided in  \Cref{lemma:uniform montecarlo} in \Cref{AppendixB} in the Appendix. The proof of this result is quite technical and is included \Cref{AppendixB}. We then derive the following corollary that bounds the sample error in one layer of a MPNN.

\begin{corollary}
\label{cor: main error first layer}
Let $W$ be an admissible graphon. Suppose that $\mathbf{X}=\{X_1, \ldots, X_N\}$ are drawn i.i.d. from $\chi$ via $\mu$, the noise $(V,g)$ is drawn via $\sigma$, where $\sigma$ is a Borel probability measure over $B_\varepsilon^\infty(\chi^2)\times B_\varepsilon^\infty(\chi)$, and $\A_{i,j} \sim \mathrm{Ber}\big( W(X_i,X_j) + V(X_i,X_j)\big)$. 
Let $\mathcal{A}_W$ be either $M_W$ or $S_W$, and let $\mathcal{A}_\A$ be either $M_\A$ or $S_\A$ respectively. Let $\dd=\cmin$  if $\mathcal{A}=M$, and $\dd=1$ if $\mathcal{A}=S$. 
For $p \in (0, \frac{1}{4})$ and $N \in \mathbb{N}$ such that  
\begin{equation}
\begin{aligned}
\label{eq:main lowerBoundGraphSizeN}
\sqrt{N} \geq   \max\Bigg\{  &   4\sqrt{2} \frac{\sqrt{ \log(2N/p)}}{\dd}, \\ &  4 \Big(\zeta   \frac{\cl}{\dd} \big(\sqrt{\log (C_\chi)} +  \sqrt{D_\chi}\big) +
    \frac{\sqrt{2} \cmax + \zeta \cl}{ \dd } \sqrt{\log 2/p}\Big)\Bigg\},
\end{aligned}
\end{equation}
there exists an event $\mathcal{F}_{\rm Lip}^p$
with probability $\mu(\mathcal{F}_{\rm Lip}^p) \geq 1-4p$  such that  for every  choice of constants $L_f,L_{\Phi},L_{\Psi}>0$ and Lipschitz continuous functions $f:\chi \to \mathbb{R}^{F}$ with  Lipschitz constant bounded by $L_f$, $\Phi: \mathbb{R}^{2F} \to \mathbb{R}^{H}$ with   Lipschitz constant bounded by $L_\Phi$, and $\Psi: \mathbb{R}^{F+H} \to \mathbb{R}^{F_1}$ with   Lipschitz constant bounded by $L_\Psi$,
\begin{equation}
\label{eq: main error first layer}
\begin{aligned}
 & \max_{X_i} \left\| \Psi\Big(f(X_i), \mathcal{A}_\A \big(\Phi (f,f)\big) (X_i) \Big)  - \Psi\Big( f(X_i), \mathcal{A}_W \big(\Phi (f,f)\big) (X_i) \Big)  \right\|_\infty   \\
& \leq  D(L_f, L_\Phi, L_W, \|W\|_\infty, D_\chi, C_\chi)  \frac{ \sqrt{\log(N/p)}  }{N^{\frac{1}{2(D_\chi+1)}} \dd} + D(L_f, L_\Phi)  \varepsilon  + \mathcal{O}\left(    \frac{ \sqrt{\log(N)} }{\sqrt{N}} \right),
\end{aligned}
\end{equation}
where $D(L_f, L_\Phi, L_W, \|W\|_\infty, D_\chi, C_\chi)$ and $ D(L_f, L_\Phi)$ constants that are dependent on the parameters specified in the respective brackets.  
\end{corollary}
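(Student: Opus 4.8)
The plan is to reduce the claimed layer-wise estimate to the message-aggregation bound of \Cref{prop: main uniform monte carlo}, handling the two aggregation types separately, with the mean case requiring extra work to control the reciprocal of the empirical node degree. First, since $\Psi$ has Lipschitz constant at most $L_\Psi$ and the first argument $f(X_i)$ is identical in both terms, I would peel off $\Psi$ to get
\[
\max_{X_i}\bigl\|\Psi\bigl(f(X_i),\mathcal{A}_\A(\Phi(f,f))(X_i)\bigr) - \Psi\bigl(f(X_i),\mathcal{A}_W(\Phi(f,f))(X_i)\bigr)\bigr\|_\infty \le L_\Psi \max_{X_i}\bigl\|\mathcal{A}_\A(\Phi(f,f))(X_i) - \mathcal{A}_W(\Phi(f,f))(X_i)\bigr\|_\infty ,
\]
so it remains to bound the aggregation difference. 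For $\mathcal{A}=S$ (hence $\dd=1$) this difference is exactly the quantity controlled by \Cref{prop: main uniform monte carlo} (in its full form, \Cref{lemma:uniform montecarlo}), on an event of probability at least $1-2p\ge1-4p$; the hypothesis \Cref{eq:main lowerBoundGraphSizeN} is precisely what that proposition needs, and this already yields \Cref{eq: main error first layer} with the $\mathcal{O}(\sqrt{\log N}/\sqrt N)$ term absent. The rest of the plan concerns $\mathcal{A}=M$, $\dd=\cmin$.

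\textbf{Controlling the empirical degree.} I would write $M_\A(\Phi(f,f))(X_i)=\Phi(f(X_i),f(X_i)) + (\mathrm{d}_i/N)^{-1}\widetilde a_i$ and $M_W(\Phi(f,f))(X_i)=\Phi(f(X_i),f(X_i)) + \mathrm{d}_W(X_i)^{-1}\widetilde c_i$, where the $j$-independent summand $\Phi(f(X_i),f(X_i))$ cancels in the difference, $\widetilde a_i=\tfrac1N\sum_{j}\A_{i,j}\hat\Phi_i(X_j)$, $\widetilde c_i=\int_\chi W(X_i,y)\hat\Phi_i(y)\,d\mu(y)$ with $\hat\Phi_i(y):=\Phi(f(X_i),f(y))-\Phi(f(X_i),f(X_i))$, and $\mathrm{d}_i/N=\tfrac1N\sum_{j}\A_{i,j}$. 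Since $\mathrm{diam}(\chi)\le1$ and $f$ is $L_f$-Lipschitz, one has $\|\hat\Phi_i\|_\infty\le L_\Phi L_f$ and hence $\|\widetilde c_i\|_\infty\le\|W\|_\infty L_\Phi L_f$. The new ingredient is two layers of degree concentration: conditionally on $\mathbf{X}$ the edges $\A_{i,j}$ are independent Bernoulli variables, so Hoeffding together with a union bound over $i\in[N]$ gives, with probability at least $1-p$, $\bigl|\mathrm{d}_i/N-\tfrac1N\sum_{j}(W(X_i,X_j)+V(X_i,X_j))\bigr|=\mathcal{O}(\sqrt{\log(N/p)/N})$ for all $i$; a second Hoeffding bound plus union bound over $i$ gives, with probability at least $1-p$, $\bigl|\tfrac1N\sum_{j}(W(X_i,X_j)+V(X_i,X_j))-\mathrm{d}_{W+V}(X_i)\bigr|=\mathcal{O}(\sqrt{\log(N/p)/N})$ for all $i$. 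Using $\mathrm{d}_{W+V}(X_i)\ge\cmin-\varepsilon$ (from \Cref{ass:graphon} and $\|V\|_\infty\le\varepsilon$), the first term of \Cref{eq:main lowerBoundGraphSizeN} then forces $\mathrm{d}_i/N\gtrsim\cmin$ on this event, while $\mathrm{d}_W(X_i)\ge\cmin$ unconditionally.

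\textbf{Assembly.} I would take $\mathcal{F}_{\rm Lip}^p$ to be the intersection of the event of \Cref{prop: main uniform monte carlo} (probability $\ge1-2p$) with the two degree events (together probability $\ge1-2p$), which has probability $\ge1-4p$. On it, applying the identity $\frac{\widetilde a_i}{b_i}-\frac{\widetilde c_i}{d_i}=\frac{\widetilde a_i-\widetilde c_i}{b_i}+\frac{\widetilde c_i(d_i-b_i)}{b_id_i}$ with $b_i=\mathrm{d}_i/N$, $d_i=\mathrm{d}_W(X_i)$: the first summand is at most $b_i^{-1}\|\widetilde a_i-\widetilde c_i\|_\infty$, and $\max_i\|\widetilde a_i-\widetilde c_i\|_\infty$ is bounded via \Cref{prop: main uniform monte carlo} applied to the (at most $2L_\Phi$)-Lipschitz message function $(u,v)\mapsto\Phi(u,v)-\Phi(u,u)$, giving the leading term $\cmin^{-1}\bigl(D\,\sqrt{\log(N/p)}\,N^{-\frac{1}{2(D_\chi+1)}}+D\varepsilon\bigr)$; the second summand is at most $\|\widetilde c_i\|_\infty b_i^{-1}d_i^{-1}|d_i-b_i|$, with $b_id_i\gtrsim\cmin^2$, $\|\widetilde c_i\|_\infty\le\|W\|_\infty L_\Phi L_f$, and $|d_i-b_i|\le\varepsilon+\mathcal{O}(\sqrt{\log(N/p)/N})$ by the previous step, which contributes the $\mathcal{O}(\sqrt{\log N}/\sqrt N)$ term and an additional $\mathcal{O}(\varepsilon)$ term. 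Multiplying by $L_\Psi$ and absorbing it, together with the degree factors, into the constants $D(\cdot)$ yields \Cref{eq: main error first layer}.

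\textbf{Main obstacle.} The technical heart is the mean-aggregation analysis: pushing both the $N^{-\frac{1}{2(D_\chi+1)}}$ Monte Carlo fluctuation and the $\varepsilon$ noise through the reciprocal empirical degree. This needs a uniform-in-$i$ lower bound $\mathrm{d}_i/N\gtrsim\cmin$ — exactly why the lower bound on $N$ in \Cref{eq:main lowerBoundGraphSizeN} appears — and careful bookkeeping of the two nested sources of randomness ($\mathbf{X}\sim\mu^N$, then $\A$ given $\mathbf{X}$), so that the concentration inequalities are applied conditionally and then merged into one event of total failure probability $4p$.
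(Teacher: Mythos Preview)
Your proof is correct and follows the same overall strategy as the paper: peel off $\Psi$ by Lipschitz continuity, handle normalized-sum aggregation directly via \Cref{prop: main uniform monte carlo}, and for mean aggregation combine that proposition with a degree-concentration argument that yields a uniform lower bound $\mathrm{d}_i/N\gtrsim\cmin$. Two technical choices differ. First, you center the message by subtracting $\Phi(f(X_i),f(X_i))$ and then use $\tfrac{\tilde a}{b}-\tfrac{\tilde c}{d}=\tfrac{\tilde a-\tilde c}{b}+\tfrac{\tilde c(d-b)}{bd}$, whereas the paper works with the uncentered $\Phi$ and the split $\tfrac{a}{b}-\tfrac{c}{d}=a(\tfrac1b-\tfrac1d)+\tfrac{a-c}{d}$, so that the Monte Carlo term is divided by the deterministic $\mathrm{d}_W(X_i)\ge\cmin$ rather than the random $\mathrm{d}_\A(X_i)$; your centering is harmless and gives a slightly smaller constant on the degree-difference contribution, but is not needed. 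Second, for the comparison of $\tfrac1N\sum_j(W+V)(X_i,X_j)$ with $\mathrm{d}_{W+V}(X_i)$ you use Hoeffding at the $N$ sample points plus a union bound over $i$, while the paper invokes Dudley's inequality to bound this uniformly over all $x\in\chi$; your route is simpler and suffices for the present statement, though it does not by itself use (or explain) the second branch of the hypothesis \Cref{eq:main lowerBoundGraphSizeN}, which arises precisely from the Dudley step.
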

\begin{proof}
We provide a concise proof for the case when $\mathcal{A}_\A$ is mean aggregation. The proof for normalized sum aggregation is simpler and follows similarly. Constants and certain details are omitted for brevity. For  the full proof, including all constants and detailed explanations, refer to \Cref{AppendixB}, from \Cref{lemma:bernoulliHoeffdings} to \Cref{cor:monte carlo after update function}, in the Appendix.

An application of Hölder and Dudley's inequalities shows that for sufficiently large $N$ the value $d_{\A}(X_i)= \frac{1}{N}\sum_{i=1}^N \A_{i,j}$ and the graphon degree $d_W(X_i)$ are close in high probability, i.e., for any $p\in (0,1)$ we have with probability at least $1-p$: for all $X_i = X_1, \ldots, X_N$,
\begin{equation}
    \label{eq:degrees close}
    \|\mathrm{d}_{\A}(X_i)  -\mathrm{d}_W(X_i)\|_\infty \lesssim  \frac{\sqrt{\log(N/p)}}{\sqrt{N}} + \varepsilon.
\end{equation}

 We consider the joint event $\mathcal{F}^p_{\mathrm{Lip}}$ of probability at least $1-4p$ in which \Cref{prop: main uniform monte carlo} and \Cref{eq:degrees close} hold.
Then for  $X_i = X_1, \ldots, X_N$
\begin{equation}
\label{eq:Cor3.2}
\begin{aligned}
& \left\|\frac{1}{N} \sum_{j=1}^N \frac{\A_{i,j}}{\mathrm{d}_{\A}(X_i)} \Phi\big(f(X_i),f(X_j)\big)
-  \int \frac{W(X_i,y)}{\mathrm{d}_W(X_i)}\Phi\big(f(X_i),f(y)\big) d\mu(y)\right\|_\infty \\
  &  \leq
\left\|
\frac{1}{N} \sum_{j=1}^N \frac{\A_{i,j}}{\mathrm{d}_{\A}(X_i)} \Phi\big(f(X_i),f(X_j)\big)
- 
\frac{1}{N} \sum_{j=1}^N \frac{\A_{i,j}}{\mathrm{d}_W(X_i)}\Phi\big(f(X_i),f(X_j)\big)
\right\|_\infty  \\
& +
\left\|
\frac{1}{N} \sum_{j=1}^N \frac{\A_{i,j}}{\mathrm{d}_W(X_i)}\Phi\big(f(X_i),f(X_j)\big) - \int \frac{W(X_i,y)}{\mathrm{d}_W(X_i)}\Phi\big(f(X_i),f(y)\big) d\mu(y) \right\|_\infty
\\
& \leq  \left| \frac{1}{\mathrm{d}_{\A}(X_i)} - \frac{1}{\mathrm{d}_W(X_i)}\right| \mathrm{d}_{\A}(X_i) \|\Phi(f,f)\|_\infty \\ & + \left|\frac{1}{\mathrm{d}_W(X_i)}\right| \left\| \frac{1}{N} \sum_{j=1}^N {\A_{i,j}}  \Phi\big(f(X_i),f(X_j)\big)
- 
\int  W(X_i,y) \Phi\big(f(X_i),f(y)\big) d\mu(y) \right\|_\infty  \\
& = \text{(I)} + \text{(II)}
\end{aligned}
\end{equation}
By \eqref{eq:degrees close}, in the event of $\mathcal{F}_{\mathrm{Lip}}^p$ we have 
\[
\begin{aligned}
 \left| \frac{1}{\mathrm{d}_{\A}(X_i)} - \frac{1}{\mathrm{d}_W(X_i)} \right|\mathrm{d}_{\A}(X_i) & \leq
\left| \frac{\mathrm{d}_W(X_i) - \mathrm{d}_{\A}(X_i)}{\mathrm{d}_{\A}(X_i)\mathrm{d}_W(X_i)} \right|\mathrm{d}_{\A}(X_i)
\\
 & \lesssim \frac{1}{ \mathrm{d}_{\mathrm{min}} }  \left(\varepsilon +  \frac{\sqrt{\log(N/p)}}{\sqrt{N}} \right).
\end{aligned}
\] 
Hence, for every considered $\Phi$ and $f$ 
\[
\begin{aligned}
& \left| \frac{1}{\mathrm{d}_{\A}(X_i)} - \frac{1}{\mathrm{d}_W(X_i)}\right| \mathrm{d}_{\A}(X_i) \|\Phi(f,f)\|_\infty
\\
& \lesssim  \frac{  \|\Phi(f,f)\|_\infty}{ \mathrm{d}_{\mathrm{min}} }  \left({\varepsilon} + \frac{\sqrt{\log(N/p)}}{\sqrt{N}}\right).
\end{aligned} 
\]

Furthermore, the second term (II) on the RHS of \Cref{eq:Cor3.2} is bounded by \Cref{prop: main uniform monte carlo} and by  $\mathrm{d}_W \geq \cmin$, i.e., for every  $\Phi$ and $f$ that satisfy the conditions of \Cref{prop: main uniform monte carlo}
\[
\begin{aligned}
&  \left|\frac{1}{\mathrm{d}_W(X_i)}\right|\left\| \frac{1}{N} \sum_{j=1}^N {\A(X_i,X_j)}  \Phi\big(f(X_i),f(X_j)\big)
- 
\int  W(X_i,y) \Phi\big(f(X_i),f(y)\big) d\mu(y) \right\|_\infty \\ &  \lesssim  \frac{1}{\cmin} \left(  \frac{  \sqrt{\log(N/p)}}{N^{\frac{1}{2(D_\chi+1)}}}    +    \varepsilon\right).
\end{aligned}
\]

Hence, we bound the RHS of \Cref{eq:Cor3.2} by
\begin{align*}
  &  \left\|\frac{1}{N} \sum_{j=1}^N \frac{\A(X_i,X_j)}{\mathrm{d}_{\A}(X_i)} \Phi\big(f(X_i),f(X_j)\big)
-  \int \frac{W(X_i,y)}{\mathrm{d}_W(X_i)}\Phi\big(f(X_i),f(y)\big) d\mu(y)\right\|_\infty \\ & \lesssim \frac{\sqrt{\log(N/p)}}{N^{\frac{1}{2(D_\chi+1)}} \cmin} + \frac{\sqrt{\log(N)}}{\sqrt{N} \cmin} + \varepsilon.
\end{align*}
We use the Lipschitz continuity of $\Psi$ to finish the proof.
\end{proof}

\begin{remark}
For MPNNs with normalized sum aggregation  \Cref{cor: main error first layer} can be improved. First, assumption \Cref{eq:main lowerBoundGraphSizeN} on the lower bound on the number of nodes can be omitted. Furthermore, the bound can be improved: the term $\mathcal{O}(\log(N)/N)$ may be omitted. This adjustment is applicable to all subsequent results.
\end{remark}

From \Cref{cor: main error first layer}, the error between the cMPNN and graph MPNN in a single layer is bounded,  assuming that there is no error in the previous layer. Additionally, it is worth noting that cMPNNs preserve the Lipschitz continuity and boundedness of Lipschitz continuous and bounded input metric-space signals, as demonstrated in Lemma B.7 and Lemma B.9 in \citep{maskey2022generalization}. Consequently, we can use \Cref{cor: main error first layer} recursively, leading to the following straightforward corollary.

\begin{corollary}
\label{cor:main layerwise error}
Let $W$ be an admissible graphon. Let $p \in (0, \frac{1}{4})$.  
 Consider a graph $\{G,\mathbf{f}\} \sim \{W,f, \varepsilon\}$ with $N$ nodes and corresponding graph features,  where $N$ satisfies  \Cref{eq:main lowerBoundGraphSizeN}. Let $\dd=\cmin$  if the MPNN uses mean aggregation, and $\dd=1$ if it uses sum aggregation. If the event $\mathcal{F}_{\rm Lip}^p$ from  \Cref{cor: main error first layer} occurs, then the following is satisfied:  
 for every MPNN $\Theta$ and $f:\chi \to \mathbb{R}^F$ with Lipschitz constant $L_f$,  
\begin{equation}
\begin{aligned}
\label{eq:main layerwise error}
  & \d\left( \Lambda^{(t+1)}(G, S^X f^{(t)}),  \Lambda^{(t+1)}(W, f^{(t)})\right)   \leq  D(L_{f^{(t)}}, L_{\Phi^{(t+1)}}, L_W, \|W\|_\infty, D_\chi, C_\chi)  \frac{\sqrt{ \log(N/p)}  }{N^{\frac{1}{2(D_\chi + 1)}} \dd} \\ & + D(L_{f^{(t)}}, L_{\Phi^{(t+1)}})  \varepsilon  + \mathcal{O}\left(    \frac{ \sqrt{\log(N)} }{\sqrt{N}} \right)
\end{aligned}
\end{equation}
for all $t = 0, \ldots, T-1$, where $f^{(t)}=\Theta^{(t)}(W, f)$ as defined in \Cref{cMPNNdef}, and $\Lambda^{(t+1)}$ is defined in  \Cref{def:LayerMapping_main}. 
\end{corollary}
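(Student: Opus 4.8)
The plan is to derive the statement by applying \Cref{cor: main error first layer} one layer at a time, taking the metric-space signal in that corollary to be the cMPNN output $f^{(t)}=\Theta^{(t)}(W,f)$ and the message and update functions to be $\Phi^{(t+1)},\Psi^{(t+1)}$. This substitution is legitimate only once we know that each $f^{(t)}$ is again a bounded, Lipschitz continuous metric-space signal with a finite Lipschitz constant $L_{f^{(t)}}$ and finite sup-norm, since those are exactly the standing hypotheses of \Cref{cor: main error first layer}, whereas a priori only $f=f^{(0)}$ is assumed Lipschitz. So the first thing to establish is this propagation of regularity through the cMPNN layers.

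\textbf{Propagation of regularity.} I would show by induction on $t$ that if $f^{(t-1)}$ is bounded and Lipschitz then so is $f^{(t)}=\Lambda^{(t)}(W,f^{(t-1)})$, with $\|f^{(t)}\|_{L^\infty}$ and $L_{f^{(t)}}$ controlled by $\|f^{(t-1)}\|_{L^\infty}$, $L_{f^{(t-1)}}$, the Lipschitz bound $L$ and offset bound $B$ on $\Phi^{(t)},\Psi^{(t)}$, and the graphon regularity parameters $L_W,\|W\|_\infty,\cmin$. The message kernel $\mu^{(t)}(x,y)=\Phi^{(t)}(f^{(t-1)}(x),f^{(t-1)}(y))$ is bounded and Lipschitz in each variable by composition; the continuous mean aggregation $M_W$ then maps it to a bounded Lipschitz signal, using that $W(\cdot,y)$ and $W(x,\cdot)$ are Lipschitz (\Cref{ass:graphon11}, \Cref{ass:GraphonLip2nd}), that $W$ is bounded, and that $\mathrm{d}_W\ge\cmin$ (\Cref{ass:graphon12}), which makes $x\mapsto 1/\mathrm{d}_W(x)$ bounded and Lipschitz; finally $\Psi^{(t)}$ is Lipschitz. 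Here $\|\mu^{(t)}\|_{L^\infty}$ multiplies the contributions of the varying $W$ and $1/\mathrm{d}_W$ factors, which is exactly why the sup-norm must be carried through the induction alongside the Lipschitz constant; for normalized sum aggregation the $1/\mathrm{d}_W$ factor is absent and the estimate is simpler. This is precisely the content of Lemma B.7 and Lemma B.9 in \citep{maskey2022generalization}, whose cMPNN coincides with ours, so I would invoke it directly; it also supplies the recursive definition of $L_{f^{(t)}}$ appearing in the statement.

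\textbf{Reduction to the single-layer estimate.} Fix an MPNN $\Theta$ and an index $t\in\{0,\ldots,T-1\}$. Unwinding \Cref{def:MPNN} and \Cref{def:cMPNN}, the $i$-th value of $\Lambda^{(t+1)}(G,S^X f^{(t)})$ is $\Psi^{(t+1)}\big(f^{(t)}(X_i),\mathcal{A}_\A(\Phi^{(t+1)}(f^{(t)},f^{(t)}))_i\big)$ and the signal part of $\Lambda^{(t+1)}(W,f^{(t)})$ at $X_i$ is $\Psi^{(t+1)}\big(f^{(t)}(X_i),\mathcal{A}_W(\Phi^{(t+1)}(f^{(t)},f^{(t)}))(X_i)\big)$, so $\d\big(\Lambda^{(t+1)}(G,S^X f^{(t)}),\Lambda^{(t+1)}(W,f^{(t)})\big)$ is literally the left-hand side of \Cref{eq: main error first layer} with $f\mapsto f^{(t)}$, $\Phi\mapsto\Phi^{(t+1)}$, $\Psi\mapsto\Psi^{(t+1)}$. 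Since $N$ satisfies \Cref{eq:main lowerBoundGraphSizeN}, the event $\mathcal{F}_{\rm Lip}^p$ of \Cref{cor: main error first layer} has occurred, and that corollary holds \emph{for every} choice of the constants $L_f,L_\Phi,L_\Psi$ and the admissible Lipschitz functions, I would instantiate it with $L_{f^{(t)}},L_{\Phi^{(t+1)}},L_{\Psi^{(t+1)}}$ from the previous step and read off \Cref{eq:main layerwise error} (with $\dd=\cmin$ for mean aggregation and $\dd=1$ for sum aggregation). The essential point behind the ``for every MPNN'' quantifier is that $\mathcal{F}_{\rm Lip}^p$ depends only on $W$, the sample points $\X$, the noise $(V,g)$ and the Bernoulli edge variables $\A$, and not on $\Theta$ or on $t$, so a single event serves all $\Theta$ and all $t\in\{0,\ldots,T-1\}$ simultaneously.

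\textbf{Main obstacle.} I expect the only genuine work to be the propagation-of-regularity step, i.e.\ verifying that the cMPNN preserves boundedness and Lipschitz continuity with constants depending only on the graphon regularity and the $\mathrm{Lip}_{L,B,K,T}$ parameters; this is the one place where the Lipschitz-in-both-variables and minimum-degree properties of $W$ are used at this stage, and the mild delicacy is that the sup-norm and the Lipschitz constant must be propagated jointly through $M_W$, since each controls the other. Everything else is bookkeeping, precisely because \Cref{cor: main error first layer} was already proved uniformly over Lipschitz signals and message/update functions, so no new probabilistic argument is needed for the recursion.
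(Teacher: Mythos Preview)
Your proposal is correct and follows essentially the same approach as the paper: establish that each cMPNN output $f^{(t)}$ is bounded and Lipschitz (the paper invokes precisely Lemma~B.7 and Lemma~B.9 of \citep{maskey2022generalization} for this, as you do), and then apply \Cref{cor: main error first layer} with $f\mapsto f^{(t)}$, $\Phi\mapsto\Phi^{(t+1)}$, $\Psi\mapsto\Psi^{(t+1)}$, using that the event $\mathcal{F}_{\rm Lip}^p$ is independent of $\Theta$ and $t$. Your identification of the propagation-of-regularity step as the only substantive ingredient matches the paper's treatment exactly.
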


Building on the previous result in \Cref{cor:main layerwise error}, we can establish a recurrence relation between the errors for consecutive layers of a MPNN, as shown in the following lemma.

\begin{lemma}
\label{lemma:main_rec_rel}
Suppose that  the assumptions of \Cref{cor:main layerwise error} hold. 
If the event $\mathcal{F}_{\mathrm{Lip}}^p$ from \Cref{cor: main error first layer} occurs, then,  for every MPNN $\Theta$ and $f:\chi \to \mathbb{R}^F$ with Lipschitz constant $L_f$, the following recurrence relation holds: 
\[
\begin{aligned}
\d( \Theta^{(t+1)}(G, \mathbf{f}), \Theta^{(t+1)}(W, f)  )&  \leq   K^{(t+1)}  \d( \Theta^{(t)}(G, \mathbf{f}), \Theta^{(t)}(W, f)  )  \\&  + D_1^{(t)}   \frac{ \sqrt{\log(N/p) } }{N^{\frac{1}{2(D_\chi+1)}} \dd}  + D_2^{(t)}\varepsilon +  \mathcal{O}\left(\frac{\sqrt{ \log(N) }}{\sqrt{N}}\right)  
\end{aligned}
\]
for $t=0, \ldots, T-1$, where
\begin{equation}
    \label{eq: main lemmaC6}
K^{(t+1)}  =     L_{\Psi^{(t+1)}}  \max\left\{1, L_{\Phi^{(t+1)}} \right\}   .
 \end{equation}
 and $D_1^{(t)}= D(L_{f^{(t)}}, L_{\Phi^{(t+1)}}, L_W, \|W\|_\infty, D_\chi, C_\chi)$, $D_2^{(t)} = D(L_{f^{(t)}}, L_{\Phi^{(t+1)}})$ are the constants from \Cref{cor:main layerwise error}.  \end{lemma}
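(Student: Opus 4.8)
The plan is to peel off the last message passing layer via the composition $\Theta^{(t+1)}=\Lambda^{(t+1)}\circ\Theta^{(t)}$ (see \Cref{def:LayerMapping_main}) and then to split the resulting error with a single triangle inequality in the norm $\|\cdot\|_{\infty;\infty}$. Write $\mathbf{f}^{(t)}$, $f^{(t)}$ for the signals of $\Theta^{(t)}(G,\mathbf{f})$ and $\Theta^{(t)}(W,f)$, write $\mathbf{f}^{(t+1)}$, $f^{(t+1)}$ for the signals one layer further, and let $\tilde{\mathbf{f}}^{(t+1)}$ denote the signal produced by running the \emph{discrete} layer $\Lambda^{(t+1)}$ on the graph $G$ fed with the sampled continuous signal $S^{\mathbf{X}}f^{(t)}$. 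Since $\d(\Theta^{(t+1)}(G,\mathbf{f}),\Theta^{(t+1)}(W,f)) = \|\mathbf{f}^{(t+1)}-S^{\mathbf{X}}f^{(t+1)}\|_{\infty;\infty}$ by \Cref{eq:main distGraphMetric}, and $S^{\mathbf{X}}f^{(t+1)}$ is the sampled signal of the graphon-signal $\Lambda^{(t+1)}(W,f^{(t)})$, the triangle inequality gives
\[
\d\big(\Theta^{(t+1)}(G,\mathbf{f}),\Theta^{(t+1)}(W,f)\big) \le \big\|\mathbf{f}^{(t+1)}-\tilde{\mathbf{f}}^{(t+1)}\big\|_{\infty;\infty} + \d\big(\Lambda^{(t+1)}(G,S^{\mathbf{X}}f^{(t)}),\Lambda^{(t+1)}(W,f^{(t)})\big).
\]

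On the event $\mathcal{F}_{\mathrm{Lip}}^p$ and under the node-count condition \Cref{eq:main lowerBoundGraphSizeN}, the second summand is precisely the quantity bounded in \Cref{cor:main layerwise error}, hence at most $D_1^{(t)}\tfrac{\sqrt{\log(N/p)}}{N^{1/(2(D_\chi+1))}\dd}+D_2^{(t)}\varepsilon+\mathcal{O}\big(\sqrt{\log N}/\sqrt N\big)$, with $D_1^{(t)},D_2^{(t)}$ exactly as declared in the lemma; these are finite because a cMPNN layer preserves Lipschitz continuity of the signal (as recalled after \Cref{cor: main error first layer}), so $L_{f^{(t)}}$ is finite. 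For the first summand, I would prove that for a fixed graph $G$ the discrete layer $\mathbf{g}\mapsto\Lambda^{(t+1)}(G,\mathbf{g})$ is Lipschitz with respect to $\|\cdot\|_{\infty;\infty}$ with constant $K^{(t+1)}=L_{\Psi^{(t+1)}}\max\{1,L_{\Phi^{(t+1)}}\}$; applying this with $\mathbf{g}=\mathbf{f}^{(t)}$ and $\mathbf{g}=S^{\mathbf{X}}f^{(t)}$ and using $\|\mathbf{f}^{(t)}-S^{\mathbf{X}}f^{(t)}\|_{\infty;\infty}=\d(\Theta^{(t)}(G,\mathbf{f}),\Theta^{(t)}(W,f))$ (again \Cref{eq:main distGraphMetric}) yields the homogeneous term $K^{(t+1)}\d(\Theta^{(t)}(G,\mathbf{f}),\Theta^{(t)}(W,f))$ of \Cref{eq: main lemmaC6}.

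The single-layer Lipschitz bound follows by composing the three operations in \Cref{eq:graphAgg}. Forming messages $\mathbf{u}_{i,j}=\Phi^{(t+1)}(\mathbf{g}_i,\mathbf{g}_j)$ is $L_{\Phi^{(t+1)}}$-Lipschitz since $\|(\mathbf{g}_i,\mathbf{g}_j)-(\mathbf{g}'_i,\mathbf{g}'_j)\|_\infty\le\|\mathbf{g}-\mathbf{g}'\|_{\infty;\infty}$; the aggregation $\mathcal{A}_{\mathbf{A}}$ is non-expansive entrywise, because $M_{\mathbf{A}}$ is a per-node convex combination of the messages (well defined, since $\mathcal{F}_{\mathrm{Lip}}^p$ together with \Cref{eq:main lowerBoundGraphSizeN} and the degree estimate \eqref{eq:degrees close} keep every degree $\mathrm{d}_i>0$) and $S_{\mathbf{A}}$ uses nonnegative weights $a_{i,j}/N$ of total mass $\mathrm{d}_i/N\le1$; and the update $\Psi^{(t+1)}(\mathbf{g}_i,\mathbf{m}_i)$ depends on $\mathbf{g}$ both directly through $\mathbf{g}_i$ (rate $1$) and through $\mathbf{m}_i$ (rate $\le L_{\Phi^{(t+1)}}$), so its argument moves by at most $\max\{1,L_{\Phi^{(t+1)}}\}\|\mathbf{g}-\mathbf{g}'\|_{\infty;\infty}$ and $\Psi^{(t+1)}$ scales this by $L_{\Psi^{(t+1)}}$. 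Multiplying the three factors gives $K^{(t+1)}$, and adding the two summands gives the claimed recurrence.

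I do not expect a real obstacle; the two points requiring care are verifying that both aggregation operators are genuinely non-expansive in $\|\cdot\|_{\infty;\infty}$ — in particular that dividing by the possibly small discrete degree in $M_{\mathbf{A}}$ is harmless, which is where the event $\mathcal{F}_{\mathrm{Lip}}^p$ and \Cref{eq:main lowerBoundGraphSizeN} enter — and tracking the fact that $\Psi^{(t+1)}$ receives the raw signal in one of its two input slots, which is exactly what turns $L_{\Phi^{(t+1)}}$ into $\max\{1,L_{\Phi^{(t+1)}}\}$ in $K^{(t+1)}$.
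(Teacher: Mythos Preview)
Your proposal is correct and follows essentially the same route as the paper's own proof (given in full as \Cref{lemma:C5} in the appendix): the same triangle-inequality split via $\tilde{\mathbf{f}}^{(t+1)}=\Lambda^{(t+1)}(G,S^{\mathbf{X}}f^{(t)})$, the second summand handled by \Cref{cor:main layerwise error}, and the first summand bounded by showing the discrete layer is $K^{(t+1)}$-Lipschitz in $\|\cdot\|_{\infty;\infty}$ using exactly the message/aggregation/update decomposition you describe. Your remark that the event $\mathcal{F}_{\mathrm{Lip}}^p$ together with \Cref{eq:main lowerBoundGraphSizeN} is what keeps the empirical degrees positive (so that $M_{\mathbf{A}}$ is a genuine convex combination and hence non-expansive) matches the role of \Cref{eq:lemmaC6-0} in the appendix proof.
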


The recurrence relation from \Cref{lemma:main_rec_rel} can be  solved, which leads to the first main result on uniform convergence.

 \begin{theorem}
\label{thm:main convergence}
Let $W: \chi^2 \to [0,1]$ be an admissible graphon. Let $\dd=\cmin$  if the MPNN uses mean aggregation, and $\dd=1$ if it uses sum aggregation. Then, there exist constants $E,E' > 0$ such that with probability at least $1-4p$: for every MPNN $\Theta$ and Lipschitz continuous function $f: \chi \to \mathbb{R}^F$, if $\{G,\mathbf{f}\} \sim   \{W,f, \varepsilon\}$  such that the number of nodes $N$ in the random graph-signal $G$ satisfies \Cref{eq:main lowerBoundGraphSizeN}, then
\begin{equation*}
\begin{aligned}
  &  \|
    \Theta(G, \mathbf{f}) - \Theta(W, f) 
    \|_\infty^2    \leq 
    E   \frac{ \log(N/p) }{\dd^2 N^{1/(D_\chi + 1)}} + E' \varepsilon  + \mathcal{O}\left(    \frac{ \log(N)  }{\dd^2 N} \right).
\end{aligned}
\end{equation*}  
\end{theorem}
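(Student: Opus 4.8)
The plan is to unroll the recurrence relation from \Cref{lemma:main_rec_rel} over the $T$ layers of the MPNN and then account for the pooling and post-pooling layer $\Upsilon$. Concretely, write $e_t := \d\big(\Theta^{(t)}(G,\mathbf{f}),\Theta^{(t)}(W,f)\big)$ and let $r_t$ denote the per-layer residual $D_1^{(t)}\sqrt{\log(N/p)}\,N^{-1/(2(D_\chi+1))}/\dd + D_2^{(t)}\varepsilon + \mathcal{O}(\sqrt{\log N}/\sqrt N)$. \Cref{lemma:main_rec_rel} gives $e_{t+1}\le K^{(t+1)} e_t + r_t$ with $e_0 = 0$ (the input signal on the graphon and its sampling agree exactly at the sample points). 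Iterating yields $e_T \le \sum_{t=0}^{T-1}\big(\prod_{s=t+2}^{T} K^{(s)}\big) r_t$, which is bounded by $\big(\prod_{s=1}^{T} \max\{1,K^{(s)}\}\big)\sum_{t=0}^{T-1} r_t$. I would then collect the three types of residual terms separately, defining $E$ and $E'$ to absorb the depth-dependent product of $K^{(s)}$'s together with the worst-case layer constants $D_1^{(t)}, D_2^{(t)}$ (which in turn depend on the Lipschitz constants $L_{f^{(t)}}$ of the propagated cMPNN signals; these are finite by the boundedness/Lipschitz-preservation property of cMPNNs cited before \Cref{cor:main layerwise error}, i.e.\ Lemmas B.7 and B.9 of \citep{maskey2022generalization}).

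Next I would pass from $e_T = \d\big(\Theta^{(T)}(G,\mathbf{f}),\Theta^{(T)}(W,f)\big)$ to the difference of the pooled, post-processed outputs $\|\Theta(G,\mathbf{f})-\Theta(W,f)\|_\infty$. Here one bounds $\big\|\frac1N\sum_i \mathbf{f}^{(T)}_i - \int_\chi f^{(T)}(x)\,d\P(x)\big\|_\infty$ by the triangle inequality: it is at most $\big\|\frac1N\sum_i (\mathbf{f}^{(T)}_i - f^{(T)}(X_i))\big\|_\infty + \big\|\frac1N\sum_i f^{(T)}(X_i) - \int f^{(T)}d\P\big\|_\infty \le e_T + \mathcal{O}(\sqrt{\log(N/p)}/\sqrt N)$, where the second piece is a standard (uniform, since $f^{(T)}$ is bounded and Lipschitz) Monte Carlo / Hoeffding-type estimate holding on the same high-probability event. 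Applying the Lipschitz constant $K$ of $\Upsilon$ multiplies everything by $K$, which again gets absorbed into $E, E'$.

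Finally I would square the resulting bound on $\|\Theta(G,\mathbf{f})-\Theta(W,f)\|_\infty$; using $(a+b+c)^2 \le 3(a^2+b^2+c^2)$ converts the sum of the three residual types into $E\,\log(N/p)/(\dd^2 N^{1/(D_\chi+1)}) + E'^2\varepsilon^2 + \mathcal{O}(\log(N)/(\dd^2 N))$, and since $\varepsilon \le 1$ (or at least bounded) we may replace $\varepsilon^2$ by $\varepsilon$ and rename constants to get the stated form, with the $E'\varepsilon$ term. The whole statement holds on the event $\mathcal{F}_{\rm Lip}^p$ of probability at least $1-4p$ from \Cref{cor: main error first layer}, which is exactly the event on which the layerwise bound and the degree concentration were established; the lower bound \Cref{eq:main lowerBoundGraphSizeN} on $N$ is inherited from there.

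The main obstacle I anticipate is \emph{uniformity}: the constants $E, E'$ must be independent of the particular MPNN $\Theta \in \mathrm{Lip}_{L,B,K,T}$, so I cannot let the Lipschitz constants $L_{f^{(t)}}$ of the intermediate cMPNN signals grow with $\Theta$ in an uncontrolled way. This is handled by using the explicit bounds on $L_{f^{(t)}}$ and $\|f^{(t)}\|_\infty$ in terms of $L$, $B$, $L_f$, $L_W$, $\|W\|_\infty$, $\cmin$ and $t\le T$ (from the cited Lipschitz-preservation lemmas), so that each $D_1^{(t)}, D_2^{(t)}, K^{(t)}$ is dominated by a quantity depending only on the hypothesis-class parameters $L, B, K, T$ and the graphon regularity $L_W, \|W\|_\infty, \cmin, C_\chi, D_\chi$ — never on the individual network. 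A secondary, purely bookkeeping, obstacle is keeping the $\mathcal{O}(\cdot)$ terms honest across the telescoping sum and the squaring step, but since $T$ is a fixed constant of the hypothesis class these sums have boundedly many terms and cause no asymptotic loss.
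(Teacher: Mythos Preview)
Your approach is essentially the paper's: unroll the recurrence of \Cref{lemma:main_rec_rel}, control the layerwise constants via the Lipschitz/boundedness-preservation lemmas for cMPNNs, then handle pooling by a Monte Carlo estimate plus the Lipschitz constant of $\Upsilon$, and finally square. The organisation and the uniformity discussion are both on point.

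There is one small slip in the initial condition. You set $e_0=0$, arguing that the graphon signal and its sampling agree at the sample points. But in the noisy model $\{G,\mathbf{f}\}\sim\{W,f,\varepsilon\}$ the graph features are $\mathbf{f}_i=(f+g)(X_i)$ with $\|g\|_\infty\le\varepsilon$, so
\[
e_0=\d(\mathbf{f},f)=\|S^\X(f+g)-S^\X f\|_{\infty;\infty}\le\varepsilon,
\]
not $0$. The paper uses exactly this, obtaining the extra term $\varepsilon\prod_{t=1}^{T}K^{(t)}$ after unrolling, which is then absorbed into $E'\varepsilon$. Your telescoped bound therefore misses one contribution to $E'$, but the fix is immediate and the rest of your argument is unaffected.
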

\begin{proof}
 We present a shortened  proof of \Cref{thm:main convergence}, and refer to \Cref{thm:convwithoutpooling} and \Cref{cor:convAfterPooling} in the Appendix for the complete statement and its proof. 
 
 The proof consists of solving the the recurrence relation from \Cref{lemma:main_rec_rel} with \Cref{lemma:RecRecGen}. Note that $ \d \big(\Theta^0(G, \mathbf{f})  ,\Theta^0(W, f) \big) \leq  \varepsilon$ as the noise is sampled from $B_\varepsilon^\infty(\chi^2) \times  B_\varepsilon^\infty(\chi)$. Hence, we get 
for every MPNN $\Theta$  and every Lipschitz continuous $f:\chi \to \mathbb{R}^F$ with Lipschitz constant $L_f$,
\[
 \d \big(\Theta^T(G, \mathbf{f})  ,\Theta^T(W, f) \big) \leq \sum_{t=1}^{T} Q^{(t)} \prod_{t' = t+1}^{T} K^{(t')} + \varepsilon \prod_{t = 1}^{T} K^{(t)},
\]
  where $Q^{(t)}$ and $K^{(t)}$ are defined in \Cref{lemma:main_rec_rel}, respectively. The computation of the exact bound is carried out in \Cref{thm:convwithoutpooling} in the Appendix. 
  To prove the bound after the post pooling layer, i.e., global pooling and the application of a post pooling layer, we use another concentration of measure and the Lipschitz continuity of the post pooling layer. This last step is carried out in \Cref{cor:convAfterPooling} in the in the Appendix. 
\end{proof}

\Cref{thm:main convergence} establishes that the distance between the graph MPNN and the corresponding cMPNN decreases as the number of nodes in randomly sampled graphs increases. However, this result only applies to sampling dense graphs, i.e., $\{G,\mathbf{f}\} \sim_{0} \{W,f, \varepsilon\}$ in terms of \Cref{def:RGM with noise}. To account for sparser graphs, we make the following considerations regarding an admissible graphon $W$ with minimal degree $\cmin$: for $\alpha \geq 0$, we interpret the term $N^{\alpha}W$ as another graphon with a lower bound for its average degree, given by $\int_\chi {N^{-\alpha} W}(x,y) \mu(y) \geq N^{-\alpha} \cmin$. Thus, we obtain the following corollary, which holds for arbitrary sparsity factors $\alpha \geq 0$

\begin{corollary}
\label{cor: main convergence}
Let $W: \chi^2 \to [0,1]$ be a Lipschitz continuous graphon and $\alpha \geq 0$. Then, there exist constants $E,E' > 0$ such that with probability at least $1-4p$: for every MPNN $\Theta$ and Lipschitz continuous function $f: \chi \to \mathbb{R}^F$, if $\{G,\mathbf{f}\} \sim_\alpha  \{W,f, \varepsilon\}$ such that the number of nodes $N$ satisfies \Cref{eq:main lowerBoundGraphSizeN}, then
\begin{equation}
\label{eq: main convergence}
\begin{aligned}
  &  \|
    \Theta(G, \mathbf{f}) - \Theta(W, f) 
    \|_\infty^2    \leq 
    E   \frac{ \log(N/p)N^{2\alpha}  }{N^{1/(D_\chi + 1)}} + E' \varepsilon  + \mathcal{O}\left(    \frac{ \log(N)N^{2\alpha}  }{N} \right).
\end{aligned}
\end{equation}  
\end{corollary}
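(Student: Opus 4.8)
The plan is to deduce the corollary from the dense case \Cref{thm:main convergence} by rescaling the graphon, as suggested in the discussion preceding the statement. Fix $\alpha \ge 0$ and $N \in \mathbb{N}$, and let $(V,g)$ be the noise drawn via $\sigma$. Set $\W := N^{-\alpha}W$ and $\widetilde V := N^{-\alpha}V$. By \Cref{def:RGM,def:RGM with noise}, a sparse graph-signal $\{G,\mathbf{f}\} \sim_\alpha \{W,f,\varepsilon\}$ has adjacency entries $\A_{i,j} \sim \mathrm{Ber}\big(N^{-\alpha}(W+V)(X_i,X_j)\big) = \mathrm{Ber}\big((\W+\widetilde V)(X_i,X_j)\big)$ and signal $\mathbf{f}_i = (f+g)(X_i)$, with $\|\widetilde V\|_{L^\infty(\chi^2)} \le N^{-\alpha}\varepsilon \le \varepsilon$ and $\|g\|_{L^\infty(\chi)} \le \varepsilon$. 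Hence it has exactly the law of a \emph{dense} ($\alpha=0$) noisy graph-signal drawn from $\{\W,f,\varepsilon\}$ with noise $(\widetilde V,g)$. The rescaled graphon $\W$ inherits the covering data $(C_\chi,D_\chi)$ and $\mathrm{diam}(\chi)\le 1$; its Lipschitz constant is $L_{\W} = N^{-\alpha}L_W \le L_W$; and $\|\W\|_\infty = N^{-\alpha} \le 1$. For mean aggregation the graphon degree rescales identically, $\mathrm{d}_{\W}(x) = N^{-\alpha}\mathrm{d}_W(x) \ge N^{-\alpha}\cmin$, so the relevant degree constant becomes $\dd_N := N^{-\alpha}\cmin$; for normalized sum aggregation it stays $1$.

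A key observation is that, for mean aggregation, the \emph{continuous} object is unaffected by the rescaling: the $N^{-\alpha}$ factors cancel between numerator and denominator, so $M_{\W} = M_W$ and $M_{\W+\widetilde V} = M_{W+V}$, whence $\Theta(\W,f) = \Theta(W,f)$, which is the object on the left of \Cref{eq: main convergence}. (For normalized sum aggregation $S_{\W} = N^{-\alpha}S_W$; one then compares to the correspondingly rescaled cMPNN, which only shrinks the constants and introduces no $N$-dependence, so the mean-aggregation bound dominates there as well.) Applying \Cref{thm:main convergence} with input $\W$, degree constant $\dd_N$, and the lower bound \Cref{eq:main lowerBoundGraphSizeN} read with $\dd_N$ --- note that the cancellations give $L_{\W}/\dd_N = L_W/\cmin$ and $\|\W\|_\infty/\dd_N = 1/\cmin$, so only the $\sqrt{\log(2N/p)}/\dd_N$-term of \Cref{eq:main lowerBoundGraphSizeN} carries a factor $N^\alpha$ and the hypothesis on $N$ is the one stated in the corollary --- gives, on an event of probability at least $1-4p$, uniformly over all MPNNs $\Theta$ and all Lipschitz $f$,
\[
\|\Theta(G,\mathbf{f}) - \Theta(W,f)\|_\infty^2 \;\le\; E_N\,\frac{\log(N/p)}{\dd_N^{2}\, N^{1/(D_\chi+1)}} \;+\; E'_N\,\varepsilon \;+\; \mathcal{O}\!\left(\frac{\log(N)}{\dd_N^{2}\, N}\right),
\]
where $E_N,E'_N$ are the constants of \Cref{thm:main convergence} evaluated at $\W$. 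Since $\dd_N^{-2} = N^{2\alpha}\cmin^{-2}$, this is already of the claimed form once $\cmin^{-2}$, $E_N$ and $E'_N$ are absorbed into $E,E'$. (The $\varepsilon$-term does not pick up $N^{2\alpha}$: the graphon noise entering it is attenuated to $N^{-\alpha}\varepsilon$, which cancels the factor $N^{\alpha}$ coming from $\dd_N^{-1}$ at that place.)

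The main obstacle is to justify that $E$ and $E'$ can be taken \emph{independently of $N$}. From the explicit formulas in \Cref{appendix: Proof Gen Bound} --- built from \Cref{thm:convwithoutpooling}, \Cref{cor:convAfterPooling}, and ultimately \Cref{prop: main uniform monte carlo} and the recurrence of \Cref{lemma:main_rec_rel} --- the constants $E_N,E'_N$ depend on $\W$ only through $L_{\W}$, $\|\W\|_\infty$, $C_\chi$, $D_\chi$, and are monotone non-decreasing in $L_{\W}$ and $\|\W\|_\infty$; since $L_{\W}\le L_W$ and $\|\W\|_\infty \le 1$ uniformly in $N$, one replaces $E_N,E'_N$ by the constants obtained by substituting $L_W$ for $L_{\W}$ and $1$ for $\|\W\|_\infty$. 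Verifying this monotonicity --- i.e., that $L_{\W}$ and $\|\W\|_\infty^{T}$ occur in those formulas only with nonnegative exponents --- is the one step requiring a careful reading of the appendix rather than being routine. A last bookkeeping remark: $\W$ fails the diagonal normalization $\W(x,x)=1$ of \Cref{ass:graphon_diagonal_1}, but that normalization is never used in the convergence chain (the diagonal contributes only a single $O(1/N)$ summand per aggregation, absorbed into the $\mathcal{O}$-term), so \Cref{thm:main convergence} still applies to $\W$; all that is needed is that $\W$ is Lipschitz and, for mean aggregation, has a positive degree lower bound.
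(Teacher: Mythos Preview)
Your proposal is correct and follows exactly the approach the paper indicates in the paragraph preceding the corollary: view $N^{-\alpha}W$ as a graphon with minimal degree $N^{-\alpha}\cmin$ and invoke \Cref{thm:main convergence} with $\dd = N^{-\alpha}\cmin$, so that $\dd^{-2}=N^{2\alpha}\cmin^{-2}$ produces the extra factor. You are in fact more careful than the paper, which offers no proof beyond that one-sentence remark; your observations that the continuous mean aggregation is invariant under the rescaling (so $\Theta(\W,f)=\Theta(W,f)$), that the constants $E_N,E'_N$ can be made $N$-free by monotonicity in $L_{\W}$ and $\|\W\|_\infty$, that the ratios $L_{\W}/\dd_N$ and $\|\W\|_\infty/\dd_N$ in \Cref{eq:main lowerBoundGraphSizeN} are unchanged, and that the diagonal normalization of \Cref{ass:graphon} is never used in the convergence chain, are all genuine technical points that the paper leaves implicit.
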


\Cref{cor: main convergence} serves as a fundamental intermediate result for proving \Cref{thm:main_gen_bound_deformed_graphon}. We demonstrate in \Cref{sec:gen_ana} how the uniform convergence results in \Cref{cor: main convergence} can be utilized to derive generalization bounds for the graph classification setting from \Cref{Graph Classification}.

\textbf{Discussion.}
The exact constants $E$ and $E'$ in \Cref{cor: main convergence} are derived in \Cref{AppendixB}, specifically in \Cref{cor:convAfterPooling} therein.  These constants depend polynomially on the Lipschitz constants of the message and update functions, and the Lipschitz constant of the graphon. The degree of the polynomial is constant in $T$. 
The convergence of graph MPNNs to cMPNNs is limited by the sparsity of the sampled graphs, with a trade-off between convergence speed and sparsity:  the RHS in \Cref{eq: main convergence} only converges to $0$ for $\alpha < \frac{1}{2(D_\chi+1)}$, and converges slower for sparse graphs in this convergence regime.

We lastly remark that
\cite{keriven2020convergence} proved convergence of spectral GNNs for  graphs sampled from RGSMs which are not necessarily dense. Their convergence result is not uniform in the choice of the GNN, and it is not clear how it can be generalized to such a result. 

\section{Generalization Analysis of MPNNs on Mixture of Graphons}
\label{sec:gen_ana}
In this section, we state the main results of our work, which provide generalization bounds in graph classification tasks.

We present a corollary that extends \Cref{thm:main convergence} by considering graphs of arbitrary sizes and reformulating the result to hold in expectation instead of high probability. As demonstrated in the following corollary, these steps only introduce a factor that decreases exponentially with respect to the number of nodes of the sampled graph.

\begin{corollary}
\label{cor:main_unifExpValue}
Let $(\chi,d, \P) $ be a metric-probability space and
$W$ be a Lipschitz continuous graphon.   Consider a graph-signal $\{G,\mathbf{f}\} \sim_\alpha \{W,f,\varepsilon \}$ with $N$ nodes. Then,
for every $f:\chi \to \mathbb{R}^{F}$ with Lipschitz constant $L_f$, 
\[
\begin{aligned}
&  \E_{X_1, \ldots, X_N \sim \mu^N} \left[\sup_{\Theta \in \mathrm{Lip}_{L,B}} \left\| 
\Theta(G, \mathbf{f}) - 
\Theta(W, f) \right\|_\infty^2 \right] \\&  \leq  4(1 + \sqrt{\pi}) \Bigg(  T_1 \frac{\big(1+\log(N)\big)N^{2\alpha}}{N^{\frac{1}{D_\chi+1} }} + T_2 \frac{\big(1+\log(N)\big)N^{2 \alpha}}{N}
 +  T_3 \varepsilon \Bigg) +  \mathcal{O} \left(   \exp(-N)\right) .
\end{aligned}
\]
where the constants $T_1$, $T_2$ and $T_3$ are defined in \Cref{eq:constantsS1toS4} in the Appendix. 
\end{corollary}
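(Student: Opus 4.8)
The starting point is \Cref{cor: main convergence}, which already provides a bound that is \emph{uniform} over $\Theta\in\mathrm{Lip}_{L,B}$ and holds with probability at least $1-4p$, but carries two restrictions: it is a high-probability statement rather than an expectation bound, and it presupposes that $N$ satisfies the lower bound \eqref{eq:main lowerBoundGraphSizeN} — a condition that, for fixed $N$, fails once $p$ is taken below a threshold $p_0(N)$. The plan is to (i) produce a deterministic, $N$-independent worst-case bound $C_{\max}$ on $Z:=\sup_{\Theta\in\mathrm{Lip}_{L,B}}\|\Theta(G,\mathbf f)-\Theta(W,f)\|_\infty^2$ valid on \emph{every} realisation of the sampling; (ii) observe that $p_0(N)$ decays exponentially in $N$; and (iii) compute $\E[Z]=\int_0^\infty\mathbb{P}(Z>t)\,dt$ by using \Cref{cor: main convergence} on the good range $p\in[p_0(N),\tfrac14)$ and the deterministic bound $C_{\max}$ on its (exponentially small) complement. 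Here, as in \Cref{def:RGM with noise}, the expectation is over the full sampling of $\{G,\mathbf f\}$, i.e.\ over $\mathbf X$, the Bernoulli edge variables, and the noise $(V,g)$.

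For step (i): every $\Theta\in\mathrm{Lip}_{L,B}$ has message and update functions with Lipschitz constant at most $L$ and with $\|\Phi^{(t)}(0,0)\|_\infty,\|\Psi^{(t)}(0,0)\|_\infty\le B$, and both $M_\A,S_\A$ and their continuous analogues $M_W,S_W$ are non-expansive in $\|\cdot\|_{\infty;\infty}$, since a convex combination, respectively an $\tfrac1N$-scaled sub-sum, of the message vectors has $\ell^\infty$-norm at most the maximal message norm. Hence, by induction on the layer index, $\|\mathbf f^{(t)}\|_{\infty;\infty}$ and $\|f^{(t)}\|_{L^\infty(\chi)}$ are both bounded by a constant depending only on $L,B,t$ and on $\varepsilon$ and $\|f\|_{L^\infty(\chi)}\le\|f(x_0)\|_\infty+L_f$ (using $\mathrm{diam}(\chi)\le 1$ for an arbitrary fixed $x_0\in\chi$). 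Applying global pooling and the $K$-Lipschitz map $\Upsilon$ gives $Z\le C_{\max}$ pointwise, with $C_{\max}$ independent of $N$, of $\mathbf X$, of the Bernoulli edges, and of the noise.

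For steps (ii)--(iii): inspecting \eqref{eq:main lowerBoundGraphSizeN}, both branches of the maximum take the form ``$\sqrt N\ge\text{const}+\text{const}\cdot\sqrt{\log(\mathrm{poly}(N)/p)}$'', so $N$ satisfies \eqref{eq:main lowerBoundGraphSizeN} precisely for $p\ge p_0(N)$ with $p_0(N)=\mathcal O(\exp(-cN))$ for a constant $c>0$ depending on $\dd,\cl,\cmax,C_\chi,D_\chi$ (polynomial prefactors being absorbed into the exponential). Fix such $N$ and write $\log(N/p)=\log N+\log(1/p)$, so that \Cref{cor: main convergence} reads: for every $p\in[p_0(N),\tfrac14)$, $\mathbb{P}\big(Z> a\log(1/p)+b\big)\le 4p$, where $a:=E\,N^{2\alpha}/N^{1/(D_\chi+1)}$ and $b:=E'\varepsilon+E(\log N)N^{2\alpha}/N^{1/(D_\chi+1)}+\mathcal O\big((\log N)N^{2\alpha}/N\big)$. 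Inverting $t=a\log(1/p)+b$ yields $\mathbb{P}(Z>t)\le 4\exp(-(t-b)/a)$ for $b+a\log4\le t\le t^*:=b+a\log(1/p_0(N))$, while $\mathbb{P}(Z>t)=0$ for $t>C_{\max}$. Splitting the integral over $[0,b+a\log4)\cup[b+a\log4,t^*]\cup(t^*,\infty)$ gives
\[
\E[Z]\;\le\;\big(b+a\log4\big)\;+\;\int_{b+a\log4}^{\infty}4\exp\!\big(-(t-b)/a\big)\,dt\;+\;4p_0(N)\,(C_{\max}-t^*)_+\;=\;b+a(1+\log4)+\mathcal O(\exp(-N)).
\]
Substituting the values of $a,b$ and using $\log N+1+\log4\le (1+\log4)(1+\log N)$ produces the stated bound; the constant $4(1+\sqrt\pi)$ is a convenient uniform majorant of $1+\log4$, and is precisely what one gets if one instead propagates the $\sqrt{\log(\mathrm{poly}(N)/p)}$-form of \Cref{prop: main uniform monte carlo,cor: main error first layer} through the recursion and integrates $\int_0^\infty e^{-v^2}\,dv=\tfrac{\sqrt\pi}{2}$, obtaining $c_2^2+4\sqrt\pi\,c_1c_2+4c_1^2\le4(1+\sqrt\pi)(c_1^2+c_2^2)$. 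This identifies $T_1\asymp E$, $T_2$ as the constant inherited from the $\mathcal O(\cdot)$ error term of \Cref{cor: main convergence}, and $T_3\asymp E'$, matching the values recorded in \Cref{eq:constantsS1toS4}.

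The crux is the regime $p<p_0(N)$, where \Cref{cor: main convergence} is entirely silent: one must combine the quantitative fact that this event has probability $\mathcal O(\exp(-cN))$ — read directly off \eqref{eq:main lowerBoundGraphSizeN} — with the deterministic control $Z\le C_{\max}$ from step (i), so that this regime contributes only the $\mathcal O(\exp(-N))$ remainder. A minor but necessary bookkeeping point is to peel off $\log N$ from $\log(N/p)$ before integrating, so that the final logarithmic dependence appears as the clean factor $(1+\log N)$ rather than as $\log(N/p)$ with a data-dependent $p$.
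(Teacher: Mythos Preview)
Your proposal is correct and mirrors the paper's proof (\Cref{thm:unifExpValue}) in all three essential ingredients: the deterministic worst-case bound on $Z$ (the paper records this as \Cref{lemma:DeterministicMPNNBound} together with \Cref{lemma:RecRelNorm}), the observation that the threshold $p_0(N)$ below which \eqref{eq:main lowerBoundGraphSizeN} fails satisfies $p_0(N)=\mathcal O(\exp(-cN))$, and an integration step to pass from high probability to expectation, with the bad regime contributing only $\mathcal O(\exp(-N))\cdot C_{\max}$. The one technical difference is the integration: the paper substitutes $p=2\exp(-k^2)$, partitions over integer $k$, and integrates Gaussians $\int_0^\infty e^{-k^2}\,dk=\tfrac{\sqrt\pi}{2}$ and $\int_0^\infty 2(k+1)e^{-k^2}\,dk\le 1+\sqrt\pi$ to produce the prefactor $4(1+\sqrt\pi)$, whereas you apply the tail formula $\E[Z]=\int_0^\infty\mathbb P(Z>t)\,dt$ directly with the exponential tail $\mathbb P(Z>t)\le 4e^{-(t-b)/a}$, yielding the tighter constant $1+\log 4$; you correctly identify $4(1+\sqrt\pi)$ as a majorant and trace its origin to the $\sqrt{\log(\cdot/p)}$ form upstream. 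Your route is slightly more direct, the paper's makes the stated constant transparent, but the arguments are otherwise the same.
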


The proof of \Cref{cor:main_unifExpValue} is derived by first bounding the worst-case error between the graph MPNN and the cMPNN. The result from \Cref{thm:main convergence} is then applied to all possible values of $p \in (0,1)$ and a series of Gaussians is obtained, which can be bounded using standard methods. For further details, see \Cref{thm:unifExpValue} and its proof. With this foundation, we can now state and prove the following theorem.

\begin{theorem}
There exist constants $C, C'>0$ such that 
\begin{equation*}
\begin{aligned}   &   \E_{\mathcal{T}\sim \mu^m}\left[\sup_{\Theta \in \mathrm{Lip}_{L,B}} \Big(R_{emp}(\Theta)   - R_{exp}(\Theta) \Big)^2  \right]   \leq   \frac{2^\Gamma8\|\mathcal{L}\|_\infty^2\pi}{m} 
\\ &   + \frac{2^\Gamma L_{\mathcal{L}}^2   }{m}  \left( C \cdot \E_{N \sim \nu} \left[ \frac{1+\log(N)}{N}N^{2\alpha} + \frac{1 + \log(N)}{N^{1/(D_{\chi} + 1)}}N^{2\alpha} + \mathcal{O}\left( \exp(-N) \right) \right] + C'\varepsilon\right),
\end{aligned}
\end{equation*}
where $C$ and $C'$ are specified in \Cref{appendix: Proof Gen Bound} in the Appendix. 
\end{theorem}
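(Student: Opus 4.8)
The plan is to interpose a ``continuous'' risk in which each sampled graph-signal is replaced by the template graphon-signal of its class, and then control the three resulting gaps separately. For a MPNN $\Theta$, write $\ell_\Theta(j):=\mathcal{L}\big(\Theta(W^j,f^j),j\big)\in[0,\|\mathcal{L}\|_\infty]$ for $j=1,\dots,\Gamma$, and set $\widetilde{R}_{emp}(\Theta):=\frac1m\sum_{i=1}^m\ell_\Theta(\mathbf y^i)$ and $\widetilde{R}_{exp}(\Theta):=\sum_{j=1}^\Gamma\gamma_j\,\ell_\Theta(j)=\E_{(\mathbf x,\mathbf y)\sim\mu}\big[\ell_\Theta(\mathbf y)\big]$. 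Then $R_{emp}(\Theta)-R_{exp}(\Theta)$ splits as $\big(R_{emp}-\widetilde{R}_{emp}\big)+\big(\widetilde{R}_{emp}-\widetilde{R}_{exp}\big)+\big(\widetilde{R}_{exp}-R_{exp}\big)$, so by $(a+b+c)^2\le 3(a^2+b^2+c^2)$ it suffices to bound $\E_{\mathcal{T}\sim\mu^m}$ of the squared supremum over $\Theta\in\mathrm{Lip}_{L,B}$ of each of the three pieces.

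The middle piece is purely combinatorial. Since $\ell_\Theta=(\ell_\Theta(1),\dots,\ell_\Theta(\Gamma))$ lies in $[0,\|\mathcal{L}\|_\infty]^\Gamma$, writing $\widehat p$ for the empirical class-frequency vector of $\mathcal{T}$ gives $\sup_{\Theta}\big|\widetilde{R}_{emp}(\Theta)-\widetilde{R}_{exp}(\Theta)\big|\le\sup_{v\in[0,\|\mathcal{L}\|_\infty]^\Gamma}\big|\sum_j v_j(\widehat p_j-\gamma_j)\big|=\tfrac12\|\mathcal{L}\|_\infty\|\widehat p-\gamma\|_1$. I would then apply the Bretagnolle--Huber--Carol inequality (\Cref{lemma:BHCineq}) to $\widehat p$, i.e.\ $\mathbb{P}(\|\widehat p-\gamma\|_1>\lambda)\le 2^\Gamma e^{-m\lambda^2/2}$, and integrate this tail bound to obtain $\E_{\mathcal{T}}\big[\sup_\Theta(\widetilde{R}_{emp}-\widetilde{R}_{exp})^2\big]$ of order $2^\Gamma\|\mathcal{L}\|_\infty^2/m$, giving the first summand of the claim.

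For the two ``transferability'' pieces, let $b:=\sup_{\Theta\in\mathrm{Lip}_{L,B}}\|\Theta(\mathbf x)-\Theta(W^{\mathbf y},f^{\mathbf y})\|_\infty$ for a draw $(\mathbf x,\mathbf y)\sim\mu$, and let $b_i$ be the analogous quantity for the $i$-th training sample. Using that $\mathcal{L}$ is $L_{\mathcal{L}}$-Lipschitz, $\sup_\Theta|R_{emp}(\Theta)-\widetilde{R}_{emp}(\Theta)|\le\frac{L_{\mathcal{L}}}{m}\sum_i b_i$ and $\sup_\Theta|\widetilde{R}_{exp}(\Theta)-R_{exp}(\Theta)|\le L_{\mathcal{L}}\,\E_{(\mathbf x,\mathbf y)}[b]$; by Cauchy--Schwarz, i.i.d.-ness of the $b_i$, and Jensen, the expected squares of both are controlled by $L_{\mathcal{L}}^2\,\E_{(\mathbf x,\mathbf y)\sim\mu}[b^2]$. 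Here the noise $(V,g)$ built into the data model is absorbed automatically: the graphon actually used to sample $\mathbf x$ is $W^{\mathbf y}+V$ whereas the surrogate uses the clean $W^{\mathbf y}$, and this discrepancy is precisely the $\varepsilon$-term in \Cref{cor:main_unifExpValue}. Conditioning on the class $\mathbf y$ and on the independent size $N\sim\nu$ and invoking \Cref{cor:main_unifExpValue} with the worst-case constants $T_1,T_2,T_3$ over the finitely many classes yields
\[
\E_{(\mathbf x,\mathbf y)\sim\mu}[b^2]\le 4(1+\sqrt\pi)\Big(T_1\,\E_{N\sim\nu}\big[\tfrac{(1+\log N)N^{2\alpha}}{N^{1/(D_\chi+1)}}\big]+T_2\,\E_{N\sim\nu}\big[\tfrac{(1+\log N)N^{2\alpha}}{N}\big]+T_3\varepsilon\Big)+\mathcal{O}\big(\E_{N\sim\nu}[e^{-N}]\big),
\]
and collecting $L_{\mathcal{L}}^2$, the numerical factors, and the $T_i$ into constants $C,C'$ gives the second summand.

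The main obstacle is not the arithmetic but the uniformity over $\Theta$: the hypothesis class $\mathrm{Lip}_{L,B}$ is infinite-dimensional and not totally bounded, so one cannot cover it and union-bound in the usual way. The whole argument works only because \Cref{cor:main_unifExpValue} (ultimately \Cref{prop: main uniform monte carlo}) produces a \emph{single} event on which \emph{every} $\Theta$ is simultaneously close to its cMPNN on the template graphons, which is what lets $\sup_\Theta$ move inside the expectation; the price is the dimension-dependent rate $N^{-1/(D_\chi+1)}$ (hence the restriction $\alpha<1/(2(D_\chi+1))$ for the bound to decay in $N$) rather than the Monte-Carlo rate $N^{-1}$. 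A secondary point to handle carefully is that arbitrarily small graphs are allowed, so the lower bound \Cref{eq:main lowerBoundGraphSizeN} cannot be imposed pointwise; this is why the estimate above retains the exponentially small $\mathcal{O}(\E_{N\sim\nu}[e^{-N}])$ term coming from the low-probability event where the concentration fails, and one must verify that this term survives the expectation over $\nu$.
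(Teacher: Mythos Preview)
Your three-way split is clean, but it loses the crucial $1/m$ factor on the transferability term. The third piece $\widetilde{R}_{exp}(\Theta)-R_{exp}(\Theta)$ is a \emph{deterministic} bias: it does not depend on the training set at all, so $\E_{\mathcal T}\big[\sup_\Theta(\widetilde{R}_{exp}-R_{exp})^2\big]=\sup_\Theta(\widetilde{R}_{exp}-R_{exp})^2$, which is of order $L_{\mathcal L}^2(\E[b])^2$ and does not decay as $m\to\infty$. Likewise the first piece $R_{emp}-\widetilde{R}_{emp}=\frac1m\sum_i\big(\mathcal L(\Theta(\mathbf x^i),\mathbf y^i)-\ell_\Theta(\mathbf y^i)\big)$ is an average of i.i.d.\ terms that are \emph{not} centered---their common mean is exactly $R_{exp}-\widetilde{R}_{exp}$---so its square also concentrates near $L_{\mathcal L}^2(\E[b])^2$, not near zero. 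By applying $(a+b+c)^2\le3(a^2+b^2+c^2)$ you discard the cancellation between the first and third pieces; your argument therefore delivers $O\big(L_{\mathcal L}^2\E[b^2]\big)$ for the second summand rather than the claimed $O\big(L_{\mathcal L}^2\E[b^2]/m\big)$.

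The paper avoids this by never introducing the surrogate risks $\widetilde R_{emp},\widetilde R_{exp}$. After conditioning on the class counts $\mathbf m=(m_1,\dots,m_\Gamma)$ and peeling off the deviation $|m_j-m\gamma_j|$ via \Cref{lemma:BHCineq} (this produces the $2^\Gamma\|\mathcal L\|_\infty^2/m$ term and matches your middle piece), what remains for each class $j$ is the genuinely centered i.i.d.\ average $\frac{1}{m}\sum_{i=1}^{m\gamma_j}\big(\mathcal L(\Theta(G_i^j,\mathbf f_i^j),y_j)-\E\,\mathcal L(\Theta(G^j,\mathbf f^j),y_j)\big)$. The paper bounds its second moment by $\frac{\gamma_j}{m}$ times a single-sample variance, and only \emph{then} bounds that variance by $\E\big[\sup_\Theta|\mathcal L(\Theta(G^j,\mathbf f^j),y_j)-\mathcal L(\Theta(W^j,f^j),y_j)|^2\big]\le L_{\mathcal L}^2\,\E[b^2]$, using $\Var[Y]\le\E[(Y-c)^2]$ together with \Cref{cor:main_unifExpValue}. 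The graphon loss thus enters as a constant against which to bound a variance, not as an intermediate risk in a telescoping sum; that ordering---variance reduction first, transferability second---is what produces the $1/m$.
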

    \begin{proof}
Given $\mathbf{m}=\{m_1,\ldots,m_{\Gamma}\}$, $\sum_{j=1}^{\Gamma}m_j=m$, and $\mathcal{G}^{\mathbf{m}}$ as the space of datasets with $m_j$ samples from each class $j=1,\ldots,\Gamma$. We represent the conditional choice of the dataset on the choice of $\mathbf{m}$ by $\mathcal{T}_{\mathbf{m}}:=\{\{G_i^j, \mathbf{f}_i^j\}_{i=1}^{m_j}\}_{j=1}^{\Gamma} \sim \mu_{\mathcal{G}^{\mathbf{m}}}$. Define $\mathcal{M}_k$ as the set of all $\mathbf{m}$ with $2\sqrt{m} k \leq \sum_{j=1}^\Gamma |m_j - m\gamma_j| <2\sqrt{m} (k+1)$.

Note that $\{m_1, \ldots, m_{\Gamma}\}$ is an i.i.d. multinomial random variable with parameters $m$ and $\{\gamma_1, \ldots, \gamma_\Gamma\}$. By the Breteganolle-Huber-Carol inequality (see \Cref{lemma:BHCineq}), we have 
    $\mathbb{P}\left(\mathbf{m} \in \mathcal{M}_k\right) \leq 2^\Gamma \exp(-2k^2) $ for any $k>0$. Thus,  we decompose the expected generalization error into series of Gaussians, \begin{equation}
    \label{eq:thmC13-1-2}
    \begin{aligned}
    & \E_{\mathcal{T} \sim \mu^m} \left[ \sup_{\Theta \in \mathrm{Lip}_{L,B}} \left( \frac{1}{m} \sum_{i=1}^m  \mathcal{L}(\Theta(G_i, \mathbf{f}_i), y_i) - \E_{(G, \mathbf{f},y)\sim \mu}\left[ \mathcal{L}(\Theta(G, \mathbf{f}), y) \right] \right)^2 \right] \\
& \leq \sum_{k} \mathbb{P}\big(
\mathbf{m}\in \mathcal{M}_k
\big) \times 
\sup_{\mathbf{m}\in \mathcal{M}_k} \E_{\mathcal{T}_{\mathbf{m}} \sim \mu_{\mathcal{G}^{\mathbf{m}}}} \left[\sup_{\Theta \in \mathrm{Lip}_{L,B}}\left(\sum_{j=1}^\Gamma \left( \frac{1}{m}\sum_{i=1}^{m_j} \mathcal{L}(\Theta(G_i^j,\mathbf{f}_i^j), y_j) \right.\right.\right.\\  & \quad \ \ \quad \ \ \quad \ \ \quad \ \ \quad \ \ \quad \ \ \quad \ \ \quad \ \ \quad \ \ \quad \ \ \quad \ \  \left.\left.\left. - \frac{1}{m} \sum_{i=1}^{m \gamma_j}\E_{(G^j, \mathbf{f}^j )\sim \mu_{\mathcal{G}_j}}\left[ \mathcal{L}(\Theta(G^j,\mathbf{f}^j), y_j) \right] \right) \right)^2 \right] \\
&  \leq  \sum_k \mathbb{P}\big(
\mathbf{m}\in \mathcal{M}_k
\big) \times  \E_{\mathcal{T}_{\mathbf{m}} \sim  \mu_{\mathcal{G}^{\mathbf{m}}}} \left[\sup_{\Theta \in \mathrm{Lip}_{L,B}}2 \left(\sum_{j=1}^\Gamma \left( \frac{1}{m}\sum_{i=1}^{m \gamma_j} \mathcal{L}(\Theta(G_i^j,\mathbf{f}_i^j),y_j) \right.\right.\right.\\  & \quad \ \ \quad \ \     \quad \ \ \quad   \left.\left.\left. - \frac{1}{m} \sum_{i=1}^{m \gamma_j}\E_{(G^j, \mathbf{f}^j )\sim \mu_{\mathcal{G}_j}}\left[ \mathcal{L}(\Theta(G^j,\mathbf{f}^j), y_j) \right] \right) \right)^2 \right]
\\ &  +  \sum_k \mathbb{P}\big(
\mathbf{m}\in \mathcal{M}_k
\big) \times  \E_{\mathcal{T}_{\mathbf{m}} \sim  \mu_{\mathcal{G}^{\mathbf{m}}}} \left[2
\left(\sum_{j=1}^\Gamma \left(
\frac{1}{m} |m\gamma_j - m_j| \|\mathcal{L}\|_\infty
\right)\right)^2
\right].
 \end{aligned}
\end{equation}
The second term on the RHS of \Cref{eq:thmC13-1-2} can be easily bounded by $2^\Gamma \frac{8\|\mathcal{L}\|_\infty^2}{m}\pi$. For the first term, we observe that it can bounded by the variance of the loss and subsequently by the expected difference between the output of the graph MPNN and the corresponding cMPNN,
\begin{equation}
\begin{aligned}
 & \E_{\mathcal{T}_{\mathbf{m}} \sim \mu_{\mathcal{G}^{\mathbf{m}}}} \left[\sup_{\Theta \in \mathrm{Lip}_{L,B}}\left(\sum_{j=1}^\Gamma \left( \frac{1}{m}\sum_{i=1}^{m \gamma_j} \mathcal{L}(\Theta(G_i^j,\mathbf{f}_i^j),y_j)  \right.\right.\right.\\ 
   & \quad \ \ \quad \ \     \quad \ \ \quad   \left.\left.\left. - \frac{1}{m} \sum_{i=1}^{m \gamma_j}\E_{(G^j, \mathbf{f}^j )\sim \mu_{\mathcal{G}_j}}\left[ \mathcal{L}(\Theta_{G^j}(\mathbf{f}^j), y_j) \right] \right) \right)^2 \right]
\\
= & \Gamma\sum_{j=1}^\Gamma \frac{\gamma_j}{m} \Var_{(G^j, \mathbf{f}^j) \sim \mu_{\mathcal{G}_j}}\left[\sup_{\Theta \in \mathrm{Lip}_{L,B}} \mathcal{L}(\Theta(G^j, \mathbf{f}^j),y_j) \right]
\\ 
\leq & \Gamma\sum_{j=1}^\Gamma \frac{\gamma_j}{m} \E_{(G^j, \mathbf{f}^j) \sim \mu_{\mathcal{G}_j}}\left[\sup_{\Theta \in \mathrm{Lip}_{L,B}} L_\mathcal{L}^2\| \Theta(G^j, \mathbf{f}^j) - \Theta (W^j, f^j)\|_\infty^2 \right].
\end{aligned}
\end{equation}
We can now apply \Cref{cor:main_unifExpValue} to get
\begin{equation*}
\begin{aligned}
\leq \Gamma & \sum_{j=1}^\Gamma \frac{\gamma_j}{m} L_\mathcal{L}^2 \Bigg(4(1 + \sqrt{\pi}) \Bigg(  T_1 \frac{\big(1+\log(N)\big)N^{2\alpha}}{N^{\frac{1}{D_\chi+1} }} + T_2 \frac{\big(1+\log(N)\big) N^{2 \alpha}}{N}
 + T_3 \varepsilon \Bigg) \\ &+  \mathcal{O} \left(   \exp(-N)\right)  \Bigg).
\end{aligned}
\end{equation*}
Now, using $\mathbb{P}\left(\mathbf{m} \in \mathcal{M}_k\right) \leq 2^\Gamma \exp(-2k^2)$ and assembling the constants finishes the proof.
\end{proof}

\paragraph{Discussion and Comparison with \citep{maskey2022generalization}}
We recall that \citet{maskey2022generalization} provide generalization bounds for dense weighted graphs sampled from RGSMs without considering possible noise and random edges. If we ignore noise in our setting, i.e., set $\varepsilon=0$ and consider only dense graphs, i.e., $\alpha=0$, we can compare our generalization bound, as detailed in \Cref{thm:main_gen_bound_deformed_graphon}, with that of \citet[Theorem 3.3]{maskey2022generalization}.  We observe that considering random edges slightly worsens the generalization bound with respect to the average graph size by an additional term of $\mathcal{O}\left(\E_{N \sim \nu}\left[ \frac{\log(N)}{N}\right] \right)$. However, the constants $C$ and $C'$ follow the same asymptotics as the constants in the generalization bound in  \citep{maskey2022generalization}. Therefore, the asymptotics of both bounds are equivalent except for a $\log(N)$ factor which is negligible for moderately large graphs. This is backed by our numerical experiments. 

\pgfplotsset{every tick label/.append style={font=\tiny}}
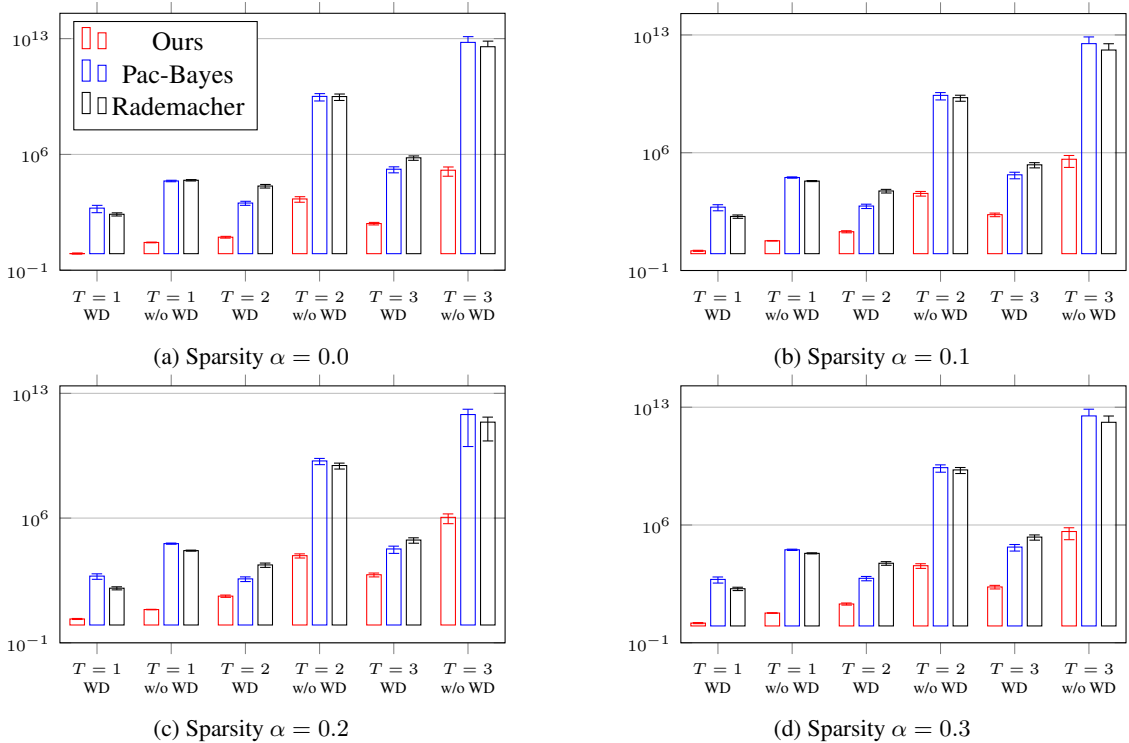
\begin{figure}[t]
  \begin{subfigure}[b]{0.5\linewidth}
    \centering
    {\begin{tikzpicture}
\begin{semilogyaxis}[
ymin=.1,
    ymajorgrids=true,
    legend pos=north west,
    ybar,
    bar width=5.5pt,
    width=7.5cm,
    height=5cm,
    cycle list name=color list,
    xtick=data,
xticklabels={
        $T=1$ \\ WD, \(T=1\) \\ w/o WD, 
        \(T=2\) \\ WD, \(T=2\) \\ w/o WD, 
        \(T=3\) \\ WD, \(T=3\) \\ w/o WD
      },
  x tick label style={align=center},
    error bars/y dir=both,
    error bars/y explicit,
]

\addplot+[error bars/.cd, y dir=both, y explicit] 
table [x=Layers, y=ours, y error=ours_std, col sep=comma] {mean_00.csv};

\addplot+[error bars/.cd, y dir=both, y explicit] 
table [x=Layers, y=pac_bayes, y error=pac_bayes_std, col sep=comma] {mean_00.csv};

\addplot+[error bars/.cd, y dir=both, y explicit] 
table [x=Layers, y=rademacher, y error=rademacher_std, col sep=comma] {mean_00.csv};

\legend{Ours, Pac-Bayes, Rademacher}
\end{semilogyaxis}
\end{tikzpicture}}
\caption{Sparsity $\alpha=0.0$} 
    \label{fig:mean-1}
\end{subfigure}\begin{subfigure}[b]{0.5\linewidth}
    \centering
    \begin{tikzpicture}
\begin{semilogyaxis}[
ymin=.1,
    ymajorgrids=true,
    legend pos=north west,
    ybar,
    bar width=5.5pt,
    width=7.5cm,
    height=5cm,
    cycle list name=color list,
    xtick=data,
    xticklabels={         $T=1$ \\ WD, \(T=1\) \\ w/o WD,          \(T=2\) \\ WD, \(T=2\) \\ w/o WD,          \(T=3\) \\ WD, \(T=3\) \\ w/o WD       },   x tick label style={align=center},
error bars/y dir=both,
    error bars/y explicit,
]

\addplot+[error bars/.cd, y dir=both, y explicit] 
table [x=Layers, y=ours, y error=ours_std, col sep=comma] {mean_01.csv};

\addplot+[error bars/.cd, y dir=both, y explicit] 
table [x=Layers, y=pac_bayes, y error=pac_bayes_std, col sep=comma] {mean_01.csv};

\addplot+[error bars/.cd, y dir=both, y explicit] 
table [x=Layers, y=rademacher, y error=rademacher_std, col sep=comma] {mean_01.csv};

\end{semilogyaxis}
\end{tikzpicture}
    \caption{Sparsity $\alpha=0.1$} 
    \label{fig:mean-2}
\end{subfigure} 
  \begin{subfigure}[b]{0.5\linewidth}
    \centering
    \begin{tikzpicture}
\begin{semilogyaxis}[
ymin=.1,
    ymajorgrids=true,
    legend pos=north west,
    ybar,
    bar width=5.5pt,
    width=7.5cm,
    height=5cm,
    cycle list name=color list,
    xtick=data,
xticklabels={         $T=1$ \\ WD, \(T=1\) \\ w/o WD,          \(T=2\) \\ WD, \(T=2\) \\ w/o WD,          \(T=3\) \\ WD, \(T=3\) \\ w/o WD       },   x tick label style={align=center},
    error bars/y dir=both,
    error bars/y explicit,
]

\addplot+[error bars/.cd, y dir=both, y explicit] 
table [x=Layers, y=ours, y error=ours_std, col sep=comma] {mean_02.csv};

\addplot+[error bars/.cd, y dir=both, y explicit] 
table [x=Layers, y=pac_bayes, y error=pac_bayes_std, col sep=comma] {mean_02.csv};

\addplot+[error bars/.cd, y dir=both, y explicit] 
table [x=Layers, y=rademacher, y error=rademacher_std, col sep=comma] {mean_02.csv};

\end{semilogyaxis}
\end{tikzpicture}
    \caption{Sparsity $\alpha=0.2$} 
    \label{fig:mean-3} 
  \end{subfigure}\begin{subfigure}[b]{0.5\linewidth}
    \centering
    \begin{tikzpicture}
\begin{semilogyaxis}[
ymin=.1,
    ymajorgrids=true,
    legend pos=north west,
    ybar,
    bar width=5.5pt,
    width=7.5cm,
    height=5cm,
    cycle list name=color list,
    xtick=data,
xticklabels={         $T=1$ \\ WD, \(T=1\) \\ w/o WD,          \(T=2\) \\ WD, \(T=2\) \\ w/o WD,          \(T=3\) \\ WD, \(T=3\) \\ w/o WD       },   x tick label style={align=center},
    error bars/y dir=both,
    error bars/y explicit,
]

\addplot+[error bars/.cd, y dir=both, y explicit] 
table [x=Layers, y=ours, y error=ours_std, col sep=comma] {mean_03.csv};

\addplot+[error bars/.cd, y dir=both, y explicit] 
table [x=Layers, y=pac_bayes, y error=pac_bayes_std, col sep=comma] {mean_03.csv};

\addplot+[error bars/.cd, y dir=both, y explicit] 
table [x=Layers, y=rademacher, y error=rademacher_std, col sep=comma] {mean_03.csv};
\end{semilogyaxis}
\end{tikzpicture}
    \caption{Sparsity $\alpha=0.3$} 
    \label{fig:mean-4} 
  \end{subfigure} 
  \caption{Comparison of Generalization Bounds for GraphSage with mean aggregation: Our Theoretical Analysis vs. PAC-Bayesian (Liao et al., 2021) and Rademacher Complexity (Garg et al., 2020) for Binary Classification Using Erdös-Rényi and SBM Graphs. Each subplot corresponds to different sparsity levels $\alpha \in \{0,0.1,0.2,0.3\}$ of the underlying RGSM. For each subplot, we test six different training conditions: $T=1$ with weight decay (WD), $T=1$ without weight decay (w/o WD), $T=2$ with WD, $T=2$ w/o WD, $T=3$ with WD, and $T=3$ w/o WD.}
  \label{fig:mean} 
\end{figure}

\section{Experiments}
\label{sec:experiments}
We evaluate our proposed generalization bounds by examining two classical RGSMs: the Erdős-Rényi model (ERM) and a two-class Stochastic Block Model (SBM), relaxed to be a continuous graphon.

For our experimental framework, we generate four different synthetic datasets each consisting of  100,000 random graphs with 50 nodes and varying sparsity. 
More precisely, for each sparsity $\alpha \in \{ 0, 0.1, 0.2, 0.3 \}$, 
we generate equally many graphs for each of the two considered RGSMs: the ERM, based on the graphon $W_1(x,y)=0.4\cdot 50^{-\alpha}$, and a relaxed SBM that is based on the graphon $W_2(x,y) =  \left(\frac{\sin(2\pi x)\sin(2\pi y)}{2 \pi} + 0.25\right)\cdot 50^{-\alpha}$ over the unit square $[0,1]^2$.

For the MPNN architecture, we consider GraphSAGE \citep{hamilton2017inductive} with both mean and normalized sum aggregation. We investigate the impact of varying architectural depths ($T=1,2,3$) on the model's performance. 
Our analysis requires bounding the Lipschitz constants of the message-passing and update functions. To evaluate the effect of these Lipschitz constant, we consider two training approaches that lead to different Lipschitz bounds. First, we apply weight decay regularization, which serves to lower the Lipschitz constants, improving the model's generalization capability. Second, train the model without any regularization. For each training configuration—defined by the number of layers and the presence or absence of regularization—we train the MPNN and then measure the resulting Lipschitz constants of the network. For benchmarking purposes, we also calculate two alternative generalization bounds: one based on PAC-Bayes theory \citep{liao2021a} and another based on Rademacher complexity \citep{pmlr-v119-garg20c}. We refer to \Cref{sec:appendix_exp} for more details on the dataset, model and training setup. 

Our results,  depicted in \Cref{fig:mean} and \Cref{fig:norm_sum}, indicate that the generalization bounds from our analysis are significantly tighter than the PAC-Bayes and Rademacher bound. The experiments highlight that our theoretical framework does not only provide insight into the asymptotic generalization behavior of MPNNs,  but also gives concrete, numerical bounds that validate the model's generalization capability in practical contexts. Notably, with a one-layer MPNN configuration, our theory guarantees a generalization gap below $1$.

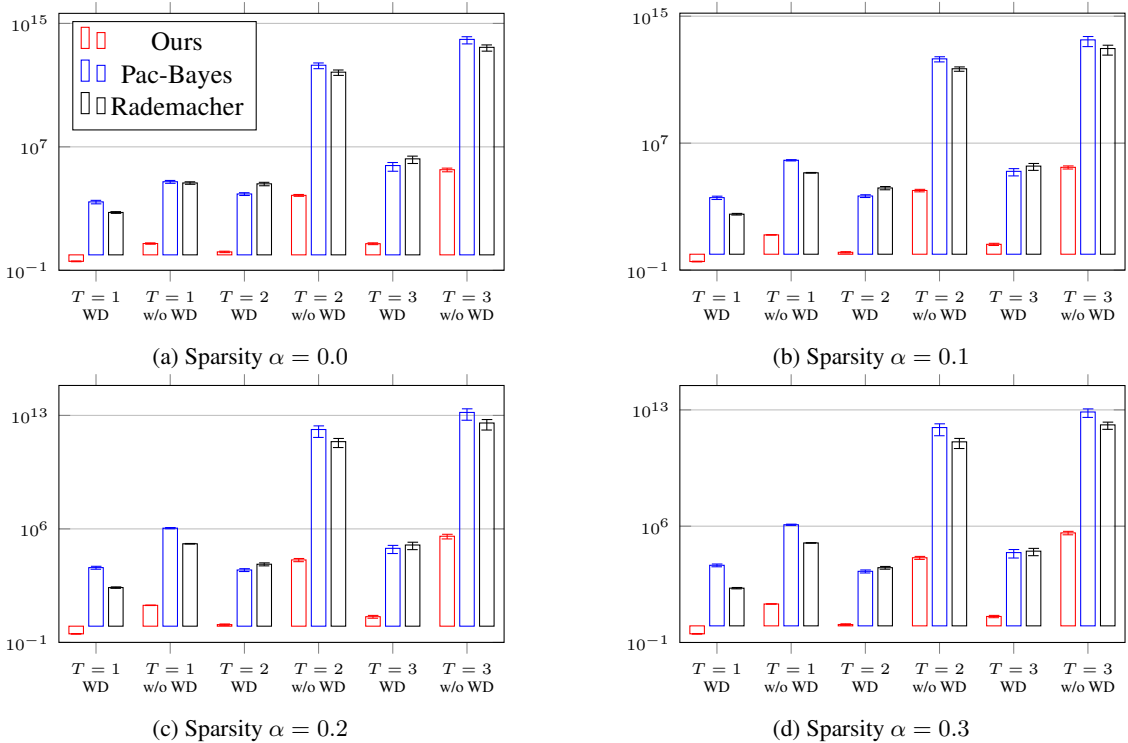
\begin{figure}
  \begin{subfigure}[b]{0.5\linewidth}
    \centering
\begin{tikzpicture}
\begin{semilogyaxis}[
ymin=.1,
    ymajorgrids=true,
    legend pos=north west,
    ybar,
    bar width=5.5pt,
    width=7.5cm,
    height=5cm,
    cycle list name=color list,
    xtick=data,
xticklabels={         $T=1$ \\ WD, \(T=1\) \\ w/o WD,          \(T=2\) \\ WD, \(T=2\) \\ w/o WD,          \(T=3\) \\ WD, \(T=3\) \\ w/o WD       },   x tick label style={align=center},
    error bars/y dir=both,
    error bars/y explicit,
]

\addplot+[error bars/.cd, y dir=both, y explicit] 
table [x=Layers, y=ours, y error=ours_std, col sep=comma] {sum_00.csv};

\addplot+[error bars/.cd, y dir=both, y explicit] 
table [x=Layers, y=pac_bayes, y error=pac_bayes_std, col sep=comma] {sum_00.csv};

\addplot+[error bars/.cd, y dir=both, y explicit] 
table [x=Layers, y=rademacher, y error=rademacher_std, col sep=comma] {sum_00.csv};

\legend{Ours, Pac-Bayes, Rademacher}
\end{semilogyaxis}
\end{tikzpicture}
\caption{Sparsity $\alpha=0.0$} 
    \label{fig:norm_sum-1}
\end{subfigure}\begin{subfigure}[b]{0.5\linewidth}
    \centering
    \begin{tikzpicture}
\begin{semilogyaxis}[
ymin=.1,
    ymajorgrids=true,
    legend pos=north west,
    ybar,
    bar width=5.5pt,
    width=7.5cm,
    height=5cm,
    cycle list name=color list,
    xtick=data,
xticklabels={         $T=1$ \\ WD, \(T=1\) \\ w/o WD,          \(T=2\) \\ WD, \(T=2\) \\ w/o WD,          \(T=3\) \\ WD, \(T=3\) \\ w/o WD       },   x tick label style={align=center},
    error bars/y dir=both,
    error bars/y explicit,
]

\addplot+[error bars/.cd, y dir=both, y explicit] 
table [x=Layers, y=ours, y error=ours_std, col sep=comma] {sum_01.csv};

\addplot+[error bars/.cd, y dir=both, y explicit] 
table [x=Layers, y=pac_bayes, y error=pac_bayes_std, col sep=comma] {sum_01.csv};

\addplot+[error bars/.cd, y dir=both, y explicit] 
table [x=Layers, y=rademacher, y error=rademacher_std, col sep=comma] {sum_01.csv};

\end{semilogyaxis}
\end{tikzpicture}
    \caption{Sparsity $\alpha=0.1$} 
    \label{fig:norm_sum-2}
\end{subfigure} 
  \begin{subfigure}[b]{0.5\linewidth}
    \centering
    \begin{tikzpicture}
\begin{semilogyaxis}[
ymin=.1,
    ymajorgrids=true,
    legend pos=north west,
    ybar,
    bar width=5.5pt,
    width=7.5cm,
    height=5cm,
    cycle list name=color list,
    xtick=data,
xticklabels={         $T=1$ \\ WD, \(T=1\) \\ w/o WD,          \(T=2\) \\ WD, \(T=2\) \\ w/o WD,          \(T=3\) \\ WD, \(T=3\) \\ w/o WD       },   x tick label style={align=center},
    error bars/y dir=both,
    error bars/y explicit,
]

\addplot+[error bars/.cd, y dir=both, y explicit] 
table [x=Layers, y=ours, y error=ours_std, col sep=comma] {sum_02.csv};

\addplot+[error bars/.cd, y dir=both, y explicit] 
table [x=Layers, y=pac_bayes, y error=pac_bayes_std, col sep=comma] {sum_02.csv};

\addplot+[error bars/.cd, y dir=both, y explicit] 
table [x=Layers, y=rademacher, y error=rademacher_std, col sep=comma] {sum_02.csv};

\end{semilogyaxis}
\end{tikzpicture}
    \caption{Sparsity $\alpha=0.2$} 
    \label{fig:norm_sum-3} 
  \end{subfigure}\begin{subfigure}[b]{0.5\linewidth}
    \centering
    \begin{tikzpicture}
\begin{semilogyaxis}[
ymin=.1,
    ymajorgrids=true,
    legend pos=north west,
    ybar,
    bar width=5.5pt,
    width=7.5cm,
    height=5cm,
    cycle list name=color list,
    xtick=data,
xticklabels={         $T=1$ \\ WD, \(T=1\) \\ w/o WD,          \(T=2\) \\ WD, \(T=2\) \\ w/o WD,          \(T=3\) \\ WD, \(T=3\) \\ w/o WD       },   x tick label style={align=center},
    error bars/y dir=both,
    error bars/y explicit,
]

\addplot+[error bars/.cd, y dir=both, y explicit] 
table [x=Layers, y=ours, y error=ours_std, col sep=comma] {sum_03.csv};

\addplot+[error bars/.cd, y dir=both, y explicit] 
table [x=Layers, y=pac_bayes, y error=pac_bayes_std, col sep=comma] {sum_03.csv};

\addplot+[error bars/.cd, y dir=both, y explicit] 
table [x=Layers, y=rademacher, y error=rademacher_std, col sep=comma] {sum_03.csv};
\end{semilogyaxis}
\end{tikzpicture}
    \caption{Sparsity $\alpha=0.3$} 
    \label{fig:norm_sum-4} 
  \end{subfigure} 
  \caption{Comparison of Generalization Bounds for GraphSage with normalized sum aggregation. See caption of \Cref{fig:mean} for more details.}
  \label{fig:norm_sum} 
\end{figure}

\section{Conclusion}
In this work, we derived a novel generalization bound for MPNNs on graph-signals sampled from mixture of graphon models.  
The bound decreases as the average number of nodes in the graphs increases.
Our findings hence suggest that MPNNs can still generalize effectively, even when their complexity (in terms of number of layers, Lipschitz constants of the message functions, etc.) exceeds the size of the training set, provided that the graphs in the dataset are sufficiently large. 
Our work extends previous works \citep{maskey2022generalization} by considering a more realistic setting, specifically by extending the analysis to simple, sparse, and noisy graphs with Bernoulli-distributed edges. 
Unlike past works, our generalization bounds are not vacuous: the bound in some simple settings is lower than $1$.

While our findings are promising, we acknowledge the limitations of our theory. The data generation model we used is based on a finite set of Lipschitz continuous graphons, which may not fully capture the diversity of some real-world graph dataset. Moreover, we focused on MPNNs with mean or normalized sum aggregation. Other schemes, like max aggregation, may behave differently and requires different proof techniques. 
Potential directions for future work include developing a generalization theory for max aggregation MPNNs,
and considering more sophisticated models of sparse graphs, e.g., graphops \citep{backhausz2018action}.

\section*{Acknowledgments}
S. M. acknowledges partial support by the NSF-Simons Research Collaboration on the Mathematical and Scientific Foundations of Deep Learning (MoDL) (NSF DMS 2031985), by DFG SPP 1798 (KU 1446/27-2) and by the BMBF-project 05M20 MaGriDo (Mathematics for Machine Learning Methods for Graph-Based Data with Integrated Domain Knowledge).

G. K. acknowledges partial support by the DAAD programme Konrad Zuse Schools of Excellence in Artificial Intelligence, sponsored by the Federal Ministry of Education and Research. G. Kutyniok also acknowledges support from the Munich Center for Machine Learning (MCML) as well as the German Research Foundation under Grants DFG-SPP-2298, KU 1446/31-1 and KU 1446/32-1 and under Grant DFG-SFB/TR 109 and Project C09.

R. L. acknowledges partial support by ISF (Israel Science Foundation) grant \#1937/23 (Analysis of Graph Deep Learning Using Graphon Theory). 

\printbibliography

\newpage
\appendix

\section{Outline of the Appendix}

In \Cref{appendix:prep}, we introduce notations for the remainder of the appendix. In \Cref{AppendixB}, we study the uniform convergence and outline the proof of \Cref{cor: main convergence}. Finally, in \Cref{appendix: Proof Gen Bound}, we derive the proof of \Cref{thm:main_gen_bound_deformed_graphon}.

\section{Basic Definitions}
\label{appendix:prep}
We consider metric spaces $(\chi,d)$, where $\chi$ is a set and $d:\chi\times\chi\rightarrow \left[0,\infty\right)$ is a metric. We denote by $B_{\varepsilon}(x) = \{y\in \chi\ | \  d(x,y)<\varepsilon\}$   the ball around $x\in\chi$ with radius $\varepsilon>0$.
Unless stated otherwise, we denote graphs by $G=\{V,E\}$, and their corresponding adjacency matrices by $\mathbf{A}$.

\subsection{Sampled Graphs, Degree, and Aggregation}

Given a graph $G=(V,E)$ with $N$ nodes, we often identify each node $i$ by some  point $X_i\in\chi$, for every $i=1,\ldots,N$. 
Given a graphon $W:\chi^2\rightarrow [0,1]$, and sample points $\mathbf{X}= \{X_1, \ldots, X_N\}\in\chi^N$, the corresponding sampled graph $G$ with adjacency matrix $\mathbf{A}=\{a_{i,j}\}_{i,j=1}^N$ is defined as follows:  in case $G$ is weighted, we define $a_{i,j}=W(X_i,X_j)$, and in case $G$ is simple we define $a_{i,j}$ as a Bernoulli variable with probability $W(X_i,X_j)$ for $a_{i,j}=1$. We also denote $\mathbf{A}(X_i,X_j)=a_{i,j}$.
For a signal $\mathbf{f}:V\rightarrow\mathbb{R}^F$, 
 we denote, by abuse of notation,  $\mathbf{f}(X_i) = \mathbf{f}_i$ for $i= 1,\ldots, N$.

\begin{definition}
For samples $\mathbf{X}=\{X_1, \ldots, X_N\}\in \chi^N$, we define the sampling operator $S^X$, for every  metric-space signal $f: \chi \rightarrow \mathbb{R}^F$ by 
\[S^Xf = \big\{f(X_i) \big\}_{i=1}^N \in \mathbb{R}^{N \times F}. \]
\end{definition}
Note that the sampling operator is well defined over $L^p(\chi)$ if the sample points $\mathbf{X}$ are random. Indeed, while the evaluation of $L^p(\chi)$ functions at deterministic points is not well defined, random points are themselves functions (random variables), and  sampling becomes composition, which  is well defined in $L^p(\chi)$.

Next, we define various notions of degree.

\begin{definition}
\label{def:degrees}
Let $(\chi,d, \mu)$ be a metric-probability space. 
Let $W:\chi \times \chi \to \left[0,\infty\right)$ be a graphon, $\mathbf{X}= \{X_1, \ldots, X_N\}\in\chi^N$ sample points, and $G$ the corresponding sampled graph with adjacency matrix $\mathbf{A}=\{a_{i,j}\}_{i,j=1}^N$ (simple or weighted).
\begin{enumerate}
    \item We define the \emph{graphon degree} of $W$ at $x\in\chi$ by
\begin{equation}
\label{eq:d_W}
 \mathrm{d}_W(x) = \int_{\chi} W (x,y ) d\P(y).    
\end{equation}
\item Given a point $x\in\chi$ that need not be in $\mathbf{X}$, we define the \emph{graph-graphon degree} of $\mathbf{X}$  at $x$ as the random variable
\begin{equation}
    \label{eq:d_X}
    \mathrm{d}_{\mathbf{X}}(x) = \frac{1}{N} \sum_{i=1}^N W(x, X_i).
\end{equation}
\item 
The \emph{normalized degree} of $G$ at the node $X_c\in \mathbf{X}$ is defined as 
\begin{equation}
    \label{eq:d_G}
    \mathrm{d}_{\mathbf{A}} (X_c) = \frac{1}{N} \sum_{i=1}^N \mathbf{A}(X_c, X_i).
\end{equation}
\end{enumerate}
\end{definition}

Based on the different version of degrees in \Cref{def:degrees}, we define three corresponding 
 versions of mean aggregation.

\begin{definition}
\label{def:contMeanAgg}
Let $(\chi,d, \mu)$ be a metric-probability space. Given a graphon $W:\chi \times \chi \to [0,1]$, we define the \emph{continuous mean aggregation} of the metric space message kernel $U:\chi\times\chi\rightarrow\mathbb{R}^F$ by 
\[ M_WU:\chi\rightarrow \mathbb{R}^F  , \quad
M_W U = \int_\chi
\frac{W(\cdot, y)}{ \mathrm{d}_W(\cdot) }  U(\cdot, y) d\P(y).
\]
\end{definition}
In \Cref{def:contMeanAgg}, $U(x,y)$ represents a message sent from the point $y$ to the point $x$ in the metric space $\chi$. Let $F', H \in \mathbb{N}$. Given a metric-space signal $f:\chi\rightarrow\mathbb{R}^{F'}$ and a message function $\Phi: \mathbb{R}^{2F'} \to \mathbb{R}^H$, we denote 
\[\Phi (f,f):\chi^2\rightarrow \mathbb{R}^{F}, \quad  (x,y)\mapsto \Phi(f(x),f(y)).\]
This leads to
\[
M_W \Phi (f,f) = \int_\chi
\frac{W(\cdot, y)}{ \mathrm{d}_W(\cdot) }  \Phi\big(f(\cdot), f(y)\big) d\P(y).
\]

\begin{definition}
\label{def:graphkernelMeanAgg}
Let $W$ be a graphon and $\mathbf{X}= \{X_1, \ldots, X_N\}\in\chi^N$ sample points.
For a metric-space message kernel  $U:\chi\times \chi \rightarrow \mathbb{R}^F$,
we define the \emph{graph-kernel mean aggregation} by 
\[ M_{\mathbf{X}} U:\chi\rightarrow \mathbb{R}^F  , \quad
M_{\mathbf{X}} U = \frac{1}{N} \sum_j
\frac{W(\cdot, X_j)}{ \mathrm{d}_X(\cdot) }  U(\cdot, X_j).
\]
\end{definition}

Note that in the definition of $M_{\mathbf{X}}$, messages are sent from graph nodes to arbitrary points in the metric space. Hence, $M_{\mathbf{X}} U: \chi \rightarrow \mathbb{R}^F$ is a metric space signal.

\begin{definition}
Let $G$ be a simple graph with nodes $\mathbf{X} =  \{X_1, \ldots, X_N\}\in\chi^N$ and adjacency matrix $\mathbf{A}$.
For  a graph message kernel $\mathbf{U}:\mathbf{X}\times \mathbf{X} \rightarrow \mathbb{R}^F$, we define the \emph{mean aggregation}  as 
\[ M_{\mathbf{A}} \mathbf{U}:\mathbf{X}\rightarrow\mathbb{R}^F, \quad 
(M_{\mathbf{A}}  \mathbf{U})(X_i) =  \frac{1}{N} \sum_j
\frac{A(X_i, X_j)}{ \mathrm{d}_{\mathbf{A}} (X_i) }  \mathbf{U}(X_i, X_j). 
\] 
\end{definition}

Note that for a graph message kernel,  $\mathbf{U}(X_i,X_j)$ represents a message sent from the node $X_j$ to the node $X_i$.
Note moreover that $M_A \mathbf{U}:\mathbf{X}\rightarrow \mathbb{R}^F$ is a signal.  
Given a signal $\mathbf{f}:\mathbf{X} \to \mathbb{R}^F$, which can be written as $\mathbf{f} = \{\mathbf{f}_i\}_i$, and a message function $\Phi:\mathbb{R}^{2F} \rightarrow \mathbb{R}^{H}$, we denote
\[
\Phi( \mathbf{f}, \mathbf{f} ) := \big(\Phi( \mathbf{f}_i, \mathbf{f}_j)\big)_{i,j=1}^N.
\]
Hence, given the message kernel  $\mathbf{U}(X_i,X_j)=\Phi(\mathbf{f}(X_i), \mathbf{f}(X_j))$, we have
\[
M_A \mathbf{U} = M_A \Phi(\mathbf{f},\mathbf{f}) = \frac{1}{N} \sum_{j=1}^N
\frac{\mathbf{A}(\cdot, X_j)}{ \mathrm{d}_{\mathbf{A}}(\cdot) }  \Phi\big(\mathbf{f}(\cdot), \mathbf{f}(X_j)\big).
\]

\subsection{Norms, Distances and Lipschitz Continuity}

Next, we define the different norms used in our analysis.

\begin{definition}
$ $
\begin{enumerate}
\item For a vector $\mathbf{z}=(z_1,\ldots,z_F) \in \mathbb{R}^F$, we define as usual
\[
\|\mathbf{z}\|_\infty = \max_{ 1 \leq k \leq F } |z_k|.
\]
    \item 
For a function $g : \chi \to \mathbb{R}^F$,  we define
\[
\|g\|_\infty  = \max_{ 1 \leq k \leq F } \sup_{x \in \chi} \big| \big(g(x)\big)_k \big|,
\]
\item Given a graph with $N$ nodes, we define the norm $\| \mathbf{f} \|_{\infty;\infty}$ of graph feature maps $\mathbf{f}=(\mathbf{f}_1,\ldots,\mathbf{f}_N) \in \mathbb{R}^{N \times F}$, with feature dimension $F$, as
\[
\|\mathbf{f}\|_{\infty; \infty} = \max_{i=1, \ldots, N} \|\mathbf{f}_i\|_{\infty}.
\]
\end{enumerate}
\end{definition}

For a metric-space signal $f:\chi \to \mathbb{R}^F$ and a signal $\mathbf{f} \in \mathbb{R}^{N \times F}$, we define the distance ${\rm dist}$  as
\begin{equation}
\label{eq:distGraphMetric}
    \d(f, \mathbf{f} ) =  \|\mathbf{f} -  (S^Xf) \|_{\infty;\infty}.
\end{equation}

Let $(\mathcal{Y}, d_\mathcal{Y})$ be a metric space and consider $g: \mathcal{Y} \to \mathbb{R}^F$ for some $F \in \mathbb{N}$. We say that $g$ is \emph{Lipschitz continuous} if there exits a constants $L_g$ such that for all $y,y' \in \mathcal{Y}$
\[
\|g(y) - g(y') \|_\infty \leq L_gd_\mathcal{Y}(y,y').
\]
If the domain $\mathcal{Y}$ is Euclidean, we endow it with the $L^\infty$-metric.

\subsection{Message passing neural networks}

Given a MPNN, we define the \emph{formal bias} of the update and message functions  as 
\begin{equation}
    \label{eq:formalBias}
    \|\Psi^{(l)}(0,0)\|_\infty \quad  \text{and} \quad
\|\Phi^{(l)}(0,0)\|_\infty
\end{equation} 
respectively. 

Next, we introduce notations for the mappings between consecutive layers of a MPNN. 
Let $\Theta = ((\Phi^{(l)})_{l=1}^T, (\Psi^{(l)})_{l=1}^T)$  be a MPNN with $T$ layers and  feature dimensions $(F_l)_{l=1}^T$. For $l=1, \ldots, T$, we denote the mapping from the $(l-1)$'th layer to the $l$'th layer of the graph MPNN by 
\begin{equation}
    \label{def:LayerMapping}
    \begin{aligned}
 \Lambda^{(l)}_{\Theta_A}: \mathbb{R}^{N  \times F_{l-1}} &\to \mathbb{R}^{N \times F_l} \\
\mathbf{f}^{(l-1)} & \mapsto \mathbf{f}^{(l)}.
\end{aligned}
\end{equation}

Similarly, we denote by $\Lambda_{\Theta_W}^{(l)}$ as the mapping from the $(l-1)$'th layer to the $l$'th layer of the cMPNN $f^{(l-1)}\mapsto f^{(l)}$.

Using  \Cref{def:LayerMapping} we can write a MPNN as a composition of message passing layers
\[
\Theta^{(T)}_{A} =  \Lambda^{(T)}_{\Theta_A} \circ  \Lambda^{(T-1)}_{\Theta_A}\circ \ldots \circ  \Lambda^{(1)}_{\Theta_A}
\]
and 
\[
\Theta^{(T)}_{W } =  \Lambda^{(T)}_{\Theta_W} \circ  \Lambda^{(T-1)}_{\Theta_W}\circ \ldots \circ  \Lambda^{(1)}_{\Theta_W}
\]

\section{Convergence Analysis}
\label{AppendixB}
 In this section, we present the proofs for the results discussed in \Cref{sec:unif_cov}. Specifically, our primary objective is to derive \Cref{cor: main convergence}, which constitutes the final result of this section.

\begin{lemma}
\label{lemma:good edges}
Let $(\chi,d, \P) $ be a metric-probability space and
$W$ be a graphon. Let $\{I_j\}_{j\in \mathcal{J}}$ be any set of measurable subsets  $I_j\subset \chi$.
Let $\{X_1, \ldots, X_N\} \sim \mu^N$ be drawn i.i.d. from $\chi$ via $\mu$ and $\mathbf{A}(X_k,X_i) \sim \mathrm{Ber}\big( W(X_k,X_i)\big)$. For every $p \in (0,1)$, there exists an event with probability at least $1-p$  such that
\[
  \max_{j =1,\ldots, |\mathcal{J}|} \max_{k=1, \ldots, N}\frac{1}{N} \left| \sum_{i=1}^N \left(\mathbf{A}(X_k,X_i)  -W(X_k,X_i) \right) \mathbbm{1}_{I_j}(X_i) \right| \leq \frac{1}{\sqrt{2}} \frac{\sqrt{\log(2|\mathcal{J}|N/p)}}{\sqrt{N}}.
\]
\end{lemma}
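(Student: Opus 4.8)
The plan is a conditional Hoeffding bound followed by a union bound and a de-conditioning step. Fix a pair $(j,k)$ with $j\in\{1,\dots,|\mathcal{J}|\}$ and $k\in\{1,\dots,N\}$, and condition on the sampled points $\mathbf{X}=\{X_1,\dots,X_N\}$. Set
\[
Z_i := \big(\mathbf{A}(X_k,X_i) - W(X_k,X_i)\big)\,\mathbbm{1}_{I_j}(X_i), \qquad i=1,\dots,N.
\]
Given $\mathbf{X}$, the indicators $\mathbf{A}(X_k,X_i)$, $i=1,\dots,N$, are independent (they correspond to $N$ distinct edges, even if $\mathbf{A}$ is symmetric) and satisfy $\mathbb{E}[\mathbf{A}(X_k,X_i)\mid\mathbf{X}]=W(X_k,X_i)$. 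Hence $Z_1,\dots,Z_N$ are conditionally independent, conditionally centered ($\mathbb{E}[Z_i\mid\mathbf{X}]=0$), and each lies in an interval of length at most $1$, since $\mathbf{A}(X_k,X_i)\in\{0,1\}$, $W(X_k,X_i)\in[0,1]$ and $\mathbbm{1}_{I_j}(X_i)\in\{0,1\}$.

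Next, apply Hoeffding's inequality to $\frac1N\sum_{i=1}^N Z_i$ conditionally on $\mathbf{X}$: for every $t>0$,
\[
\mathbb{P}\!\left( \Big|\tfrac1N\textstyle\sum_{i=1}^N Z_i\Big| > t \,\Big|\, \mathbf{X}\right)\le 2\exp\!\big(-2Nt^2\big).
\]
Choosing $t = \frac{1}{\sqrt{2}}\,\frac{\sqrt{\log(2|\mathcal{J}|N/p)}}{\sqrt{N}}$ gives $2Nt^2 = \log(2|\mathcal{J}|N/p)$, so the right-hand side equals $\dfrac{p}{|\mathcal{J}|N}$. Taking a union bound over all $|\mathcal{J}|\cdot N$ pairs $(j,k)$ shows that, conditionally on $\mathbf{X}$, with probability at least $1-p$ every one of the averaged sums is bounded by $t$. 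Since this conditional bound is uniform in $\mathbf{X}$, taking expectation over $\mathbf{X}\sim\mu^N$ yields the same unconditional probability bound, which is exactly the claimed inequality.

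There is no genuinely hard step here; the only points requiring care are (i) verifying that, conditionally on $\mathbf{X}$, the $Z_i$ are independent and centered with ranges of length $\le1$ (so the Hoeffding constant is clean), and (ii) bookkeeping the cardinality $|\mathcal{J}|\cdot N$ of the union bound so that the factor $2$ in Hoeffding is absorbed into the $2|\mathcal{J}|N/p$ inside the logarithm and the constant $\frac{1}{\sqrt{2}}$ comes out correctly.
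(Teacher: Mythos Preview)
Your proof is correct and follows essentially the same approach as the paper: condition on the sample points, apply Hoeffding's inequality to the centered Bernoulli sums (range at most $1$), take a union bound over the $|\mathcal{J}|\cdot N$ pairs $(j,k)$, and then remove the conditioning by the law of total probability. The only cosmetic difference is that the paper first takes the union bound over $k$, then de-conditions, and only afterwards takes the union bound over $j$, whereas you combine the two union bounds into one step; both orderings are valid and yield the identical constant.
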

\begin{proof}
Let $j = 1, \ldots, |\mathcal{J}|$ and let $q \in (0, \frac{1}{|\mathcal{J}|N})$.  First, condition $X:=(X_1,\ldots,X_N) = (\tilde{X}_1,\ldots,\tilde{X}_N):=\tilde{X}$ for some arbitrary deterministic  $(\tilde{X}_1,\ldots,\tilde{X}_N)\in\chi^N$.
Fix $k$, and consider the independent Bernoulli random variables $A(\tilde{X}_k,\tilde{X}_i)\mathbbm{1}_{I_{j}}(\tilde{X_i})$, where $i=1,\ldots, N$. By Hoeffding's inequality
(see \Cref{thm:Hoeffdings}) on these variables, for each $k$ there is an event of probability at least $1-q$ such that
\begin{align*}
    \mathbb{P}\left(  \left| \sum_{i=1}^N \left(A(X_k,X_i)-W(X_k,X_i)\right) \mathbbm{1}_{I_{j}}(X_i) \right| \leq \frac{1}{\sqrt{2}} \frac{\sqrt{\log(2/q)}}{\sqrt{N}} \; \Bigg| \; X=(\tilde{X}_1, \ldots, \tilde{X}_N) \right) \leq 1-q
\end{align*}
 Now, intersect the $N$ events corresponding to $k=1,\ldots,N$  to get 
\begin{align*}
   & \mathbb{P}\left(  \max_{k=1, \ldots, N}\left| \sum_{i=1}^N \left(A(X_k,X_i)-W(X_k,X_i)\right) \mathbbm{1}_{I_{j}}(X_i)\right| \leq \frac{1}{\sqrt{2}} \frac{\sqrt{\log(2/q)}}{\sqrt{N}} \; \Bigg| \; X=(\tilde{X}_1, \ldots, \tilde{X}_N) \right) \\ &\leq 1-Nq
\end{align*}

By the law of total probability, we have 

\begin{align*} 
     & \mathbb{P}\left( \max_{k=1, \ldots, N}\left| \sum_{i=1}^N \left(A({X}_k,{X}_i)-W({X}_k,{X}_i)\right) \mathbbm{1}_{I_{j}}(X_i)\right| \leq \frac{1}{\sqrt{2}}   \frac{\sqrt{\log(2/q)}}{\sqrt{N}}\right) \\
     &
     = \int_{\chi^N}\mathbb{P}\Bigg( \max_{k=1, \ldots, N}\left| \sum_{i=1}^N \left(A(X_k,X_i)-W(X_k,X_i)\right) \mathbbm{1}_{I_{j}}(X_i)\right|   \leq   \frac{1}{\sqrt{2}} \frac{\sqrt{\log(2/q)}}{\sqrt{N}} 
    \\&
    \; \Bigg| \;  X=(\tilde{X}_1, \ldots,\tilde{X}_N) \Bigg) d{\mu^N}(\tilde{X}) \\
     \leq 1-Nq.
\end{align*} 
Now, intersect the $|\mathcal{J}|$ events corresponding to the sets $I_j$ for $j=1, \ldots, |\mathcal{J}|$,  to get 
\begin{align*}
    & \mathbb{P}\left(\max_{j=1, \ldots, |\mathcal{J}|} \max_{k=1, \ldots, N}\left| \sum_{i=1}^N \left(A({X}_k,{X}_i)-W({X}_k,{X}_i)\right) \mathbbm{1}_{I_{j}}(X_i)\right| \leq \frac{1}{\sqrt{2}} \frac{\sqrt{\log(2/q)}}{\sqrt{N}}\right) \leq 1-|\mathcal{J}|Nq.
\end{align*}
Set $p:=q/|\mathcal{J}|N$ to finish the proof.
\end{proof}

Recall that, given a metric space $\mathcal{M}$, we denote by $B_\varepsilon^\infty(\mathcal{M})$ the ball in $L^{\infty}(\mathcal{M})$ of radius $\varepsilon$ centered about $0$. 
\begin{lemma}
\label{lemma:uniform montecarlo}
Let $(\chi,d, \P) $ be a metric-probability space and
$W$ be an admissible graphon. Suppose that $\{X_1, \ldots, X_N \}\sim \mu^N$ are drawn i.i.d. from $\chi$ via  $\mu$, $(V,g)$ is drawn from $B_\varepsilon^\infty(\chi^2)\times B_\varepsilon^\infty(\chi)$ via some Borel probability measure $\nu$,    and $\mathbf{A}(X_k,X_i) \sim \mathrm{Ber}\big( W(X_k,X_i) + V(X_k,X_i)\big)$. Set $\W = W+V$. For every $p \in (0,\frac{1}{2})$, there exists an event $\mathcal{E}^{p}$ with probability $\P^N(\mathcal{E}^{p}) \geq 1-2p$, such that for  every choice of the constants $L_f,L_{\Phi}>0$ and  every Lipschitz continuous function $f:\chi \to \mathbb{R}^F$ with Lipschitz constant bounded by  $L_f$ and Lipschitz continuous function $\Phi:\mathbb{R}^{2F} \to \mathbb{R}^H$ with Lipschitz constant bounded by  $L_\Phi$, we have \[
\begin{aligned}
& \max_{k=1, \ldots, N} \left\| \frac{1}{N} \sum_{i=1}^N \A(X_k,X_i) \Phi\big(
f(X_k), f(X_i)
 \big) - \int_\chi W(X_k,y) \Phi\big(
 f(X_k), f(y)\big) d\mu(y) \right\|_\infty \\ 
 & \leq N^{-\frac{1}{2(D_\chi+1)}} \Bigg(L_\Phi L_f +  C_\chi  \|\Phi(f,f)\| \frac{1}{\sqrt{2}} {\sqrt{\log(C_\chi)+\frac{D_\chi}{2(D_\chi+1)} \log(N)+\log(2N/p)}}  \\
  &  + L_W\|\Phi(f,f)\|_\infty 
+  C_\chi \left\|W\right\|_\infty    \left\|\Phi(f,f) \right\|_\infty \frac{1}{\sqrt{2}}\sqrt{\log(C_\chi)+\frac{D_\chi}{2(D_\chi+1)} \log(N)+ \log(2/p)} \\ &  +  \|\Phi(f,f)\|_\infty L_W + \|W\|_\infty L_\Phi L_f \Bigg) + \textcolor{black}{\big( L_\Phi   \|f\|_\infty  + \|\Phi(0,0)\|_\infty\big) \varepsilon}.
\end{aligned}
\]
\end{lemma}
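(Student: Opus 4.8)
The plan is to prove the uniform bound by pushing all the randomness into a small number of concentration events that are indexed only by a fixed finite partition of $\chi$ --- and hence do not depend on $f$ or $\Phi$ --- and then transferring these estimates to an arbitrary admissible pair $(f,\Phi)$ via a piecewise-constant approximation on the partition, whose error is controlled deterministically by $L_f$, $L_\Phi$ and $L_W$.

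\textbf{Step 1: partition and good events.} Fix a resolution $r>0$, to be optimized at the end. Using the covering-number hypothesis on $\chi$ in \Cref{ass:graphon}, take an $r$-net of $\chi$ and disjointify it into a measurable partition $\chi=\bigcup_{j=1}^{M}I_j$ with $M\le C_\chi r^{-D_\chi}$ and $\mathrm{diam}(I_j)\le 2r$; pick representatives $y_j\in I_j$. Write $\W=W+V$. Let $\mathcal{E}_1$ be the event of \Cref{lemma:good edges} applied to the sets $\{I_j\}$ and to $\W$ as the Bernoulli parameter (its proof uses only that $\A$ is $\{0,1\}$-valued), so on $\mathcal{E}_1$ one has $\max_{j,k}\tfrac1N\bigl|\sum_i(\A(X_k,X_i)-\W(X_k,X_i))\mathbbm{1}_{I_j}(X_i)\bigr|\le\tfrac1{\sqrt2}\tfrac{\sqrt{\log(2MN/p)}}{\sqrt N}$; and let $\mathcal{E}_2$ be the Hoeffding-plus-union event on which $\max_j\bigl|\tfrac1N\sum_i\mathbbm{1}_{I_j}(X_i)-\mu(I_j)\bigr|\le\tfrac1{\sqrt2}\tfrac{\sqrt{\log(2M/p)}}{\sqrt N}$. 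Each has probability at least $1-p$, so $\mathcal{E}^p:=\mathcal{E}_1\cap\mathcal{E}_2$ has probability at least $1-2p$, and $\mathcal{E}^p$ does not depend on $(f,\Phi)$.

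\textbf{Step 2: decomposition and piecewise-constant reduction.} Work on $\mathcal{E}^p$ and fix arbitrary Lipschitz $f,\Phi$. Writing $\A=\W+(\A-\W)$ and $\W=W+V$, split the quantity of interest into: (I) the Bernoulli fluctuation $\tfrac1N\sum_i(\A-\W)(X_k,X_i)\Phi(f(X_k),f(X_i))$; (II) the noise part $\tfrac1N\sum_i V(X_k,X_i)\Phi(f(X_k),f(X_i))$; and (III) the genuine Monte-Carlo error $\tfrac1N\sum_i W(X_k,X_i)\Phi(f(X_k),f(X_i))-\int_\chi W(X_k,y)\Phi(f(X_k),f(y))\,d\mu(y)$. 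Term (II) is bounded in $\|\cdot\|_\infty$ by $\varepsilon\|\Phi(f,f)\|_\infty\le(L_\Phi\|f\|_\infty+\|\Phi(0,0)\|_\infty)\varepsilon$, which is the last summand of the claim. For (I) and (III), split the sum and the integral over the pieces $I_j$, replace $f(y)$ by $f(y_j)$ (error $\le L_\Phi L_f\,\mathrm{diam}(I_j)$ after $\Phi$) and, in (III), $W(X_k,y)$ by $W(X_k,y_j)$ (error $\le L_W\,\mathrm{diam}(I_j)$, by Lipschitz continuity of $W$ in its second variable from \Cref{ass:graphon}). After this freezing, (I) becomes a residue of order $L_\Phi L_f r$ (using $\tfrac1N\sum_i|\A-\W|(X_k,X_i)\le 1+\varepsilon$) plus the main term $\sum_j\Phi(f(X_k),f(y_j))\,\tfrac1N\sum_{i:X_i\in I_j}(\A-\W)(X_k,X_i)$, which on $\mathcal{E}_1$ is at most $M\,\|\Phi(f,f)\|_\infty\,\tfrac1{\sqrt2}\tfrac{\sqrt{\log(2MN/p)}}{\sqrt N}$; and (III) becomes residues of order $\bigl(L_W\|\Phi(f,f)\|_\infty+\|W\|_\infty L_\Phi L_f\bigr)r$ (from the empirical sum and from the integral) plus the main term $\sum_j W(X_k,y_j)\Phi(f(X_k),f(y_j))\bigl(\tfrac1N\sum_i\mathbbm{1}_{I_j}(X_i)-\mu(I_j)\bigr)$, which on $\mathcal{E}_2$ is at most $M\,\|W\|_\infty\|\Phi(f,f)\|_\infty\,\tfrac1{\sqrt2}\tfrac{\sqrt{\log(2M/p)}}{\sqrt N}$. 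Every one of these bounds is uniform in $k$ and in $(f,\Phi)$.

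\textbf{Step 3: optimize and assemble.} Inserting $M\le C_\chi r^{-D_\chi}$, the assembled bound is of order $r\bigl(L_\Phi L_f+L_W\|\Phi(f,f)\|_\infty+\|W\|_\infty L_\Phi L_f\bigr)+C_\chi r^{-D_\chi}N^{-1/2}\bigl(\|\Phi(f,f)\|_\infty+\|W\|_\infty\|\Phi(f,f)\|_\infty\bigr)\sqrt{\log(\cdot)}$; choosing $r=N^{-1/(2(D_\chi+1))}$ balances the two and turns $C_\chi r^{-D_\chi}N^{-1/2}$ into $C_\chi N^{-1/(2(D_\chi+1))}$, while $\log(2MN/p)=\log C_\chi+\tfrac{D_\chi}{2(D_\chi+1)}\log N+\log(2N/p)$ and $\log(2M/p)=\log C_\chi+\tfrac{D_\chi}{2(D_\chi+1)}\log N+\log(2/p)$. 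Collecting the six resulting terms together with the $\varepsilon$-term from (II) reproduces the stated inequality. \emph{The main obstacle} is the uniformity in Step 2: we must avoid a union bound over a covering net of the infinite-dimensional classes of Lipschitz $f$ and $\Phi$. This is resolved precisely because, once the randomness is frozen into the partition-indexed events $\mathcal{E}_1$ and $\mathcal{E}_2$, the functions $f,\Phi$ influence the estimate only through the deterministic approximation residues governed by $L_f,L_\Phi,L_W$; tracking the exact constants and matching each parenthesised term to its origin is then routine bookkeeping.
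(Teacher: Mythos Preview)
Your proposal is correct and follows essentially the same approach as the paper: a fixed $r$-net partition of $\chi$, two partition-indexed concentration events (one for the Bernoulli fluctuations via \Cref{lemma:good edges}, one for the empirical-vs-true cell measures via Hoeffding), piecewise-constant freezing of $f$ and $W$ with deterministic Lipschitz residues, and the final choice $r=N^{-1/(2(D_\chi+1))}$. The paper organizes the same estimates as a chain of seven sequential transformations rather than your three-term split (I)+(II)+(III), but the ingredients and the resolution of the uniformity obstacle are identical.
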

\begin{proof}
Let $r > 0$.
By \Cref{ass:graphon1} in \Cref{ass:graphon}, there exists an open covering $(B_j)_{j\in \mathcal{J}}$ of $\chi$  by a family of  balls with radius $r$  
such that $|\mathcal{J}| \leq C_\chi r^{-D_\chi}$.
For $j = 2, \ldots, |\mathcal{J}|$, we define $I_j := B_j \setminus \cup_{i < j} B_i$, and define $I_1=B_1$. Hence, $(I_j)_{j \in \mathcal{J}}$ is a family of measurable sets  such that $I_j \cap I_i = \emptyset$ for all $i\neq j \in \mathcal{J}$,  $\bigcup_{j \in \mathcal{J}} I_j = \chi$, and $\mathrm{diam}(I_j) \leq 2r$ for all $j \in \mathcal{J}$, where by convention $\mathrm{diam}(\emptyset)=0$. For each $I_j \in \mathcal{J}$, let $z_j$ be the center of the ball $B_j$.

 Next, we compute a concentration of error bound on the difference between the measure of $I_j$ and its Monte Carlo approximation, which is uniform in $I_j\in\mathcal{J}$.  
Let $I_j \in \mathcal{J}$ and $q \in (0,1)$. By Hoeffding's inequality, there is an event $\mathcal{E}_j^q$ with probability  $\mu(\mathcal{E}_j^q)\geq 1-q$, in which
\begin{equation}
\label{eq:stepfctEvent1}
\left\| \frac{1}{N} \sum_{i=1}^N \big(\mathbbm{1}_{I_j}(X_i) - \mu(I_k)\big)\right\|_\infty \leq \frac{1}{\sqrt{2}}\frac{\sqrt{ \log(2/q)}}{\sqrt{N}}.
\end{equation}
Consider the event 
\[\mathcal{E}_{\rm Lip}^{|\mathcal{J}|q} = \bigcap_{j=1}^{|\mathcal{J}|}\mathcal{E}_j^q,\]
with probability  $\mu^N(\mathcal{E}_{\rm Lip}^{|\mathcal{J}|q})\geq 1- |\mathcal{J}|q $.
In this event,  \Cref{eq:stepfctEvent1} holds for every $I_j\in\mathcal{J}$. We change the failure probability variable $p = |\mathcal{J}|q$, and denote $\mathcal{E}_{\rm Lip}^{p}= \mathcal{E}_{\rm Lip}^{|\mathcal{J}|q}$.

Now, condition on $(V,g) \in B_\varepsilon^\infty(\chi^2)\times B_\varepsilon^\infty(\chi)$ and set $\tilde{W}:= W + V$ and $\tilde{f}:= f + g$.
By \Cref{lemma:good edges} there exists an event  $\mathcal{E}_{\rm Ber}^{p}$ with probability at least $1-p$ over the Bernoulli edges and the choice of $X_1, \ldots, X_N$  in which
\begin{equation}
    \label{eq:good edge event}
  \max_{j=1, \ldots, |\mathcal{J}|}  \max_{k=1, \ldots, N} \frac{1}{N} \left| \sum_{i=1} \A(X_k,X_i)-\tilde{W}(X_k,X_i) \mathbbm{1}_{I_j}(X_i) \right|  \leq \frac{1}{\sqrt{2}} \frac{\sqrt{\log(2N/p)}}{\sqrt{N}}
\end{equation}
    holds.
    Now assume that $\mathcal{E}^{p}:=\mathcal{E}_{\rm Lip}^{p} \cap \mathcal{E}_{\rm Ber}^{p}$ occurs.
We have for every $k=1, \ldots, N$
\begin{equation}
\label{eq:123uniformBound2}
\begin{aligned}
&   \left\| \frac{1}{N }\sum_{i=1}^N  \A(X_k,X_i) \Phi\big(
f(X_k), f(X_i)
 \big) 
 - \int_\chi W(X_k,y) \Phi\big(
 f(X_k), f(y)\big) d\mu(y)  d\P(y)\right\|_\infty \\
 & \leq \left\|\frac{1}{N }\sum_{i=1}^N  \A(X_k,X_i) \Phi\big(
 f(X_k), f(X_i)
 \big) - \frac{1}{N }\sum_{i=1}^N \sum_{j \in \mathcal{J}} \A(X_k,X_i)   \Phi\big(
 f(X_k), f(z_j)  \big)  \mathbbm{1}_{I_j}(X_i) 
  \right\|_\infty
 \\
 & + \left\|\frac{1}{N }\sum_{i=1}^N \sum_{j \in \mathcal{J}} \A(X_k,X_i)   \Phi\big(
 f(X_k), f(z_j)  \big)  \mathbbm{1}_{I_j}(X_i)  - 
    \tilde{W}(X_k,X_i)   \Phi\big(
 f(X_k), f(z_j)  \big)  \mathbbm{1}_{I_j}(X_i) \right\|_\infty
 \\
 & + \left\| \frac{1}{N }\sum_{i=1}^N \sum_{j \in \mathcal{J}}  \tilde{W}(X_k,X_i)   \Phi\big(
 f(X_k), f(z_j)  \big)  \mathbbm{1}_{I_j}(X_i)   - 
 W(X_k,X_i)   \Phi\big(
 f(X_k), f(z_j)  \big)  \mathbbm{1}_{I_j}(X_i) \right\|_\infty
  \\
  & + \left\|\frac{1}{N }\sum_{i=1}^N \sum_{j \in \mathcal{J}}  W(X_k,X_i)   \Phi\big(
 f(X_k), f(z_j)  \big)  \mathbbm{1}_{I_j}(X_i)   -    W(X_k,z_j)   \Phi\big(
 f(X_k), f(z_j)  \big)  \mathbbm{1}_{I_j}(X_i)  \right\|_\infty
  \\
 & + \left\|\frac{1}{N} \sum_{i=1}^N \sum_{j \in \mathcal{J}}  W(X_k,z_j)   \Phi\big(
 f(X_k), f(z_j)  \big)  \mathbbm{1}_{I_j}(X_i)    - \int_\chi  W(X_k,z_j)   \Phi\big(
 f(X_k), f(z_j)  \big)  \mathbbm{1}_{I_j}(y)   \right\|_\infty
 \\
 & + \left\| \int_\chi \sum_{j \in \mathcal{J}} W(X_k,z_j)   \Phi\big(
 f(X_k), f(z_j)  \big)  \mathbbm{1}_{I_j}(y)  d\mu(y) -  W(X_k,y)   \Phi\big(
 f(X_k), f(z_j)  \big)\mathbbm{1}_{I_j}(y) d\P(y) \right\|_\infty
 \\
 & + \left\| \int_\chi \sum_{j \in \mathcal{J}}  W(X_k,y)   \Phi\big(
 f(X_k), f(z_j)  \big) \mathbbm{1}_{I_j}(y)d\P(y)   -   W(X_k,y)   \Phi\big(
 f(X_k), f(y)  \big) d\P(y) \right\|_\infty
 \\
 & =: (1) + (2) +(3)  + (4) + (5) + (6) +(7).
 \end{aligned}
\end{equation}
We bound each term separately. To bound (1), we define for each $X_i$ the unique index $j_i \in \{1, \ldots, |\mathcal{J}|\}$  s.t. $X_i \in I_{j_i}$. Then,  
\begin{align*}
  & \left\|\frac{1}{N }\sum_{i=1}^N  \A(X_k,X_i) \Phi\big(
 f(X_k), f(X_i)
 \big) - \frac{1}{N }\sum_{j \in \mathcal{J}} \sum_{i=1}^N  \A(X_k,X_i)   \Phi\big(
 f(X_k), f(z_j)  \big)  \mathbbm{1}_{I_j}(X_i) 
  \right\|_\infty  \\
  &
  \leq \frac{1}{N } \sum_{i=1}^N \left\| \Phi\big(
 f(X_k), f(X_i)  \big)  - \Phi\big(
 f(X_k), f(z_{j_i})  \big) \right \|_\infty \\
 & \leq L_\Phi L_f r.
\end{align*}

We proceed by bounding (2).  In the event of $\mathcal{E}_{\rm Ber}^{p}$,we get for every $k=1, \ldots, N$ \begin{align*}
  &   \left\|\frac{1}{N }\sum_{i=1}^N \sum_{j \in \mathcal{J}} \A(X_k,X_i)   \Phi\big(
 f(X_k), f(z_j)  \big)  \mathbbm{1}_{I_j}(X_i)  - 
    W(X_k,X_i)   \Phi\big(
 f(X_k), f(z_j)  \big)  \mathbbm{1}_{I_j}(X_i) \right\|_\infty \\ 
& = \frac{1}{N }\sum_{j \in \mathcal{J}}\left\|\Phi\big(
 f(X_k), f(z_j)  \big)\right\|_\infty \left| \sum_{i=1}^N  (\A(X_k,X_i)    \mathbbm{1}_{I_j}(X_i)  - 
   W(X_k,X_i)) \mathbbm{1}_{I_j}(X_i) \right| \\
   & \leq |\mathcal{J}| \|\Phi(f,f)\|_\infty\frac{1}{\sqrt{2}} \frac{\sqrt{\log(2|\mathcal{J}|N/p)}}{\sqrt{N}}  
\end{align*}
Recall that $|\mathcal{J}| \leq C_\chi r^{-D_\chi}$. Then,
\begin{align*}
 & \left\|\frac{1}{N }\sum_{i=1}^N \sum_{j \in \mathcal{J}} \A(X_k,X_i)   \Phi\big(
 f(X_k), f(z_j)  \big)  \mathbbm{1}_{I_j}(X_i)  -    W(X_k,X_i)   \Phi\big(
 f(X_k), f(z_j)  \big)  \mathbbm{1}_{I_j}(X_i) \right\|_\infty
\\
& \leq C_\chi r^{-D_\chi} \left\|\Phi(f,f) \right\|_\infty \frac{1}{\sqrt{2}}\frac{\sqrt{\log(C_\chi)-D_\chi \log(r)+ \log(2N/p)}}{\sqrt{N}}
\end{align*}

\textcolor{black}{
For (3), we calculate:
\begin{align*}
  &  \left\| \frac{1}{N }\sum_{i=1}^N \sum_{j \in \mathcal{J}}  \tilde{W}(X_k,X_i)   \Phi\big(
 f(X_k), f(z_j)  \big)  \mathbbm{1}_{I_j}(X_i)   -   W(X_k,X_i)   \Phi\big(
 f(X_k), f(z_j)  \big)  \mathbbm{1}_{I_j}(X_i) \right\|_\infty \\
 & = \left\| \frac{1}{N }\sum_{i=1}^N    \tilde{W}(X_k,X_i)   \Phi\big(
 f(X_k), f(z_{j_i})  \big)      - 
 \frac{1}{N }\sum_{i=1}^N   W(X_k,X_i)   \Phi\big(
 f(X_k), f(z_{j_i})  \big)   \right\|_\infty \\
  & = \left\| \frac{1}{N }\sum_{i=1}^N   \Big( \tilde{W}(X_k,X_i)  - W(X_k,X_i) \Big) \Phi\big(
 f(X_k), f(z_{j_i})  \big)   \right\|_\infty \\
 & \leq  \left\|W - \tilde{W}\right\|_\infty   \frac{1}{N }\sum_{i=1}^N  \left\|  \Phi\big(
 f(X_k), f(z_{j_i})  \big)   \right\|_\infty \\
 & \leq \left\| U\right\|_\infty     \left\| \Phi\big(
 f,f  \big)  \right\|_\infty \leq   \left\| \Phi\big(
 f,f  \big)  \right\|_\infty\varepsilon.
\end{align*}
Note that $ \left\| \Phi\big(
 f,f  \big)  \right\|_\infty  \leq L_\Phi \|f\|_\infty + \|\Phi(0,0)\|_\infty$. Hence,
 \begin{align*}
  &  \left\| \frac{1}{N }\sum_{i=1}^N \sum_{j \in \mathcal{J}}  \tilde{W}(X_k,X_i)   \Phi\big(
 f(X_k), f(z_j)  \big)  \mathbbm{1}_{I_j}(X_i)   -   W(X_k,X_i)   \Phi\big(
 f(X_k), f(z_j)  \big)  \mathbbm{1}_{I_j}(X_i) \right\|_\infty \\
 &  \leq  \Big( L_\Phi   \|f\|_\infty  + \|\Phi(0,0)\|_\infty\Big) \varepsilon.
\end{align*}}

To bound (4), we calculate  
\begin{align*}
  &  \left\|\frac{1}{N }\sum_{i=1}^N \sum_{j \in \mathcal{J}}  W(X_k,X_i)   \Phi\big(
 f(X_k), f(z_j)  \big)  \mathbbm{1}_{I_j}(X_i)   -   W(X_k,z_j)   \Phi\big(
 f(X_k), f(z_j)  \big)  \mathbbm{1}_{I_j}(X_i)  \right\|_\infty \\
 & \left\|\frac{1}{N }\sum_{i=1}^N   W(X_k,X_i)   \Phi\big(
 f(X_k), f(z_{j_i})  \big)    -  \frac{1}{N }\sum_{i=1}^N     W(X_k,z_j)   \Phi\big(
 f(X_k), f(z_{j_i})  \big)   \right\|_\infty \\
& \leq  L_W r \| \Phi(f,f)\|_\infty.
\end{align*}

To bound (5),  since we are in the event $\mathcal{E}_{\rm Lip}^{p}$, for every $k$ we have  
\begin{align*}
     & \left\|\frac{1}{N} \sum_{i=1}^N \sum_{j \in \mathcal{J}}  W(X_k,z_j)   \Phi\big(
 f(X_k), f(z_j)  \big)  \mathbbm{1}_{I_j}(X_i)    - \int_\chi  W(X_k,z_j)   \Phi\big(
 f(X_k), f(z_j)  \big)  \mathbbm{1}_{I_j}(y)   \right\|_\infty \\
 & \leq \Bigg\|\sum_{j \in \mathcal{J}} 
 \Bigg( 
 \frac{1}{N} \sum_{i=1}^N    W(X_k,z_j)   \Phi\big(
 f(X_k), f(z_j)  \big)  \mathbbm{1}_{I_j}(X_i) \\ &    - \int_\chi   W(X_k,z_j)   \Phi\big(
 f(X_k), f(z_j)  \big)  \mathbbm{1}_{I_j}(y)  
 \Bigg)
 \Bigg\|_\infty  \\
 & \leq \sum_{j \in \mathcal{J}} 
 \left\|W\right\|_\infty     \left\| \Phi(f,f) \right\|_\infty \left| 
 \frac{1}{N} \sum_{i=1}^N     \mathbbm{1}_{I_j}(X_i)    - \int_\chi    \mathbbm{1}_{I_j}(y)   
 \right| \\
 & \leq |\mathcal{J}|  \left\|W\right\|_\infty     \left\|\Phi(f,f)\right\|_\infty \frac{1}{\sqrt{2}} \frac{\sqrt{\log(2|\mathcal{J}|/p)}}{\sqrt{N}}.
 \end{align*}

Recall that $|\mathcal{J}| \leq C_\chi r^{-D_\chi}$. Then, 
\begin{align*}
  & \left\|\frac{1}{N} \sum_{i=1}^N \sum_{j \in \mathcal{J}}  W(X_k,z_j)   \Phi\big(
 f(X_k), f(z_j)  \big)  \mathbbm{1}_{I_j}(X_i)    - \int_\chi  W(X_k,z_j)   \Phi\big(
 f(X_k), f(z_j)  \big)  \mathbbm{1}_{I_j}(y)   \right\|_\infty \\ & \leq C_\chi r^{-D_\chi} \left\|W\right\|_\infty     \left\|\Phi(f,f)\right\|_\infty \frac{1}{\sqrt{2}}\frac{\sqrt{\log(C_\chi)-D_\chi \log(r)+ \log(2/p)}}{\sqrt{N}}.
\end{align*}

For (6), we calculate 
\begin{align*}
    &  \left\| \int_\chi \sum_{j \in \mathcal{J}} W(X_k,z_j)   \Phi\big(
 f(X_k), f(z_j)  \big)  \mathbbm{1}_{I_j}(y)  d\mu(y) -   W(X_k,y)   \Phi\big(
 f(X_k), f(z_j)  \big) d\P(y) \mathbbm{1}_{I_j}(y)\right\|_\infty \\
 & \leq \sum_{j \in \mathcal{J}} \int_{I_j  }\left\|
 W(X_k,z_j)   \Phi\big(
 f(X_k), f(z_j)  \big)   d\mu(y) -   W(X_k,y)   \Phi\big(
 f(X_k), f(z_j)  \big) d\P(y) \right\|_\infty \\
 & \leq r \|\Phi(f,f)\|_\infty L_W   .
\end{align*}

For (7), we calculate
\begin{align*}
   &  \left\|\int_\chi\sum_{j \in \mathcal{J}} W(X_k,y)   \Phi\big(
 f(X_k), f(z_j)  \big) d\P(y) \mathbbm{1}_{I_j}(y)   - \int_\chi W(X_k,y)   \Phi\big(
 f(X_k), f(y)  \big) d\P(y) \right\|_\infty \\
 & \leq \sum_{j \in \mathcal{J}} \int_{I_j  }\left\|
 W(X_k,y)   \Phi\big(
 f(X_k), f(z_j)  \big)   d\mu(y) -   W(X_k,y)   \Phi\big(
 f(X_k), f(y)  \big) d\P(y) \right\|_\infty \\
 & \leq r \|W \|_\infty L_\Phi L_f 
\end{align*}

All together, we get 
\begin{align*}
&  \left\| \frac{1}{N }\sum_{i=1}^N  \A(X_k,X_i) \Phi\big(
 f(X_k), f(X_i)
 \big) 
 - \int_\chi W(X_k,y) \Phi\big(
 f(X_k), f(y)\big) d\mu(y)  d\P(y)\right\|_\infty \\
 & \leq rL_\Phi L_f +  C_\chi r^{-D_\chi} \|\Phi(f,f)\| \frac{1}{\sqrt{2}} \frac{\sqrt{\log(C_\chi)-D_\chi \log(r)+\log(2N/p)}}{\sqrt{N}} + L_Wr\|\Phi(f,f)\|_\infty
 \\ & +  C_\chi r^{-D_\chi}\left\|W\right\|_\infty     \left\|\Phi(f,f)\right\|_\infty \frac{1}{\sqrt{2}}\frac{\sqrt{\log(C_\chi)-D_\chi \log(r)+ \log(2/p)}}{\sqrt{N}} \\ & + r \|\Phi(f,f)\|_\infty L_W + r \|W\|_\infty L_\Phi L_f.
\end{align*}
We set $r = N^{-\frac{1}{2(D_\chi+1)}}$, which leads to 
\begin{align*}
  &  \left\| \frac{1}{N }\sum_{i=1}^N  \A(X_k,X_i) \Phi\big(
 f(X_k), f(X_i)
 \big) 
 - \int_\chi W(X_k,y) \Phi\big(
 f(X_k), f(y)\big) d\mu(y)  d\P(y)\right\|_\infty \\
 & \leq N^{-\frac{1}{2(D_\chi+1)}} \Bigg(L_\Phi L_f +  C_\chi  \|\Phi(f,f)\| \frac{1}{\sqrt{2}} {\sqrt{\log(C_\chi)+\frac{D_\chi}{2(D_\chi+1)} \log(N)+\log(2N/p)}} \\ & + L_W\|\Phi(f,f)\|_\infty 
  +  C_\chi \left\|W\right\|_\infty     \left\|\Phi(f,f)\right\|_\infty \frac{1}{\sqrt{2}}\sqrt{\log(C_\chi)+\frac{D_\chi}{2(D_\chi+1)} \log(N)+ \log(2/p)}  \\
  &  +  \|\Phi(f,f)\|_\infty L_W + \|W\|_\infty L_\Phi L_f \Bigg)+ \textcolor{black}{  \big( L_\Phi   \|f\|_\infty  + \|\Phi(0,0)\|_\infty\big) \varepsilon.}
\end{align*}
 We conclude by applying the law of total probability.
\end{proof}

\begin{lemma}
\label{lemma:bernoulliHoeffdings}
Let $(\chi,d, \P) $ be a metric-probability space and
$W$ be a graphon. Let $\mathbf{X}=\{X_1, \ldots, X_N \}\sim \mu^N$ be drawn i.i.d. from $\chi$ via $\mu$ and $\mathbf{A}(X_k,X_i) \sim \mathrm{Ber}\big( W(X_k,X_i)\big)$. For every $p \in (0,1)$, there exists an even with probability at least $1-p$ such that for all $X_i \in\{ X_1, \ldots, X_n\}$
\[
\begin{aligned}
& 
\left|\mathrm{d}_{\mathbf{A}}(X_i) - \mathrm{d}_{\mathbf{X}}(X_i)\right| =  \left| \frac{1}{N} \sum_{j=1}^N \mathbf{A}(X_i,X_j)  - \frac{1}{N} \sum_{j=1}^N W(X_i,X_j) \   \right|  \leq  \frac{1}{\sqrt{2}} \frac{ \sqrt{\log(2N/p)}  }{\sqrt{N}}
\end{aligned}
\]
\end{lemma}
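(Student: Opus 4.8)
The plan is to mimic the conditioning-plus-Hoeffding argument used in the proof of \Cref{lemma:good edges}, but applied row by row. Fix $i \in \{1, \ldots, N\}$ and condition on the event $\mathbf{X} = \widetilde{\mathbf{X}}$ for an arbitrary deterministic tuple $\widetilde{\mathbf{X}} = (\widetilde X_1, \ldots, \widetilde X_N) \in \chi^N$. Conditionally on this event, the entries $\mathbf{A}(\widetilde X_i, \widetilde X_j)$ for $j = 1, \ldots, N$ are independent Bernoulli random variables (over the edge randomness) taking values in $[0,1]$ with means $W(\widetilde X_i, \widetilde X_j)$; the diagonal term $\mathbf{A}(\widetilde X_i, \widetilde X_i)$ is included in the sum but this causes no issue, as it is still a $[0,1]$-valued variable independent of the others.

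First I would apply Hoeffding's inequality (see \Cref{thm:Hoeffdings}) to these $N$ independent $[0,1]$-valued variables: for every $q \in (0,1)$ there is an event of conditional probability at least $1-q$ on which
\[
\left| \frac{1}{N}\sum_{j=1}^N \bigl(\mathbf{A}(X_i,X_j) - W(X_i,X_j)\bigr) \right| \leq \frac{1}{\sqrt{2}}\,\frac{\sqrt{\log(2/q)}}{\sqrt{N}}.
\]
Next I would intersect the $N$ such events over $i = 1, \ldots, N$, so that with conditional probability at least $1 - Nq$ the displayed bound holds simultaneously for all $i$. Since the right-hand side does not depend on $\widetilde{\mathbf{X}}$, integrating this conditional statement against $\mu^N$ via the law of total probability yields the same bound unconditionally, with probability at least $1 - Nq$.

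Finally I would rename the failure probability by setting $q = p/N$, so that $Nq = p$ and $\log(2/q) = \log(2N/p)$, which recovers exactly the claimed estimate. Recognizing that $\mathrm{d}_{\mathbf{A}}(X_i) = \tfrac{1}{N}\sum_j \mathbf{A}(X_i,X_j)$ and $\mathrm{d}_{\mathbf{X}}(X_i) = \tfrac{1}{N}\sum_j W(X_i,X_j)$ by \Cref{def:degrees} completes the argument. There is no real obstacle in this lemma; the only point requiring a little care is to condition on the node locations \emph{before} invoking Hoeffding — so that the edge indicators become independent with the correct conditional means — and then to remove the conditioning by integration, exactly as in \Cref{lemma:good edges}.
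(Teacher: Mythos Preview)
Your proposal is correct and matches the paper's approach exactly: the paper states that the proof is similar to that of \Cref{lemma:good edges} and omits it, and your argument is precisely the natural specialization of that proof (condition on the node locations, apply Hoeffding row by row, union bound over $i$, integrate out the conditioning, and reparametrize $q=p/N$). There is nothing to add.
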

The proof is similar to the proof of \Cref{lemma:good edges}, and we omit it.

\textcolor{black}{
\begin{lemma}
 \label{degree close when perturbed}
 Let $(\chi, d, \mu)$ be a metric-probability space and $W$ be an admissible graphon (\Cref{ass:graphon}). Let $\X = \{X_1, \ldots, X_N\} \sim \mu^N$ be drawn i.i.d. from $\chi$ via $\mu$, $(V,g)$ be drawn from $B_\varepsilon^\infty(\chi^2)\times B_\varepsilon^\infty(\chi)$ via the Borel probability measure $\nu$,  and $\mathbf{A}(X_k,X_i) \sim \mathrm{Ber}\big( W(X_k,X_i) + V(X_k,X_i)\big)$. 
 Suppose that $N$ satisfies  
\begin{equation}
\begin{aligned}
\label{eq:lowerBoundGraphSizeN}
\sqrt{N} \geq   \max\Bigg\{  &   4\sqrt{2} \frac{\sqrt{ \log(2N/p)}}{\mathrm{d}_{\mathrm{min}}}, \\ &  4 \Big(\zeta   \frac{\cl}{\cmin} \big(\sqrt{\log (C_\chi)} +  \sqrt{D_\chi}\big) +
    \frac{\sqrt{2} \cmax + \zeta \cl}{ \cmin } \sqrt{\log 2/p}\Big)\Bigg\},
\end{aligned}
\end{equation}
where $\zeta$ is defined as 
\begin{equation}
    \label{eq:zeta}
    \zeta  := \frac{2}{\sqrt{2}}e\Big(\frac{2}{\ln(2)} +1 \Big)\frac{1}{\sqrt{\ln(2)}} C
\end{equation}
and $C$ is the universal constant from Dudley' inequality (see \cite[Theorem 8.1.6]{vershynin_2018}).
Then, for any $p \in (0, \frac{1}{2})$, with probability at least $1-2p$, we have
\[
\begin{aligned}
 & \max_{i=1, \ldots, N} |\rm d_{\mathbf{A}}(X_i) - d_W(X_i) |  \leq  \varepsilon +  \frac{1}{\sqrt{2}} \frac{\sqrt{\log(2N/p)}}{\sqrt{N}} \\
  & + \frac{  \Big(\zeta \cl (\sqrt{\log (C_\chi)} +  \sqrt{D_\chi}) + (\sqrt{2}\cmax+\zeta \cl) \sqrt{\log2/p} \Big) }{ \sqrt{N}}
\end{aligned}
 \]
 and 
 \[
\min_{i=1, \ldots, N} \mathrm{d}_{\mathbf{A}}(X_i) \geq \frac{\mathrm{d}_{\mathrm{min}}}{2} - \varepsilon.
\]
\end{lemma}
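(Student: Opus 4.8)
The plan is to decompose the discrepancy $\mathrm{d}_{\A}(X_i)-\mathrm{d}_W(X_i)$ into three pieces and bound each one separately. Writing $\W=W+V$ and using $\E[\A(X_i,X_j)\mid\X]=\W(X_i,X_j)$, I would split
\begin{align*}
\mathrm{d}_{\A}(X_i)-\mathrm{d}_W(X_i)&=\Big(\mathrm{d}_{\A}(X_i)-\tfrac1N\sum_{j=1}^N\W(X_i,X_j)\Big)+\tfrac1N\sum_{j=1}^N V(X_i,X_j)\\
&\qquad+\Big(\tfrac1N\sum_{j=1}^N W(X_i,X_j)-\mathrm{d}_W(X_i)\Big),
\end{align*}
and call these (a), (b), (c) respectively. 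Piece (b) is bounded deterministically by $\|V\|_{L^\infty(\chi^2)}\le\varepsilon$. For (a) I would condition on $\X$, apply Hoeffding's inequality to the independent $[0,1]$-valued Bernoulli variables $\A(X_i,X_j)$ with conditional means $\W(X_i,X_j)$, and union bound over $i=1,\dots,N$; exactly as in \Cref{lemma:bernoulliHoeffdings}, this gives $\max_i|(\mathrm{a})|\le\tfrac1{\sqrt2}\tfrac{\sqrt{\log(2N/p)}}{\sqrt N}$ on an event of probability at least $1-p$.

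The substantive step is piece (c), which must hold uniformly over $x\in\chi$ so that it applies to all $N$ sample points at once. I would view $x\mapsto Z_x:=\tfrac1N\sum_{j}W(x,X_j)-\mathrm{d}_W(x)$ as an empirical process indexed by the metric space $(\chi,d)$. By \Cref{ass:graphon11}, the map $x\mapsto W(x,\cdot)$ is $\cl$-Lipschitz into $L^\infty(\chi)$, so each increment $Z_x-Z_{x'}$ is an average of $N$ independent variables of range at most $\cl\,d(x,x')$, hence sub-Gaussian with parameter $\mathcal{O}\big(\cl\,d(x,x')/\sqrt N\big)$, while each $Z_x$ itself has range at most $\cmax/\sqrt N$. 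Feeding the covering estimate $\mathcal C(\chi,d;r)\le C_\chi r^{-D_\chi}$ from \Cref{ass:graphon1} into Dudley's entropy integral (\cite[Theorem 8.1.6]{vershynin_2018}) bounds the expected supremum by $\tfrac{\zeta\cl}{\sqrt N}\big(\sqrt{\log C_\chi}+\sqrt{D_\chi}\big)$ with $\zeta$ as in \eqref{eq:zeta}, and the standard high-probability tail bound for suprema of sub-Gaussian processes adds a further $\tfrac{(\sqrt2\,\cmax+\zeta\cl)\sqrt{\log(2/p)}}{\sqrt N}$ on an event of probability at least $1-p$. Intersecting the events from (a) and (c), which has probability at least $1-2p$, and summing the three bounds gives the first claimed inequality.

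For the second inequality I would use $\mathrm{d}_W(X_i)\ge\cmin$ from \Cref{ass:graphon12}, so that on the same event $\mathrm{d}_{\A}(X_i)\ge\cmin-\varepsilon-E_N$, where $E_N$ is the sum of the Hoeffding and Dudley error terms above. The two lower bounds on $\sqrt N$ in \eqref{eq:lowerBoundGraphSizeN} are calibrated precisely so that the Hoeffding term is at most $\tfrac18\cmin$ and the Dudley term is at most $\tfrac14\cmin$; hence $E_N\le\tfrac38\cmin\le\tfrac12\cmin$, and $\mathrm{d}_{\A}(X_i)\ge\tfrac12\cmin-\varepsilon$, as required.

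I expect the main obstacle to be piece (c): setting up the chaining argument with the correct sub-Gaussian increment estimates, tracking the explicit constant $\zeta$ from Dudley's theorem through the entropy integral, and passing cleanly from the expected-supremum bound to the high-probability statement. Pieces (a) and (b), the union bounds and conditioning (handled by the law of total probability as in \Cref{lemma:good edges}), and the final arithmetic against \eqref{eq:lowerBoundGraphSizeN} are all routine.
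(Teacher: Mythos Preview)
Your proposal is correct and follows essentially the same decomposition and argument as the paper: the deterministic bound $\varepsilon$ for (b), Hoeffding with a union bound over $i$ for (a) via \Cref{lemma:bernoulliHoeffdings}, a Dudley-type uniform concentration for (c), intersection into a single event of probability $\ge 1-2p$, and then the same arithmetic against \eqref{eq:lowerBoundGraphSizeN} for the lower bound on $\mathrm{d}_{\A}$. The only cosmetic difference is that the paper invokes the ready-made \Cref{lemma:kerivenLemma4} for piece (c) rather than re-deriving the chaining bound from scratch.
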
}

 \begin{proof}
\textcolor{black}{
We denote $\tilde{W} = W + V$.
We calculate 
\begin{align*}
& \left\|
\frac{1}{N} \sum_{i=1}^N \tilde{W}(\cdot,X_i)  - \int_\chi W(\cdot, x)   d\P(x)\right\|_\infty \\
& \leq \left\|
\frac{1}{N} \sum_{i=1}^N \tilde{W}(\cdot,X_i)  -\frac{1}{N} \sum_{i=1}^N W(\cdot,X_i) \right\|_\infty + \left\|
\frac{1}{N} \sum_{i=1}^N W(\cdot,X_i) -  \int_\chi W(\cdot, x)   d\P(x)\right\|_\infty 
\\
&  \leq  \varepsilon + 
\frac{  \Big(\zeta \cl (\sqrt{\log (C_\chi)} +  \sqrt{D_\chi}) + (\sqrt{2}\cmax+\zeta \cl) \sqrt{\log2/p} \Big) }{ \sqrt{N}},
\end{align*}
where the last inequality holds by \Cref{lemma:kerivenLemma4} with probability at least $1-p$.
Now, note that, by \Cref{lemma:bernoulliHoeffdings}, we have with probability at least $1-p$, 
\[
\max_{i=1, \ldots, N} |\rm d_A(X_i) - \frac{1}{N} \sum_{i=1}^N \tilde{W}(X_i,X_j) | \leq \frac{1}{\sqrt{2}} \frac{\sqrt{\log(2N/p)}}{\sqrt{N}}.
\]
Hence, in the joint event of probability $1-2p$, we have
\begin{align*}
  & \max_{i=1, \ldots, N} |\rm d_{\mathbf{A}}(X_i) - d_W(X_i) |  \leq  \varepsilon +  \frac{1}{\sqrt{2}} \frac{\sqrt{\log(2N/p)}}{\sqrt{N}} \\
  & + \frac{  \Big(\zeta \cl (\sqrt{\log (C_\chi)} +  \sqrt{D_\chi}) + (\sqrt{2}\cmax+\zeta \cl) \sqrt{\log2/p} \Big) }{ \sqrt{N}}.
\end{align*}
Furthermore, since  \Cref{eq:lowerBoundGraphSizeN} holds, we have in this event,
 \[
 \max_{i=1, \ldots, N}
\left| \mathrm{d}_{\mathbf{A}}(X_i) - \mathrm{d}_{W}(X_i) \right| \leq \varepsilon + \frac{\cmin}{2}.
 \]
 Thus, by $ \mathrm{d}_{W}(\cdot) \geq \cmin$,  
 \[
\min_{i=1, \ldots, N} \mathrm{d}_{\mathbf{A}}(X_i) \geq \frac{\mathrm{d}_{\mathrm{min}}}{2} - \varepsilon.
\]}
 \end{proof}

 \begin{lemma}
\label{lemma:C2}
Let $(\chi,d, \mu) $ be a metric-probability space and
$W$ be an admissible graphon.  Suppose that $\X = \{X_1, \ldots, X_N\} \sim \mu^N$ are drawn i.i.d. from $\chi$ via $\mu$, $(V,g)$ is drawn from $B_\varepsilon^\infty(\chi^2)\times B_\varepsilon^\infty(\chi)$ via $\nu$  and $\A(X_k,X_i) \sim \mathrm{Ber}\big( W(X_k,X_i) + V(X_k,X_i)\big)$.  Let $p \in (0, \frac{1}{4})$. If $N \in \mathbb{N}$ satisfies \Cref{eq:lowerBoundGraphSizeN}, there exists an event $\mathcal{F}_{\rm Lip}^p$ with probability $\mu(\mathcal{F}_{\rm Lip}^p) \geq 1-4p$ such that  for every choice of constants $L_f,L_{\Phi}>0$ and Lipschitz continuous functions $f:\chi \to \mathbb{R}^{F}$ with  Lipschitz constant  bounded by  $L_f$  and $\Phi: \mathbb{R}^{2F} \to \mathbb{R}^{H}$ with   Lipschitz constant  bounded by  $L_\Phi$, the following is satisfied  
\begin{equation}
\begin{aligned}
\label{eq:lemmab5-12}
& \max_{X_1, \ldots, X_N} \left\|  M_{\A}\big(\Phi (f,f)\big) (X_i)   - M_W\big(\Phi (f,f)\big) (X_i)  \right\|_\infty\\
 \leq & \frac{  \|\Phi(f,f)\|_\infty}{ \mathrm{d}_{\mathrm{min}} }  \left(\frac{\sqrt{\log(2N/p)}}{\sqrt{N}} + \frac{ \Big(\zeta\cl \big(\sqrt{\log (C_\chi)} +  \sqrt{D_\chi}\big) + \big(\sqrt{2}\cmax+\zeta\cl\big) \sqrt{\log2/p} \Big) }{ \sqrt{N}}\right) \\
& +  \frac{1}{N^{\frac{1}{2(D_\chi+1)}}\cmin} \Bigg(L_\Phi L_f +  C_\chi  \|\Phi(f,f)\| \frac{1}{\sqrt{2}} {\sqrt{\log(C_\chi)+\frac{D_\chi}{2(D_\chi+1)} \log(N)+\log(2N/p)}} \\ & + L_W\|\Phi(f,f)\|_\infty 
    +  C_\chi \left\|W \right\|_\infty \left\| \Phi(f,f) \right\|_\infty \frac{1}{\sqrt{2}}\sqrt{\log(C_\chi)+\frac{D_\chi}{2(D_\chi+1)} \log(N)+ \log(2/p)} \\ & +  \|\Phi(f,f)\|_\infty L_W + \|W\|_\infty L_\Phi L_f \Bigg)  + \textcolor{black}{\big( L_\Phi   \|f\|_\infty  + \|\Phi(0,0)\|_\infty\big) \varepsilon.} 
\end{aligned}
\end{equation}
Here, $\zeta$ is given in \eqref{eq:zeta}.

\iffalse
\textcolor{red}{For normalized sum aggregation, we have
\begin{equation}
\begin{aligned}
\label{eq:lemmab5-12-norm-sum}
& \max_{X_1, \ldots, X_N} \left\|  S_{\A}\big(\Phi (f,f)\big) (X_i)   - S_W\big(\Phi (f,f)\big) (X_i)  \right\|_\infty\\
& \leq  \frac{1}{N^{\frac{1}{2(D_\chi+1)}} } \Bigg(L_\Phi L_f +  C_\chi  \|\Phi(f,f)\| \frac{1}{\sqrt{2}} {\sqrt{\log(C_\chi)+\frac{D_\chi}{2(D_\chi+1)} \log(N)+\log(2N/p)}} \\ & + L_W\|\Phi(f,f)\|_\infty 
  +  C_\chi \left\|W \right\|_\infty \left\| \Phi(f,f) \right\|_\infty \frac{1}{\sqrt{2}}\sqrt{\log(C_\chi)+\frac{D_\chi}{2(D_\chi+1)} \log(N)+ \log(2/p)} \\ & +  \|\Phi(f,f)\|_\infty L_W + \|W\|_\infty L_\Phi L_f \Bigg)   +  \textcolor{magenta}{\big( L_\Phi   \|f\|_\infty  + \|\Phi(0,0)\|_\infty\big) \varepsilon.}
\end{aligned}
\end{equation}
}
\fi

  \end{lemma}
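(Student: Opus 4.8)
The strategy is to follow, with all constants now made explicit, the decomposition already used in the proof of \Cref{cor: main error first layer}. Fix sample points $X_1,\dots,X_N$ and, for each node $X_i$, insert and subtract $\tfrac{1}{N}\sum_{j=1}^N \tfrac{\A_{i,j}}{\mathrm{d}_W(X_i)}\Phi\big(f(X_i),f(X_j)\big)$ inside the norm; the triangle inequality then gives
\[
\big\| M_{\A}(\Phi(f,f))(X_i) - M_W(\Phi(f,f))(X_i)\big\|_\infty \leq \mathrm{(I)}_i + \mathrm{(II)}_i,
\]
where $\mathrm{(I)}_i = \big|\tfrac{1}{\mathrm{d}_{\A}(X_i)} - \tfrac{1}{\mathrm{d}_W(X_i)}\big|\,\mathrm{d}_{\A}(X_i)\,\|\Phi(f,f)\|_\infty$ and $\mathrm{(II)}_i = \tfrac{1}{\mathrm{d}_W(X_i)}\big\| \tfrac{1}{N}\sum_{j=1}^N \A_{i,j}\Phi(f(X_i),f(X_j)) - \int_\chi W(X_i,y)\Phi(f(X_i),f(y))\,d\mu(y)\big\|_\infty$. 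I would bound each term on a common high-probability event, uniformly over admissible $f,\Phi$, and then add the two estimates.

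For the degree-correction term I would rewrite $\big|\tfrac{1}{\mathrm{d}_{\A}} - \tfrac{1}{\mathrm{d}_W}\big|\,\mathrm{d}_{\A} = \tfrac{|\mathrm{d}_W - \mathrm{d}_{\A}|}{\mathrm{d}_W}$ and use \Cref{ass:graphon12} ($\mathrm{d}_W \geq \cmin$) to get $\mathrm{(I)}_i \leq \tfrac{\|\Phi(f,f)\|_\infty}{\cmin}\,|\mathrm{d}_W(X_i)-\mathrm{d}_{\A}(X_i)|$. Since $N$ satisfies \Cref{eq:lowerBoundGraphSizeN}, \Cref{degree close when perturbed} applies and yields an event $\mathcal{F}_1$ of probability at least $1-2p$ on which, uniformly in the sample points,
\[
\max_i|\mathrm{d}_W(X_i)-\mathrm{d}_{\A}(X_i)| \leq \varepsilon + \tfrac{1}{\sqrt{2}}\tfrac{\sqrt{\log(2N/p)}}{\sqrt{N}} + \tfrac{\zeta\cl(\sqrt{\log(C_\chi)}+\sqrt{D_\chi}) + (\sqrt{2}\,\cmax+\zeta\cl)\sqrt{\log(2/p)}}{\sqrt{N}},
\]
a bound which does not depend on $f$ or $\Phi$. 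Multiplying by $\|\Phi(f,f)\|_\infty/\cmin$ produces the first block of the claimed estimate, and the $\varepsilon$ contribution is absorbed into the final $\varepsilon$-term via $\|\Phi(f,f)\|_\infty \leq L_\Phi\|f\|_\infty + \|\Phi(0,0)\|_\infty$. For $\mathrm{(II)}_i$ I would bound $1/\mathrm{d}_W(X_i) \leq 1/\cmin$ and apply \Cref{lemma:uniform montecarlo} with $\W = W+V$: on an event $\mathcal{F}_2$ of probability at least $1-2p$, simultaneously for every $f$ with Lipschitz constant $\leq L_f$ and every $\Phi$ with Lipschitz constant $\leq L_\Phi$, the inner norm is at most $N^{-1/(2(D_\chi+1))}$ times the bracketed expression in that lemma plus $(L_\Phi\|f\|_\infty+\|\Phi(0,0)\|_\infty)\varepsilon$; dividing by $\cmin$ gives the middle block of the target bound. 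Setting $\mathcal{F}_{\rm Lip}^p := \mathcal{F}_1 \cap \mathcal{F}_2$ gives $\mu^N(\mathcal{F}_{\rm Lip}^p) \geq 1-4p$ by a union bound, and summing the two estimates over this event yields \Cref{eq:lemmab5-12}.

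The only genuine work here is bookkeeping rather than a new idea: one must check that the single event $\mathcal{F}_{\rm Lip}^p$ simultaneously supports the degree concentration, the degree lower bound $\mathrm{d}_{\A}\geq \cmin/2-\varepsilon$, and the uniform Monte Carlo estimate, and that neither invoked event nor either bound depends on the particular admissible $f,\Phi$ — which is precisely the uniform-in-the-Lipschitz-constant form in which \Cref{degree close when perturbed} and \Cref{lemma:uniform montecarlo} are stated, so no further ideas are needed. Collecting the constants (and noting that the hypothesis \Cref{eq:lowerBoundGraphSizeN} on $N$ enters only through \Cref{degree close when perturbed}) gives the stated inequality; the result is phrased as a lemma rather than a corollary purely because of the length of the resulting constant. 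For normalized sum aggregation the degree factors disappear, the term $\mathrm{(I)}_i$ vanishes, $M_\A$/$M_W$ are replaced by $S_\A$/$S_W$, and the hypothesis on $N$ is not needed, which is why the sum-aggregation version is strictly simpler.
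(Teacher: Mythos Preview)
Your proposal is correct and follows essentially the same argument as the paper: the same triangle-inequality split into a degree-correction term $\mathrm{(I)}$ and a Monte Carlo term $\mathrm{(II)}$, the same two auxiliary lemmas (\Cref{degree close when perturbed} and \Cref{lemma:uniform montecarlo}) applied on their respective events, and the same union bound to obtain $\mathcal{F}_{\rm Lip}^p$ with probability at least $1-4p$. The only difference is cosmetic --- the paper writes out the algebra at slightly greater length --- and your remark that the $\varepsilon$ coming from $\mathrm{(I)}$ is absorbed via $\|\Phi(f,f)\|_\infty \leq L_\Phi\|f\|_\infty + \|\Phi(0,0)\|_\infty$ matches what the paper does (the paper's displayed final bound in the proof in fact carries an extra $\|\Phi(f,f)\|_\infty/\cmin$ in front of $\varepsilon$, which is harmless).
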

\begin{proof}
We consider the joint event $\mathcal{F}^p_{\mathrm{Lip}}$ of probability at least $1-4p$ from \Cref{lemma:uniform montecarlo} and \Cref{degree close when perturbed}.
We calculate for  $X_i \in \{ X_1, \ldots, X_N\}$
\[
\begin{aligned}
& \left\|\frac{1}{N} \sum_{j=1}^N \frac{\A(X_i,X_j)}{\mathrm{d}_A(X_i)} \Phi\big(f(X_i),f(X_j)\big)
-  \int \frac{W(X_i,y)}{\mathrm{d}_W(X_i)}\Phi\big(f(X_i),f(y)\big) d\mu(y)\right\|_\infty \\
  &  \leq
\left\|
\frac{1}{N} \sum_{j=1}^N \frac{\A(X_i,X_j)}{\mathrm{d}_A(X_i)} \Phi\big(f(X_i),f(X_j)\big)
- 
\frac{1}{N} \sum_{j=1}^N \frac{\A(X_i,X_j)}{\mathrm{d}_W(X_i)}\Phi\big(f(X_i),f(X_j)\big)
\right\|_\infty  \\
& +
\left\|
\frac{1}{N} \sum_{j=1}^N \frac{\A(X_i,X_j)}{\mathrm{d}_W(X_i)}\Phi\big(f(X_i),f(X_j)\big) - \int \frac{W(X_i,y)}{\mathrm{d}_W(X_i)}\Phi\big(f(X_i),f(y)\big) d\mu(y) \right\|_\infty
\\
& \leq  \left| \frac{1}{\mathrm{d}_{\A}(X_i)} - \frac{1}{\mathrm{d}_W(X_i)}\right| \mathrm{d}_{\A}(X_i) \|\Phi(f,f)\|_\infty \\ & + \left|\frac{1}{\mathrm{d}_W(X_i)}\right| \left\| \frac{1}{N} \sum_{j=1}^N {\A(X_i,X_j)}  \Phi\big(f(X_i),f(X_j)\big)
- 
\int  W(X_i,y) \Phi\big(f(X_i),f(y)\big) d\mu(y) \right\|_\infty  \\
& = (1) + (2)
\end{aligned}
\] 
By \Cref{degree close when perturbed},  we have in the event of $\mathcal{F}_{\mathrm{Lip}}^p$  
\[
\begin{aligned}
 \left| \frac{1}{\mathrm{d}_{\A}(X_i)} - \frac{1}{\mathrm{d}_W(X_i)} \right|\mathrm{d}_{\A}(X_i) & \leq
\left| \frac{\mathrm{d}_W(X_i) - \mathrm{d}_{\A}(X_i)}{\mathrm{d}_{\A}(X_i)\mathrm{d}_W(X_i)} \right|\mathrm{d}_{\A}(X_i)
\\
 & \leq \frac{1}{ \mathrm{d}_{\mathrm{min}} }  \Bigg(\textcolor{black}{\varepsilon} +  \frac{\sqrt{\log(2N/p)}}{\sqrt{N}} 
 \\ & + \frac{ \Big(\zeta\cl \big(\sqrt{\log (C_\chi)} +  \sqrt{D_\chi}\big) + \big(\sqrt{2}\cmax+\zeta\cl\big) \sqrt{\log2/p} \Big) }{ \sqrt{N}}\Bigg).
\end{aligned}
\] 
Thus for every  $\Phi$ and $f$ that satisfy the conditions of \Cref{degree close when perturbed}, 
\[
\begin{aligned}
& \left| \frac{1}{\mathrm{d}_{\A}(X_i)} - \frac{1}{\mathrm{d}_W(X_i)}\right| \mathrm{d}_{\A}(X_i) \|\Phi(f,f)\|_\infty
 \leq  \frac{  \|\Phi(f,f)\|_\infty}{ \mathrm{d}_{\mathrm{min}} }  \Bigg(\textcolor{black}{\varepsilon}  \\ & + \frac{\sqrt{\log(2N/p)}}{\sqrt{N}} + \frac{ \Big(\zeta\cl \big(\sqrt{\log (C_\chi)} +  \sqrt{D_\chi}\big) + \big(\sqrt{2}\cmax+\zeta\cl\big) \sqrt{\log2/p} \Big) }{ \sqrt{N}}\Bigg).
\end{aligned} 
\]
Furthermore, (2) is bounded by \Cref{lemma:uniform montecarlo} and by  $\mathrm{d}_W \geq \cmin$, i.e., for every  $\Phi$ and $f$ that satisfy the conditions of \Cref{lemma:uniform montecarlo}
\[
\begin{aligned}
&  \left|\frac{1}{\mathrm{d}_W(X_i)}\right|\left\| \frac{1}{N} \sum_{j=1}^N {\A(X_i,X_j)}  \Phi\big(f(X_i),f(X_j)\big)
- 
\int  W(X_i,y) \Phi\big(f(X_i),f(y)\big) d\mu(y) \right\|_\infty \\ & \leq  \frac{1}{\cmin N^{\frac{1}{2(D_\chi+1)}} } \Bigg(L_\Phi L_f +  C_\chi  \|\Phi(f,f)\| \frac{1}{\sqrt{2}} {\sqrt{\log(C_\chi)+\frac{D_\chi}{2(D_\chi+1)} \log(N)+\log(2N/p)}} \\ & + L_W\|\Phi(f,f)\|_\infty 
  +  C_\chi \left\|W \right\|_\infty \left\| \Phi(f,f) \right\|_\infty \frac{1}{\sqrt{2}}\sqrt{\log(C_\chi)+\frac{D_\chi}{2(D_\chi+1)} \log(N)+ \log(2/p)} \\ & +  \|\Phi(f,f)\|_\infty L_W + \|W\|_\infty L_\Phi L_f \Bigg)   +  \textcolor{black}{\big( L_\Phi   \|f\|_\infty  + \|\Phi(0,0)\|_\infty\big) \varepsilon.}
\end{aligned}
\]
We hence get 
\begin{align*}
    & \left\|\frac{1}{N} \sum_{j=1}^N \frac{\A(X_i,X_j)}{\mathrm{d}_A(X_i)} \Phi\big(f(X_i),f(X_j)\big)
-  \int \frac{W(X_i,y)}{\mathrm{d}_W(X_i)}\Phi\big(f(X_i),f(y)\big) d\mu(y)\right\|_\infty \\
\leq &   \frac{  \|\Phi(f,f)\|_\infty}{ \mathrm{d}_{\mathrm{min}}}  \left(\frac{\sqrt{\log(2N/p)}}{\sqrt{N}} + \frac{ \Big(\zeta\cl \big(\sqrt{\log (C_\chi)} +  \sqrt{D_\chi}\big) + \big(\sqrt{2}\cmax+\zeta\cl\big) \sqrt{\log2/p} \Big) }{ \sqrt{N}}\right) \\
& + \frac{1}{\cmin N^{\frac{1}{2(D_\chi+1)}} }  \Bigg(L_\Phi L_f +  C_\chi  \|\Phi(f,f)\| \frac{1}{\sqrt{2}} {\sqrt{\log(C_\chi)+\frac{D_\chi}{2(D_\chi+1)} \log(N)+\log(2N/p)}} \\ &  + L_W\|\Phi(f,f)\|_\infty 
  +  C_\chi \left\|W \right\|_\infty \left\| \Phi(f,f) \right\|_\infty \frac{1}{\sqrt{2}}\sqrt{\log(C_\chi)+\frac{D_\chi}{2(D_\chi+1)} \log(N)+ \log(2/p)}\\ & +  \|\Phi(f,f)\|_\infty L_W + \|W\|_\infty L_\Phi L_f \Bigg)   +  \textcolor{black}{\left( \frac{\|\Phi(f,f)\|}{\cmin} +  L_\Phi   \|f\|_\infty  + \|\Phi(0,0)\|_\infty\right) \varepsilon.} 
\end{align*} 
\end{proof}

\begin{corollary}
\label{cor:monte carlo after update function}
Let $(\chi,d, \mu) $ be a metric-probability space and
$W$ be an admissible graphon.  Suppose that $\X = \{X_1, \ldots, X_N\} \sim \mu^N$ are drawn i.i.d. from $\chi$ via $\mu$, $(V,g)$ is drawn via $\nu$ from $B_\varepsilon^\infty(\chi^2)\times B_\varepsilon^\infty(\chi)$   and $\A(X_k,X_i) \sim \mathrm{Ber}\big( W(X_k,X_i) + V(X_k,X_i)\big)$.  Let $p \in (0, \frac{1}{4})$. If $N \in \mathbb{N}$ satisfies \Cref{eq:lowerBoundGraphSizeN},
there exists an event $\mathcal{F}_{\rm Lip}^p$
with probability $\mu(\mathcal{F}_{\rm Lip}^p) \geq 1-4p$  such that  for every  choice of constants $L_f,L_{\Phi},L_{\Psi}>0$ and Lipschitz continuous functions $f:\chi \to \mathbb{R}^{F}$ with  Lipschitz constant bounded by $L_f$, $\Phi: \mathbb{R}^{2F} \to \mathbb{R}^{H}$ with   Lipschitz constant bounded by $L_\Phi$, and $\Psi: \mathbb{R}^{F+H} \to \mathbb{R}^{F'}$ with   Lipschitz constant bounded by $L_\Psi$,
\begin{equation}
\begin{aligned}
 & \max_{X_1, \ldots, X_N} \left\| \Psi\Big( f(X_i), M_\A\big(\Phi (f,f)\big) (X_i)   - \Psi\Big( f(X_i), M_W\big(\Phi (f,f)\big) (X_i)   \right\|_\infty 
 \\   &\leq   L_\Psi \Bigg( \frac{  \|\Phi(f,f)\|_\infty}{ \mathrm{d}_{\mathrm{min}} }  \Bigg(\frac{\sqrt{\log(2N/p)}}{\sqrt{N}} \\ &
 + \frac{ \Big(\zeta\cl \big(\sqrt{\log (C_\chi)} +  \sqrt{D_\chi}\big) + \big(\sqrt{2}\cmax+\zeta\cl\big) \sqrt{\log2/p} \Big) }{ \sqrt{N}}\Bigg) \\
& +  \frac{1}{N^{\frac{1}{2(D_\chi+1)}}\cmin} \Bigg(L_\Phi L_f +  C_\chi  \|\Phi(f,f)\| \frac{1}{\sqrt{2}} {\sqrt{\log(C_\chi)+\frac{D_\chi}{2(D_\chi+1)} \log(N)+\log(2N/p)}} 
 \\
  & 
  + L_W\|\Phi(f,f)\|_\infty 
+  C_\chi \left\|W \right\|_\infty \left\| \Phi(f,f) \right\|_\infty \frac{1}{\sqrt{2}}\sqrt{\log(C_\chi)+\frac{D_\chi}{2(D_\chi+1)} \log(N)+ \log(2/p)}\\ 
  & +  \|\Phi(f,f)\|_\infty L_W   + \|W\|_\infty L_\Phi L_f \Bigg) \\ & +  \textcolor{black}{\left( \frac{\|\Phi(f,f)\|}{\cmin} +  L_\Phi   \|f\|_\infty  + \|\Phi(0,0)\|_\infty\right) \varepsilon.}  \Bigg).
\end{aligned}
\end{equation}
\end{corollary}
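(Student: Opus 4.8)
The plan is to deduce this corollary directly from \Cref{lemma:C2} by pushing the aggregation error bound of that lemma through the update function $\Psi$ via its Lipschitz continuity. Concretely, I would first invoke \Cref{lemma:C2} to obtain the event $\mathcal{F}_{\rm Lip}^p$ with $\mu^N(\mathcal{F}_{\rm Lip}^p)\ge 1-4p$ (which is legitimate precisely because $N$ is assumed to satisfy \Cref{eq:lowerBoundGraphSizeN}); on this event, the bound \Cref{eq:lemmab5-12} on $\max_{X_1,\dots,X_N}\|M_\A(\Phi(f,f))(X_i)-M_W(\Phi(f,f))(X_i)\|_\infty$ holds simultaneously for every choice of constants $L_f,L_\Phi>0$ and every admissible pair $(f,\Phi)$. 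All subsequent estimates are stated on this event, which is the same event appearing in the corollary.

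Next, I would fix a sample point $X_i$ and apply the Lipschitz property of $\Psi\colon\mathbb{R}^{F+H}\to\mathbb{R}^{F'}$ with respect to the $L^\infty$-metric on $\mathbb{R}^{F+H}$. The key observation is that the two arguments $\big(f(X_i),M_\A(\Phi(f,f))(X_i)\big)$ and $\big(f(X_i),M_W(\Phi(f,f))(X_i)\big)$ agree in their first $F$ coordinates, so the $L^\infty$-distance between them equals $\|M_\A(\Phi(f,f))(X_i)-M_W(\Phi(f,f))(X_i)\|_\infty$. Hence
\begin{align*}
&\big\|\Psi\big(f(X_i),M_\A(\Phi(f,f))(X_i)\big)-\Psi\big(f(X_i),M_W(\Phi(f,f))(X_i)\big)\big\|_\infty\\
&\qquad\le L_\Psi\,\big\|M_\A(\Phi(f,f))(X_i)-M_W(\Phi(f,f))(X_i)\big\|_\infty .
\end{align*}
Taking the maximum over $X_1,\dots,X_N$ on both sides and substituting the bound \Cref{eq:lemmab5-12} of \Cref{lemma:C2} for the right-hand side yields exactly the asserted inequality, with $L_\Psi$ as the overall prefactor; the constants inside the parentheses are verbatim those already assembled in \Cref{lemma:C2}, and the uniformity over $(f,\Phi,\Psi)$ together with the probability bound $1-4p$ are inherited unchanged from that lemma.

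I do not expect any real obstacle here. The single point requiring a moment of care is the remark that the $L^\infty$-metric on the product $\mathbb{R}^F\times\mathbb{R}^H$, restricted to pairs of points whose first block coincides, reduces to the $L^\infty$-norm of the difference in the second block — this is what allows the passage from the Lipschitz bound on $\Psi$ to the aggregation-error bound of \Cref{lemma:C2} without incurring any extra term from the unchanged $f(X_i)$ slot. The remainder is the purely mechanical transcription of the constants from \Cref{lemma:C2} and a final appeal to the law of total probability to integrate out the noise $(V,g)$, exactly as in the proof of that lemma.
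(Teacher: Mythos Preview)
Your proposal is correct and matches the paper's own proof, which consists of a single sentence invoking the Lipschitz continuity of $\Psi$ together with \Cref{lemma:C2}. The only superfluous step is your final appeal to the law of total probability over the noise $(V,g)$: this is already absorbed into the event $\mathcal{F}_{\rm Lip}^p$ furnished by \Cref{lemma:C2}, so nothing further is needed here.
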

\begin{proof}
The proof follows directly from the Lipschitz continuity of the update function $\Psi$ and \Cref{lemma:C2}.
\end{proof}

From this point, the proof of \Cref{thm:main convergence} closely follows the proof of Theorem 3.1 in \cite{maskey2022generalization}, which can be located in the appendix (Section B) of the same publication. Nevertheless, for the purpose of presenting a complete argument, we outline the intermediate results leading to the proof of \Cref{thm:main convergence} and reference the corresponding results and proofs from \cite{maskey2022generalization}.

We start by introducing the following two results from \cite{maskey2022generalization}.

\begin{lemma}[Lemma B.7 in \citep{maskey2022generalization}]
\label{lemma:RecRelNorm}
Let $(\chi,d, \P) $ be a metric-probability space,
$W$ be an admissible graphon and $\Theta = ((\Phi^{(l)})_{l=1}^T, (\Psi^{(l)})_{l=1}^T)$  be a MPNN. Consider a metric-space signal  $f: \chi \to \mathbb{R}^F$ with $\|f\|_\infty < \infty$. Then, for $l=0, \ldots, T-1$, the cMPNN output $f^{(l+1)}$ satisfies 
\[
\|f^{(l+1)}\|_\infty \leq B_1^{(l+1)} + \|f\|_\infty B_2^{(l+1)},
\]
where
\begin{equation}
    \label{eq:B'}
    B_1^{(l+1)} = \sum_{k=1}^{l+1}  \big(
L_{\Psi^{(k)}} \|\Phi^{(k)}(0,0)\|_\infty+ \|\Psi^{(k)}(0,0)\|_\infty \big) \prod_{l' = k+1}^{l+1}  L_{\Psi^{(l')}} \big( 1 + L_{\Phi^{(l')}} \big) 
\end{equation}
and
\begin{equation}
    \label{eq:B''}
    B_2^{(l+1)} = \prod_{k = 1}^{l+1} L_{\Psi^{(k)}} \left(1  +  L_{\Phi^{(k)}} \right).
\end{equation}

\iffalse
\textcolor{red}{
For the normalized sum, we have 
\begin{equation}
    \label{eq:B'_norm_sum}
    B_1^{(l+1)} = \sum_{k=1}^{l+1}  \big(
L_{\Psi^{(k)}} \|\Phi^{(k)}(0,0)\|_\infty+ \|\Psi^{(k)}(0,0)\|_\infty\|W\|_\infty \big) \prod_{l' = k+1}^{l+1}  L_{\Psi^{(l')}} \big( 1 + L_{\Phi^{(l')}}\|W\|_\infty \big) 
\end{equation}
and
\begin{equation}
    \label{eq:B''_norm_sum}
    B_2^{(l+1)} = \prod_{k = 1}^{l+1} L_{\Psi^{(k)}} \left(1  +  L_{\Phi^{(k)}} \|W\|_\infty\right).
\end{equation}
}
\fi
\end{lemma}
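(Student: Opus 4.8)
The plan is to prove this by induction on the layer index $l$, exploiting the layerwise structure of the cMPNN in \Cref{def:cMPNN}. The base case $l=0$ is immediate: $f^{(0)}=f$, and one checks directly that $B_1^{(0)}=0$ and $B_2^{(0)}=1$, so $\|f^{(0)}\|_\infty=\|f\|_\infty=B_1^{(0)}+\|f\|_\infty B_2^{(0)}$ (equivalently, one can simply take $l=0$ as the first nontrivial step $l+1=1$ and run the same computation with $\|f^{(0)}\|_\infty=\|f\|_\infty$).

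For the inductive step I would first record the elementary consequence of Lipschitz continuity in the $L^\infty$-metric: for a Lipschitz map $g$ one has $\|g(u)\|_\infty\le\|g(0)\|_\infty+L_g\|u\|_\infty$, and for a two-argument map $\|g(a,b)\|_\infty\le\|g(0,0)\|_\infty+L_g\max(\|a\|_\infty,\|b\|_\infty)$, because the $L^\infty$-norm on the product space $\mathbb{R}^{2F}$ is the maximum of the two block norms. Applied to the message function this gives $\|\mu^{(l+1)}\|_\infty\le\|\Phi^{(l+1)}(0,0)\|_\infty+L_{\Phi^{(l+1)}}\|f^{(l)}\|_\infty$. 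Next I would bound the aggregated message: since $M_W$ sends a kernel to the weighted average $\int_\chi \tfrac{W(\cdot,y)}{\mathrm{d}_W(\cdot)}(\,\cdot\,)\,d\mu(y)$ with nonnegative weights integrating to $1$ in $y$ (by the definition of $\mathrm{d}_W$ in \Cref{eq:Wdeg}), it is a contraction in $\|\cdot\|_\infty$, whence $\|g^{(l+1)}\|_\infty=\|M_W\mu^{(l+1)}\|_\infty\le\|\mu^{(l+1)}\|_\infty$. (For normalized sum aggregation $S_W$ one instead picks up a factor $\|W\|_\infty$, which is precisely where the $\|W\|_\infty$ in the commented-out variant of the statement comes from.)

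I would then feed these into the update function: $\|f^{(l+1)}\|_\infty\le\|\Psi^{(l+1)}(0,0)\|_\infty+L_{\Psi^{(l+1)}}\max\!\big(\|f^{(l)}\|_\infty,\|g^{(l+1)}\|_\infty\big)$. The one small but essential observation is that, writing $a:=\|f^{(l)}\|_\infty$ and using $\|g^{(l+1)}\|_\infty\le\|\Phi^{(l+1)}(0,0)\|_\infty+L_{\Phi^{(l+1)}}a$, a two-case check shows $\max\!\big(a,\ \|\Phi^{(l+1)}(0,0)\|_\infty+L_{\Phi^{(l+1)}}a\big)\le\|\Phi^{(l+1)}(0,0)\|_\infty+(1+L_{\Phi^{(l+1)}})a$; this is what produces the clean factor $L_{\Psi^{(l+1)}}(1+L_{\Phi^{(l+1)}})$. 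Substituting the induction hypothesis $\|f^{(l)}\|_\infty\le B_1^{(l)}+\|f\|_\infty B_2^{(l)}$ and separating the $\|f\|_\infty$-free and $\|f\|_\infty$-linear parts yields the one-step recursions $B_2^{(l+1)}=L_{\Psi^{(l+1)}}(1+L_{\Phi^{(l+1)}})B_2^{(l)}$ and $B_1^{(l+1)}=\big(L_{\Psi^{(l+1)}}\|\Phi^{(l+1)}(0,0)\|_\infty+\|\Psi^{(l+1)}(0,0)\|_\infty\big)+L_{\Psi^{(l+1)}}(1+L_{\Phi^{(l+1)}})B_1^{(l)}$. Unrolling these linear recurrences from $B_1^{(0)}=0$, $B_2^{(0)}=1$ gives exactly the closed forms \eqref{eq:B'} and \eqref{eq:B''}.

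There is no real obstacle here; the argument is routine bookkeeping. The only points needing care are (i) using the $L^\infty$-metric correctly on the product spaces so the per-layer Lipschitz constants enter as stated, with no dimensional factors; (ii) the averaging/contraction property of $M_W$ in $\|\cdot\|_\infty$ (and the $\|W\|_\infty$ factor for $S_W$); and (iii) the case-split inequality $\max(a,\ b+ca)\le b+(1+c)a$ that collapses the nested maximum into the telescoping product. Everything else is solving a one-term linear recurrence.
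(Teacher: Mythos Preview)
Your proposal is correct and follows the standard approach. The paper itself does not reprove this lemma (it is quoted verbatim from \cite{maskey2022generalization}), but your argument is exactly the one underlying the analogous discrete result the paper \emph{does} prove, \Cref{lemma:DeterministicMPNNBound}: bound the message via Lipschitz continuity of $\Phi^{(l+1)}$, use that $M_W$ is an $L^\infty$-contraction (weights are nonnegative and integrate to $1$), bound the update via Lipschitz continuity of $\Psi^{(l+1)}$, and then unroll the resulting linear recurrence with \Cref{lemma:RecRecGen}. Your case-split inequality $\max(a,b+ca)\le b+(1+c)a$ is precisely what yields the factor $(1+L_{\Phi^{(l')}})$ in the stated constants, as opposed to the $\max(1,L_{\Phi^{(l')}})$ that appears in the graph-side lemma.
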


\begin{lemma}[Lemma B.9 in \citep{maskey2022generalization}]
\label{lemma:lip mpnn output}
Let $(\chi,d, \P) $ be a metric-probability space,
$W$ be an admissible graphon and $\Theta = ((\Phi^{(l)})_{l=1}^T, (\Psi^{(l)})_{l=1}^T)$  be a MPNN. Consider a Lipschitz continuous 
metric-space signal  $f: \chi \to \mathbb{R}^F$ with $\|f\|_\infty < \infty$.  and Lipschitz constant $L_f$. Then, for $l=0, \ldots, T-1$,
\[
 \Lipfl  \leq Z_1^{(l)} + Z_2^{(l)}\|f\|_\infty + Z_3^{(l)} \Lipf ,
\]
where $Z_1^{(l)}$, $Z_2^{(l)}$ and $Z_3^{(l)}$ are independent of $f$ and defined as 
\begin{equation}
\label{eq:z1z2z3}
  \begin{aligned}
&  Z_1^{(l)} = \sum_{k=1}^{l}  \Bigg(\Big(
L_{\Psi^{(k)}}\frac{\cl}{\cmin}  \|\Phi^{(k)}(0,0)\|_\infty  +
L_{\Psi^{(k)}}\cmax
\|\Phi^{(k)}(0,0)\|_\infty 
 \frac{\cl}{\cmin^2}\Big)  \\
 & +  B_1^{(k-1)} \Big(
L_{\Psi^{(k)}}\frac{\cl}{\cmin}  L_{\Phi^{(k)}}   +
L_{\Psi^{(k)}}\cmax
 L_{\Phi^{(k)}} 
 \frac{\cl}{\cmin^2}\Big) \Bigg)  \prod_{l' = k+1}^{l} L_{\Psi^{(l')}}  \Big(1+\frac{\cmax}{\cmin} L_{\Phi^{(l')}}  \Big), \\
 & Z_2^{(l)} = \sum_{k=1}^{l}   B_2^{(k-1)}   \Big(
L_{\Psi^{(k)}}\frac{\cl}{\cmin}  L_{\Phi^{(k)}}   +
L_{\Psi^{(k)}}\cmax
 L_{\Phi^{(k)}} 
 \frac{\cl}{\cmin^2}\Big)  \prod_{l' = k+1}^{l} L_{\Psi^{(l')}}  \Big(1+\frac{\cmax}{\cmin} L_{\Phi^{(l')}}  \Big), \\
 & Z_3^{(l)} =     \prod_{k=1}^{l}
L_{\Psi^{(k)}} \Big(1+\frac{\cmax}{\cmin} L_{\Phi^{(k)}}\Big),
\end{aligned}
\end{equation}
where $B_1^{(k)}$ and $B_2^{(k)}$ are defined in \Cref{eq:B'} and \Cref{eq:B''}.
\iffalse
\textcolor{red}{
For the normalized sum, we have
\begin{equation}
\label{eq:z1z2z3_normsum}
  \begin{aligned}
  Z_1^{(l)}  &= \sum_{k=1}^{l}  \Bigg( 
L_{\Psi^{(k)}} \cl  \|\Phi^{(k)}(0,0)\|_\infty  +  B_1^{(k-1)} \Big(
L_{\Psi^{(k)}} \cl  L_{\Phi^{(k)}}  \Big) \Bigg) \\ & \cdot \prod_{l' = k+1}^{l} L_{\Psi^{(l')}}  \Big(1+\cmax L_{\Phi^{(l')}}  \Big), \\
  Z_2^{(l)} & = \sum_{k=1}^{l}   B_2^{(k-1)}  
L_{\Psi^{(k)}} \cl  L_{\Phi^{(k)}}    \prod_{l' = k+1}^{l} L_{\Psi^{(l')}}  \Big(1+\cmax L_{\Phi^{(l')}}  \Big), \\
  Z_3^{(l)}& =     \prod_{k=1}^{l}
L_{\Psi^{(k)}} \Big(1+ \cmax L_{\Phi^{(k)}}\Big),
\end{aligned}
\end{equation}
}
\fi
\end{lemma}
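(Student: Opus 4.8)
The plan is to prove the bound by induction on the layer index $l$, deriving at each step an affine recurrence for $\Lipfl$ in terms of $L_{f^{(l-1)}}$ and then unrolling it into the closed forms for $Z_1^{(l)}, Z_2^{(l)}, Z_3^{(l)}$. The base case $l=0$ is immediate: since $f^{(0)}=f$, and the empty sums and empty product in \Cref{eq:z1z2z3} give $Z_1^{(0)}=Z_2^{(0)}=0$ and $Z_3^{(0)}=1$, the claimed inequality reduces to $\Lipf \le \Lipf$. For the inductive step I would exploit the layerwise structure of the cMPNN from \Cref{def:cMPNN}, namely $f^{(l)}(x)=\Psi^{(l)}\big(f^{(l-1)}(x), g^{(l)}(x)\big)$ with $g^{(l)}=M_W\,\Phi^{(l)}(f^{(l-1)},f^{(l-1)})$.

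First I would reduce the whole problem to controlling the Lipschitz constant of the aggregated message. Using the Lipschitz continuity of $\Psi^{(l)}$ with respect to the $L^\infty$-metric on $\mathbb{R}^{F_{l-1}+H_{l-1}}$ and the elementary inequality $\max(a,b)\le a+b$, one gets $\Lipfl \le L_{\Psi^{(l)}}\big(L_{f^{(l-1)}} + L_{g^{(l)}}\big)$. Thus everything hinges on bounding $L_{g^{(l)}}$, which I would do by writing the difference $g^{(l)}(x)-g^{(l)}(x')$ and splitting the integrand into a kernel-variation part $A$, carrying the difference of the normalized kernels $\tfrac{W(x,y)}{\mathrm d_W(x)}-\tfrac{W(x',y)}{\mathrm d_W(x')}$ against a fixed message, and a message-variation part $B$, carrying the kernel $\tfrac{W(x',y)}{\mathrm d_W(x')}$ against the message difference.

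For part $B$, bounding $\tfrac{W}{\mathrm d_W}\le \tfrac{\cmax}{\cmin}$ pointwise and using the Lipschitz continuity of $\Phi^{(l)}$ in its first argument yields $\|B\|_\infty \le \tfrac{\cmax}{\cmin}\,L_{\Phi^{(l)}}\,L_{f^{(l-1)}}\,d(x,x')$, which is the source of the homogeneous factor $L_{\Psi^{(l)}}\big(1+\tfrac{\cmax}{\cmin}L_{\Phi^{(l)}}\big)$ that becomes $Z_3^{(l)}$ after unrolling. For part $A$, the key estimate is the Lipschitz continuity of the normalized kernel $x\mapsto \tfrac{W(x,y)}{\mathrm d_W(x)}$: writing the difference over a common denominator and adding and subtracting one cross term, I would invoke $|W(x,y)-W(x',y)|\le \cl\,d(x,x')$ from \Cref{ass:graphon11}, the induced bound $|\mathrm d_W(x)-\mathrm d_W(x')|\le \cl\,d(x,x')$ for the graphon degree, together with $\mathrm d_W\ge \cmin$ and $W\le \cmax$, to obtain $\big|\tfrac{W(x,y)}{\mathrm d_W(x)}-\tfrac{W(x',y)}{\mathrm d_W(x')}\big| \le \big(\tfrac{\cl}{\cmin}+\tfrac{\cmax\cl}{\cmin^2}\big)d(x,x')$. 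Multiplying by $\|\Phi^{(l)}(f^{(l-1)},f^{(l-1)})\|_\infty \le L_{\Phi^{(l)}}\|f^{(l-1)}\|_\infty+\|\Phi^{(l)}(0,0)\|_\infty$ and then inserting the sup-norm bound $\|f^{(l-1)}\|_\infty \le B_1^{(l-1)}+B_2^{(l-1)}\|f\|_\infty$ from \Cref{lemma:RecRelNorm} produces exactly the constant and $\|f\|_\infty$-proportional source terms. Assembling these gives an affine recurrence $\Lipfl \le L_{\Psi^{(l)}}\big(1+\tfrac{\cmax}{\cmin}L_{\Phi^{(l)}}\big)L_{f^{(l-1)}} + \beta^{(l)} + \gamma^{(l)}\|f\|_\infty$, and unrolling it — propagating the homogeneous factor $L_{\Psi^{(l')}}\big(1+\tfrac{\cmax}{\cmin}L_{\Phi^{(l')}}\big)$ across $l'=k+1,\dots,l$ — reproduces the products and sums defining $Z_1^{(l)}, Z_2^{(l)}, Z_3^{(l)}$.

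The main obstacle I anticipate is precisely the estimate on part $A$: establishing Lipschitz continuity of the normalized aggregation kernel $\tfrac{W(x,y)}{\mathrm d_W(x)}$. This requires first showing that the degree $\mathrm d_W(x)=\int_\chi W(x,y)\,d\mu(y)$ is itself $\cl$-Lipschitz (an immediate consequence of \Cref{ass:graphon11}) and then handling the quotient via the common-denominator decomposition; this is exactly where the quadratic dependence $\tfrac{\cmax\cl}{\cmin^2}$ on the inverse minimal degree enters, producing the second summand inside each coefficient of $Z_1^{(l)}$ and $Z_2^{(l)}$. Once this single estimate is in place, the remaining work is a routine unrolling of a linear recurrence.
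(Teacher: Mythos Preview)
Your proposal is correct and follows essentially the same approach as the paper's reference: the paper does not reprove this lemma but cites \citep{maskey2022generalization}, whose argument (visible in the paper's commented-out sketch for the normalized-sum case) likewise derives the one-step affine recurrence $\Lipfl \le L_{\Psi^{(l)}}\big(1+\tfrac{\cmax}{\cmin}L_{\Phi^{(l)}}\big)L_{f^{(l-1)}} + \text{source terms}$ by splitting the aggregated message into a kernel-variation and a message-variation part, bounds the source terms via \Cref{lemma:RecRelNorm}, and unrolls with \Cref{lemma:RecRecGen}. Your identification of the normalized-kernel Lipschitz bound $\tfrac{\cl}{\cmin}+\tfrac{\cmax\cl}{\cmin^2}$ as the crux is exactly right and reproduces the stated constants verbatim.
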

\iffalse
\begin{proof}
    Basically changing from the original proof, we have
    \begin{equation*}
        \begin{aligned}
           & \|f^{(k+1)}(x) - f^{(k+1)}(x')\|_\infty \\
            & \leq L_{\Psi^{(k+1)}}\Big(
L_{f^{(k)}} + L_W(\|\Phi^{(k+1)}(0,0)\|_\infty + L_{\Phi^{(k+1)}}\|f^{(k)}\|_\infty)
+ \|W\|
_\infty      L_{\Phi^{(k+1)}}   L_{f^{(k)}}     \Big).
        \end{aligned}   
    \end{equation*}
    And we reorder to get
    \begin{equation*}
        \begin{aligned}
            L_{f^{(k+1)}} & \leq   L_{\Psi^{(k+1)}}\big(1 + \|W\|
_\infty      L_{\Phi^{(k+1)}} )L_{f^{(k)}}
\\ & + L_{\Psi^{(k+1)}}\Big(
 L_W\big(\|\Phi^{(k+1)}(0,0)\|_\infty + L_{\Phi^{(k+1)}}\|f^{(k)}\|_\infty   \big)
\Big).     
        \end{aligned}
    \end{equation*}
    And then it follows the rest of the proof.
\end{proof}
\fi

\begin{corollary}
\label{cor:C5}
Let $(\chi,d, \P) $ be a metric-probability space and
$W$ be an admissible graphon. Let $p \in (0, \frac{1}{4})$.  
 Consider a graph-signal $\{G,\mathbf{f}\} \sim_{\nu} \{W,f\}$ with $N$ nodes and corresponding graph features,  where $N$ satisfies  \Cref{eq:lowerBoundGraphSizeN}. If the event $\mathcal{F}_{\rm Lip}^p$ from \Cref{lemma:C2} occurs, then the following is satisfied.  
 For every MPNN $\Theta$ and $f:\chi \to \mathbb{R}^F$ with Lipschitz constant $L_f$, 
  we have 
\begin{equation}
\label{eq:lemmac4-1}
  \d\left( \Lambda_{\Theta_\A}^{(l+1)}(S^X f^{(l)}),  \Lambda_{\Theta_W}^{(l+1)}(f^{(l)})\right)
    \leq Q^{(l+1)}  
\end{equation}
for all $l = 0, \ldots, T-1$, where $f^{(l)}=\Theta_W^{(l)}(f)$ as defined in \Cref{cMPNNdef}, and $\Lambda_{\Theta_\A}^{(l+1)}$ and $\Lambda_{\Theta_W}^{(l+1)}$ are defined in \Cref{def:LayerMapping}.  Here,
 \begin{equation}
 \label{eq:defDl2}
 \begin{aligned}
& Q^{(l+1)}  =  L_{\Psi^{(l+1)}} \Bigg( \frac{ \|\Phi^{(l+1)}(f^{(l)},f^{(l)})\|_\infty}{ \mathrm{d}_{\mathrm{min}} }  \Bigg(\frac{\sqrt{\log(2N/p)}}{\sqrt{N}} \\& + \frac{ \Big(\zeta\cl \big(\sqrt{\log (C_\chi)} + \sqrt{D_\chi}\big) + \big(\sqrt{2}\cmax+\zeta\cl\big) \sqrt{\log2/p} \Big) }{ \sqrt{N}}\Bigg) \\& +  \frac{1}{N^{\frac{1}{2(D_\chi+1)}}\cmin} \Bigg(L_{\Phi^{(l+1)}} L_{f^{(l)}} +  C_\chi  \|\Phi^{(l+1)}(f^{(l)},f^{(l)})\| \frac{1}{\sqrt{2}} \\ & \cdot {\sqrt{\log(C_\chi)+\frac{D_\chi}{2(D_\chi+1)} \log(N)+\log(2N/p)}}  + L_W\| \Phi^{(l+1)}(f^{(l)},f^{(l)})\|_\infty 
\\ & +  C_\chi \left\|W\right\|_\infty  \left\|  \Phi^{(l+1)}(f^{(l)},f^{(l)}) \right\|_\infty \frac{1}{\sqrt{2}}\sqrt{\log(C_\chi)+\frac{D_\chi}{2(D_\chi+1)} \log(N)+ \log(2/p)}  \\ & +  \|\Phi^{(l+1)}(f^{(l)},f^{(l)})\|_\infty L_W + \|W\|_\infty L_{\Phi^{(l+1)}} L_{f^{(l)}} \Bigg)  \\ &  +  \textcolor{black}{\left( \frac{\|\Phi^{(l+1)}(f^{(l)},f^{(l)})\|}{\cmin} +  L_\Phi   \|f^{(l)}\|_\infty  + \|\Phi(0,0)\|_\infty\right) \varepsilon.}  \Bigg).
 \end{aligned}
 \end{equation}
 \iffalse
 \textcolor{red}{For the normalized sum we have
  \begin{equation}
 \label{eq:defDl2_normsum}
 \begin{aligned}
& Q^{(l+1)}  =  L_{\Psi^{(l+1)}} \Bigg(\frac{1}{N^{\frac{1}{2(D_\chi+1)}}} \Bigg(L_{\Phi^{(l+1)}} L_{f^{(l)}} +  C_\chi  \|\Phi^{(l+1)}(f^{(l)},f^{(l)})\| \frac{1}{\sqrt{2}} \\ & \cdot {\sqrt{\log(C_\chi)+\frac{D_\chi}{2(D_\chi+1)} \log(N)+\log(2N/p)}}  + L_W\| \Phi^{(l+1)}(f^{(l)},f^{(l)})\|_\infty 
\\ & +  C_\chi \left\|W\right\|_\infty  \left\|  \Phi^{(l+1)}(f^{(l)},f^{(l)}) \right\|_\infty \frac{1}{\sqrt{2}}\sqrt{\log(C_\chi)+\frac{D_\chi}{2(D_\chi+1)} \log(N)+ \log(2/p)}  \\ & +  \|\Phi^{(l+1)}(f^{(l)},f^{(l)})\|_\infty L_W + \|W\|_\infty L_{\Phi^{(l+1)}} L_{f^{(l)}} \Bigg)  \\ &  +  \left( {\|\Phi^{(l+1)}(f^{(l)},f^{(l)})\|}  +  L_\Phi   \|f^{(l)}\|_\infty  + \|\Phi(0,0)\|_\infty\right) \varepsilon.  \Bigg).
 \end{aligned}
 \end{equation}}
 \fi
\end{corollary}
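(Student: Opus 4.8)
The plan is to obtain \Cref{eq:lemmac4-1} as a direct instance of \Cref{cor:monte carlo after update function}, applied one layer at a time. Fixing an arbitrary MPNN $\Theta$ and an index $l \in \{0, \ldots, T-1\}$, the first step is to unfold the layer maps. Using \Cref{def:LayerMapping}, the definition of a message-passing layer, and the identity $(S^X f^{(l)})_i = f^{(l)}(X_i)$, one finds that for every node $X_i$,
\[
\big(\Lambda_{\Theta_\A}^{(l+1)}(S^X f^{(l)})\big)_i = \Psi^{(l+1)}\Big( f^{(l)}(X_i),\, M_\A\big(\Phi^{(l+1)}(S^X f^{(l)}, S^X f^{(l)})\big)(X_i)\Big),
\]
while, since $\big(S^X\Lambda_{\Theta_W}^{(l+1)}(f^{(l)})\big)_i = \big(\Lambda_{\Theta_W}^{(l+1)}(f^{(l)})\big)(X_i)$,
\[
\big(S^X\Lambda_{\Theta_W}^{(l+1)}(f^{(l)})\big)_i = \Psi^{(l+1)}\Big( f^{(l)}(X_i),\, M_W\big(\Phi^{(l+1)}(f^{(l)}, f^{(l)})\big)(X_i)\Big).
\]
By the definition of $\d$ in \Cref{eq:distGraphMetric}, the left-hand side of \Cref{eq:lemmac4-1} is then exactly the maximum over $i=1,\ldots,N$ of the $\|\cdot\|_\infty$-distance between these two vectors, which is precisely the quantity bounded in \Cref{cor:monte carlo after update function} under the substitutions $f \mapsto f^{(l)}$, $\Phi \mapsto \Phi^{(l+1)}$, $\Psi \mapsto \Psi^{(l+1)}$.

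The second step is to verify that $f^{(l)} = \Theta_W^{(l)}(f)$ is an admissible input to \Cref{cor:monte carlo after update function}, namely bounded and Lipschitz with finite constants. This is where the auxiliary estimates enter: since $f$ is Lipschitz with constant $L_f$ over a space of diameter at most $1$, it is bounded, so \Cref{lemma:RecRelNorm} yields $\|f^{(l)}\|_\infty \leq B_1^{(l)} + \|f\|_\infty B_2^{(l)} < \infty$, and \Cref{lemma:lip mpnn output} yields that $f^{(l)}$ is Lipschitz with $L_{f^{(l)}} \leq Z_1^{(l)} + Z_2^{(l)}\|f\|_\infty + Z_3^{(l)} L_f < \infty$; the functions $\Phi^{(l+1)}$ and $\Psi^{(l+1)}$ are Lipschitz with finite constants by the definition of a MPNN.

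The third and final step is to invoke uniformity. The event $\mathcal{F}_{\rm Lip}^p$ supplied by \Cref{lemma:C2} (and carried through \Cref{cor:monte carlo after update function}) is built so that its conclusion holds simultaneously for \emph{every} choice of the constants $L_f, L_\Phi, L_\Psi>0$ and \emph{every} triple of Lipschitz functions with those constants; in particular, the event depends only on the sample points $\mathbf{X}$, the Bernoulli edges and the noise, not on $\Theta$. Thus, inside the fixed event one applies \Cref{cor:monte carlo after update function} with the finite constants $L_{f^{(l)}}$, $L_{\Phi^{(l+1)}}$, $L_{\Psi^{(l+1)}}$ and the functions $f^{(l)}$, $\Phi^{(l+1)}$, $\Psi^{(l+1)}$, reads off the right-hand side, and collects terms to arrive at $Q^{(l+1)}$ as displayed in \Cref{eq:defDl2}; since $\Theta$ and $l$ were arbitrary, the corollary follows. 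The only delicate point — and the main obstacle — is this bookkeeping: one must check that $f^{(l)}$ is $\Theta$-dependent but sample-independent, so that the sample-dependent, $\Theta$-independent event $\mathcal{F}_{\rm Lip}^p$ applies to it verbatim, and that no additional $\Theta$-uniformity beyond what \Cref{cor:monte carlo after update function} already provides is required.
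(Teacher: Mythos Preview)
Your proposal is correct and follows essentially the same approach as the paper: unfold the layer maps so that the left-hand side of \Cref{eq:lemmac4-1} matches the quantity bounded in \Cref{cor:monte carlo after update function}, verify via \Cref{lemma:RecRelNorm} and \Cref{lemma:lip mpnn output} that $f^{(l)}$ is bounded and Lipschitz so that the corollary applies, and then read off $Q^{(l+1)}$. Your explicit remark that $f^{(l)}$ is $\Theta$-dependent but sample-independent, so that the single $\Theta$-free event $\mathcal{F}_{\rm Lip}^p$ covers all choices simultaneously, is more careful than the paper's terse proof but captures exactly the intended argument.
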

\begin{proof}
Let $l=0,\ldots, T-1$. 
Note that $f^{(l)}$ is bounded by \Cref{lemma:RecRelNorm} and Lipschitz continuous by \Cref{eq:z1z2z3}. Then, we can apply  \Cref{cor:monte carlo after update function} to $f^{(l)}$.
Hence, for every admissible $\Phi^{(l+1)}, \Psi^{(l+1)}$ and $f$, 
\[
\begin{aligned}
& \d\big( \Lambda_{\Theta_\A}^{(l+1)}(S^X f^{(l)}),  \Lambda_{\Theta_W}^{(l+1)}(f^{(l)})\big) \\
& =  \| \Lambda_{\Theta_\A}^{(l+1)}(S^X f^{(l)}) - S^X \Lambda_{\Theta_W}^{(l+1)}(f^{(l)})  \|_{\infty;\infty} \\ & = 
\max_{i=1, \ldots, N} \| \Lambda_{\Theta_\A}^{(l+1)}(S^X f^{(l)}) (X_i) - S^X \Lambda_{\Theta_W}^{(l+1)}(f^{(l)}) (X_i)\|_\infty \\
& \leq L_{\Psi^{(l+1)}} \Bigg( \frac{  \|\Phi^{(l+1)}(f^{(l)},f^{(l)})\|_\infty}{ \mathrm{d}_{\mathrm{min}} }  \Bigg(\frac{\sqrt{\log(2N/p)}}{\sqrt{N}} 
\\&  + \frac{ \Big(\zeta\cl \big(\sqrt{\log (C_\chi)} +  \sqrt{D_\chi}\big) + \big(\sqrt{2}\cmax+\zeta\cl\big) \sqrt{\log2/p} \Big) }{ \sqrt{N}}\Bigg)+   \Bigg(L_{\Phi^{(l)}} L_{f^{(l)}}  \\
& + \|\Phi(f^{(l)},f^{(l)})\| \frac{ C_\chi }{\sqrt{2}} {\sqrt{\log(C_\chi)+\frac{D_\chi}{2(D_\chi+1)} \log(N)+\log(2N/p)}}  + L_W\|\Phi^{(l+1)}(f^{(l)},f^{(l)})\|_\infty 
\\
&
 +  C_\chi \left\|W \right\|_\infty \left\| \Phi^{(l+1)}(f^{(l)},f^{(l)}) \right\|_\infty \frac{1}{\sqrt{2}}\sqrt{\log(C_\chi)+\frac{D_\chi}{2(D_\chi+1)} \log(N)+ \log(2/p)}  
 \end{aligned}
 \]
 \[
\begin{aligned}
  & +  \|\Phi^{(l+1)}(f^{(l)},f^{(l)})\|_\infty L_W + \|W\|_\infty L_{\Phi^{(l)}} L_{f^{(l)}} \Bigg)\frac{1}{N^{\frac{1}{2(D_\chi+1)}}\cmin}
  \\
  &+  \textcolor{black}{\left( \frac{\|\Phi^{(l+1)}(f^{(l)},f^{(l)})\|}{\cmin} +  L_\Phi   \|f^{(l)}\|_\infty  + \|\Phi(0,0)\|_\infty\right) \varepsilon.} \Bigg),
\end{aligned}
 \]
\end{proof}

\begin{lemma}
\label{lemma:C5}
Let $(\chi,d, \P) $ be a metric-probability space and
$W$ be an admissible graphon.
Let $p \in (0, \frac{1}{4})$. Consider a graph-signal $\{G,\mathbf{f}\} \sim_{\nu} \{W,f\}$ with $N$ nodes and corresponding graph features, where $N$ satisfies \Cref{eq:lowerBoundGraphSizeN}.
Denote, for $l=1,\ldots,T$, 
\[
 \delta^{(l)} = \d( \Theta^{(l)}_{\A}(\mathbf{f}), \Theta^{(l)}_{W}(f)  ),
\]
and $\delta^{(0)} = \d(\mathbf{f}, f)$.
If the event $\mathcal{F}_{\rm Lip}^p$ from \Cref{lemma:C2} occurs, then,  for every MPNN $\Theta$ and $f:\chi \to \mathbb{R}^F$ with Lipschitz constant $L_f$, the following recurrence relation holds: 
\[
\begin{aligned}
\delta^{(l+1)} \leq K^{(l+1)} \delta^{(l)} + Q^{(l+1)}
\end{aligned}
\]
for $l=0, \ldots, T-1$. Here, $Q^{(l+1)}$ is defined in \Cref{eq:defDl2}, and
\begin{equation}
    \label{eq:lemmaC6}
K^{(l+1)}  =     L_{\Psi^{(l+1)}}  \max\left\{1, L_{\Phi^{(l+1)}} \right\}   .
 \end{equation}
\end{lemma}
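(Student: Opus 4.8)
The plan is to control $\delta^{(l+1)} = \d\big(\Theta^{(l+1)}_{\A}(\mathbf{f}),\Theta^{(l+1)}_{W}(f)\big)$ by inserting the intermediate graph-signal obtained from applying the $(l+1)$-th \emph{graph} layer to the \emph{sampled} continuous feature $S^X f^{(l)}$, where $f^{(l)}=\Theta^{(l)}_W(f)$. Writing $\mathbf{f}^{(l)}=\Theta^{(l)}_{\A}(\mathbf{f})$, so that $\Lambda^{(l+1)}_{\Theta_\A}(\mathbf{f}^{(l)})=\mathbf{f}^{(l+1)}$ and $S^X\Lambda^{(l+1)}_{\Theta_W}(f^{(l)})=S^X f^{(l+1)}$, the triangle inequality in $\|\cdot\|_{\infty;\infty}$ gives
\[
\delta^{(l+1)} \le \underbrace{\big\|\Lambda^{(l+1)}_{\Theta_\A}(\mathbf{f}^{(l)}) - \Lambda^{(l+1)}_{\Theta_\A}(S^X f^{(l)})\big\|_{\infty;\infty}}_{=:\,\mathrm{(I)}} + \underbrace{\big\|\Lambda^{(l+1)}_{\Theta_\A}(S^X f^{(l)}) - S^X\Lambda^{(l+1)}_{\Theta_W}(f^{(l)})\big\|_{\infty;\infty}}_{=:\,\mathrm{(II)}}.
\]
Term $\mathrm{(II)}$ is exactly $\d\big(\Lambda^{(l+1)}_{\Theta_\A}(S^X f^{(l)}),\Lambda^{(l+1)}_{\Theta_W}(f^{(l)})\big)$, and since $f^{(l)}$ is bounded by \Cref{lemma:RecRelNorm} and Lipschitz continuous by \Cref{lemma:lip mpnn output}, \Cref{cor:C5} applies in the event $\mathcal{F}^p_{\rm Lip}$ and bounds it by $Q^{(l+1)}$. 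So everything reduces to showing $\mathrm{(I)} \le K^{(l+1)}\delta^{(l)}$.

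For $\mathrm{(I)}$ I would invoke the deterministic stability of one graph layer under a perturbation of its input signal \emph{with the graph fixed}. Put $\mathbf{g}^{(l)}:=S^X f^{(l)}$, so $\|\mathbf{f}^{(l)}-\mathbf{g}^{(l)}\|_{\infty;\infty}=\delta^{(l)}$ by definition of $\d$. Let $\mathbf{m}_i = M_\A\big(\Phi^{(l+1)}(\mathbf{f}^{(l)},\mathbf{f}^{(l)})\big)(X_i)$ and let $\mathbf{m}'_i$ be the same quantity with $\mathbf{g}^{(l)}$ in place of $\mathbf{f}^{(l)}$; these are well defined in $\mathcal{F}^p_{\rm Lip}$, where the graph degrees are bounded below (\Cref{degree close when perturbed}). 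Since $\Psi^{(l+1)}$ is $L_{\Psi^{(l+1)}}$-Lipschitz on $\mathbb{R}^{F_l+H_l}$ in the $\ell^\infty$-metric,
\[
\big\|\Lambda^{(l+1)}_{\Theta_\A}(\mathbf{f}^{(l)})(X_i) - \Lambda^{(l+1)}_{\Theta_\A}(\mathbf{g}^{(l)})(X_i)\big\|_\infty \le L_{\Psi^{(l+1)}}\max\!\Big(\big\|\mathbf{f}^{(l)}_i-\mathbf{g}^{(l)}_i\big\|_\infty,\ \big\|\mathbf{m}_i-\mathbf{m}'_i\big\|_\infty\Big),
\]
and the first entry of the max is $\le\delta^{(l)}$. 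For the second entry, the key observation is that $M_\A$ acts through the \emph{same} weights $w_{ij}=\tfrac1N\A(X_i,X_j)/\mathrm{d}_\A(X_i)$ on both message kernels, and these satisfy $w_{ij}\ge 0$ and $\sum_j w_{ij}=1$ (the degree normalization makes it a probability vector along the neighbours of $i$); hence the normalization cancels in the difference, $M_\A$ is non-expansive in the sup-norm, and
\[
\big\|\mathbf{m}_i-\mathbf{m}'_i\big\|_\infty \le \max_j\big\|\Phi^{(l+1)}(\mathbf{f}^{(l)}_i,\mathbf{f}^{(l)}_j)-\Phi^{(l+1)}(\mathbf{g}^{(l)}_i,\mathbf{g}^{(l)}_j)\big\|_\infty \le L_{\Phi^{(l+1)}}\|\mathbf{f}^{(l)}-\mathbf{g}^{(l)}\|_{\infty;\infty}=L_{\Phi^{(l+1)}}\delta^{(l)},
\]
using that $\Phi^{(l+1)}$ is $L_{\Phi^{(l+1)}}$-Lipschitz in the $\ell^\infty$-metric. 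Taking the maximum over $i$ and combining the two entries of the outer max gives $\mathrm{(I)}\le L_{\Psi^{(l+1)}}\max\{1,L_{\Phi^{(l+1)}}\}\delta^{(l)}=K^{(l+1)}\delta^{(l)}$.

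Adding the estimates for $\mathrm{(I)}$ and $\mathrm{(II)}$ yields $\delta^{(l+1)}\le K^{(l+1)}\delta^{(l)}+Q^{(l+1)}$ for $l=0,\dots,T-1$, which is the claim. I do not expect a genuine obstacle: conditioned on $\mathcal{F}^p_{\rm Lip}$ the whole argument is deterministic, and the only points demanding mild care are (a) verifying that $\mathbf{f}^{(l)}$ and $f^{(l)}$ retain the boundedness and Lipschitz properties needed to invoke \Cref{cor:C5} and \Cref{cor:monte carlo after update function}, which is precisely the content of \Cref{lemma:RecRelNorm} and \Cref{lemma:lip mpnn output}, and (b) recognizing that mean aggregation is a per-node convex combination over neighbours, hence a contraction in $\|\cdot\|_\infty$. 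For normalized sum aggregation $S_\A$ the same scheme works with $w_{ij}$ replaced by $\tfrac1N\A(X_i,X_j)$: one loses the degree normalization but picks up a factor $\|W\|_\infty$ (up to the sampling error already absorbed in $\mathcal{F}^p_{\rm Lip}$), and obtains the variant of \Cref{eq:lemmaC6} with $L_{\Phi^{(l+1)}}$ replaced by $\|W\|_\infty L_{\Phi^{(l+1)}}$; we omit that parallel computation.
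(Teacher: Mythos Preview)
Your proof is correct and follows essentially the same approach as the paper's: the identical triangle-inequality split into $\mathrm{(I)}$ and $\mathrm{(II)}$, the bound $\mathrm{(II)}\le Q^{(l+1)}$ via \Cref{cor:C5}, and the Lipschitz-of-$\Psi$ plus convex-combination argument for $\mathrm{(I)}$ yielding $K^{(l+1)}\delta^{(l)}$. The only minor quibble is your side remark on normalized sum aggregation: since $\A(X_i,X_j)\in\{0,1\}$, the weights $\tfrac1N\A(X_i,X_j)$ already sum to $\mathrm{d}_\A(X_i)\le 1$, so $S_\A$ is non-expansive in $\|\cdot\|_\infty$ without any $\|W\|_\infty$ factor, and the same constant $K^{(l+1)}$ works.
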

\begin{proof}
In the event $\mathcal{F}_{\rm Lip}^p$,
 by \Cref{cor:C5}, we have for every MPNN $\Theta$  and $f:\chi \to \mathbb{R}^F$ with Lipschitz constant $L_f$, 
\begin{equation}
    \label{eq:lemmaC6-00} \d\left( \Lambda_{\Theta_\A}^{(l+1)}(S^X f^{(l)}),  \Lambda_{\Theta_W}^{(l+1)}(f^{(l)})\right)
    \leq Q^{(l+1)} 
\end{equation}
 for $l=0,\ldots, T-1$, and for every $i=1, \ldots, N$
\begin{equation}
    \label{eq:lemmaC6-0} 
|\mathrm{d}_\A(X_i)| \geq \frac{\mathrm{d}_{\mathrm{min}}}{2} - \textcolor{black}{\varepsilon}.
\end{equation}
Let $l=0,\ldots,T-1$.
We have
\begin{equation}
\label{eq:lemmaC6-1}
\begin{aligned}
  & \d( \Theta^{(l+1)}_{\A}(\mathbf{f})  , \Theta^{(l+1)}_{W}(f)  ) \\ &  = \| \Theta^{(l+1)}_{\A}(\mathbf{f}) - S^X \Theta^{(l+1)}_{W}(f) \|_{\infty; \infty} \\
    & \leq \| \Theta^{(l+1)}_{\A}(\mathbf{f}) - \Lambda^{(l+1)}_{\Theta_\A}(S^X f^{(l)})  \|_{\infty; \infty}   +  \| \Lambda^{(l+1)}_{\Theta_\A}(S^X f^{(l)}) - S^X\Theta^{(l+1)}_{\Theta_W}(f)  \|_{\infty;\infty} \\
        & = \| \Lambda^{(l+1)}_\A(\mathbf{f}^{(l)}) - \Lambda^{(l+1)}_\A(S^X f^{(l)})  \|_{\infty; \infty}  +  \| \Lambda^{(l+1)}_{\Theta_\A}(S^X f^{(l)}) - S^X\Lambda^{(l+1)}_{\Theta_W}(f^{(l)})  \|_{\infty; \infty} \\
& \leq  \| \Lambda^{(l+1)}_{\Theta_\A}(\mathbf{f}^{(l)}) - \Lambda^{(l+1)}_{\Theta_\A}(S^X f^{(l)})\|_{\infty; \infty}  + Q^{(l+1)}.
\end{aligned}
\end{equation}
We bound the first term on the right-hand-side of \Cref{eq:lemmaC6-1} as follows.  
\begin{equation}
\label{eq:lemmaC6-2}  
\begin{aligned}
 &\| \Lambda^{(l+1)}_{\Theta_\A}(\mathbf{f}^{(l)}) - \Lambda^{(l+1)}_{\Theta_\A}(S^X f^{(l)})\|_{\infty; \infty}
 \\
 & =  \max_{i=1, \ldots, N} \Big\|
 \Psi^{(l+1)}  \Big( \mathbf{f}^{(l)}_i, M_\A \big( \Phi^{(l+1)} (\mathbf{f}^{(l)}, \mathbf{f}^{(l)}) \big)(X_i) \Big)
\\ 
& - \Psi^{(l+1)} \Big( (S^X f^{(l)})_i, M_\A\big( \Phi^{(l+1)} (S^X f^{(l)}, S^X f^{(l)} ) \big)(X_i) \Big)
 \Big\|_\infty \\
 & \leq  L_{\Psi^{(l+1)}} \max_{i=1, \ldots, N}
   \Big\|
  \Big( \mathbf{f}^{(l)}_i, M_\A\big( \Phi^{(l+1)} (\mathbf{f}^{(l)}, \mathbf{f}^{(l)}) \big)(X_i) \Big)
  \\
& -  \Big( (S^X f^{(l)})_i, M_\A\big( \Phi^{(l+1)} (S^X f^{(l)}, S^X f^{(l)} ) \big)(X_i) \Big)
\Big\|_\infty  \\
& \leq  L_{\Psi^{(l+1)}} 
 \max_{i=1, \ldots, N}\max\Big( \Big\|
  \mathbf{f}^{(l)}_i - (S^X f^{(l)})_i  \Big\|_\infty, 
  \\ & \quad\quad\quad\quad\quad\quad\quad 
   \Big\|
 M_\A\big( \Phi^{(l+1)} (\mathbf{f}^{(l)}, \mathbf{f}^{(l)}) \big)(X_i) 
-  M_\A\big( \Phi^{(l+1)} (S^X f^{(l)}, S^X f^{(l)} ) \big)(X_i)
 \Big\|_\infty \Big)\\
 & \leq L_{\Psi^{(l+1)}} \max \Big( \d(\mathbf{f}^{(l)}, f^{(l)}),  \max_{i=1, \ldots, N}  \big\|
 M_\A\big( \Phi^{(l+1)} (\mathbf{f}^{(l)},
  \\ & \quad\quad\quad\quad\quad\quad\quad\quad\quad\quad\quad\quad \quad\quad\;
   \mathbf{f}^{(l)}) \big)(X_i)
-  M_\A\big( \Phi^{(l+1)} (S^X f^{(l)}, S^X f^{(l)} )\big)(X_i) \big\|_\infty
 \Big)
 \\
 & \leq  L_{\Psi^{(l+1)}}  \max \Big(
\delta^{(l)}, \max_{i=1, \ldots, N} \big\|
 M_\A\big( \Phi^{(l+1)} (\mathbf{f}^{(l)}, \mathbf{f}^{(l)}) \big)(X_i) \\ & \quad\quad\quad\quad\quad\quad\quad\quad\quad\quad\quad\quad\quad\quad\quad\quad\quad\quad\;\;
-  M_\A\big( \Phi^{(l+1)} (S^X f^{(l)}, S^X f^{(l)} )\big)(X_i) \big\|_\infty
 \Big).
\end{aligned}
\end{equation}
 Now, for every $i=1, \ldots, N$, we have  
\begin{equation}
    \label{eq:lemmaC6-3}
\begin{aligned}
 & \Big\|
 M_\A \Big( \Phi^{(l+1)} \big(\mathbf{f}^{(l)}, \mathbf{f}^{(l)} \big)  \Big)(X_i)
-  M_\A \Big(  \Phi^{(l+1)} \big( S^X f^{(l)}, S^X f^{(l)} \big) \Big)(X_i) \Big\|_\infty  \\
& = \Big\|
\frac{1}{N} \sum_{j=1}^N \frac{\A(X_i, X_j)}{\mathrm{d}_A(X_i)} \Phi^{(l+1)} \big(\mathbf{f}^{(l)}(X_i), \mathbf{f}^{(l)}(X_j) \big) \\&- 
\frac{1}{N}\sum_{j=1}^N \frac{\A(X_i, X_j)}{\mathrm{d}_A(X_i)} \Phi^{(l+1)} \big(S^X f^{(l)}(X_i), S^X f^{(l)}(X_j) \big)
\Big\|_\infty \\
& = \Big\|
\frac{1}{N} \sum_{j=1}^N \frac{\A(X_i, X_j)}{\mathrm{d}_A(X_i)} \Big(\Phi^{(l+1)} \big(\mathbf{f}^{(l)}(X_i), \mathbf{f}^{(l)}(X_j) \big) -  \Phi^{(l+1)} \big(S^X f^{(l)}(X_i), S^X f^{(l)}(X_j) \big)
\Big)
\Big\|_\infty \\
& = 
\frac{1}{N} \sum_{j=1}^N \Big|\frac{\A(X_i, X_j)}{\mathrm{d}_A(X_i)}\Big| \max_{j=1, \ldots, N}\Big\|\Phi^{(l+1)} \big(\mathbf{f}^{(l)}(X_i), \mathbf{f}^{(l)}(X_j) \big) -  \Phi^{(l+1)} \big(S^X f^{(l)}(X_i), S^X f^{(l)}(X_j) \big)
\Big\|_\infty 
\\
& \leq  \max_{j=1, \ldots, N}\Big\|\Phi^{(l+1)} \big(\mathbf{f}^{(l)}(X_i), \mathbf{f}^{(l)}(X_j) \big) -  \Phi^{(l+1)} \big(S^X f^{(l)}(X_i), S^X f^{(l)}(X_j) \big)
  \Big\|_\infty 
\\
& \leq \max_{j=1, \ldots, N} \max \left( L_{\Phi^{(l+1)}}  \big\|\mathbf{f}^{(l)}(X_i) - S^X f^{(l)}(X_i)\big\|_\infty,  L_{\Phi^{(l+1)}}  \big\|\mathbf{f}^{(l)}(X_j) - S^X f^{(l)}(X_j)\big\|_\infty\right) \\
& = \max_{j=1, \ldots, N} L_{\Phi^{(l+1)}}     \big\|\mathbf{f}^{(l)}(X_j) - S^X f^{(l)}(X_j)\big\|_\infty \\
& =   L_{\Phi^{(l+1)}}     \delta^{(l)}.
\end{aligned}
\end{equation}
Hence, by inserting \Cref{eq:lemmaC6-3} into \Cref{eq:lemmaC6-2}, we have
\[
\begin{aligned}
& \| \Lambda^{(l+1)}_{\Theta_G}(\mathbf{f}^{(l)}) - \Lambda^{(l+1)}_{\Theta_G}(S^X f^{(l)})\|_{\infty; \infty} \\
 & \leq L_{\Psi^{(l+1)}} \max\Big(
\delta^{(l)}, \max_{i=1, \ldots, N}  \big\|
  M_G\big( \Phi^{(l)}  (\mathbf{f}^{(l)}, \mathbf{f}^{(l)} ) \big) (X_i)
-  M_G \big( \Phi^{(l)} (S^X f^{(l)}, S^X f^{(l)} ) \big)(X_i) \big\|_\infty 
 \Big) \\
  & \leq  L_{\Psi^{(l+1)}}  \max \Big(
\delta^{(l)},
 \max_{i=1, \ldots, N}  \max_{j=1, \ldots, N} L_{\Phi^{(l+1)}}  \max \left(  \big\|\mathbf{f}^{(l)}(X_i) - S^X f^{(l)}(X_i)\big\|_\infty,   \delta^{(l)}\right)\\
& \leq L_{\Psi^{(l+1)}}  \max  \big(
 \delta^{(l)} , L_{\Phi^{(l+1)}} \delta^{(l)}  \big)  \\
 & = L_{\Psi^{(l+1)}}  \max\left(1, L_{\Phi^{(l+1)}} \right)   
 \delta^{(l)}  
\end{aligned}
\]
\end{proof}

\begin{corollary}
\label{cor:solRecRel}
Let $(\chi,d, \P) $ be a metric-probability space and
$W$ be an admissible graphon. 
Let $p \in (0, \frac{1}{4})$.  Consider a graph-signal $\{G,\mathbf{f}\} \sim_{\nu} \{W,f\}$ with $N$ nodes and corresponding graph features, where $N$ satisfies \Cref{eq:lowerBoundGraphSizeN}. If the event $\mathcal{F}_{\rm Lip}^p$ from \Cref{lemma:C2} occurs, then,   for every MPNN $\Theta$  and every Lipschitz continuous $f:\chi \to \mathbb{R}^F$ with Lipschitz constant $L_f$,
\[
 \d \big(\Theta_\A(f(X))  ,\Theta_W(f) \big) \leq \sum_{l=1}^{T} Q^{(l)} \prod_{l' = l+1}^{T} K^{(l')} + \varepsilon  \prod_{l = 1}^{T} K^{(l)},
\]
  where $Q^{(l)}$ and $K^{(l')}$ are defined in \Cref{eq:defDl2} and \Cref{eq:lemmaC6}, respectively. 
\end{corollary}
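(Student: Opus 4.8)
The plan is to solve the linear recurrence relation $\delta^{(l+1)} \le K^{(l+1)}\delta^{(l)} + Q^{(l+1)}$ established in \Cref{lemma:C5}, under the initial condition $\delta^{(0)} = \d(\mathbf{f},f) = \d(S^X f, S^X f) = 0$ (or, more generally, $\delta^{(0)}\le\varepsilon$ when the input signal is itself noisy, as in \Cref{cor:solRecRel}'s statement where $f^{(0)}$ is perturbed by the noise component $g\in B_\varepsilon^\infty(\chi)$). All the work happens inside the event $\mathcal{F}_{\rm Lip}^p$ of probability at least $1-4p$, which is exactly the event from \Cref{lemma:C2} on which both the Monte-Carlo bound and the degree-concentration bound hold simultaneously and \emph{uniformly} over all admissible message/update functions; since \Cref{lemma:C5} holds on this event for every MPNN $\Theta$ and every Lipschitz $f$, the conclusion inherits the same uniformity.

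The core step is a standard unrolling of the first-order recurrence. First I would prove, by induction on $l$, the closed form
\[
\delta^{(l)} \le \delta^{(0)} \prod_{k=1}^{l} K^{(k)} \;+\; \sum_{t=1}^{l} Q^{(t)} \prod_{k=t+1}^{l} K^{(k)},
\]
with the usual convention that an empty product equals $1$. The base case $l=0$ is trivial. For the inductive step, assuming the bound for $\delta^{(l)}$, apply \Cref{lemma:C5} to get $\delta^{(l+1)} \le K^{(l+1)}\delta^{(l)} + Q^{(l+1)}$, substitute the inductive hypothesis, and reindex: the $K^{(l+1)}\delta^{(0)}\prod_{k=1}^l K^{(k)}$ term becomes $\delta^{(0)}\prod_{k=1}^{l+1}K^{(k)}$, the $K^{(l+1)}\sum_{t=1}^l Q^{(t)}\prod_{k=t+1}^l K^{(k)}$ term becomes $\sum_{t=1}^l Q^{(t)}\prod_{k=t+1}^{l+1}K^{(k)}$, and the leftover $Q^{(l+1)}$ is precisely the $t=l+1$ summand (empty product). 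This yields the claimed form at level $l+1$. Setting $l = T$ and bounding $\delta^{(0)}\le\varepsilon$ gives exactly
\[
\d\big(\Theta_\A(S^X f), \Theta_W(f)\big) = \delta^{(T)} \le \sum_{l=1}^{T} Q^{(l)} \prod_{l' = l+1}^{T} K^{(l')} + \varepsilon \prod_{l=1}^{T} K^{(l)},
\]
which is the statement of \Cref{cor:solRecRel}. One should also note that $\Theta_\A(f(X))$ in the statement means the graph MPNN applied to the sampled signal $S^X f = (f(X_i))_i$, so $\delta^{(T)}$ is indeed the quantity being bounded; and that $f^{(l)}$ appearing implicitly through $Q^{(l)}$ is $\Theta_W^{(l)}(f)$, which is bounded and Lipschitz by \Cref{lemma:RecRelNorm} and \Cref{lemma:lip mpnn output}, so each $Q^{(l)}$ is a finite, well-defined constant — this is what licenses the application of \Cref{lemma:C5} layer by layer.

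This corollary is essentially bookkeeping rather than a deep argument: the substantive content — the uniform Monte-Carlo concentration over the hypothesis class, the degree-concentration, and the single-layer error propagation — has already been discharged in \Cref{lemma:C2}, \Cref{cor:monte carlo after update function}, \Cref{cor:C5}, and \Cref{lemma:C5}. The only mild subtlety, and the one place I would be careful, is the indexing of the products and sums and the empty-product convention, together with making sure the recurrence is applied on the \emph{same} high-probability event $\mathcal{F}_{\rm Lip}^p$ at every layer (which is guaranteed because \Cref{lemma:C5} fixes one such event valid for all $l=0,\dots,T-1$ simultaneously and uniformly over the network). No new probabilistic estimate is needed here; the $1-4p$ probability is carried along unchanged from \Cref{lemma:C2}.
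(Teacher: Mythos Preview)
Your proposal is correct and takes essentially the same approach as the paper. The paper's proof simply applies the recurrence from \Cref{lemma:C5} and solves it by invoking the general recurrence-solving \Cref{lemma:RecRecGen} (together with $\delta^{(0)}\le\varepsilon$, as noted in the proof of \Cref{thm:main convergence}); your explicit induction is precisely a re-derivation of \Cref{lemma:RecRecGen}, so the two arguments coincide.
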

\begin{proof}
The proof follows the exact lines of the proof of Corollary B.14 in \cite{maskey2022generalization}.
\end{proof}

\begin{theorem}
\label{thm:convwithoutpooling}
Let $(\chi,d, \P) $ be a metric-probability space and
$W$ be an admissible graphon. 
Let $p \in (0, \frac{1}{4})$. Consider a graph-signal $\{G,\mathbf{f}\} \sim_{\nu} \{W,f\}$ with $N$ nodes and corresponding graph features, where $N$ satisfies \Cref{eq:lowerBoundGraphSizeN}. If the event $\mathcal{F}_{\rm Lip}^p$ from \Cref{lemma:C2} occurs, then for every MPNN $\Theta$ and $f:\chi \to \mathbb{R}^{F}$ with Lipschitz constant $L_f$,
\[
\begin{aligned}
& \d \big(\Theta_\A(f(X))  ,\Theta_W(f) \big)  \leq  \Omega_1 \frac{ \sqrt{\log(C_\chi)+\frac{3\big(D_\chi +2/3\big)}{2(D_\chi +1)} \log(N)+\log(2/p)} }{ N^{\frac{1}{2(D_\chi+1)}}}\\ &  +  \Omega_2 \|f\|_\infty \frac{\sqrt{\log(C_\chi)+\frac{3\big(D_\chi +2/3\big)}{2(D_\chi +1)} \log(N)+\log(2/p)} }{ N^{\frac{1}{2(D_\chi+1)}} }   \\
& +  \Omega_3  \frac{1}{N^{\frac{1}{2(D_\chi + 1)}}} + \Omega_4  \frac{\|f\|_\infty}{N^{\frac{1}{2(D_\chi + 1)}}} +  \Omega_5 \frac{ L_f } { N^{\frac{1}{2(D_\chi+1)}}} \\
& + \Omega_6 \frac{\sqrt{\log(2/p)}}{\sqrt{N}} +  \Omega_{7} \|f\|_\infty \frac{\sqrt{\log(2/p)}}{\sqrt{N}} + \Omega_{8} \frac{\sqrt{\log(N)}}{\sqrt{N}} + \Omega_{9} \|f\|_\infty \frac{\sqrt{\log(N)}}{\sqrt{N}} \\
&+ \Omega_{10} \frac{1}{\sqrt{N}} + \Omega_{11} \frac{ \|f\|_\infty}{\sqrt{N}}
 + \textcolor{black}{ \Omega_{12} \varepsilon},
\end{aligned}
\] 
where $\Omega_i$, for $i=1, \ldots, 13$, are constants of the MPNN $\Theta$, defined in \Cref{eq:defConstantsUniform}, which depend only on the Lipschitz constants of the message and update functions $\{L_{\Phi^{(l)}},L_{\Psi^{(l)}}\}_{l=1}^T$, and the formal biases $\{\|\Phi^{(l)}(0,0)\|_\infty\}_{l=1}^T$. 
\end{theorem}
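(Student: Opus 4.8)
The plan is to solve the recurrence relation $\delta^{(l+1)}\le K^{(l+1)}\delta^{(l)}+Q^{(l+1)}$ from \Cref{lemma:C5}, whose solved form is already recorded in \Cref{cor:solRecRel}: in the event $\mathcal{F}_{\rm Lip}^p$,
\[
\d\big(\Theta_\A(f(X)),\Theta_W(f)\big)\le \sum_{l=1}^{T}Q^{(l)}\prod_{l'=l+1}^{T}K^{(l')}+\varepsilon\prod_{l=1}^{T}K^{(l)}.
\]
So the entire task reduces to bookkeeping: substitute the explicit expression for $Q^{(l)}$ from \Cref{eq:defDl2} into this sum, and then collect terms of the same ``type'' in $N$. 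First I would note that every $Q^{(l)}$ is a sum of a handful of terms, each of which is a constant (depending on $L_{\Phi^{(l)}},L_{\Psi^{(l)}}$, $\|\Phi^{(l)}(0,0)\|_\infty$, and the graphon regularity constants $\cl,\cmax,\cmin,C_\chi,D_\chi$) times one of the decaying factors
\[
\frac{\sqrt{\log C_\chi+\tfrac{D_\chi}{2(D_\chi+1)}\log N+\log(2N/p)}}{N^{\frac{1}{2(D_\chi+1)}}},\quad
\frac{1}{N^{\frac{1}{2(D_\chi+1)}}},\quad
\frac{\sqrt{\log(2/p)}}{\sqrt N},\quad
\frac{\sqrt{\log N}}{\sqrt N},\quad
\frac{1}{\sqrt N},\quad \varepsilon,
\]
some of them additionally carrying a factor $\|\Phi^{(l)}(f^{(l)},f^{(l)})\|_\infty$ or $L_{f^{(l)}}$.

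The next step is to eliminate the dependence on the intermediate quantities $\|\Phi^{(l)}(f^{(l)},f^{(l)})\|_\infty$, $\|f^{(l)}\|_\infty$ and $L_{f^{(l)}}$. For this I would invoke \Cref{lemma:RecRelNorm}, which bounds $\|f^{(l)}\|_\infty\le B_1^{(l)}+\|f\|_\infty B_2^{(l)}$, and \Cref{lemma:lip mpnn output}, which bounds $L_{f^{(l)}}\le Z_1^{(l)}+Z_2^{(l)}\|f\|_\infty+Z_3^{(l)}L_f$; combined with $\|\Phi^{(l)}(f^{(l)},f^{(l)})\|_\infty\le L_{\Phi^{(l)}}\|f^{(l)}\|_\infty+\|\Phi^{(l)}(0,0)\|_\infty$, every occurrence of an intermediate quantity becomes an affine function of $\|f\|_\infty$ and $L_f$ with coefficients $\Omega$ that are polynomial in the network constants and have degree independent of $T$. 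Also split $\log(2N/p)=\log N+\log(2/p)$ and use $\sqrt{a+b}\le\sqrt a+\sqrt b$ to separate the $\log(2/p)$-part (which gives the $\sqrt{\log(2/p)}/\sqrt N$ and $N^{-1/2(D_\chi+1)}\sqrt{\log(2/p)+\dots}$ terms) from the $\log N$-part (which, merged with the $C_\chi,D_\chi$ constants, yields the terms with the combined logarithm $\log C_\chi+\tfrac{3(D_\chi+2/3)}{2(D_\chi+1)}\log N+\log(2/p)$ written in the statement — the fraction $\tfrac{3(D_\chi+2/3)}{2(D_\chi+1)}=\tfrac{3D_\chi+2}{2(D_\chi+1)}$ arises from adding the $\tfrac{D_\chi}{2(D_\chi+1)}$ inside the square root, the extra $\log N$ coming from $\log(2N/p)$, and absorbing the $\sqrt{\log N}$ prefactor from degree-concentration terms into the square root by crudely bounding $\sqrt{\log N}\cdot(\text{const})\le(\text{const})\sqrt{\tfrac{3}{2}\log N}$). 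Finally, define each $\Omega_i$, $i=1,\dots,12$, as the corresponding finite sum over $l$ and over the finitely many term-types of these collected coefficients, and record (as promised) that each $\Omega_i$ depends only on $\{L_{\Phi^{(l)}},L_{\Psi^{(l)}}\}_{l=1}^T$, $\{\|\Phi^{(l)}(0,0)\|_\infty\}_{l=1}^T$ (and the graphon regularity constants), with $T$-independent polynomial degree because $B_2^{(l)},Z_3^{(l)}$ and the products $\prod K^{(l')}$ all have this structure.

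The main obstacle is purely organizational rather than conceptual: keeping the grouping of the roughly eight term-types consistent across the telescoped products $\prod_{l'=l+1}^T K^{(l')}$, and making the single combined-logarithm bound genuinely dominate all the separate logarithmic factors $\sqrt{\log C_\chi+\tfrac{D_\chi}{2(D_\chi+1)}\log N+\log(2N/p)}$, $\sqrt{\log N}$, and the bare $\sqrt{\log(2/p)}$ simultaneously — this forces the slightly wasteful constant $\tfrac{3(D_\chi+2/3)}{2(D_\chi+1)}$ in front of $\log N$. I would handle this by first writing out $Q^{(l)}$ as an explicit linear combination $\sum_k c_k^{(l)}\,\rho_k(N,p)$ with named ``rate functions'' $\rho_k$, propagating the product $\prod_{l'>l}K^{(l')}$ into each $c_k^{(l)}$, and only at the very end performing the square-root merging; the exact values of the $\Omega_i$ are then deferred to \Cref{eq:defConstantsUniform} exactly as the statement indicates. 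No new probabilistic input is needed beyond \Cref{lemma:C2} (which already defines $\mathcal{F}_{\rm Lip}^p$ and gives $\mu(\mathcal{F}_{\rm Lip}^p)\ge 1-4p$), so the theorem holds on that same event without any further loss in probability.
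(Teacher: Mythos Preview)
Your proposal is correct and follows essentially the same approach as the paper: start from the solved recurrence in \Cref{cor:solRecRel}, plug in $Q^{(l)}$ from \Cref{eq:defDl2}, replace the intermediate quantities $\|f^{(l)}\|_\infty$, $L_{f^{(l)}}$, $\|\Phi^{(l)}(f^{(l)},f^{(l)})\|_\infty$ via \Cref{lemma:RecRelNorm}, \Cref{lemma:lip mpnn output}, and the Lipschitz bound, then regroup terms by their $N$-dependence type and read off the $\Omega_i$. Your account of how the coefficient $\tfrac{3(D_\chi+2/3)}{2(D_\chi+1)}$ arises (from $\tfrac{D_\chi}{2(D_\chi+1)}\log N+\log N$ after splitting $\log(2N/p)$) matches the paper's bookkeeping.
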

\begin{proof}
We follow the lines of the proof of Theorem B.15 in \citep{maskey2022generalization}.
In the event $\mathcal{F}_{\rm Lip}^p$, by \Cref{cor:solRecRel},  for every MPNN $\Theta$  and $f:\chi \to \mathbb{R}^F$ with Lipschitz constant $L_f$,
\begin{equation}
\label{eq:C8-1}
    \d \big(\Theta_\A(f(X))  ,\Theta_W(f) \big) \leq \sum_{l=1}^{T} Q^{(l)} \prod_{l' = l+1}^{T} K^{(l')} + \varepsilon  \prod_{l = 1}^{T} K^{(l)},
    \end{equation} where  
    \[
    \begin{aligned}
&Q^{(l)} = L_{\Psi^{(l)}} \Bigg( \frac{  \|\Phi^{(l)}(f^{(l-1)},f^{(l-1)})\|_\infty}{ \mathrm{d}_{\mathrm{min}} }  \Bigg(\frac{\sqrt{\log(2N/p)}}{\sqrt{N}} \\ & + \frac{ \Big(\zeta\cl \big(\sqrt{\log (C_\chi)} +  \sqrt{D_\chi}\big) + \big(\sqrt{2}\cmax+\zeta\cl\big) \sqrt{\log2/p} \Big) }{ \sqrt{N}}\Bigg) \\
& +   \Bigg(L_{\Phi^{(l)}} L_{f^{(l-1)}} +  \|\Phi^{(l)}(f^{(l-1)},f^{(l-1)})\| \frac{ C_\chi }{\sqrt{2}} {\sqrt{\log(C_\chi)+\frac{D_\chi}{2(D_\chi+1)} \log(N)+\log(2N/p)}} \\&  + L_W\|\Phi^{(l)}(f^{(l-1)},f^{(l-1)})\|_\infty 
 \\
  & +  C_\chi \left\|W \right\|_\infty   \left\| \Phi^{(l)}(f^{(l-1)},f^{(l-1)}) \right\|_\infty \frac{1}{\sqrt{2}}\sqrt{\log(C_\chi)+\frac{D_\chi}{2(D_\chi+1)} \log(N)+ \log(2/p)} \\&  +  \|\Phi^{(l)}(f^{(l-1)},f^{(l-1)})\|_\infty L_W + \|W\|_\infty L_{\Phi^{(l)}} L_{f^{(l-1)} }\Bigg)\frac{1}{N^{\frac{1}{2(D_\chi+1)}}\cmin}\Bigg) \\
  & + \textcolor{black}{\left( \frac{\|\Phi^{(l)}(f^{(l-1)},f^{(l-1)})\|}{\cmin} +  L_\Phi   \|f^{(l-1)}\|_\infty  + \|\Phi(0,0)\|_\infty\right) \varepsilon}, \end{aligned}
    \]
   and 
    \[
K^{(l')}  = L_{\Psi^{(l')}} \max \left\{ 1, L_{\Phi^{(l')}} \right\}.
\]

We plug the definition of $Q^{(l)}$ into the right-hand-side of \Cref{eq:C8-1}, to get
\begin{equation}
 \label{eq:thmC8-1}
   \begin{aligned}
& \d \big(\Theta_G(f(X))  ,\Theta_W(f) \big) 
\\ & \leq \sum_{l=1}^{T}   L_{\Psi^{(l)}} \Bigg( \frac{  \|\Phi^{(l)}(f^{(l-1)},f^{(l-1)})\|_\infty}{ \mathrm{d}_{\mathrm{min}} }  \Bigg(\frac{\sqrt{\log(2N/p)}}{\sqrt{N}} \\&  + \frac{ \Big(\zeta\cl \big(\sqrt{\log (C_\chi)} +  \sqrt{D_\chi}\big) + \big(\sqrt{2}\cmax+\zeta\cl\big) \sqrt{\log2/p} \Big) }{ \sqrt{N}}\Bigg) \\
& +   \Bigg(L_{\Phi^{(l)}} L_{f^{(l-1)}} +  \|\Phi^{(l)}(f^{(l-1)},f^{(l-1)})\| \frac{ C_\chi }{\sqrt{2}} {\sqrt{\log(C_\chi)+\frac{D_\chi}{2(D_\chi+1)} \log(N)+\log(2N/p)}} \\&  + L_W\|\Phi^{(l)}(f^{(l-1)},f^{(l-1)})\|_\infty 
 +  C_\chi  \left\|W \right\|_\infty   \left\| \Phi^{(l)}(f^{(l-1)},f^{(l-1)}) \right\|_\infty \\ & \cdot  \frac{1}{\sqrt{2}}\sqrt{\log(C_\chi)+\frac{D_\chi}{2(D_\chi+1)} \log(N)+ \log(2/p)} \\&  +  \|\Phi^{(l)}(f,f)\|_\infty L_W + \|W\|_\infty L_{\Phi^{(l)}} L_{f^{(l-1)} }\Bigg)\frac{1}{N^{\frac{1}{2(D_\chi+1)}}\cmin} \\&  +  \textcolor{black}{\left( \frac{\|\Phi^{(l)}(f^{(l-1)},f^{(l-1)})\|}{\cmin} +  L_\Phi   \|f^{(l-1)}\|_\infty  + \|\Phi(0,0)\|_\infty\right) \varepsilon}  \Bigg) \prod_{l' = l+1}^{T} K^{(l')} + \varepsilon  \prod_{l = 1}^{T} K^{(l)}.
\end{aligned}
\end{equation}
By \Cref{lemma:RecRelNorm}, we have
\begin{equation}
\label{eq:thmC8-infnorm}
||f^{(l-1)}||_{\infty} \leq B_1^{(l-1)}+B_2^{(l-1)}||f||_{\infty},
\end{equation}
where $B_1^{(l)}$, $B_2^{(l)}$ are independent of $f$.
Furthermore, we have by \Cref{lemma:lip mpnn output}
\begin{equation}
\label{eq:thmC8-lip}
L_{f^{(l-1)}} \leq Z^{(l-1)}_1 + Z^{(l-1)}_2\|f\|_\infty + Z^{(l-1)}_3 L_f,
\end{equation}
where  $Z_1^{(l)}$, $Z_2^{(l)}$ and $Z_3^{(l)}$ are independent of $f$. 
We plug the bound of $L_{f^{(l-1)}}$ from \Cref{eq:thmC8-lip} into \Cref{eq:C8-1} 
\[
\begin{aligned}
& \d \big(\Theta_G(f(X))  ,\Theta_W(f) \big) 
\\ & \leq \sum_{l=1}^{T}   L_{\Psi^{(l)}} \Bigg( \frac{\|\Phi^{(l)}(f^{(l-1)},f^{(l-1)})\|_\infty}{ \mathrm{d}_{\mathrm{min}}}  \Bigg(\frac{\sqrt{\log(2N/p)}}{\sqrt{N}}  \\ &+ \frac{ \Big(\zeta\cl \big(\sqrt{\log (C_\chi)} +  \sqrt{D_\chi}\big) + \big(\sqrt{2}\cmax+\zeta\cl\big) \sqrt{\log2/p} \Big) }{ \sqrt{N}}\Bigg) \\
& +  \frac{1}{N^{\frac{1}{2(D_\chi+1)}}\cmin} \Bigg(L_{\Phi^{(l)}} ( Z^{(l-1)}_1 + Z^{(l-1)}_2\|f\|_\infty + Z^{(l-1)}_3 L_{f}) \\
&  +  C_\chi  \|\Phi^{(l)}(f^{(l-1)},f^{(l-1)})\| \frac{1}{\sqrt{2}} {\sqrt{\log(C_\chi)+\frac{D_\chi}{2(D_\chi+1)} \log(N)+\log(2N/p)}} \\ & + L_W\|\Phi^{(l)}(f^{(l-1)},f^{(l-1)})\|_\infty 
 \\
  & +  C_\chi  \left\|W \right\|_\infty   \left\| \Phi^{(l)}(f^{(l-1)},f^{(l-1)}) \right\|_\infty \frac{1}{\sqrt{2}}\sqrt{\log(C_\chi)+\frac{D_\chi}{2(D_\chi+1)} \log(N)+ \log(2/p)} \\
  & +  \|\Phi^{(l)}(f^{(l-1)},f^{(l-1)})\|_\infty L_W + \|W\|_\infty L_{\Phi^{(l)}} ( Z^{(l-1)}_1 + Z^{(l-1)}_2\|f\|_\infty + Z^{(l-1)}_3 L_f)\Bigg)\\
  & + \textcolor{black}{\left( \frac{\|\Phi^{(l)}(f^{(l-1)},f^{(l-1)})\|}{\cmin} +  L_\Phi   \|f^{(l-1)}\|_\infty  + \|\Phi(0,0)\|_\infty\right) \varepsilon}\Bigg) \prod_{l' = l+1}^{T} K^{(l')} + \varepsilon  \prod_{l = 1}^{T} K^{(l)} .
\end{aligned}
\]
We insert the bound $\|\Phi^{(l)} (f^{{l-1}},f^{{l-1}})\|_\infty \leq ( L_{\Phi^{{(l)}}} \|f^{{(l-1)}}\|_\infty + \|\Phi^{{(l)}}(0,0)\|_\infty )$ and of $\|f^{(l-1)}\|_\infty$ from \Cref{eq:thmC8-infnorm} in the above expression, to get  
\begin{equation}
    \label{eq:thmC8-2}
    \begin{aligned}
& \d \big(\Theta_G(f(X))  ,\Theta_W(f) \big) 
\\ & \leq  \textcolor{black}{\sum_{l=1}^{T}   L_{\Psi^{(l)}} \Bigg( \frac{ ( L_{\Phi^{{(l)}}} (B_1^{(l-1)}+B_2^{(l-1)}||f||_{\infty})+ \|\Phi^{{(l)}}(0,0)\|_\infty )}{ \mathrm{d}_{\mathrm{min}} }}
\\&  \cdot \textcolor{black}{\left(\frac{\sqrt{\log(2N/p)}}{\sqrt{N}} + \frac{ \Big(\zeta\cl \big(\sqrt{\log (C_\chi)} +  \sqrt{D_\chi}\big) + \big(\sqrt{2}\cmax+\zeta\cl\big) \sqrt{\log2/p} \Big) }{ \sqrt{N}}\right)} \\
& +  \frac{1}{N^{\frac{1}{2(D_\chi+1)}}\cmin}  \Bigg( \textcolor{black}{L_{\Phi^{(l)}} ( Z^{(l-1)}_1 + Z^{(l-1)}_2\|f\|_\infty + Z^{(l-1)}_3 L_{f})}    +   C_\chi \big( L_{\Phi^{{(l)}}} (B_1^{(l-1)}+B_2^{(l-1)}||f||_{\infty})  \\ & + \|\Phi^{{(l)}}(0,0)\|_\infty \big) \frac{1}{\sqrt{2}} {\sqrt{\log(C_\chi)+\frac{3\big(D_\chi +2/3\big)}{2(D_\chi +1)} \log(N)+\log(2/p)}} 
\\ & +  \textcolor{black}{L_W( L_{\Phi^{{(l)}}}(B_1^{(l-1)}+B_2^{(l-1)}||f^{(l-1)}||_{\infty}) + \|\Phi^{{(l)}}(0,0)\|_\infty ) }
 \\
  & +  C_\chi \left\|W \right\|_\infty  \big( L_{\Phi^{{(l)}}}(B_1^{(l-1)}+B_2^{(l-1)}||f||_{\infty}) + \|\Phi^{{(l)}}(0,0)\|_\infty \big)
  \\ &\cdot \frac{1}{\sqrt{2}}\sqrt{\log(C_\chi)+\frac{D_\chi}{2(D_\chi+1)} \log(N)+ \log(2/p)} \\
  & +  \textcolor{black}{L_W ( L_{\Phi^{{(l)}}}(B_1^{(l-1)}+B_2^{(l-1)}||f^{(l-1)}||_{\infty}) + \|\Phi^{{(l)}}(0,0)\|_\infty ) }\\ &+\textcolor{black}{ \|W\|_\infty L_{\Phi^{(l)}} ( Z^{(l-1)}_1 + Z^{(l-1)}_2\|f\|_\infty + Z^{(l-1)}_3 L_{f})}\Bigg)\\
  & + \textcolor{black}{\left( \frac{\|\Phi^{(l)}(f^{(l-1)},f^{(l-1)})\|}{\cmin} +  L_\Phi   \|f^{(l-1)}\|_\infty  + \|\Phi(0,0)\|_\infty\right) \varepsilon}\Bigg) \prod_{l' = l+1}^{T} K^{(l')} + \varepsilon  \prod_{l = 1}^{T} K^{(l)}.
  \end{aligned}
\end{equation}

We now rearrange and seperate the terms and separate, i.e., \begingroup
\allowdisplaybreaks
\begin{align*}
& \leq  \sum_{l=1}^{T}   L_{\Psi^{(l)}} \frac{  (1 + \|W\|_\infty)  C_\chi ( L_{\Phi^{{(l)}}} B_1^{(l-1)}+ \|\Phi^{{(l)}}(0,0)\|_\infty )    }{ N^{\frac{1}{2(D_\chi+1)}}\cmin  } \\ & \cdot \frac{1}{\sqrt{2}} {\sqrt{\log(C_\chi)+\frac{3\big(D_\chi +2/3\big)}{2(D_\chi +1)} \log(N)+\log(2/p)}}   \prod_{l' = l+1}^{T} K^{(l')}\\
& + \sum_{l=1}^{T}   L_{\Psi^{(l)}} \frac{  (1 + \|W\|_\infty)  C_\chi  L_{\Phi^{{(l)}}} B_2^{(l-1)}  \|f\|_\infty    }{ N^{\frac{1}{2(D_\chi+1)}}\cmin  } \\ & \cdot \frac{1}{\sqrt{2}} {\sqrt{\log(C_\chi)+\frac{3\big(D_\chi +2/3\big)}{2(D_\chi +1)} \log(N)+\log(2/p)}} \prod_{l' = l+1}^{T} K^{(l')}  \\
& + \sum_{l=1}^{T}   L_{\Psi^{(l)}} \frac{ 2  L_W( L_{\Phi^{(l)}} B_1^{(l-1)} + \|\Phi(0,0)\|_\infty)  }{ N^{\frac{1}{2(D_\chi+1)}}\cmin  }  \prod_{l' = l+1}^{T} K^{(l')}  \\
& +  \sum_{l=1}^{T}   L_{\Psi^{(l)}} \frac{ 2  L_W L_{\Phi^{(l)}} B_2^{(l-1)}  \|f\|_\infty }{ N^{\frac{1}{2(D_\chi+1)}}\cmin  }   \prod_{l' = l+1}^{T} K^{(l')} \\
& + \sum_{l=1}^{T}   L_{\Psi^{(l)}} \frac{    L_{\Phi^{(l)}} Z_1^{(l-1)}  (1+ \|W\|_\infty) }{ N^{\frac{1}{2(D_\chi+1)}}\cmin  } \prod_{l' = l+1}^{T} K^{(l')}  \\
& +  \sum_{l=1}^{T}   L_{\Psi^{(l)}} \frac{    2L_{\Phi^{(l)}} Z_2^{(l-1)}  \|f\|_\infty(1+ \|W\|_\infty) }{ N^{\frac{1}{2(D_\chi+1)}}\cmin  }  \prod_{l' = l+1}^{T} K^{(l')} \\
& +  \sum_{l=1}^{T}   L_{\Psi^{(l)}} \frac{    2L_{\Phi^{(l)}} Z_3^{(l-1)} L_f (1+ \|W\|_\infty)}{ N^{\frac{1}{2(D_\chi+1)}}\cmin  }   \prod_{l' = l+1}^{T} K^{(l')} \\
& + \sum_{l=1}^{T}   L_{\Psi^{(l)}} \frac{ L_{\Phi^{{(l)}}}  B_1^{(l-1)} + \|\Phi^{{(l)}}(0,0)\|_\infty }{ \mathrm{d}_{\mathrm{min}}}   \frac{\sqrt{\log(2/p)}}{\sqrt{N}}   \prod_{l' = l+1}^{T} K^{(l')}  \\
&+  \sum_{l=1}^{T}   L_{\Psi^{(l)}} \frac{ L_{\Phi^{{(l)}}}  B_1^{(l-1)} + \|\Phi^{{(l)}}(0,0)\|_\infty }{ \mathrm{d}_{\mathrm{min}}}   \frac{\sqrt{\log(N)}}{\sqrt{N}}   \prod_{l' = l+1}^{T} K^{(l')} \\
& + \sum_{l=1}^{T}   L_{\Psi^{(l)}}  \frac{  L_{\Phi^{{(l)}}}
B_2^{(l-1)}||f||_{\infty}}{ \mathrm{d}_{\mathrm{min}} }\frac{\sqrt{\log(2/p)}}{\sqrt{N}}  \prod_{l' = l+1}^{T} K^{(l')} \\
& + \sum_{l=1}^{T}   L_{\Psi^{(l)}}  \frac{  L_{\Phi^{{(l)}}} B_2^{(l-1)}||f||_{\infty}}{ \mathrm{d}_{\mathrm{min}} }\frac{\sqrt{\log(N)}}{\sqrt{N}}   \prod_{l' = l+1}^{T} K^{(l')} \\
& + \sum_{l=1}^{T}   L_{\Psi^{(l)}} \frac{ L_{\Phi^{{(l)}}}  B_1^{(l-1)} + \|\Phi^{{(l)}}(0,0)\|_\infty }{ \mathrm{d}_{\mathrm{min}}}  \frac{ \zeta\cl \big(\sqrt{\log (C_\chi)} +  \sqrt{D_\chi}\big) }{ \sqrt{N}}   \prod_{l' = l+1}^{T} K^{(l')} 
\\
&+  \sum_{l=1}^{T}   L_{\Psi^{(l)}} \frac{ L_{\Phi^{{(l)}}}  B_1^{(l-1)} + \|\Phi^{{(l)}}(0,0)\|_\infty }{ \mathrm{d}_{\mathrm{min}}}    \frac{\big(\sqrt{2}\cmax+\zeta\cl \big)\sqrt{\log2/p} }{\sqrt{N}}  \prod_{l' = l+1}^{T} K^{(l')} \\
& + \sum_{l=1}^{T}   L_{\Psi^{(l)}}  \frac{L_{\Phi^{{(l)}}} B_2^{(l-1)}||f||_{\infty}}{ \mathrm{d}_{\mathrm{min}}}  \frac{ \zeta\cl \big(\sqrt{\log (C_\chi)} +  \sqrt{D_\chi}\big) }{ \sqrt{N}}   \prod_{l' = l+1}^{T} K^{(l')} \\
& + \sum_{l=1}^{T}   L_{\Psi^{(l)}}  \frac{L_{\Phi^{{(l)}}} B_2^{(l-1)}||f||_{\infty}}{ \mathrm{d}_{\mathrm{min}}} \frac{\big(\sqrt{2}\cmax+\zeta\cl \big)\sqrt{\log2/p} }{\sqrt{N}}   \prod_{l' = l+1}^{T} K^{(l')} \\
& + \sum_{l=1}^T  L_{\Psi^{(l)}}  {\left( \frac{\|\Phi^{(l)}(f^{(l-1)},f^{(l-1)})\|}{\cmin} +  L_\Phi   \|f^{(l-1)}\|_\infty  + \|\Phi(0,0)\|_\infty\right) \varepsilon}   \prod_{l' = l+1}^{T} K^{(l') } + \varepsilon  \prod_{l = 1}^{T} K^{(l)}  \\
& =:
\Omega_1 \frac{ \sqrt{\log(C_\chi)+\frac{3\big(D_\chi +2/3\big)}{2(D_\chi +1)} \log(N)+\log(2/p)} }{ N^{\frac{1}{2(D_\chi+1)}}} \\ & +  \Omega_2 \|f\|_\infty \frac{\sqrt{\log(C_\chi)+\frac{3\big(D_\chi +2/3\big)}{2(D_\chi +1)} \log(N)+\log(2/p)} }{ N^{\frac{1}{2(D_\chi+1)}} }   \\
& +  \Omega_3  \frac{1}{N^{\frac{1}{2(D_\chi + 1)}}} + \Omega_4  \frac{\|f\|_\infty}{N^{\frac{1}{2(D_\chi + 1)}}} +  \Omega_5 \frac{ L_f } { N^{\frac{1}{2(D_\chi+1)}}} \\
& + \Omega_6 \frac{\sqrt{\log(2/p)}}{\sqrt{N}} +  \Omega_{7} \|f\|_\infty \frac{\sqrt{\log(2/p)}}{\sqrt{N}} + \Omega_{8} \frac{\sqrt{\log(N)}}{\sqrt{N}} + \Omega_{9} \|f\|_\infty \frac{\sqrt{\log(N)}}{\sqrt{N}} \\
& + \Omega_{10} \frac{1}{\sqrt{N}} + \Omega_{11} \frac{ \|f\|_\infty}{\sqrt{N}} + \textcolor{black}{ \Omega_{12}\varepsilon}.
\end{align*}
\endgroup
where we define 
\begingroup
\allowdisplaybreaks
\begin{align}
\label{eq:defConstantsUniform}
& \Omega_1 :=   \sum_{l=1}^{T}   L_{\Psi^{(l)}} \frac{  (1 + \|W\|_\infty)  C_\chi ( L_{\Phi^{{(l)}}} B_1^{(l-1)}+ \|\Phi^{{(l)}}(0,0)\|_\infty ) \frac{1}{\sqrt{2}}     }{\cmin  }   \prod_{l' = l+1}^{T} K^{(l')} \\
& \Omega_2 := \sum_{l=1}^{T}   L_{\Psi^{(l)}} \frac{  (1 + \|W\|_\infty)  C_\chi  L_{\Phi^{{(l)}}} B_2^{(l-1)} \frac{1}{\sqrt{2}}    }{\cmin  } \prod_{l' = l+1}^{T} K^{(l')} \nonumber
\\
& \Omega_3 := \sum_{l=1}^{T}   L_{\Psi^{(l)}} \frac{ 2  L_W( L_{\Phi^{(l)}} B_1^{(l-1)} + \|\Phi(0,0)\|_\infty)  }{\cmin  } +  L_{\Psi^{(l)}} \frac{    L_{\Phi^{(l)}} Z_1^{(l-1)}  (1+ \|W\|_\infty) }{ \cmin  } \prod_{l' = l+1}^{T} K^{(l')}  \nonumber \\
& \Omega_4 : = \sum_{l=1}^{T}   L_{\Psi^{(l)}} \frac{ 2  L_W L_{\Phi^{(l)}} B_2^{(l-1)}   }{\cmin}   \prod_{l' = l+1}^{T} K^{(l')} + L_{\Psi^{(l)}} \frac{    2L_{\Phi^{(l)}} Z_2^{(l-1)}   (1+ \|W\|_\infty) }{\cmin  }  \prod_{l' = l+1}^{T} K^{(l')} \nonumber
\\
& \Omega_5 :=  \sum_{l=1}^{T}   L_{\Psi^{(l)}} \frac{    2L_{\Phi^{(l)}} Z_3^{(l-1)} L_f (1+ \|W\|_\infty)}{\cmin  }   \prod_{l' = l+1}^{T} K^{(l')} \nonumber \\
&  \Omega_6   := \sum_{l=1}^{T}   L_{\Psi^{(l)}} \frac{ L_{\Phi^{{(l)}}}  B_1^{(l-1)} + \|\Phi^{{(l)}}(0,0)\|_\infty }{ \mathrm{d}_{\mathrm{min}}}  \nonumber \\ 
& +  L_{\Psi^{(l)}} \frac{ L_{\Phi^{{(l)}}}  B_1^{(l-1)} + \|\Phi^{{(l)}}(0,0)\|_\infty }{ \mathrm{d}_{\mathrm{min}}}  \big(\sqrt{2}\cmax+\zeta\cl \big)  \prod_{l' = l+1}^{T} K^{(l')} \nonumber 
\\
&  \Omega_7 := \sum_{l=1}^{T}   L_{\Psi^{(l)}}  \frac{  L_{\Phi^{{(l)}}}
B_2^{(l-1)}}{ \mathrm{d}_{\mathrm{min}} } +   L_{\Psi^{(l)}}  \frac{L_{\Phi^{{(l)}}} B_2^{(l-1)}}{ \mathrm{d}_{\mathrm{min}}} {\big(\sqrt{2}\cmax+\zeta\cl \big)}  \prod_{l' = l+1}^{T} K^{(l')} \nonumber
\\
& \Omega_{8} := \sum_{l=1}^{T}   L_{\Psi^{(l)}} \frac{ L_{\Phi^{{(l)}}}  B_1^{(l-1)} + \|\Phi^{{(l)}}(0,0)\|_\infty }{ \mathrm{d}_{\mathrm{min}}}     \prod_{l' = l+1}^{T} K^{(l')} \nonumber
\\
& \Omega_{9} := \sum_{l=1}^{T}   L_{\Psi^{(l)}}  \frac{  L_{\Phi^{{(l)}}} B_2^{(l-1)}}{ \mathrm{d}_{\mathrm{min}} }   \prod_{l' = l+1}^{T} K^{(l')} \nonumber
\\
& \Omega_{10} := \sum_{l=1}^{T}   L_{\Psi^{(l)}} \frac{ L_{\Phi^{{(l)}}}  B_1^{(l-1)} + \|\Phi^{{(l)}}(0,0)\|_\infty }{ \mathrm{d}_{\mathrm{min}}}  \frac{ \zeta\cl \big(\sqrt{\log (C_\chi)} +  \sqrt{D_\chi}\big) }{1}   \prod_{l' = l+1}^{T} K^{(l')} \nonumber
\\
    & \Omega_{11} := \sum_{l=1}^{T}   L_{\Psi^{(l)}}  \frac{L_{\Phi^{{(l)}}} B_2^{(l-1)}}{ \mathrm{d}_{\mathrm{min}}}  \frac{ \zeta\cl \big(\sqrt{\log (C_\chi)} +  \sqrt{D_\chi}\big) }{1}   \prod_{l' = l+1}^{T} K^{(l')} \nonumber \\ 
   & \textcolor{black}{ \Omega_{12} :=  \sum_{l=1}^T  L_{\Psi^{(l)}}  {\left( \frac{\|\Phi^{(l)}(f^{(l-1)},f^{(l-1)})\|}{\cmin} +  L_\Phi   \|f^{(l-1)}\|_\infty  + \|\Phi(0,0)\|_\infty\right)}   \prod_{l' = l+1}^{T} K^{(l')} +  \prod_{l = 1}^{T} K^{(l)}} \nonumber
\end{align}
\endgroup
\end{proof}

In the following result, we generalize \Cref{{thm:convwithoutpooling}} to accommodate for a MPNN with a pooling layer applied after the its final layer. 

\begin{corollary}
\label{cor:convAfterPooling}
Let $(\chi,d, \P) $ be a metric-probability space and
$W$ be an admissible graphon. 
 Consider a graph-signal $\{G,\mathbf{f}\} \sim_\nu \{W,f\}$ with $N$ nodes and corresponding graph features, where $N$ satisfies \Cref{eq:lowerBoundGraphSizeN}. If the event $\mathcal{F}_{\rm Lip}^p$ from \Cref{lemma:C2} occurs, then for every MPNN $\Theta$ and every $f:\chi \to \mathbb{R}^{F}$ with Lipschitz constant $L_f$, 
\[
\begin{aligned}
\Big\| \Theta_G^P(\mathbf{f}) - \Theta_W^P(f) \Big\|_\infty^2 & \leq     S_1 \frac{\log(2/p)}{N^{\frac{1}{D_\chi+1}}} + S_2 \frac{\log(2/p)}{N} \\
& +  \left( \textcolor{black}{\Omega_{12}\varepsilon + } S_3 \frac{1}{N^{\frac{1}{2(D_\chi+1)}}} + S_4 \frac{ \sqrt{\log(N)}}{N^{\frac{1}{2(D_\chi+1)}}} + S_5 \frac{1}{\sqrt{N}} +  S_6 \frac{\sqrt{\log(N)}}{\sqrt{N}} 
\right)^2,
\end{aligned}
\]
where the constants are defined in \Cref{eq:constantsS1toS4} below.
\end{corollary}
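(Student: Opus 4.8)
The plan is to deduce \Cref{cor:convAfterPooling} from \Cref{thm:convwithoutpooling} by tracking the effect of the global mean-pooling step and the post-pooling layer $\Upsilon$. First I would write the MPNN output after pooling as $\Theta_G^P(\mathbf f)=\Upsilon\big(\tfrac1N\sum_{i=1}^N \mathbf f_i^{(T)}\big)$ and the cMPNN output as $\Theta_W^P(f)=\Upsilon\big(\int_\chi f^{(T)}(x)\,d\mu(x)\big)$, and split
\[
\Big\|\tfrac1N\textstyle\sum_i \mathbf f_i^{(T)}-\int_\chi f^{(T)}\,d\mu\Big\|_\infty
\le \Big\|\tfrac1N\textstyle\sum_i \big(\mathbf f_i^{(T)}-f^{(T)}(X_i)\big)\Big\|_\infty
+\Big\|\tfrac1N\textstyle\sum_i f^{(T)}(X_i)-\int_\chi f^{(T)}\,d\mu\Big\|_\infty .
\]
The first term is at most $\d\big(\Theta_A^{(T)}(\mathbf f),\Theta_W^{(T)}(f)\big)$, which is exactly what \Cref{thm:convwithoutpooling} bounds in the event $\mathcal F_{\rm Lip}^p$. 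For the second term I would invoke one more Monte Carlo concentration (Hoeffding on the i.i.d.\ samples $f^{(T)}(X_i)$, whose range is controlled by $\|f^{(T)}\|_\infty\le B_1^{(T)}+B_2^{(T)}\|f\|_\infty$ via \Cref{lemma:RecRelNorm}), intersecting this event with $\mathcal F_{\rm Lip}^p$ at the cost of enlarging the failure probability (absorbed into the $2/p$ inside the logarithm); this contributes a term of order $\|f^{(T)}\|_\infty\sqrt{\log(2/p)}/\sqrt N$. Then Lipschitz continuity of $\Upsilon$ (constant $K$) gives
\[
\big\|\Theta_G^P(\mathbf f)-\Theta_W^P(f)\big\|_\infty \le K\Big(\d\big(\Theta_A^{(T)}(\mathbf f),\Theta_W^{(T)}(f)\big)+\Big\|\tfrac1N\textstyle\sum_i f^{(T)}(X_i)-\int_\chi f^{(T)}\,d\mu\Big\|_\infty\Big).
\]

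Next I would substitute the bound from \Cref{thm:convwithoutpooling}, which is a sum of terms of the shape $\Omega_j$ times one of the factors $N^{-1/(2(D_\chi+1))}$, $N^{-1/(2(D_\chi+1))}\sqrt{\log(C_\chi)+c\log N+\log(2/p)}$, $N^{-1/2}$, $N^{-1/2}\sqrt{\log N}$, $N^{-1/2}\sqrt{\log(2/p)}$, or $\varepsilon$. The goal is to reorganize this into the compact form in the statement: collect all the $\sqrt{\log(2/p)}$-carrying contributions (the $p$-dependent noise), square-complete them so that the squared $p$-dependence appears as $\frac{\log(2/p)}{N^{1/(D_\chi+1)}}$ and $\frac{\log(2/p)}{N}$ (the $S_1,S_2$ terms), and lump the remaining $p$-free contributions — the $\log(C_\chi)$, $\sqrt{\log N}$, pure $N^{-1/2}$ and $N^{-1/(2(D_\chi+1))}$ pieces, plus the $\Omega_{12}\varepsilon$ term — into the bracket that is then squared. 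Concretely I would use $(a+b)^2\le 2a^2+2b^2$ (or just expand) to separate ``$p$-dependent'' from ``$p$-free'', bound $\log(2N/p)\le \log N+\log(2/p)$ and $\sqrt{x+y}\le\sqrt x+\sqrt y$ to disentangle the mixed logarithms, and absorb the $\sqrt{\log(C_\chi)+\tfrac{3(D_\chi+2/3)}{2(D_\chi+1)}\log N}$ factor attached to $\Omega_1,\Omega_2$ into the $S_4\sqrt{\log N}/N^{1/(2(D_\chi+1))}$ and $S_3/N^{1/(2(D_\chi+1))}$ terms (the constant part $\sqrt{\log C_\chi}$ going into $S_3$). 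The constants $S_1,\dots,S_6$ are then explicit polynomial expressions in $K$, the $\Omega_j$, $C_\chi$, $D_\chi$, $\cl$, $\cmax$, $\cmin$, which I would collect into the display labelled \eqref{eq:constantsS1toS4}.

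The main obstacle — though it is bookkeeping rather than conceptual — is the careful separation of the $\log(2/p)$ dependence so that after squaring it is genuinely linear in $\log(2/p)$ with the stated $N$-powers, rather than producing cross terms like $\sqrt{\log(2/p)}\cdot\sqrt{\log N}$ that do not fit the template. The way around this is to note that any such cross term $\sqrt{\log(2/p)}\,\sqrt{\log N}/N^{s}$ is dominated, via AM–GM $2\sqrt{ab}\le a+b$, by $\tfrac12\log(2/p)/N^{r}+\tfrac12 (\log N)\cdot(\text{something})/N^{r'}$ with the exponents chosen so the first piece feeds $S_1$ or $S_2$ and the second feeds $S_4^2$ or $S_6^2$; alternatively one simply keeps $\log(2/p)$ and $\log N$ together inside a single square-root factor and only at the very end uses $\sqrt{x+y}\le\sqrt x+\sqrt y$. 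I expect the remainder to follow ``the exact lines'' of the corresponding pooling corollary in \citet{maskey2022generalization}, with the only genuinely new ingredient being the extra $\Omega_{12}\varepsilon$ noise term, which passes through untouched since it already appears additively (and $p$-independently) in \Cref{thm:convwithoutpooling}.
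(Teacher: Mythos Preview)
Your decomposition of the pooled quantity into $\d(\Theta_A^{(T)}(\mathbf f),\Theta_W^{(T)}(f))$ plus a Monte Carlo term for $\tfrac1N\sum_i f^{(T)}(X_i)-\int_\chi f^{(T)}\,d\mu$ is correct, as is the plan to square via $(a+b)^2\le 2a^2+2b^2$ and separate the $\log(2/p)$-carrying pieces. The gap is in how you control the Monte Carlo term. A single Hoeffding bound on the i.i.d.\ samples $f^{(T)}(X_i)$ would give you an event that depends on $f^{(T)}$, and $f^{(T)}$ depends on the MPNN $\Theta$. The corollary, however, requires the bound to hold \emph{for every} $\Theta$ on the fixed event $\mathcal F_{\rm Lip}^p$; intersecting one Hoeffding event per $\Theta$ with $\mathcal F_{\rm Lip}^p$ does not salvage uniformity.

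The paper instead reuses the covering structure already built into $\mathcal F_{\rm Lip}^p$ (specifically the event $\mathcal E_{\rm Lip}^p$ inside it, which controls $\big|\tfrac1N\sum_i\mathbbm 1_{I_j}(X_i)-\mu(I_j)\big|$ for every cell $I_j$ of a fixed covering). Approximating $f^{(T)}$ by the step function $\sum_j f^{(T)}(z_j)\mathbbm 1_{I_j}$ and paying $2rL_{f^{(T)}}$ for the approximation error gives a concentration bound that is uniform over all Lipschitz $f^{(T)}$, at the price of the slower rate $N^{-1/(2(D_\chi+1))}$ rather than your $N^{-1/2}$. This is exactly why the stated constants $S_1,S_3,S_4$ contain the Lipschitz data $Z_1^{(T)},Z_2^{(T)},Z_3^{(T)}$ of $f^{(T)}$ (from \Cref{lemma:lip mpnn output}) and the covering constant $C_\chi$, and why the pooling contribution lands in the $N^{-1/(D_\chi+1)}$ slot of $S_1$ rather than in $S_2$. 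With your Hoeffding approach the constants in \eqref{eq:constantsS1toS4} would simply not come out as stated. The rest of your bookkeeping (disentangling the logarithms, the $\Omega_{12}\varepsilon$ passing through untouched) is fine.
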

\begin{proof}
We follow the lines of the proof of Corollary B.17 in \cite{maskey2022generalization}. We have
\[
\begin{aligned}
 & \Big\| \Theta_G^P(\mathbf{f}) - \Theta_W^P(f) \Big\|_\infty  
 \\
& \leq \d\big(\Theta_G(\mathbf{f}), \Theta_W(f)\big) \\
& +   N^{-\frac{1}{2(D_\chi+1)}}\Bigg(2 (Z_1^{(T)}+Z_2^{(T)}\|f\|_\infty+Z_3^{(T)}L_f) + \frac{C_\chi}{\sqrt{2}} (B_1^{(T)} + B_2^{(T)} \|f\|_\infty )  
 \\
& \cdot \sqrt{\log(C_\chi) + \frac{D_\chi}{2(D_\chi+1)} \log(N) + \log(2/p)}   \Bigg).
\end{aligned}
\]
With \Cref{thm:convwithoutpooling}, we get
\begin{equation}
    \label{eq:corC14-1}
\begin{aligned}
& \big\| \Theta_G^P(\mathbf{f}) - \Theta_W^P(f)  \big\|_\infty \\  & \leq  \Omega_1 \frac{ \sqrt{\log(C_\chi)+\frac{3\big(D_\chi +2/3\big)}{2(D_\chi +1)} \log(N)+\log(2/p)} }{ N^{\frac{1}{2(D_\chi+1)}}}  \\ & +  \Omega_2 \|f\|_\infty \frac{\sqrt{\log(C_\chi)+\frac{3\big(D_\chi +2/3\big)}{2(D_\chi +1)} \log(N)+\log(2/p)} }{ N^{\frac{1}{2(D_\chi+1)}} }   \\
& +  \Omega_3  \frac{1}{N^{\frac{1}{2(D_\chi + 1)}}} + \Omega_4  \frac{\|f\|_\infty}{N^{\frac{1}{2(D_\chi + 1)}}} +  \Omega_5 \frac{ L_f } { N^{\frac{1}{2(D_\chi+1)}}}  + \Omega_6 \frac{\sqrt{\log(2/p)}}{\sqrt{N}} +  \Omega_{7} \|f\|_\infty \frac{\sqrt{\log(2/p)}}{\sqrt{N}}\\ & + \Omega_{8} \frac{\sqrt{\log(N)}}{\sqrt{N}}  + \Omega_{9} \|f\|_\infty \frac{\sqrt{\log(N)}}{\sqrt{N}} \\
& + \Omega_{10} \frac{1}{\sqrt{N}} + \Omega_{11} \frac{ \|f\|_\infty}{\sqrt{N}}
 + \textcolor{black}{\Omega_{12}\varepsilon + }
 N^{-\frac{1}{2(D_\chi+1)}}\Bigg(2 (Z_1^{(T)}+Z_2^{(T)}\|f\|_\infty+Z_3^{(T)}L_f) \\ & + \frac{C_\chi}{\sqrt{2}} (B_1^{(T)} + B_2^{(T)} \|f\|_\infty )   \\
& \cdot  \sqrt{\log(C_\chi) + \frac{D_\chi}{2(D_\chi+1)} \log(N) + \log(2/p)}   \Bigg).
\end{aligned}
\end{equation}
Now we use the inequality
\[
\left(\sum_{i=1}^n a_i\right)^2 \leq n \sum_{i=1}^n a_i^2 
\]
for any $a_i \in \mathbb{R}_+$, $i = 1,\ldots, N$, and square both sides of \Cref{eq:corC14-1} to get three summands. The first two summands depend on $p$.
\begin{equation}
    \label{eq:thmC10-constants} 
\begin{aligned}
 & \big\| \Theta_G^P(\mathbf{f}) - \Theta_W^P(f)  \big\|_\infty^2  \\ & \leq 
3 \big( \Omega_1 + \Omega_2 \|f\|_\infty + \frac{C_\chi}{\sqrt{2}}(B_1^{T} + B_2^{T}\|f\|_\infty)\big)^2 \frac{\log(2/p)}{N^{\frac{1}{D_\chi + 1}}}  \\ & +3 (\Omega_6 + \Omega_7 \|f\|_\infty )^2
 \frac{\log(2/p)}{N } \\
& +  3\Bigg(\Omega_1 \frac{ \sqrt{\log(C_\chi)+\frac{3\big(D_\chi +2/3\big)}{2(D_\chi +1)} \log(N)} }{ N^{\frac{1}{2(D_\chi+1)}}}  +  \Omega_2 \|f\|_\infty \frac{\sqrt{\log(C_\chi)+\frac{3\big(D_\chi +2/3\big)}{2(D_\chi +1)} \log(N)} }{ N^{\frac{1}{2(D_\chi+1)}} }   \\
& +  \Omega_3 \frac{ 1 } { N^{\frac{1}{2(D_\chi+1)}}} +  \Omega_4 \frac{ \|f\|_\infty } { N^{\frac{1}{2(D_\chi+1)}}} +  \Omega_5 \frac{ L_f } { N^{\frac{1}{2(D_\chi+1)}}} \\
&  + \Omega_8 \frac{\sqrt{\log(N)}}{\sqrt{N}} + \Omega_9 \|f\|_\infty \frac{\sqrt{\log(N)}}{\sqrt{N}} \\
& + \Omega_{10} \frac{1}{\sqrt{N}} + \Omega_{11}\frac{ \|f\|_\infty }{\sqrt{N}} + \textcolor{black}{\Omega_{12}\varepsilon}
\\
 & + 
 N^{-\frac{1}{2(D_\chi+1)}}\Bigg(2 (Z_1^{(T)}+Z_2^{(T)}\|f\|_\infty+Z_3^{(T)}L_f) + \frac{C_\chi}{\sqrt{2}} (B_1^{(T)} + B_2^{(T)} \|f\|_\infty )   \\
& \cdot  \sqrt{\log(C_\chi) + \frac{D_\chi}{2(D_\chi+1)} \log(N)}   \Bigg)\Bigg)^2 \\
& =:  H_2 \log(2/p) + H_1,
\end{aligned}
\end{equation}
 where we separate the terms depending on the failure probability $p$ and the others to facilitate the following proofs in expectation.
 
We can further simplify this and separate the different terms depending on powers of $N$ to get 
\begin{equation}
    \begin{aligned}
    & \leq   
3 \big(\Omega_1+\frac{C_\chi}{\sqrt{2}}B_1^{T} + (\Omega_2+B_2^{T}) \|f\|_\infty \big)^2 \frac{\log(2/p)}{N^{\frac{1}{D_\chi + 1}}}  +3 \big(\Omega_6 + \Omega_7 \|f\|_\infty\big)^2
 \frac{\log(2/p)}{N } \\
& +  3\Bigg(\frac{\Omega_1  \sqrt{\log(C_\chi)} + \Omega_2 \|f\|_\infty \sqrt{\log(C_\chi)}+
\Omega_3 + \Omega_4  \|f\|_\infty  + \Omega_5 L_f} { N^{\frac{1}{2(D_\chi+1)}}} \\
& + \frac{\Bigg(2 (Z_1^{(T)}+Z_2^{(T)}\|f\|_\infty+Z_3^{(T)}L_f) + C_\chi\sqrt{2}^{-1} (B_1^{(T)} + B_2^{(T)} \|f\|_\infty )     \cdot \Big( \sqrt{\log(C_\chi)}    \Big)   \Bigg) }{ N^{\frac{1}{2(D_\chi+1)}}}
\\ & +  \frac{C_\chi\sqrt{2}^{-1} (B_1^{(T)} + B_2^{(T)} \|f\|_\infty ) \sqrt{\frac{D_\chi}{2(D_\chi+1)}} \sqrt{ \log(N)} }{N^{\frac{1}{2(D_\chi+1)}}}   + \Omega_1 \frac{\sqrt{\frac{3\big(D_\chi +2/3\big)}{2(D_\chi +1)}} \sqrt{ \log(N)} }{ N^{\frac{1}{2(D_\chi+1)}}}\\& + \Omega_2 \|f\|_\infty\frac{ \sqrt{\frac{3\big(D_\chi +2/3\big)}{2(D_\chi +1)}} \sqrt{ \log(N)}  }{ N^{\frac{1}{2(D_\chi+1)}} } 
\\ & + \Omega_8 \frac{\sqrt{\log(N)}}{\sqrt{N}} + \Omega_9 \|f\|_\infty \frac{\sqrt{\log(N)}}{\sqrt{N}}     + \Omega_{10} \frac{1}{\sqrt{N}} + \Omega_{11}\frac{ \|f\|_\infty }{\sqrt{N}}
 \textcolor{black}{ +\Omega_{12}\varepsilon}\Bigg)^2 \\
& =: S_1 \frac{\log(2/p)}{N^{\frac{1}{2(D_\chi+1)}}} + S_2 \frac{\log(2/p)}{N} \\& + \left( \textcolor{black}{\Omega_{12}\varepsilon + }  S_3 \frac{1}{N^{\frac{1}{2(D_\chi+1)}}} + S_4 \frac{ \sqrt{\log(N)}}{N^{\frac{1}{2(D_\chi+1)}}} + S_5 \frac{1}{\sqrt{N}} +  S_6 \frac{\sqrt{\log(N)}}{\sqrt{N}} 
\right)^2,
    \end{aligned}
\end{equation}
where
\begin{equation}
    \label{eq:constantsS1toS4}
    \begin{aligned}
     S_1 &  := 3 \big(\Omega_1+\frac{C_\chi}{\sqrt{2}}B_1^{T} + (\Omega_2+B_2^{T}) \|f\|_\infty \big)^2 \\
     S_2  & := 3 \big(\Omega_6  + \Omega_7\|f\|_\infty\big)^2 \\
   S_3  &  := \sqrt{3} \Omega_1  \sqrt{\log(C_\chi)} + \Omega_2 \|f\|_\infty \sqrt{\log(C_\chi)}+
\Omega_3 + \Omega_4  \|f\|_\infty  + \Omega_5 L_f  \\
& + \Bigg(2 (Z_1^{(T)}+Z_2^{(T)}\|f\|_\infty+Z_3^{(T)}L_f) + C_\chi\sqrt{2}^{-1} (B_1^{(T)} + B_2^{(T)} \|f\|_\infty )     \cdot \Big( \sqrt{\log(C_\chi)}    \Big)   \Bigg) 
\\
  S_4 & :=     C_\chi\sqrt{2}^{-1} (B_1^{(T)} + B_2^{(T)} \|f\|_\infty ) \sqrt{\frac{D_\chi}{2(D_\chi+1)}}      + \Omega_1  \sqrt{\frac{3\big(D_\chi +2/3\big)}{2(D_\chi +1)}}  \\ & + \Omega_2 \|f\|_\infty  \sqrt{\frac{3\big(D_\chi +2/3\big)}{2(D_\chi +1)}}  \\
  S_5 & := \Omega_{10} + \Omega_{11}\|f\|_\infty \\
  S_6 & := \Omega_{8} + \Omega_{9}\|f\|_\infty.
    \end{aligned}
\end{equation}
\end{proof}

\section{Generalization Bound}
\label{appendix: Proof Gen Bound}

The following lemma is akin to Lemma B.10. in \citep{maskey2022generalization}, and bounds determistically the norm of the output of a graph MPNN.

\begin{lemma}
\label{lemma:DeterministicMPNNBound}
Let $(\chi,d, \P) $ be a metric-probability space,
$W$ be an admissible graphon and consider a MPNN $\Theta = \big((\Phi^{(l)})_{l=1}^T, (\Psi^{(l)})_{l=1}^T \big)$. Consider a   metric-space signal  $f: \chi \to \mathbb{R}^F$ with $\|f\|_\infty < \infty$.
 Consider a graph-signal $\{G,\mathbf{f}\} \sim_\nu \{W,f\}$ with $N$ nodes and corresponding graph features.  
Then, 
\[
 \| \Theta_G(\mathbf{f})\|_{\infty;\infty} \leq A' + A'' \|f\|_\infty,
\]
where
 {
\[
\begin{aligned}
A' & = \sum_{l=1}^{T} \Big(L_{\Psi^{(l)}}\|\Phi^{(l)}(  0,0) \|_\infty
+ \|\Psi^{(l)} (0,0 )\|_\infty
 \Big) 
 \prod_{l' = l+1}^T 
   L_{\Psi^{(l')}} \max \big(1, L_{\Phi^{(l')}}\big) 
   \end{aligned}
\]
and
\[
A'' = \prod_{l=1}^T 
L_{\Psi^{(l)}} \max \big(1, L_{\Phi^{(l)}}\big).
\]}

\end{lemma}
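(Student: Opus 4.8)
\textbf{Proof plan for \Cref{lemma:DeterministicMPNNBound}.}
The plan is to prove the bound by induction on the layer index $l$, tracking a deterministic bound on $\|\mathbf{f}^{(l)}\|_{\infty;\infty}$ that has exactly the linear-in-$\|f\|_\infty$ shape $B_1^{(l)} + B_2^{(l)}\|f\|_\infty$, and then to read off the final claim with the post-pooling layer dropped (since here $\Theta_G$ denotes the MPNN output before $\Upsilon$, or with $\Upsilon$ absorbed into the last $\Psi$, according to the convention in \Cref{def:MPNN}). This is the discrete-graph analogue of \Cref{lemma:RecRelNorm}; the only structural difference is that in the graph setting we aggregate with $M_\A$ or $S_\A$ rather than $M_W$ or $S_W$, but the key point is that both discrete aggregation operators are averages (or normalized sums) of the message kernel against nonnegative weights summing to at most $1$ per node, hence they are $\ell^\infty\to\ell^\infty$ contractions in the relevant sense: $\|M_\A \mathbf{U}\|_{\infty;\infty}\le \|\mathbf{U}\|_{\infty;\infty}$ and likewise for $S_\A$ because entries of $\A$ lie in $\{0,1\}$ and $\tfrac1N\sum_{j}a_{i,j}\le 1$.

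\textbf{Base case and inductive step.} For $l=0$ we have $\mathbf{f}^{(0)}=S^\X f$, so $\|\mathbf{f}^{(0)}\|_{\infty;\infty}\le \|f\|_\infty$; set $B_1^{(0)}=0$, $B_2^{(0)}=1$. For the inductive step, suppose $\|\mathbf{f}^{(l)}\|_{\infty;\infty}\le B_1^{(l)}+B_2^{(l)}\|f\|_\infty$. First bound the message kernel: for every $i,j$,
\[
\|\Phi^{(l+1)}(\mathbf{f}^{(l)}_i,\mathbf{f}^{(l)}_j)\|_\infty
\le L_{\Phi^{(l+1)}}\|(\mathbf{f}^{(l)}_i,\mathbf{f}^{(l)}_j)\|_\infty + \|\Phi^{(l+1)}(0,0)\|_\infty
\le L_{\Phi^{(l+1)}}\|\mathbf{f}^{(l)}\|_{\infty;\infty} + \|\Phi^{(l+1)}(0,0)\|_\infty,
\]
using Lipschitz continuity of $\Phi^{(l+1)}$ with respect to the $\ell^\infty$-metric and the triangle inequality. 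Then the aggregation step gives $\|\mathbf{m}^{(l+1)}_i\|_\infty=\|\mathcal{A}_\A(\mathbf{u}^{(l+1)})_i\|_\infty\le \max_{i,j}\|\mathbf{u}^{(l+1)}_{i,j}\|_\infty$ by the contraction property noted above. Finally, the update step gives, for each $i$,
\[
\|\mathbf{f}^{(l+1)}_i\|_\infty
= \|\Psi^{(l+1)}(\mathbf{f}^{(l)}_i,\mathbf{m}^{(l+1)}_i)\|_\infty
\le L_{\Psi^{(l+1)}}\max\bigl(\|\mathbf{f}^{(l)}_i\|_\infty,\|\mathbf{m}^{(l+1)}_i\|_\infty\bigr) + \|\Psi^{(l+1)}(0,0)\|_\infty,
\]
again using the $\ell^\infty$-Lipschitz bound on $\Psi^{(l+1)}$. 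Combining, and using $\|\mathbf{m}^{(l+1)}_i\|_\infty\le L_{\Phi^{(l+1)}}\|\mathbf{f}^{(l)}\|_{\infty;\infty}+\|\Phi^{(l+1)}(0,0)\|_\infty$ together with $\|\mathbf{f}^{(l)}_i\|_\infty\le\|\mathbf{f}^{(l)}\|_{\infty;\infty}$, the maximum is bounded by $\max(1,L_{\Phi^{(l+1)}})\|\mathbf{f}^{(l)}\|_{\infty;\infty}+\|\Phi^{(l+1)}(0,0)\|_\infty$, so
\[
\|\mathbf{f}^{(l+1)}\|_{\infty;\infty}\le L_{\Psi^{(l+1)}}\max(1,L_{\Phi^{(l+1)}})\|\mathbf{f}^{(l)}\|_{\infty;\infty} + L_{\Psi^{(l+1)}}\|\Phi^{(l+1)}(0,0)\|_\infty + \|\Psi^{(l+1)}(0,0)\|_\infty.
\]
Plugging in the inductive hypothesis yields the recursion $B_2^{(l+1)}=L_{\Psi^{(l+1)}}\max(1,L_{\Phi^{(l+1)}})B_2^{(l)}$ and $B_1^{(l+1)}=L_{\Psi^{(l+1)}}\max(1,L_{\Phi^{(l+1)}})B_1^{(l)}+L_{\Psi^{(l+1)}}\|\Phi^{(l+1)}(0,0)\|_\infty+\|\Psi^{(l+1)}(0,0)\|_\infty$. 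Unrolling these two recursions from $l=0$ gives the closed forms $A''=B_2^{(T)}=\prod_{l=1}^T L_{\Psi^{(l)}}\max(1,L_{\Phi^{(l)}})$ and $A'=B_1^{(T)}=\sum_{l=1}^T\bigl(L_{\Psi^{(l)}}\|\Phi^{(l)}(0,0)\|_\infty+\|\Psi^{(l)}(0,0)\|_\infty\bigr)\prod_{l'=l+1}^T L_{\Psi^{(l')}}\max(1,L_{\Phi^{(l')}})$, which is exactly the claimed statement since $\|\Theta_G(\mathbf{f})\|_{\infty;\infty}=\|\mathbf{f}^{(T)}\|_{\infty;\infty}$.

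\textbf{Main obstacle.} The only subtle point is the contraction property of the aggregation operators in the mean case: $M_\A$ divides by the node degree $\mathrm{d}_i$, so one must check $\tfrac1{\mathrm{d}_i}\sum_j a_{i,j}=1$ (true whenever $\mathrm{d}_i>0$), which requires that every node have at least one neighbor; on a sampled graph this holds here because $W(x,x)=1$ by \Cref{ass:graphon_diagonal_1} forces $a_{i,i}=1$ almost surely, so $\mathrm{d}_i\ge 1$ and $M_\A$ is a genuine weighted average. (For normalized sum aggregation $S_\A$ the weights $\tfrac1N a_{i,j}$ sum to $\mathrm{d}_i/N\le 1$, so it is a sub-averaging operator and the same contraction bound holds a fortiori; the constants $A',A''$ above are then slightly loose but still valid.) Everything else is a mechanical unrolling of the two scalar recursions, identical in structure to the proof of Lemma B.10 in \citep{maskey2022generalization}.
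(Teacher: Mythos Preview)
Your proposal is correct and follows essentially the same approach as the paper's proof: both establish the per-layer recursion $\|\mathbf{f}^{(l+1)}\|_{\infty;\infty}\le L_{\Psi^{(l+1)}}\max(1,L_{\Phi^{(l+1)}})\|\mathbf{f}^{(l)}\|_{\infty;\infty}+L_{\Psi^{(l+1)}}\|\Phi^{(l+1)}(0,0)\|_\infty+\|\Psi^{(l+1)}(0,0)\|_\infty$ via the Lipschitz bounds on $\Phi^{(l+1)},\Psi^{(l+1)}$ and the contraction property of the aggregation, then unroll it (the paper invokes \Cref{lemma:RecRecGen}, you do it by hand). Your handling of the ``no isolated nodes'' issue through \Cref{ass:graphon_diagonal_1} is actually more explicit than the paper's, which simply asserts the assumption without justification.
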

\begin{proof}
Let $l=0,\ldots, T-1$. We have
\[
\|\mathbf{f}^{(l+1)}\|_{\infty; \infty} = \max_{i=1, \ldots, N} \|\mathbf{f}_i^{(l+1)}\|_\infty,
\]
where $ \mathbf{f}_i^{(l+1)} = \Psi^{(l+1)} ( \mathbf{f}_i^{(l)}, \mathbf{m}^{(l+1)}_i ) $ with $ \mathbf{m}^{(l+1)}_i = M_\A \big(  \Phi^{(l+1)}( \mathbf{f}^{(l)}, \mathbf{f}^{(l)} )\big)(X_i) $. By using the Lipschitz continuity of $\Psi^{(l+1)}$, we get  
\begin{equation}
    \label{eq:lemmab19-1}
    \begin{aligned}
\|\mathbf{f}_i^{(l+1)}\|_\infty & 
 \leq   \|\Psi^{(l+1)} ( \mathbf{f}_i^{(l)}, \mathbf{m}^{(l+1)}_i ) - \Psi^{(l+1)} (0,0 )\|_\infty +  \|\Psi^{(l+1)} (0,0 )\|_\infty  \\
& \leq 
  L_{\Psi^{(l+1)}}\max (\| \mathbf{f}_i^{(l)} \|_\infty, \|\mathbf{m}_i^{(l+1)}\|_\infty) 
+ \|\Psi^{(l+1)} (0,0 )\|_\infty 
\end{aligned}
\end{equation}

For the message term, we calculate
{
\[
\begin{aligned}
  \| \mathbf{m}^{(l+1)}_i \|_\infty & = \left\|  \frac{1}{ \sum_{j=1}^N \A(X_i, X_j) } \sum_{j=1}^N \A(X_i, X_j) \Phi^{(l+1)}(  \mathbf{f}_i^{(l)}, \mathbf{f}_j^{(l)} )  \right\|_\infty \\
& \leq \max_{j=1, \ldots, N}\|\Phi^{(l+1)}(  \mathbf{f}_i^{(l)}, \mathbf{f}_j^{(l)} )  \|_\infty
\end{aligned}
\]
    }

where the inequality follows from Cauchy-Schwarz inequality and the assumption of not having any isolated nodes. We have for every $i=1, \ldots, N$,
 \[
 \begin{aligned}
 \|\Phi^{(l+1)}(  \mathbf{f}_i^{(l)}, \mathbf{f}_j^{(l)} )  \|_\infty & = \|\Phi^{(l+1)}(  \mathbf{f}_i^{(l)}, \mathbf{f}_j^{(l)} ) - \Phi^{(l+1)}(  0,0) + \Phi^{(l+1)}(  0,0) \|_\infty \\
 & \leq  \|\Phi^{(l+1)}(  \mathbf{f}_i^{(l)}, \mathbf{f}_j^{(l)} ) - \Phi^{(l+1)}(  0,0) \|_\infty+ \|\Phi^{(l+1)}(  0,0) \|_\infty   \\ 
   & \leq  
   L_{\Phi^{(l+1)}}   \max\big( \| \mathbf{f}_i^{(l)} \|_\infty, \|\mathbf{f}_j^{(l)}\|_\infty \big) +  \|\Phi^{(l+1)}(  0,0) \|_\infty.
 \end{aligned}
 \]
 Hence,
\begin{equation}
    \label{eq:lemmab19-2}
    \begin{aligned}
    \|\mathbf{m}_i^{(l+1)}\|_\infty &
 \leq  \max_{j=1, \ldots, N}L_{\Phi^{(l+1)}}    \|\mathbf{f}_j^{(l)}\|_\infty  +  \|\Phi^{(l+1)}(  0,0) \|_\infty
    \\
    & \leq  L_{\Phi^{(l+1)}}    \|\mathbf{f}^{(l)}\|_{\infty;\infty}  +  \|\Phi^{(l+1)}(  0,0) \|_\infty.
    \end{aligned}
\end{equation}
By \Cref{eq:lemmab19-1} and \Cref{eq:lemmab19-2}, we have
\[
\begin{aligned}
 \|\mathbf{f}^{(l+1)}\|_{\infty; \infty} & \leq   \max_{i=1, \ldots, N}   L_{\Psi^{(l+1)}}\max (\| \mathbf{f}_i^{(l)} \|_\infty, \|\mathbf{m}_i^{(l+1)}\|_\infty) 
+ \|\Psi^{(l+1)} (0,0 )\|_\infty  \\
 &  \leq   \max_{i=1, \ldots, N}   L_{\Psi^{(l+1)}}\max \big(\| \mathbf{f}_i^{(l)} \|_\infty, (L_{\Phi^{(l+1)}}    \|\mathbf{f}^{(l)}\|_{\infty;\infty}  +  \|\Phi^{(l+1)}(  0,0) \|_\infty)\big) 
\\ &+ \|\Psi^{(l+1)} (0,0 )\|_\infty \\
& =    L_{\Psi^{(l+1)}}\max \big(\| \mathbf{f}^{(l)} \|_{\infty;\infty}, L_{\Phi^{(l+1)}}    \|\mathbf{f}^{(l)}\|_{\infty;\infty}  +  \|\Phi^{(l+1)}(  0,0) \|_\infty\big) 
+ \|\Psi^{(l+1)} (0,0 )\|_\infty \\
& \leq  L_{\Psi^{(l+1)}} \max \big(1, L_{\Phi^{(l+1)}}\big)\| \mathbf{f}^{(l)} \|_{\infty;\infty}   + L_{\Psi^{(l+1)}}\|\Phi^{(l+1)}(  0,0) \|_\infty
+ \|\Psi^{(l+1)} (0,0 )\|_\infty.
\end{aligned}
\]

 Hence, by $\|\mathbf{f}\|_{2;\infty}^2 \leq \|f\|_\infty^2$ and \Cref{lemma:RecRecGen}, we have
 \[
 \begin{aligned}
  \|\mathbf{f}^{(T)}\|_{\infty; \infty}
 & \leq \sum_{l=1}^{T} \Big(L_{\Psi^{(l)}}\|\Phi^{(l)}(  0,0) \|_\infty
+ \|\Psi^{(l)} (0,0 )\|_\infty
 \Big) 
 \prod_{l' = l+1}^T 
   L_{\Psi^{(l')}} \max \big(1, L_{\Phi^{(l')}}\big) .  \\
& +  \|f\|_\infty\prod_{l=1}^T 
L_{\Psi^{(l)}} \max \big(1, L_{\Phi^{(l)}}\big).
 \end{aligned}
 \]
\end{proof}

\begin{theorem}
\label{thm:unifExpValue}
Let $(\chi,d, \P) $ be a metric-probability space and
$W$ be an admissible graphon.   Consider a graph-signal $\{G,\mathbf{f}\} \sim_{ \nu} \{W,f\}$ with $N$ nodes and corresponding graph features. Then,
for every $f:\chi \to \mathbb{R}^{F}$ with Lipschitz constant $L_f$, 
\[
\begin{aligned}
&  \E_{X_1, \ldots, X_N \sim \mu^N} \left[\sup_{\Theta \in \mathrm{Lip}_{L,B}} \left\| 
\Theta^P_G(\mathbf{f}) - 
\Theta^P_W(f) \right\|_\infty^2 \right] \\&  \leq  4(1 + \sqrt{\pi}) \Bigg(  T_1 \frac{1+\log(N)}{N^{\frac{1}{D_\chi+1} }\cmin^2} + T_2 \frac{1+\log(N)}{N \cmin^2}
 + \textcolor{black}{T_3 \frac{1}{\cmin^2} \varepsilon} \Bigg) +  \mathcal{O} \left(   \exp(-N)\right) .
\end{aligned}
\]
where the constants are defined in \Cref{eq:constantsS1toS4}. 
\end{theorem}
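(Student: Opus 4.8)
\textbf{Proof plan for Theorem~\ref{thm:unifExpValue}.}
The plan is to convert the high-probability bound of \Cref{cor:convAfterPooling} into a bound in expectation, while also removing the hypothesis that $N$ satisfies \Cref{eq:lowerBoundGraphSizeN}. First I would split the probability space according to whether the sampled points $X_1,\ldots,X_N$ produce a ``good'' configuration or not. On the good event $\mathcal{F}_{\rm Lip}^p$ from \Cref{lemma:C2}, which has probability at least $1-4p$ whenever $N$ is large enough, \Cref{cor:convAfterPooling} gives the pointwise bound
\[
\big\| \Theta_G^P(\mathbf{f}) - \Theta_W^P(f) \big\|_\infty^2 \le H_2\log(2/p) + H_1,
\]
uniformly over $\Theta \in \mathrm{Lip}_{L,B}$, where $H_1 = \big(\Omega_{12}\varepsilon + S_3 N^{-\frac{1}{2(D_\chi+1)}} + S_4 \sqrt{\log N}\,N^{-\frac{1}{2(D_\chi+1)}} + S_5 N^{-1/2} + S_6\sqrt{\log N}\,N^{-1/2}\big)^2$ is the $p$-independent part and $H_2 = S_1 N^{-\frac{1}{D_\chi+1}} + S_2 N^{-1}$ is the coefficient of $\log(2/p)$. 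On the complementary ``bad'' event, I would use the deterministic bound of \Cref{lemma:DeterministicMPNNBound} together with the analogous bound for the cMPNN to control $\big\| \Theta_G^P(\mathbf{f}) - \Theta_W^P(f) \big\|_\infty^2$ by a constant $M^2$ that depends only on the hypothesis class and $\|f\|_\infty$, not on $N$ or $p$.

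The core step is the layer-cake / tail integration argument. Fix $N$ (assumed for now to satisfy \Cref{eq:lowerBoundGraphSizeN}). For a threshold $t \ge H_1$, the event $\{\sup_{\Theta}\|\Theta_G^P(\mathbf{f})-\Theta_W^P(f)\|_\infty^2 > t\}$ is contained in the complement of $\mathcal{F}_{\rm Lip}^p$ for the choice of $p$ solving $H_2\log(2/p) + H_1 = t$, i.e.\ $p = 2\exp\!\big(-(t-H_1)/H_2\big)$; hence this event has probability at most $4p = 8\exp\!\big(-(t-H_1)/H_2\big)$. Integrating,
\[
\E\big[\sup_\Theta \|\cdot\|_\infty^2\big] \le H_1 + \int_{H_1}^{\infty} 8\exp\!\Big(-\frac{t-H_1}{H_2}\Big)\,dt = H_1 + 8H_2,
\]
provided the tail is genuinely valid for all relevant $t$; when $p$ computed this way exceeds $\tfrac14$ (small $t$) one simply uses the trivial bound $\Pr \le 1$, which only changes constants, and this is where the factor $4(1+\sqrt\pi)$ and the particular grouping of $T_1,T_2,T_3$ come from after absorbing $H_1$, $8H_2$, and the cross terms in $H_1$ (using $ab \le \tfrac12(a^2+b^2)$ to separate $\varepsilon$ from the $N$-decaying terms and collecting $S_1,\dots,S_6$ into $T_1,T_2,T_3$). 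I would then define $T_1$ to collect the $N^{-\frac{1}{D_\chi+1}}$-type contributions (from $S_1$ and the squares of $S_3,S_4$), $T_2$ the $N^{-1}$-type contributions (from $S_2$, $S_5$, $S_6$), and $T_3$ the coefficient of $\varepsilon$ (from $\Omega_{12}$), matching \Cref{eq:constantsS1toS4}.

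Finally, to discharge the lower-bound assumption on $N$: for those finitely many $N$ violating \Cref{eq:lowerBoundGraphSizeN}, the deterministic bound of \Cref{lemma:DeterministicMPNNBound} shows the left-hand side is bounded by a constant, and since \Cref{eq:lowerBoundGraphSizeN} fails only for $N$ below a fixed threshold $N_0$ (depending on $p$ through a $\sqrt{\log(2N/p)}$ term — here one fixes, say, $p$ at a reference value or notes the threshold grows only logarithmically so it is absorbed), the contribution of these terms is $\mathcal{O}(\exp(-N))$ once we also account for the probability of the genuinely bad configurations; more precisely the ``bad event'' probability $4p$ at the boundary choice of $p$, times the constant bound $M^2$, contributes the stated $\mathcal{O}(\exp(-N))$ remainder. \textbf{The main obstacle} I anticipate is handling the interplay between the failure probability $p$ and $N$ inside the lower-bound condition \Cref{eq:lowerBoundGraphSizeN} cleanly: the event $\mathcal{F}_{\rm Lip}^p$ is only guaranteed when $\sqrt N \gtrsim \sqrt{\log(2N/p)}$, so for very small $p$ (equivalently very large thresholds $t$) the bound formally degrades; the fix is to observe that for $t$ large enough that the required $p$ would violate the condition, one instead falls back on the uniform deterministic bound $M^2$, and since $M^2$ is finite and $t$ is large this region contributes only an exponentially small amount — carrying this case distinction through the tail integral without losing the clean $H_1 + 8H_2$ form is the delicate bookkeeping part of the argument.
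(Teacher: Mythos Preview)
Your proposal is correct and follows essentially the same route as the paper: convert the high-probability bound of \Cref{cor:convAfterPooling} into an expectation by tail integration, and handle the regime where \Cref{eq:lowerBoundGraphSizeN} fails (very small $p$, equivalently large thresholds) by falling back on the deterministic bound of \Cref{lemma:DeterministicMPNNBound}, which yields the $\mathcal{O}(\exp(-N))$ remainder. The only cosmetic difference is that the paper reparameterizes by $k$ via $p=2e^{-k^2}$ and sums over integer $k$ (this is where the specific constant $4(1+\sqrt\pi)$ comes from), whereas you integrate directly in the threshold $t$; the two computations are equivalent.
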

\begin{proof}
The proof follows the lines of the proof of Theorem B.18 in \citep{maskey2022generalization}. For any $p >0$, we have with probability at least $1-4p$ for every $\Theta \in \mathrm{Lip}_{L,B}$, by \Cref{cor:convAfterPooling}, that
\[
\big\| \Theta_G^P(\mathbf{f}) - \Theta_W^P(f) \big\|_{\infty}^2  \leq  H_1 + H_2 \log(2/p)
\]
if \Cref{eq:lowerBoundGraphSizeN} holds, where   $H_1$ and $H_2$ are specified in the proof of  \Cref{cor:convAfterPooling}, Equation \Cref{eq:thmC10-constants}.
Further, for every $p \in (0,1/4)$, we consider $k > 0$ such that $ p = 2 \exp(-k^2) $. This means, if $p$ respectively $k$ satisfies \Cref{eq:lowerBoundGraphSizeN}, we have with probability at least $1- 8\exp(-k^2) $ for every $\Theta \in \mathrm{Lip}_{L,B}$,
\[
\big\| \Theta_G^P(\mathbf{f}) - \Theta_W^P(f) \big\|_{\infty}^2  \leq  H_1 + H_2k.
\]

If $k$ does not satisfy \Cref{eq:lowerBoundGraphSizeN}, we get
\[
k > N_0 = D_1 + D_2 \sqrt{N},
\]
where $D_1 \in \mathbb{R}$ and $D_2 > 0$ are the matching constants in \Cref{eq:lowerBoundGraphSizeN}. 
By \citet[Corollary B.9]{maskey2022generalization} and  \Cref{lemma:DeterministicMPNNBound}, we get in this case 
\begin{equation}
    \label{eq:defq(N)}
\begin{aligned}
\big\| \Theta_G^P(\mathbf{f}) - \Theta_W^P(f) \big\|_\infty & = 
 \left\| \frac{1}{N}\sum_{i=1}^{N}
  \Theta_G(\mathbf{f})_i - \int_\chi\Theta_W(f)(y) d\P(y)\right\|_\infty \\
  & \leq  
  \|\Theta_G(\mathbf{f})\|_{\infty;\infty} +  \Big\| \int_\chi\Phi_W(f)(y) d\P(y)\Big\|_\infty \\
& \leq  
  \| \Theta_G(\mathbf{f})\|_{\infty;\infty} + \|\Theta_W(f) \|_\infty \\
  & \leq  A' + A'' \|f\|_\infty +
B_1^{(T)} + \|f\|_\infty B_2^{(T)}=: q,
\end{aligned}
\end{equation}
where the first inequality holds by applying the triangle inequality.

We then calculate the expected value by partitioning the integral over the event space into the following sum. 
\begin{equation}
\label{eq:C18-1}
\begin{aligned}
     &\E_{X_1,\ldots,X_N \sim \mu^N} \left[\sup_{\Theta \in \mathrm{Lip}_{L,B}}\big\| \Theta_G^P(\mathbf{f}) - \Theta_W^P(f) \big\|_\infty^2 \right]  \\
  \leq  &   \sum_{k = 0}^{N_0} \mathbb{P}\big(H_1 + H_2k   \leq\sup_{\Theta \in \mathrm{Lip}_{L,B}} \big\| \Theta_G^P(\mathbf{f}) - \Theta_W^P(f) \big\|_\infty^2 
< H_1  + H_2(k+1) \big)
 \cdot \big(  H_1  + H_2(k+1) \big) \\
+ &\sum_{k = N_0}^{\infty} \mathbb{P}\big(H_1 + H_2k  \leq \sup_{\Theta \in \mathrm{Lip}_{L,B}}\big\| \Theta_G^P(\mathbf{f}) - \Theta_W^P(f) \big\|_\infty^2 
< H_1  + H_2(k+1) \big)    \cdot q^2
\\
\end{aligned}
\end{equation}
To bound  the second sum, note that it is a finite sum, since $\big\| \Theta_G^P(\mathbf{f}) - \Theta_W^P(f) \big\|_\infty^2$ is bounded by $q$, which is defined in \Cref{eq:defq(N)}. 
The summands are zero if $H_1 + H_2k   > q^2$, which holds for $k > \sqrt{\frac{q^2}{H_2}}$. Hence, we calculate with the right-hand-side of \Cref{eq:C18-1} by
\begin{equation}
\begin{aligned}
& \leq 4 \sum_{k=0}^{N_0} 2 \exp(-k^2) \cdot \big(  H_1  + H_2(k+1)   \big) 
 s + \sum_{k=N_0}^{\left\lceil \sqrt{\frac{q^2}{H_2}} \right\rceil}4 \exp(-N_0^2) \cdot q^2
\\
& \leq 4 \int_0^{\infty} 2 \exp(-k^2) \cdot \big(  H_1  + H_2(k+1)  \big)
 + 4 \exp(-N_0^2) q^2\left\lceil \sqrt{\frac{q^2}{H_2}}\right\rceil
,
\end{aligned}
\end{equation}
where 
$q$ is constant in $N$ as defined above.
The first term on the right-hand-side is bounded by using 
\[
 \;\int_0^\infty 2(t+1) e^{-t^2} dt , \; \int_0^\infty  2e^{-t^2} dt \leq 1 + \sqrt{\pi}.
\]
For the second term we remember that $N_0 = D_1 + D_2\sqrt{N}$. Hence,
\begin{align*}
&\E_{X_1,\ldots,X_N \sim \mu^N} \left[\sup_{\Theta \in \mathrm{Lip}_{L,B}}\big\| \Theta_G^P(\mathbf{f}) - \Theta_W^P(f) \big\|_\infty^2 \right]\\
&\leq 4(1  +\sqrt{\pi}) (H_1 + H_2 ) + \mathcal{O}(\exp(-N)N^{ {\frac{3}{2}}T-\frac{3}{2}}) \\
& = 4(1 + \sqrt{\pi}) \\ & \cdot \Bigg(  S_1 \frac{1}{N^{\frac{1}{D_\chi+1}}} + S_2 \frac{1}{N}  +\left( \textcolor{black}{\Omega_{12}\varepsilon + } S_3 \frac{1}{N^{\frac{1}{2(D_\chi+1)}}} + S_4 \frac{ \sqrt{\log(N)}}{N^{\frac{1}{2(D_\chi+1)}}} + S_5 \frac{1}{\sqrt{N}} +  S_6 \frac{\sqrt{\log(N)}}{\sqrt{N}} 
\right)^2 \Bigg) 
\\ &  + \mathcal{O}(\exp(-N))
\end{align*} 
by Definition of the constants $S_1, \ldots, S_6$, see \eqref{eq:constantsS1toS4}. Note that all terms $S_1, \ldots, S_6$ and $\Omega_{12}$ depend linearly on $\frac{1}{\cmin^2}$ and set
\begin{equation}
\label{eq:Ts}
    \begin{aligned}
    T_1 & := \cmin^2(S_1 + 5S_3^2 + 5S_4^2 ) \\
    T_2 & := \cmin^2(S_2 + 5S_5^2 + 5S_6^2v) \\
    T_3 & := \cmin^2 5 \Omega_{12}^2.
    \end{aligned}
\end{equation}
\end{proof}

We can easily generalize \Cref{thm:unifExpValue} to the case where $\{G,\mathbf{f}\} \sim_{\alpha; \nu} \{W,f\}$ for any $\alpha > 0$. This is due to fact that the scaled graphon $N^{-\alpha} W$  admits 
\[
\int_\chi N^{-\alpha} W(x,y) d\mu(y) \geq N^{-\alpha} \cmin
\]
if $\mathrm{d}_{W} \geq \cmin$.

\begin{corollary}
\label{cor:unifExpValue}
Let $(\chi,d, \P) $ be a metric-probability space and
$W$ be a kernel.   Consider a graph-signal $\{G,\mathbf{f}\} \sim_{\alpha; \nu} \{W,f\}$ with $N$ nodes and corresponding graph features. Then,
for every $f:\chi \to \mathbb{R}^{F}$ with Lipschitz constant $L_f$, 
\[
\begin{aligned}
&  \E_{X_1, \ldots, X_N \sim \mu^N} \left[\sup_{\Theta \in \mathrm{Lip}_{L,B}} \left\| 
\Theta^P_G(\mathbf{f}) - 
\Theta^P_W(f) \right\|_\infty^2 \right] \\&  \leq  4(1 + \sqrt{\pi}) \Bigg(  T_1 \frac{\big(1+\log(N)\big)N^{2\alpha}}{N^{\frac{1}{D_\chi+1} }} + T_2 \frac{\big(1+\log(N)\big) N^{2 \alpha}}{N}
 + \textcolor{black}{T_3 \varepsilon} \Bigg) +  \mathcal{O} \left(   \exp(-N)\right) .
\end{aligned}
\]
where the constants are defined in \Cref{eq:constantsS1toS4}. 
\end{corollary}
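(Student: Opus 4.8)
The plan is to reduce \Cref{cor:unifExpValue} to \Cref{thm:unifExpValue} by rescaling the graphon. By \Cref{def:RGM} and \Cref{def:RGM with noise}, a random graph-signal $\{G,\mathbf{f}\}\sim_{\alpha}\{W,f,\varepsilon\}$ with $N$ nodes has adjacency entries $\A_{i,j}\sim\mathrm{Ber}\big(N^{-\alpha}(W(X_i,X_j)+V(X_i,X_j))\big)$ with $\|V\|_{L^\infty(\chi^2)}\le\varepsilon$, and signal $\mathbf{f}_i=(f+g)(X_i)$ with $\|g\|_{L^\infty(\chi)}\le\varepsilon$. Writing $\W:=N^{-\alpha}W$ and $\tilde V:=N^{-\alpha}V$, the Bernoulli parameters equal $\W(X_i,X_j)+\tilde V(X_i,X_j)$ with $\|\tilde V\|_{L^\infty(\chi^2)}\le N^{-\alpha}\varepsilon\le\varepsilon$; hence, for an appropriate choice of noise measure, $\{G,\mathbf{f}\}$ has exactly the law of a dense noisy sample $\{G,\mathbf{f}\}\sim_{0}\{\W,f,\varepsilon\}$, and it suffices to apply \Cref{thm:unifExpValue} with $\W$ in place of $W$.

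First I would check that $\W$ inherits the relevant hypotheses. As $0\le N^{-\alpha}\le 1$, $\W:\chi^2\to[0,1]$ is again a graphon; it is Lipschitz in each variable with constant $N^{-\alpha}\cl\le\cl$, satisfies $\|\W\|_{\infty}=N^{-\alpha}\cmax\le\cmax$, and has degree $\mathrm{d}_{\W}(x)=N^{-\alpha}\mathrm{d}_W(x)\ge N^{-\alpha}\cmin$. The diagonal normalisation $W(x,x)=1$ plays no role here: exactly as in \Cref{cor: main convergence}, the chain \Cref{lemma:C2}$\to$\Cref{cor:C5}$\to$\Cref{thm:convwithoutpooling}$\to$\Cref{cor:convAfterPooling} underlying \Cref{thm:unifExpValue} only uses Lipschitz continuity and the lower bound on the graphon degree. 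Moreover the cMPNN with mean aggregation is invariant under $W\mapsto cW$, since $\tfrac{cW(x,y)}{\mathrm{d}_{cW}(x)}=\tfrac{W(x,y)}{\mathrm{d}_W(x)}$, so the intermediate quantities $\|f^{(l)}\|_\infty$, $L_{f^{(l)}}$ are unchanged by the rescaling.

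Next I would run \Cref{thm:unifExpValue} for $\W$ and track the constants. Inspecting \Cref{eq:constantsS1toS4} and \Cref{eq:Ts}, the constants $T_1,T_2,T_3$ are independent of $\cmin$ and are assembled in a monotone nondecreasing fashion from the MPNN Lipschitz data together with $\cl$, $\cmax$, $C_\chi$, $D_\chi$; since the corresponding parameters of $\W$ are no larger, the constants $\tilde T_i$ produced for $\W$ satisfy $\tilde T_i\le T_i$. The only quantity that worsens is the minimal degree, $\cmin\rightsquigarrow N^{-\alpha}\cmin$, which enters only through the factor $\cmin^{-2}$ in the bound of \Cref{thm:unifExpValue}. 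Using $1/(N^{-\alpha}\cmin)^2=N^{2\alpha}/\cmin^2$, its leading terms become
\[
\tilde T_1\frac{(1+\log N)\,N^{2\alpha}}{N^{1/(D_\chi+1)}\cmin^2},\qquad
\tilde T_2\frac{(1+\log N)\,N^{2\alpha}}{N\,\cmin^2},\qquad
\tilde T_3\frac{N^{2\alpha}\varepsilon}{\cmin^2},
\]
each of which is dominated by the corresponding term of \Cref{cor:unifExpValue} once the dimension-free factor $\cmin^{-2}$ (and, on the noise term, the additional $N^{2\alpha}$, consistently with how the noise term is displayed in \Cref{thm:main_gen_bound_deformed_graphon}) is absorbed into the constants. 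The $\mathcal{O}(\exp(-N))$ remainder survives: in the proof of \Cref{thm:unifExpValue} it is the product of the deterministic worst-case bound $\|\Theta^P_G(\mathbf{f})-\Theta^P_W(f)\|_{\infty}\le q$ from \Cref{lemma:DeterministicMPNNBound} with $\exp(-N_0^2)$, $N_0=D_1+D_2\sqrt N$, and rescaling the graphon only multiplies $D_2$ by a factor polynomial in $N^{\alpha}$, so $\exp(-N_0^2)$ stays super-polynomially small in $N$.

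The content of the argument is thus pure bookkeeping, and I foresee no genuine obstacle. The one point requiring a little care is that the lower-bound requirement \Cref{eq:lowerBoundGraphSizeN} on $N$ degrades under $\cmin\mapsto N^{-\alpha}\cmin$; but it is invoked only inside \Cref{thm:unifExpValue} to control the ``good'' event, with the complementary small-$N$ regime already handled by the $q\cdot\exp(-N_0^2)$ term, so no separate argument is needed — for $\alpha<\tfrac12$ the bound is informative, and for $\alpha\ge\tfrac12$ the right-hand side no longer decays and the claim holds trivially.
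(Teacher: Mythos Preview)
Your approach is essentially identical to the paper's: the paper does not give a formal proof of \Cref{cor:unifExpValue} but simply observes, in the paragraph preceding it, that the scaled graphon $N^{-\alpha}W$ satisfies $\mathrm{d}_{N^{-\alpha}W}(x)\ge N^{-\alpha}\cmin$, so that \Cref{thm:unifExpValue} applies with $\cmin$ replaced by $N^{-\alpha}\cmin$ and the factor $\cmin^{-2}$ becomes $N^{2\alpha}\cmin^{-2}$. Your write-up is in fact more careful than the paper's, explicitly checking the invariance of the mean-aggregation cMPNN under scaling, the monotonicity of the constants in $\cl,\cmax$, and the survival of the $\mathcal{O}(\exp(-N))$ remainder; you also correctly flag the mismatch on the $\varepsilon$ term (which the paper's statement of \Cref{cor:unifExpValue} leaves without an $N^{2\alpha}$ factor even though the rescaling produces one).
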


\subsection{Generalization}
In this subsection, we present the proof of \Cref{thm:main_gen_bound_deformed_graphon}. For better presentation, we reformulate \Cref{thm:main_gen_bound_deformed_graphon}.

\begin{lemma}[Proposition A.6 in \cite{Vaart}, Bretagnolle-Huber-Carol inequality]
\label{lemma:BHCineq}
If the random vector $(m_1, \ldots m_\Gamma)$ is multinomially distributed with parameters $m$ and $\gamma_1, \ldots, \gamma_\Gamma$, then 
\[
\mathbb{P}\left( \sum_{i=1}^\Gamma| m_i - m\gamma_i | \geq 2 \sqrt{m} \lambda \right)
\leq 2^\Gamma \exp(-2\lambda^2)
\]
for any $\lambda > 0$. 
\end{lemma}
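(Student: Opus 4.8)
The plan is to prove the inequality by combining the duality representation of the $\ell^1$ norm with Hoeffding's inequality and a union bound over sign patterns. First I would realize the multinomial vector through its natural sampling mechanism: let $Z_1, \ldots, Z_m$ be i.i.d.\ random variables taking values in $\{1, \ldots, \Gamma\}$ with $\mathbb{P}(Z_j = i) = \gamma_i$, so that $m_i = \sum_{j=1}^m \mathbbm{1}_{Z_j = i}$ and hence $m_i - m\gamma_i = \sum_{j=1}^m (\mathbbm{1}_{Z_j = i} - \gamma_i)$. This re-expresses each coordinate deviation as a sum of i.i.d.\ centered terms, which is exactly the structure Hoeffding's inequality requires.

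The key step is the dual representation of the $\ell^1$ norm,
\[
\sum_{i=1}^\Gamma |m_i - m\gamma_i| = \max_{\epsilon \in \{-1,+1\}^\Gamma} \sum_{i=1}^\Gamma \epsilon_i (m_i - m\gamma_i),
\]
where the maximum is attained at $\epsilon_i = \mathrm{sign}(m_i - m\gamma_i)$. For each \emph{fixed} sign vector $\epsilon$, I would rewrite the linear functional as $\sum_{i=1}^\Gamma \epsilon_i(m_i - m\gamma_i) = \sum_{j=1}^m \big(\epsilon_{Z_j} - \mathbb{E}[\epsilon_{Z_j}]\big)$, a sum of $m$ independent centered random variables each supported in $\{-1,+1\}$. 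Applying Hoeffding's inequality (see \Cref{thm:Hoeffdings}) with $t = 2\sqrt{m}\lambda$ then gives, for every fixed $\epsilon$,
\[
\mathbb{P}\left( \sum_{i=1}^\Gamma \epsilon_i(m_i - m\gamma_i) \geq 2\sqrt{m}\lambda \right) \leq \exp\left( -\frac{(2\sqrt{m}\lambda)^2}{2m} \right) = \exp(-2\lambda^2).
\]

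Finally I would decompose the event of interest as a union over the $2^\Gamma$ sign vectors, namely $\{\sum_i |m_i - m\gamma_i| \geq 2\sqrt{m}\lambda\} = \bigcup_{\epsilon} \{\sum_i \epsilon_i(m_i - m\gamma_i) \geq 2\sqrt{m}\lambda\}$, and apply the union bound to produce the factor $2^\Gamma$ in front of $\exp(-2\lambda^2)$, completing the proof. The only delicate point—the step I would verify most carefully—is the Hoeffding bookkeeping: each summand $\epsilon_{Z_j} \in \{-1,+1\}$ lies in an interval of length $2$, so the sum of squared ranges is $4m$ and the exponent evaluates to exactly $-2\lambda^2$ after substituting $t = 2\sqrt{m}\lambda$. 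Note that no independence among the different sign-pattern events is needed, since each tail bound is established for a fixed $\epsilon$ before the union bound is taken.
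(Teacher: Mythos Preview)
Your proof is correct and is in fact the standard argument for the Bretagnolle--Huber--Carol inequality. The paper, however, does not supply its own proof of this lemma at all: it simply cites it as Proposition~A.6 in \cite{Vaart} and uses it as a black box inside the proof of \Cref{thm:reformulatedTheorem2}. So there is nothing in the paper to compare your argument against, beyond noting that your approach matches the classical one (dualize the $\ell^1$ norm over sign vectors, apply one-sided Hoeffding to each fixed linear functional, then union-bound over the $2^\Gamma$ sign patterns). One minor bookkeeping remark: the version of Hoeffding stated in the paper as \Cref{thm:Hoeffdings} is two-sided and phrased for the sample mean, so when you invoke it you are implicitly using the one-sided form without the extra factor of $2$; this is of course fine, but worth making explicit if you want the reference to line up exactly.
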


\begin{theorem}
\label{thm:reformulatedTheorem2}
Let  $\{(W^j, f^j)\}_{j=1}^\Gamma$ be RGSMs on corresponding metric-probability spaces $\{(\chi^j, d^j, \mu^j)\}_{j=1}^\Gamma$. 
Let $\mathcal{T}=\big((G_1, \mathbf{f}_1,y_1), \ldots, (G_m, \mathbf{f}_m,y_m)\big)\sim \mu^m$ be a dataset of labelled graph-signals.  Then,  
\[
\begin{aligned}
  &   \E_{\mathcal{T} \sim \mu^m} \left[\sup_{\Theta \in \mathrm{Lip}_{L,B}}  \left( \frac{1}{m} \sum_{i=1}^m  \mathcal{L}(\Theta_{G_i}^P(\mathbf{f}_i), y_i) - \E_{(G, \mathbf{f},y)\sim \mu}\left[ \mathcal{L}(\Theta_{G}^P(\mathbf{f}), y) \right] \right)^2 \right]  \leq 2^\Gamma \frac{8 \|\mathcal{L}\|_\infty^2}{m} \pi \\
  & + \frac{4(1+\sqrt{\pi})}{ m} 2^\Gamma  \Gamma  \sum_{j=1}^\Gamma \gamma_j L_\mathcal{L}^2 \Bigg(  \sqrt{\pi}  \Big(  T_1^{(j)} \E_{N \sim \nu}\left[\frac{1 + \log(N)}{N^{\frac{1}{D_\chi+1}}}N^{2\alpha}\right] + T_2^{(j)} \E_{N \sim \nu}\left[\frac{1+\log(N)}{N}N^{2\alpha}\right] \\ & + \textcolor{black}{T_3^{(j)} \varepsilon}  \Big)    + \mathcal{O}\left( \E_{N \sim \nu} \left[\exp(-N) \right]\right)\Bigg),
\end{aligned}
\]
where $T_l^{(j)}$ are the according constants from \Cref{thm:unifExpValue} for each class $j$ and are defined in \Cref{eq:Ts}. \end{theorem}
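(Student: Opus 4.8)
The plan is to decompose the dataset $\mathcal{T}$ according to the random class-count vector $\mathbf{m}=(m_1,\dots,m_\Gamma)$, which is multinomial with parameters $m$ and $(\gamma_1,\dots,\gamma_\Gamma)$. First I would stratify the probability space of datasets into the events $\mathcal{M}_k=\{\mathbf{m}:2\sqrt{m}\,k\le\sum_{j}|m_j-m\gamma_j|<2\sqrt{m}(k+1)\}$ and, conditionally on $\mathbf{m}$, write $\mathcal{T}_{\mathbf{m}}$ for the dataset drawn from $\mu_{\mathcal{G}^{\mathbf{m}}}$. By the Bretagnolle--Huber--Carol inequality (\Cref{lemma:BHCineq}) we have $\mathbb{P}(\mathbf{m}\in\mathcal{M}_k)\le 2^\Gamma\exp(-2k^2)$, so the expected squared generalization error is bounded by $\sum_k \mathbb{P}(\mathbf{m}\in\mathcal{M}_k)\sup_{\mathbf{m}\in\mathcal{M}_k}\E_{\mathcal{T}_{\mathbf{m}}}[\sup_\Theta(\cdots)^2]$.

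Next I would split the supremand inside each conditional expectation, using $(a+b)^2\le 2a^2+2b^2$, into (i) a ``class-proportion mismatch'' term coming from replacing the empirical averages $\frac1m\sum_{i=1}^{m_j}$ by $\frac1m\sum_{i=1}^{m\gamma_j}$, and (ii) a term that compares $\frac1m\sum_{i=1}^{m\gamma_j}\mathcal{L}(\Theta(G_i^j,\mathbf f_i^j),y_j)$ with its expectation. Term (i) is controlled crudely by $\frac1m|m\gamma_j-m_j|\|\mathcal L\|_\infty$ summed over $j$; when multiplied by $\mathbb{P}(\mathbf m\in\mathcal M_k)$ and summed over $k$ this yields the clean bound $2^\Gamma\frac{8\|\mathcal L\|_\infty^2}{m}\pi$ (using $\sum_k 2^\Gamma e^{-2k^2}(2\sqrt m(k+1)/\sqrt m)^2$ and a Gaussian-tail integral). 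For term (ii), I would pull the supremum over $\Theta$ inside, recognize that now the samples within each class $j$ are i.i.d., and bound the squared deviation by $\Gamma\sum_j\frac{\gamma_j}{m}\Var_{(G^j,\mathbf f^j)}[\sup_\Theta\mathcal L(\Theta(G^j,\mathbf f^j),y_j)]$ via the standard identity $\E[(\frac1n\sum(X_i-\E X))^2]=\frac1n\Var(X)$ together with a sum-of-variances bound for the $\Gamma$ summed class-blocks. Here I would also use $\sum_k\mathbb P(\mathbf m\in\mathcal M_k)\cdot\mathbf 1=2^\Gamma$ essentially, since after bounding the inner expectation uniformly in $\mathbf m$ the prefactor collapses.

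The crucial link is then to bound that variance by the expected transferability error: since the loss is $L_\mathcal{L}$-Lipschitz and $\mathcal L(\Theta(W^j,f^j),y_j)$ is a deterministic (non-random) quantity for each fixed $\Theta$, we have $\Var[\sup_\Theta\mathcal L(\Theta(G^j,\mathbf f^j),y_j)]\le\E[\sup_\Theta|\mathcal L(\Theta(G^j,\mathbf f^j),y_j)-\mathcal L(\Theta(W^j,f^j),y_j)|^2]\le L_\mathcal L^2\,\E[\sup_\Theta\|\Theta(G^j,\mathbf f^j)-\Theta(W^j,f^j)\|_\infty^2]$. At this point I invoke \Cref{cor:main_unifExpValue} (equivalently \Cref{cor:unifExpValue}) for each class $j$, which gives the bound $4(1+\sqrt\pi)\big(T_1^{(j)}\frac{(1+\log N)N^{2\alpha}}{N^{1/(D_\chi+1)}}+T_2^{(j)}\frac{(1+\log N)N^{2\alpha}}{N}+T_3^{(j)}\varepsilon\big)+\mathcal O(\exp(-N))$ in expectation over $X_1,\dots,X_N$; then I take the further expectation over $N\sim\nu$. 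Collecting everything and absorbing $\Gamma$, $\gamma_j$, the $2^\Gamma$ from BHC, and the Gaussian constants into $C$ and $C'$ yields the stated form. The main obstacle is the careful bookkeeping in the two-step expansion: ensuring that replacing the random $m_j$ by the deterministic $m\gamma_j$ is done consistently, that the cross terms are correctly split, and that the resulting $\sum_k 2^\Gamma e^{-2k^2}(\cdots)$ series genuinely converge to the claimed constants (the $\pi$ comes from $\sum_k e^{-2k^2}(k+1)^2$-type sums, bounded by Gaussian integrals). A secondary subtlety is that \Cref{cor:main_unifExpValue} is stated for a fixed graphon; one must note the bound is uniform across the finitely many classes so the maximal constants $T_l^{(j)}$ (hence $C,C'$) are well-defined.
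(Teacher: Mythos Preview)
Your proposal is correct and follows essentially the same route as the paper: stratify by the multinomial class-count vector $\mathbf{m}$ into the shells $\mathcal{M}_k$, apply Bretagnolle--Huber--Carol, split into the class-proportion mismatch term (bounded via $\|\mathcal{L}\|_\infty$ and a Gaussian sum in $k$) and the i.i.d.\ fluctuation term (reduced to a variance, then to the transferability error via Lipschitzness of $\mathcal{L}$), and finally plug in \Cref{cor:unifExpValue} and take the expectation over $N\sim\nu$. The only minor imprecision is your remark that ``$\sum_k\mathbb{P}(\mathbf m\in\mathcal M_k)=2^\Gamma$''; in fact this sum equals $1$, and the $2^\Gamma$ in the final bound comes from replacing $\mathbb{P}(\mathbf m\in\mathcal M_k)$ by the BHC upper bound $2^\Gamma e^{-2k^2}$ before summing.
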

\begin{proof}
The proof is similar to the Proof of Theorem C.7 in \cite{maskey2022generalization}. For completely, we include it with the appropriate modifications.
 
Given $\mathbf{m}=(m_1,\ldots,m_{\Gamma})$ with $\sum_{j=1}^{\Gamma}m_j=m$, recall that $\mathcal{G}^{\mathbf{m}}$ is the space of datasets with fixed number of samples $m_j$ from each class $j=1,\ldots,\Gamma$.  
The probability measure on $\mathcal{G}^{\mathbf{m}}$ is given by $\mu_{\mathcal{G}^{\mathbf{m}}}$.
We denote the conditional choice of the dataset on the choice of $\mathbf{m}$ by 
\[\mathcal{T}_{\mathbf{m}}:=\big\{\{G_i^j, \mathbf{f}_i^j\}_{i=1}^{m_j}\big\}_{j=1}^{\Gamma} \sim \mu_{\mathcal{G}^{\mathbf{m}}}.\]
Given $k\in\mathbb{Z}$, denote by $\mathcal{M}_k$ the set of all $\mathbf{m}=(m_1,\ldots, m_{\Gamma})\in \mathbb{N}_0^{\Gamma}$ with $\sum_{j=1}^{\Gamma}m_j=m$, such that 
$2\sqrt{m} k \leq \sum_{j=1}^\Gamma |m_j - m\gamma_j| <2\sqrt{m} (k+1)$.
Using these notations, we decompose the expected generalization error as follows.  
\begin{equation}
    \label{eq:thmC13-1}
    \begin{aligned}
    & \E_{\mathcal{T} \sim \mu^m} \left[ \sup_{\Theta \in \mathrm{Lip}_{L,B}} \left( \frac{1}{m} \sum_{i=1}^m  \mathcal{L}(\Theta_{G_i}^P(\mathbf{f}_i), y_i) - \E_{(G, \mathbf{f},y)\sim \mu}\left[ \mathcal{L}(\Theta_{G}^P(\mathbf{f}), y) \right] \right)^2 \right] \\
  &  = \E_{\mathcal{T} \sim \mu^m} \left[\sup_{\Theta \in \mathrm{Lip}_{L,B}}  \left( \frac{1}{m} \sum_{j=1}^\Gamma  \sum_{i=1}^{m_j} \mathcal{L}(\Theta_{G_i^j}^P(\mathbf{f}_i^j), y_j) - \E_{(G, \mathbf{f},y)\sim \mu}\left[ \mathcal{L}(\Theta_{G}^P(\mathbf{f}), y) \right] \right)^2 \right]\\
 & = \E_{\mathcal{T}  \sim  \mu^m} \left[\sup_{\Theta \in \mathrm{Lip}_{L,B}}\left(\sum_{j=1}^\Gamma \left( \frac{1}{m}\sum_{i=1}^{m_j} \mathcal{L}(\Theta_{G_i^j}^P(\mathbf{f}_i^j), y_j) - \gamma_j\E_{(G^j, \mathbf{f}^j )\sim \mu_{\mathcal{G}_j}}\left[ \mathcal{L}(\Theta_{G^j}^P(\mathbf{f}^j), y_j) \right] \right) \right)^2 \right] \\
& \leq \sum_{k} \mathbb{P}\big(
\mathbf{m}\in \mathcal{M}_k
\big) \times 
\sup_{\mathbf{m}\in \mathcal{M}_k} \E_{\mathcal{T}_{\mathbf{m}} \sim \mu_{\mathcal{G}^{\mathbf{m}}}} \left[\sup_{\Theta \in \mathrm{Lip}_{L,B}}\left(\sum_{j=1}^\Gamma \left( \frac{1}{m}\sum_{i=1}^{m_j} \mathcal{L}(\Theta_{G_i^j}^P(\mathbf{f}_i^j), y_j) \right.\right.\right.\\  & \quad \ \ \quad \ \ \quad \ \ \quad \ \ \quad \ \ \quad \ \ \quad \ \ \quad \ \ \quad \ \ \quad \ \ \quad \ \  \left.\left.\left. - \frac{1}{m} \sum_{i=1}^{m \gamma_j}\E_{(G^j, \mathbf{f}^j )\sim \mu_{\mathcal{G}_j}}\left[ \mathcal{L}(\Theta_{G^j}^P(\mathbf{f}^j), y_j) \right] \right) \right)^2 \right] 
 \end{aligned}
\end{equation}

We bound the last term of \Cref{eq:thmC13-1} as follows.
For $j=1, \ldots, \Gamma$, if $m_j \leq m\gamma_j$, we add "ghost samples", i.e., we add additional i.i.d. sampled graphs $(G_{m_j}^j, \mathbf{f}_{m_j}^j), \ldots, (G_{m \gamma_j}^j, \mathbf{f}_{m \gamma_j}^j) \sim (W^j, f^j) $. By convention, for any two $l,q\in\mathbb{N}_0$ with $l<q$, we define \[\sum_{j=q}^l c_j = - \sum_{j=l}^q c_j\]
for any sequence $c_j$ of reals, and define $\sum_{j=q}^q c_j=0$.
With these notations, we have 
\begin{equation}
\label{eq:lastthm-2}
\begin{aligned}
& \E_{\mathcal{T}_{\mathbf{m}} \sim \mu_{\mathcal{G}^{\mathbf{m}}}} \left[\sup_{\Theta \in \mathrm{Lip}_{L,B}}\left(\sum_{j=1}^\Gamma \left( \frac{1}{m}\sum_{i=1}^{m_j} \mathcal{L}(\Theta_{G_i^j}^P(\mathbf{f}_i^j),y_j) \right.\right.\right.\\  & \quad \ \ \quad \ \     \quad \ \  \left.\left.\left. - \frac{1}{m} \sum_{i=1}^{m \gamma_j}\E_{(G^j, \mathbf{f}^j )\sim \mu_{\mathcal{G}_j}}\left[ \mathcal{L}(\Theta_{G^j}^P(\mathbf{f}^j), y_j) \right] \right) \right)^2 \right] 
\\
& =  \E_{\mathcal{T}_{\mathbf{m}} \sim  \mu_{\mathcal{G}^{\mathbf{m}}}} \Bigg[\sup_{\Theta \in \mathrm{Lip}_{L,B}}\Bigg(\sum_{j=1}^\Gamma \Bigg( \frac{1}{m}\sum_{i=1}^{m\gamma_j} \mathcal{L}(\Theta_{G_i^j}^P(\mathbf{f}_i^j),y_j) +  \frac{1}{m}\sum_{i=m\gamma_j}^{m_j} \mathcal{L}(\Theta_{G_i^j}^P(\mathbf{f}_i^j), y_j)
 \\ & - \frac{1}{m} \sum_{i=1}^{m \gamma_j}\E_{(G^j, \mathbf{f}^j )\sim \mu_{\mathcal{G}_j}}\big[ \mathcal{L}(\Theta_{G^j}^P(\mathbf{f}^j),y_j) \big] \Bigg) \Bigg)^2 \Bigg] 
\\
& \leq \E_{\mathcal{T}_{\mathbf{m}} \sim  \mu_{\mathcal{G}^{\mathbf{m}}}} \left[\sup_{\Theta \in \mathrm{Lip}_{L,B}}2 \left(\sum_{j=1}^\Gamma \left( \frac{1}{m}\sum_{i=1}^{m \gamma_j} \mathcal{L}(\Theta_{G_i^j}^P(\mathbf{f}_i^j),y_j) \right.\right.\right.\\  & \quad \ \ \quad \ \     \quad \ \ \quad   \left.\left.\left. - \frac{1}{m} \sum_{i=1}^{m \gamma_j}\E_{(G^j, \mathbf{f}^j )\sim \mu_{\mathcal{G}_j}}\left[ \mathcal{L}(\Theta_{G^j}^P(\mathbf{f}^j), y_j) \right] \right) \right)^2 \right]
\\ & + \E_{\mathcal{T}_{\mathbf{m}} \sim  \mu_{\mathcal{G}^{\mathbf{m}}}} \left[2
\left(\sum_{j=1}^\Gamma \left(
\frac{1}{m} |m\gamma_j - m_j| \|\mathcal{L}\|_\infty
\right)\right)^2
\right].
\end{aligned}
\end{equation}
Let us first bound the last term of the above bound. Since any $\mathbf{m\in \mathcal{M}_k}$ satisfies $\sum_{j=1}^\Gamma |m_j - m\gamma_j| <2\sqrt{m} (k+1)$,  we have
\[
\begin{aligned}
  \E_{\mathcal{T}_{\mathbf{m}} \sim \mu_{\mathcal{G}^{\mathbf{m}}}} \left[2
\left(\sum_{j=1}^\Gamma \left(
\frac{1}{m} |m\gamma_j - m_j| \|\mathcal{L}\|_\infty
\right)\right)^2
\right] 
& \leq    \frac{2}{m^2} \|\mathcal{L}\|_\infty^2 \left(\sum_{j=1}^\Gamma |m\gamma_j -m_j | \right)^2
\\
& \leq    \frac{2}{m^2} \|\mathcal{L}\|_\infty^2 4 m (k+1)^2 
  = \frac{8 \|\mathcal{L}\|_\infty^2}{m} (k+1)^2.
\end{aligned}
\]
Hence, by \Cref{lemma:BHCineq}, 
\[
\begin{aligned}
&  \sum_{k} \mathbb{P}\big(
\mathbf{m}\in \mathcal{M}_k
\big) \times
\sup_{\mathbf{m}\in \mathcal{M}_k} \E_{\mathcal{T}_{\mathbf{m}} \sim \mu_{\mathcal{G}^{\mathbf{m}}}} \left[2
\left(\sum_{j=1}^\Gamma \left(
\frac{1}{m} |m\gamma_j - m_j| \|\mathcal{L}\|_\infty
\right)\right)^2
\right] \\
& \leq  \sum_{k} \mathbb{P}\big(
\mathbf{m}\in \mathcal{M}_k
\big) \times \frac{8 \|\mathcal{L}\|_\infty^2}{m} (k+1)^2 \\
&  \leq \sum_{k} 2^\Gamma \exp(-2 k^2)  \frac{8 \|\mathcal{L}\|_\infty^2}{m} (k+1)^2 \\
&  \leq \int_0^\infty2^\Gamma \exp(-2 k^2)  \frac{8 \|\mathcal{L}\|_\infty^2}{m} (k+1)^2 dk\\
& =  2^\Gamma  \frac{8 \|\mathcal{L}\|_\infty^2}{m} \int_0^\infty \exp(-2 k^2)  (k+1)^2  dk \\ &\leq   2^\Gamma \frac{8 \|\mathcal{L}\|_\infty^2}{m} \pi.
\end{aligned}
\]
To bound the first term of the right-hand-side of \Cref{eq:lastthm-2}, we have 
\[
\begin{aligned}
 & \E_{\mathcal{T}_{\mathbf{m}} \sim \mu_{\mathcal{G}^{\mathbf{m}}}} \left[\sup_{\Theta \in \mathrm{Lip}_{L,B}}\left(\sum_{j=1}^\Gamma \left( \frac{1}{m}\sum_{i=1}^{m \gamma_j} \mathcal{L}(\Theta_{G_i^j}^P(\mathbf{f}_i^j),y_j)  \right.\right.\right.\\  & \quad \ \ \quad \ \     \quad \ \  \left.\left.\left. - \frac{1}{m} \sum_{i=1}^{m \gamma_j}\E_{(G^j, \mathbf{f}^j )\sim \mu_{\mathcal{G}_j}}\left[\sup_{\Theta \in \mathrm{Lip}_{L,B}} \mathcal{L}(\Theta_{G^j}^P(\mathbf{f}^j), y_j) \right] \right) \right)^2 \right]\\
\leq &  \Gamma \sum_{j=1}^\Gamma\E_{\mathcal{T}_{\mathbf{m}} \sim \mu_{\mathcal{G}^{\mathbf{m}}}}\left[\sup_{\Theta \in \mathrm{Lip}_{L,B}} \left( \frac{1}{m}\sum_{i=1}^{m\gamma_j} \mathcal{L}(\Theta_{G_i^j}^P(\mathbf{f}^j_i),y_j)  \right.\right. \\  & \quad \ \ \quad \ \ \quad \ \   \quad \ \ \quad  \left.\left.  - \frac{1}{m}\sum_{i=1}^{m \gamma_j}\E_{(G^j, \mathbf{f}^j ) \sim \mu_{\mathcal{G}_j}}\left[\sup_{\Theta \in \mathrm{Lip}_{L,B}} \mathcal{L}(\Theta_{G^j}^P(\mathbf{f}^j), y_j) \right] \right)^2 \right] \\
= & \Gamma\sum_{j=1}^\Gamma \Var_{(G^j, \mathbf{f}^j) \sim \mu_{\mathcal{G}_j}}\left[\sup_{\Theta \in \mathrm{Lip}_{L,B}} \frac{1}{m} \sum_{i=1}^{\gamma_j\cdot m} \mathcal{L}(\Theta_{G^j}^P(\mathbf{f}^j),y_j) \right]  \\
= & \Gamma\sum_{j=1}^\Gamma \frac{\gamma_j}{m} \Var_{(G^j, \mathbf{f}^j) \sim \mu_{\mathcal{G}_j}}\left[\sup_{\Theta \in \mathrm{Lip}_{L,B}} \mathcal{L}(\Theta_{G^j}^P(\mathbf{f}^j),y_j) \right]
\\
\leq & \Gamma\sum_{j=1}^\Gamma \frac{\gamma_j}{m} \E_{(G^j, \mathbf{f}^j) \sim \mu_{\mathcal{G}_j}}\left[\sup_{\Theta \in \mathrm{Lip}_{L,B}} \left| \mathcal{L}(\Theta_{G^j}^P(\mathbf{f}^j),y_j) - \mathcal{L}(\Theta_{W^j}^P (f^j),y_j)\right|^2 \right]\\
\leq & \Gamma\sum_{j=1}^\Gamma \frac{\gamma_j}{m} \E_{(G^j, \mathbf{f}^j) \sim \mu_{\mathcal{G}_j}}\left[\sup_{\Theta \in \mathrm{Lip}_{L,B}} L_\mathcal{L}^2\| \Theta_{G^j}^P(\mathbf{f}^j) - \Theta_{W^j}^P (f^j)\|_\infty^2 \right].
\end{aligned}
\]
We now apply \Cref{cor:unifExpValue} to get 
\[
\begin{aligned}
\leq \Gamma & \sum_{j=1}^\Gamma \frac{\gamma_j}{m} L_\mathcal{L}^2 \Bigg(4(1 + \sqrt{\pi}) \Bigg(  T_1 \frac{\big(1+\log(N)\big)N^{2\alpha}}{N^{\frac{1}{D_\chi+1} }} + T_2 \frac{\big(1+\log(N)\big) N^{2 \alpha}}{N}
 + \textcolor{black}{T_3 \varepsilon} \Bigg) \\ &+  \mathcal{O} \left(   \exp(-N)\right)  \Bigg).
\end{aligned}
\]

Hence, by \Cref{lemma:BHCineq}, 
\[
\begin{aligned}
& \sum_{k} \mathbb{P}\big(
\mathbf{m}\in \mathcal{M}_k
\big) \times
\sup_{\mathbf{m}\in \mathcal{M}_k} \E_{\mathcal{T}_{\mathbf{m}} \sim \mu_{\mathcal{G}^{\mathbf{m}}}} \left[\sup_{\Theta \in \mathrm{Lip}_{L,B}}\left(\sum_{j=1}^\Gamma \left( \frac{1}{m}\sum_{i=1}^{m \gamma_j} \mathcal{L}(\Theta_{G_i^j}^P(\mathbf{f}_i^j),y_j)   \right.\right.\right.\\  & \quad \ \ \quad \ \ \quad \ \ \quad \ \ \quad \ \ \quad \ \ \quad \ \ \quad \ \ \quad \ \ \quad \ \ \quad   \left.\left.\left. - \frac{1}{m} \sum_{i=1}^{m \gamma_j}\E_{(G^j, \mathbf{f}^j )\sim \mu_{\mathcal{G}_j}}\left[ \mathcal{L}(\Theta_{G^j}^P(\mathbf{f}^j),y_j) \right] \right) \right)^2 \right] \\
& \leq \frac{\sqrt{\pi}}{2} 2^\Gamma \sum_{j=1}^\Gamma \frac{\gamma_j}{m} \E_{\mathcal{T}_{\mathbf{m}} \sim \mu_{\mathcal{G}^{\mathbf{m}}}} \left[\sup_{\Theta \in \mathrm{Lip}_{L,B}}\left(\sum_{j=1}^\Gamma \left( \frac{1}{m}\sum_{i=1}^{m \gamma_j} \mathcal{L}(\Theta_{G_i^j}^P(\mathbf{f}_i^j),y_j)   \right.\right.\right.\\  & \quad \ \ \quad \ \ \quad \ \ \quad \ \ \quad \ \ \quad \ \ \quad \ \ \quad \ \   \left.\left.\left. - \frac{1}{m} \sum_{i=1}^{m \gamma_j}\E_{(G^j, \mathbf{f}^j )\sim \mu_{\mathcal{G}_j}}\left[ \mathcal{L}(\Theta_{G^j}^P(\mathbf{f}^j),y_j) \right] \right) \right)^2 \right]\\
& \leq  \frac{\sqrt{\pi}}{2} 2^\Gamma  \Gamma  \sum_{j=1}^\Gamma \frac{\gamma_j}{m} L_\mathcal{L}^2 \Bigg(4(1 + \sqrt{\pi}) \Bigg(  T_1 \frac{\big(1+\log(N)\big)N^{2\alpha}}{N^{\frac{1}{D_\chi+1} }} + T_2 \frac{\big(1+\log(N)\big) N^{2 \alpha}}{N}
 + \textcolor{black}{T_3 \varepsilon} \Bigg) \\ &  +  \mathcal{O} \left(   \exp(-N)\right) \Bigg) ,
\end{aligned}
\]
where $S_l^{(j)},R_l^{(j)},T_l^{(j)}$ are the according constants from \Cref{thm:unifExpValue} for each class $j$ and are defined in \Cref{eq:thmC10-constants}.
All in all, we get 
\[
\begin{aligned}
  &   \E_{\mathcal{T} \sim \mu^m} \left[ \sup_{\Theta \in \mathrm{Lip}_{L,B}} \left( \frac{1}{m} \sum_{j=1}^\Gamma  \sum_{i=1}^{m_j} \mathcal{L}(\Theta_{G_i^j}^P(\mathbf{f}_i^j),y_j) - \E_{(G, \mathbf{f} , y)\sim \mu}\left[ \mathcal{L}(\Theta_{G}^P(\mathbf{f}), y) \right] \right)^2 \right] \leq 2^\Gamma \frac{8 \|\mathcal{L}\|_\infty^2}{m} \pi  \\
  &+ \frac{\sqrt{\pi}}{ m} 2^\Gamma  \Gamma  \sum_{j=1}^\Gamma \gamma_j L_\mathcal{L}^2 \Bigg(4(1 + \sqrt{\pi}) \Bigg(  T_1 \frac{\big(1+\log(N)\big)N^{2\alpha}}{N^{\frac{1}{D_\chi+1} }} + T_2 \frac{\big(1+\log(N)\big) N^{2 \alpha}}{N}
 + \textcolor{black}{T_3 \varepsilon} \Bigg)  \\
  & +  \mathcal{O} \left(   \exp(-N)\right) \Bigg).
\end{aligned}
\]
 We define 
\begin{equation}
\label{eq:defEndConstant}
    C= 4(1+  \sqrt{\pi}) \max_{j=1, \ldots, \Gamma} \left(\sum_{i=1}^3 T_i^{(j)} \right),
\end{equation}
 leading to 
\[
\begin{aligned}   & \E_{\mathcal{T}\sim \mu^m_\mathcal{G}}\Big[\sup_{\Theta \in \mathrm{Lip}_{L,B}}\Big(R_{emp}(\Theta^P)   - R_{exp}(\Theta^P) \Big)^2 \Big]  \\ &\leq   \frac{2^\Gamma8\|\mathcal{L}\|_\infty^2\pi}{m} + \frac{2^\Gamma \Gamma L_{\mathcal{L}}^2   C}{m}       \left( \E_{N \sim \nu} \left[ \frac{1}{N} + \frac{1 + \log(N)}{N^{1/D_j + 1}} + \mathcal{O}\left( \exp(-N)N^{ {\frac{3}{2}}T-\frac{3}{2}} \right) \right] \right).
\end{aligned}
\]
\end{proof}

\section{Miscellaneous Results}
\label{sec:appendix other results}
\begin{theorem}[Hoeffding's Inequality]
\label{thm:Hoeffdings}
Let $\{Y_1, \ldots, Y_N\}$ be independent random variables such that $a \leq Y_i \leq b$ almost surely. Then, for every $k>0$, 
\[
\mathbb{P}\Big(
\Big|
\frac{1}{N} \sum_{i=1}^N (Y_i - \E[Y_i] )
\Big| \geq k
\Big) \leq 
2 \exp\Big(-
\frac{2 k^2 N}{( b-a)^2}
\Big).
\]
\end{theorem}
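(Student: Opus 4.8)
The plan is to prove Hoeffding's inequality by the classical Chernoff bounding (exponential Markov) technique combined with a sub-Gaussian moment generating function estimate. First I would center the variables: set $Z_i = Y_i - \E[Y_i]$, so that each $Z_i$ has mean zero and lies almost surely in an interval of length $b-a$. Writing $S_N = \sum_{i=1}^N Z_i$, the quantity to control is $\mathbb{P}(|S_N| \geq kN)$. I would treat the two tails separately and combine them at the end with a union bound, which accounts for the factor of $2$ in the statement.

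For the upper tail, fix a free parameter $s > 0$ and apply Markov's inequality to the nonnegative random variable $e^{sS_N}$:
\[
\mathbb{P}(S_N \geq kN) = \mathbb{P}\big(e^{sS_N} \geq e^{skN}\big) \leq e^{-skN}\,\E\big[e^{sS_N}\big].
\]
Since the $Z_i$ are independent, the moment generating function factorizes, $\E[e^{sS_N}] = \prod_{i=1}^N \E[e^{sZ_i}]$. The crucial ingredient is \emph{Hoeffding's lemma}, which I would establish as a separate step: for a zero-mean random variable $Z$ taking values in $[a',b']$ almost surely, one has $\E[e^{sZ}] \leq \exp\!\big(s^2(b'-a')^2/8\big)$. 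The proof of this lemma uses convexity of $t \mapsto e^{st}$ to dominate $e^{sZ}$ by the affine chord joining the values at the endpoints $a'$ and $b'$; taking expectations (and using $\E[Z]=0$) reduces the problem to bounding a function of the form $\psi(s) = -\theta s + \log\big(1 - \rho + \rho e^{s(b'-a')}\big)$, whose second derivative is the variance of a Bernoulli-type distribution and is therefore bounded by $1/4$. A second-order Taylor expansion of $\psi$ then yields the claimed bound. Applying this lemma to each $Z_i$, whose range has length $b-a$, gives $\E[e^{sZ_i}] \leq \exp\!\big(s^2(b-a)^2/8\big)$.

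Combining these estimates produces
\[
\mathbb{P}(S_N \geq kN) \leq \exp\!\Big(-skN + \frac{N s^2 (b-a)^2}{8}\Big).
\]
The exponent is a quadratic in $s$ that is minimized at $s = 4k/(b-a)^2 > 0$, and substituting this optimal value yields the exponent $-2k^2 N/(b-a)^2$, i.e.\ the one-sided bound $\mathbb{P}(S_N \geq kN) \leq \exp\!\big(-2k^2N/(b-a)^2\big)$. Running the identical argument with the variables $-Z_i$ (which satisfy the same zero-mean, bounded-range hypotheses) gives the matching lower-tail bound $\mathbb{P}(S_N \leq -kN) \leq \exp\!\big(-2k^2N/(b-a)^2\big)$. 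A union bound over $\{S_N \geq kN\}$ and $\{S_N \leq -kN\}$ then delivers the two-sided inequality with the factor of $2$, after recalling $S_N/N = \frac{1}{N}\sum_i (Y_i - \E[Y_i])$.

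The main obstacle is the rigorous proof of Hoeffding's lemma, specifically the bound on the second derivative of $\psi$; once that sub-Gaussian estimate is in hand, the Chernoff step and the scalar optimization over $s$ are entirely routine. Everything else reduces to Markov's inequality, independence, and an elementary calculus minimization.
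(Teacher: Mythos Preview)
Your proposal is the standard, correct proof of Hoeffding's inequality via the Chernoff bound and Hoeffding's lemma. The paper itself states this theorem in its ``Miscellaneous Results'' appendix without proof, treating it as a well-known auxiliary fact, so there is no paper proof to compare against; your argument is exactly what one would supply if asked to fill in the details.
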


\begin{lemma}[Lemma 4 in \cite{keriven2020convergence}, Lemma B.1 in \cite{maskey2022generalization}]
\label{lemma:kerivenLemma4}
Let $(\chi,d, \P) $ be a metric-probability space and
$W$ be an admissible graphon. Consider a metric-space signal  $f: \chi \to \mathbb{R}$ with $\|f\|_\infty < \infty$. Suppose that $\X = \{X_1, \ldots, X_N\}$ are drawn i.i.d. from $\chi$ via $\P$ and let $p \in (0,1)$. Then, with probability at least $1-p$, we have 
\begin{equation*}
\begin{aligned}
& \left\|
\frac{1}{N} \sum_{i=1}^N W(\cdot,X_i)f(X_i) - \int_\chi W(\cdot, x) f(x) d\P(x)\right\|_\infty
 \\ & \leq  
\frac{\|f\|_\infty \Big(\zeta \cl (\sqrt{\log (C_\chi)} +  \sqrt{D_\chi}) + (\sqrt{2}\cmax+\zeta \cl) \sqrt{\log2/p} \Big) }{ \sqrt{N}},
\end{aligned}
\end{equation*}
where 
\begin{equation}
    \label{eq:zeta2}
    \zeta  := \frac{2}{\sqrt{2}}e\Big(\frac{2}{\ln(2)} +1 \Big)\frac{1}{\sqrt{\ln(2)}} C
\end{equation}
 and $C$ is the universal constant from Dudley's inequality (see  \cite[Theorem 8.1.6]{vershynin_2018}).
\end{lemma}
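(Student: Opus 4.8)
The plan is to recognize the left-hand side as the supremum over $x\in\chi$ of a centered empirical process, and to control it by combining Dudley's chaining inequality with a single-point concentration estimate. First I would introduce, for each $x\in\chi$, the centered random variable
\[
Z_N(x) := \frac{1}{N}\sum_{i=1}^N W(x,X_i) f(X_i) - \int_\chi W(x,y) f(y)\, d\mu(y),
\]
so that $\E[Z_N(x)] = 0$ and the quantity to bound is $\sup_{x\in\chi}|Z_N(x)|$. Fixing a reference point $x_0\in\chi$ and writing $\sup_x|Z_N(x)| \le |Z_N(x_0)| + \sup_x|Z_N(x)-Z_N(x_0)|$ cleanly separates a single-point deviation from the oscillation of the process, and these two pieces will produce the two distinct $\sqrt{\log(2/p)}$ contributions in the stated bound.

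For the oscillation term I would verify that $(Z_N(x))_{x\in\chi}$ is a sub-Gaussian process with respect to a rescaling of the metric $d$. For fixed $x,x'$, the difference $Z_N(x)-Z_N(x')$ is a centered average of the i.i.d.\ variables $(W(x,X_i)-W(x',X_i))f(X_i)$, each of range at most $2\cl\, d(x,x')\,\|f\|_\infty$ by the first-variable Lipschitz assumption (\Cref{ass:graphon11}) and $\|f\|_\infty<\infty$. Hoeffding's lemma then yields a sub-Gaussian increment bound, so the process has $\psi_2$-increments governed by $\rho(x,x') = c\,\cl\,\|f\|_\infty\, d(x,x')/\sqrt{N}$ for an explicit absolute $c$; the factor $1/\sqrt{\ln 2}$ appearing in $\zeta$ is exactly the constant converting a Hoeffding variance proxy into a $\psi_2$-norm. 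I would then invoke the tail form of Dudley's inequality (\cite[Theorem 8.1.6]{vershynin_2018}): with probability at least $1-2e^{-u^2}$,
\[
\sup_{x\in\chi}|Z_N(x)-Z_N(x_0)| \le C\Bigl(\int_0^{\mathrm{diam}(\chi,\rho)}\sqrt{\log\mathcal C(\chi,\rho;\varepsilon)}\,d\varepsilon + \mathrm{diam}(\chi,\rho)\cdot u\Bigr).
\]

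The admissibility hypotheses enter in evaluating the entropy integral. Since $\rho$ is a scalar multiple of $d$, a $\rho$-ball of radius $\varepsilon$ is a $d$-ball of radius $\varepsilon\sqrt{N}/(c\,\cl\,\|f\|_\infty)$, so \Cref{ass:graphon1} gives $\log\mathcal C(\chi,\rho;\varepsilon)\le \log C_\chi + D_\chi\log\bigl(c\,\cl\,\|f\|_\infty/(\varepsilon\sqrt N)\bigr)$. Capping the integration limit using $\mathrm{diam}(\chi)\le 1$ (\Cref{ass:diamChi}), applying $\sqrt{a+b}\le\sqrt a+\sqrt b$, and using $\int_0^1\sqrt{\log(1/\varepsilon)}\,d\varepsilon = \tfrac{\sqrt\pi}{2}$, I would bound the integral by a constant multiple of $\frac{\cl\|f\|_\infty}{\sqrt N}\bigl(\sqrt{\log C_\chi}+\sqrt{D_\chi}\bigr)$, which is the first summand in the claim. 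The diameter term contributes $\mathrm{diam}(\chi,\rho)\cdot u\le \frac{c\,\cl\|f\|_\infty}{\sqrt N}u$, i.e.\ the $\zeta\cl\|f\|_\infty\sqrt{\log(2/p)}$ piece upon setting $u=\sqrt{\log(2/p)}$. Separately, the reference term $|Z_N(x_0)|$ is controlled directly by Hoeffding's inequality (\Cref{thm:Hoeffdings}): the summands $W(x_0,X_i)f(X_i)$ have range $2\cmax\|f\|_\infty$, giving $|Z_N(x_0)|\le \sqrt2\,\cmax\|f\|_\infty\sqrt{\log(2/p)}/\sqrt N$ with high probability, which is the $\sqrt2\,\cmax$ contribution. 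Intersecting the Dudley event and the Hoeffding event (with the failure budget allocated across the two) and collecting all absolute constants into $\zeta$ yields the asserted inequality.

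The main obstacle is the constant bookkeeping: producing the precise closed form $\zeta = \tfrac{2}{\sqrt2}e\bigl(\tfrac{2}{\ln 2}+1\bigr)\tfrac{1}{\sqrt{\ln 2}}C$ requires carrying the variance-to-$\psi_2$ conversion, the dyadic discretization implicit in the chaining argument, and the explicit entropy-integral evaluation all the way through, rather than hiding them in an unspecified universal constant; matching the single $\sqrt{\log(2/p)}$ factor shared by both terms also demands care in how the failure probability is split between the two events. A secondary, routine point is measurability of $\sup_x|Z_N(x)|$ and the application of Dudley's bound to a separable version of the process, which is justified because $\chi$ is compact (\Cref{ass:graphon1}) and the sample paths $x\mapsto Z_N(x)$ are almost surely Lipschitz in $x$ by \Cref{ass:graphon11}.
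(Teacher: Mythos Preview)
The paper does not supply its own proof of this lemma: it is stated in the miscellaneous results section with a citation to \cite{keriven2020convergence} and \cite{maskey2022generalization}, and is used as a black box throughout. So there is no in-paper argument to compare against.

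Your outline is the standard one and is correct. The decomposition into a reference-point term controlled by Hoeffding (producing the $\sqrt{2}\,\cmax$ contribution) and an oscillation term controlled by the tail form of Dudley's inequality on the sub-Gaussian process $(Z_N(x))_{x\in\chi}$ (producing the $\zeta\cl$ contributions) is exactly how the cited proofs proceed. Your identification of where each piece of the admissibility hypothesis enters---\Cref{ass:graphon11} for the increment bound, \Cref{ass:graphon1} for the entropy integral, \Cref{ass:diamChi} for the integration limit---is accurate, and your remark that the $1/\sqrt{\ln 2}$ in $\zeta$ arises from the Hoeffding-to-$\psi_2$ conversion is on point. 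The only place you should be slightly more careful is the failure-probability split: allocating $p/2$ to each event naively yields $\sqrt{\log(4/p)}$ rather than $\sqrt{\log(2/p)}$, but since $\sqrt{\log(4/p)}\le\sqrt{\log 2}+\sqrt{\log(2/p)}$ the extra $\sqrt{\log 2}$ is absorbed into the constant prefactors, which is consistent with the form of $\zeta$. This is precisely the bookkeeping you already flagged as the main obstacle.
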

\begin{lemma}[Lemma B.2 in \cite{maskey2022generalization}]
\label{lemma:LowerBoundDegree}
Let $(\chi,d, \P) $ be a metric-probability space and
$W$ be an admissible graphon.
Suppose that $\X = \{X_1, \ldots, X_N\}$ are drawn i.i.d. from $\chi$ via $\P$  and let $p \in (0,1)$.
Let
\begin{equation*}
\sqrt{N} \geq  2 \Big(\zeta   \frac{\cl}{\cmin} \big(\sqrt{\log (C_\chi)} +  \sqrt{D_\chi}\big) +
    \frac{\sqrt{2} \cmax + \zeta \cl}{ \cmin } \sqrt{\log 2/p}\Big),
\end{equation*}
where $\zeta$ is defined in \Cref{eq:zeta}. Then, with probability at least $1-p$ the following two inequalities hold: For every $x\in\chi$,
\begin{equation}
    \label{eq:d_XLowerBound}
\mathrm{d}_X( x) \geq \frac{\cmin}{2}
\end{equation}
and 
\begin{equation}
\label{eq:lemmab1-1}
     \begin{aligned}
& \left\|
\frac{1}{N} \sum_{i=1}^N W(\cdot,X_i)  - \int_\chi W(\cdot, x)   d\P(x)\right\|_\infty
 \\ & \leq  
\frac{  \Big(\zeta \cl (\sqrt{\log (C_\chi)} +  \sqrt{D_\chi}) + (\sqrt{2}\cmax+\zeta \cl) \sqrt{\log2/p} \Big) }{ \sqrt{N}}.
\end{aligned}
\end{equation}
\end{lemma}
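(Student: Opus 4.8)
The plan is to read off the second conclusion \eqref{eq:lemmab1-1} as the special case $f\equiv 1$ of \Cref{lemma:kerivenLemma4}, and then to extract the degree lower bound \eqref{eq:d_XLowerBound} from it by a one-line triangle-inequality estimate combined with the standing hypothesis on $N$.

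First I would apply \Cref{lemma:kerivenLemma4} with the constant signal $f\equiv 1$ on $\chi$, so that $\|f\|_\infty=1$; its hypotheses only require that $W$ be admissible, that $\X=\{X_1,\dots,X_N\}$ be drawn i.i.d.\ from $\mu$, and that $p\in(0,1)$, all of which hold here. This produces an event $\mathcal{E}$ with $\mu^N(\mathcal{E})\ge 1-p$ on which
\[
\Big\|\tfrac1N\sum_{i=1}^N W(\cdot,X_i)-\int_\chi W(\cdot,x)\,d\P(x)\Big\|_\infty \le \rho_N,
\qquad
\rho_N:=\frac{\zeta\cl\big(\sqrt{\log C_\chi}+\sqrt{D_\chi}\big)+\big(\sqrt{2}\cmax+\zeta\cl\big)\sqrt{\log 2/p}}{\sqrt{N}},
\]
which is exactly \eqref{eq:lemmab1-1}; the supremum over the possibly non-compact index $x\in\chi$ and the sampling of $W(x,\cdot)$ at the random points $X_i$ are already handled inside that lemma via its chaining/Dudley argument. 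Still on $\mathcal{E}$, for an arbitrary $x\in\chi$ the reverse triangle inequality gives $\mathrm{d}_{\X}(x)=\tfrac1N\sum_{i=1}^N W(x,X_i)\ge \mathrm{d}_W(x)-\rho_N\ge\cmin-\rho_N$, the last step using the lower bound $\mathrm{d}_W(\cdot)\ge\cmin$ built into admissibility (\Cref{ass:graphon}). The hypothesis on $N$ is precisely the statement $\rho_N\le\cmin/2$ — the factor $2$ and the constant $\zeta$ from \eqref{eq:zeta} are tuned exactly for this — so $\mathrm{d}_{\X}(x)\ge\cmin-\cmin/2=\cmin/2$, which is \eqref{eq:d_XLowerBound}. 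Both conclusions thus hold on the single event $\mathcal{E}$ of probability at least $1-p$.

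I do not expect a genuine obstacle: the real analytic content — uniform-in-$x$ concentration of the Monte Carlo degree estimate at the dimension-free $N^{-1/2}$ rate — is entirely packaged in \Cref{lemma:kerivenLemma4}. The only care needed is bookkeeping: checking that substituting $\|f\|_\infty=1$ reproduces the constants of \eqref{eq:lemmab1-1} verbatim, and that the displayed lower bound on $\sqrt N$ indeed yields $\rho_N\le\cmin/2$ rather than some other multiple of $\cmin$.
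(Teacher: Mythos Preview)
Your proposal is correct and is precisely the intended argument: the paper does not give its own proof here but simply imports the result from \cite{maskey2022generalization}, and the derivation there is exactly what you describe---specialize \Cref{lemma:kerivenLemma4} to $f\equiv 1$ to obtain \eqref{eq:lemmab1-1}, then combine with $\mathrm{d}_W\ge\cmin$ and the assumed lower bound on $\sqrt{N}$ (which is equivalent to $\rho_N\le\cmin/2$) to deduce \eqref{eq:d_XLowerBound}.
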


\begin{lemma}
 \label{lemma:RecRecGen}
 Let $(\eta^{(l)})_{l=0}^T$ be a sequence of real numbers satisfying $\eta^{(l+1)} \leq a^{(l+1)}\eta^{(l)} + b^{(l+1)}$ for $l = 0,\ldots, T-1$,  for some real numbers $a^{(l)}, b^{(l)}$, $l = 1, \ldots, T$. Then
 \[
 \eta^{(T)} \leq \sum_{l=1}^T b^l\prod_{l' = l+1}^Ta^{(l')} + \eta^{(0)}\prod_{l=1}^Ta^{(l)},
 \]
 where we define the product $\prod_{T+1}^{T}$ as $1$.
 \end{lemma}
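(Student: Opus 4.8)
The plan is a straightforward induction on $T$, unrolling the one-step recurrence $\eta^{(l+1)} \le a^{(l+1)}\eta^{(l)} + b^{(l+1)}$ from the top index down to $0$ and collecting the telescoped products; throughout, an empty product $\prod_{l'=l+1}^{T}a^{(l')}$ with $l+1>T$ is read as $1$ and an empty sum as $0$, as stipulated in the statement.

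For the base case $T=0$, the asserted bound is the tautology $\eta^{(0)}\le\eta^{(0)}$ (and for $T=1$ it is literally the hypothesis $\eta^{(1)}\le a^{(1)}\eta^{(0)}+b^{(1)}$). For the inductive step, suppose the bound holds for chains of length $T$, i.e.\ $\eta^{(T)}\le \sum_{l=1}^{T} b^{(l)}\prod_{l'=l+1}^{T}a^{(l')} + \eta^{(0)}\prod_{l=1}^{T}a^{(l)}$. Applying the recurrence once at level $T$ and then substituting this hypothesis gives
\[
\eta^{(T+1)} \le a^{(T+1)}\Big(\sum_{l=1}^{T} b^{(l)}\prod_{l'=l+1}^{T}a^{(l')} + \eta^{(0)}\prod_{l=1}^{T}a^{(l)}\Big) + b^{(T+1)}.
\]
Distributing $a^{(T+1)}$ turns each $a^{(T+1)}b^{(l)}\prod_{l'=l+1}^{T}a^{(l')}$ into $b^{(l)}\prod_{l'=l+1}^{T+1}a^{(l')}$ and the last term into $\eta^{(0)}\prod_{l=1}^{T+1}a^{(l)}$, while $b^{(T+1)}=b^{(T+1)}\prod_{l'=T+2}^{T+1}a^{(l')}$ is precisely the missing $l=T+1$ summand; reindexing the finite sum yields the claim for $T+1$ and closes the induction.

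The one point that warrants care — and the step I would flag as the ``obstacle'', modest as it is — is that distributing $a^{(T+1)}$ through an inequality preserves its direction only when $a^{(T+1)}\ge 0$. In every place the lemma is invoked here the coefficients are products of Lipschitz constants (e.g.\ $a^{(l)}=K^{(l)}=L_{\Psi^{(l)}}\max\{1,L_{\Phi^{(l)}}\}\ge 0$ in \Cref{lemma:C5}, and similarly for the norm and Lipschitz-constant recursions of \Cref{lemma:RecRelNorm} and \Cref{lemma:lip mpnn output}), so the hypothesis is automatically met; I would therefore either add nonnegativity of the $a^{(l)}$ to the statement or simply remark that it holds in all applications. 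Beyond this, the argument is pure bookkeeping of empty products and a reindexing of a finite sum, so no genuine difficulty remains.
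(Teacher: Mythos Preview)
Your proof is correct and is exactly the standard induction argument one would expect; the paper itself states this lemma without proof in its miscellaneous results section, so there is nothing to compare against. Your observation about the sign of the $a^{(l)}$ is well taken: as literally stated (``some real numbers''), the lemma is false without $a^{(l)}\ge 0$ (e.g.\ take $T=2$, $a^{(1)}=1$, $a^{(2)}=-1$, all $b^{(l)}=0$, $\eta^{(0)}=0$, $\eta^{(1)}=-10$, $\eta^{(2)}=10$), and you are right that every invocation in the paper uses nonnegative multipliers built from Lipschitz constants, so the fix is cosmetic.
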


\section{Probability Measure of Graph Datasets}
\label{sec:appendix_prob_measure}
In this section, we establish rigorously the probability measure on the graph-signal space $\mathcal{S}^F$ that we consider throughout this work. To facilitate this, we initially define the spaces $\mathcal{P}_j^N$ for a class $j = 1, \ldots, \Gamma$ and $N \in \mathbb{N}$ as
\begin{equation*}
    \mathcal{P}_j^N = \{0,1\}^{N\times N}\times (\chi^j)^N\times \Big(L^{\infty}((\chi^j)^2)\times L^{\infty}(\chi^j)\Big).
\end{equation*} 
and subsequently denote
\begin{equation*}
  \mathcal{P} =  \bigcup_{N \in \mathbb{N}} \bigcup_{j=1}^\Gamma  \, \mathcal{P}_j^N
\end{equation*}

For every fixed $N \in \mathbb{N}$ and class $j=1, \ldots, \Gamma$, we  define the probability measure for $G \in \{0,1\}^{N \times N}$ conditioned on the sample $\X\in(\chi^j)^N$ and noise $(V^j, g^j)\in L^{\infty}((\chi^j)^2)\times L^{\infty}(\chi^j)$ as
\begin{equation*}
  \begin{aligned}
    & \mu_{\{0,1\}^{N \times N}}( \{G\} \, | \, \X, (V^j, g^j)) \\
  =  &  \prod_{(i,k) \in \mathcal{E}} \left(W^j(X_i, X_k) + V^j(X_i,X_k)\right) \prod_{(i,k) \not\in \mathcal{E}}\left(1 -\left(W^j(X_i, X_k) + V^j(X_i,X_k)\right)\right).
\end{aligned}
\end{equation*}
For a measurable subset $A \subset \mathcal{P}^N_j$, we define the measure $\mu_{\mathcal{P}^N_j}$ by
\[\mu_{\mathcal{P}^N_j}(A) = \int_{L^{\infty}((\chi^j)^2)\times L^{\infty}(\chi^j)} \int_{(\chi^j)^N} \mu_{\{0,1\}^{N \times N}}( A_{\X,V^j,g^j} \, | \, \X, (V^j, g^j)) 
 (d\mu^j)^N(\X)d\sigma^j(V^j,g^j),\]
 where $A_{\X,V^j,g^j} := A \cap (\{0,1\}^{N \times N} \times \{\X\} \times \{(V^j,g^j)\})$.
We then define a measure $\mu_{\mathcal{P}}$ for $S \subset \mathcal{P}$ via
\begin{equation*}
  \begin{aligned}
    & \mu_{\mathcal{P}}(S)    = \sum_{j=1}^\Gamma \gamma_j  \sum_{N \in \mathbb{N}} \nu(N)    \mu_{\mathcal{P}^N_j}( S^N_j),
\end{aligned}
\end{equation*}
where $S^N_j$ is the restriction of $S$ to $\mathcal{P}^N_j$. 
Lastly, we consider the mapping $\mathrm{map}: \mathcal{P} \to \mathcal{S}^F$ via
\begin{equation*}
\begin{aligned}
     \mathrm{map}|_{\mathcal{P}^N_j}: \, & \mathcal{P}^N_j \to \mathcal{S}^{N \times F} \\
    (G, \X, (V,g)) &  \mapsto (G, f^j(\X) + g^j(\X) ),
\end{aligned}
\end{equation*}
and define $\mu$ as the pushforward measure of $\mu_{\mathcal{P}}$ via $\mathrm{map}$. 

\section{Details on Numerical Experiments}
\label{sec:appendix_exp}
In this section, we provide details for the numerical experiments from \Cref{sec:experiments}. We note that the experiments in \Cref{sec:experiments} largely follow the numerical experiments described in Section 4 by \cite{maskey2022generalization}, with additional details, specifically regarding the computation of the bounds, in \citep[Appendix D.2]{maskey2022generalization}.

\subsection{Dataset}
We create four synthetic datasets of random graphs from two different random graph models, each with varying sparsity.
The domains of the graphons, is taken as the Euclidean space $[0, 1]$. We consider Erdös-Rényi graphs with edge probably $0.4$ with constant signal, represented by $\{W_1, f_1\}$
with $W_1(x, y) = 0.4$ and $f_1(x) = 0.5$. We also consider a smooth version of a stochastic block model,
represented by $\{W_2, f_2\}$ with $W_2(x, y) = \sin(2\pi x) sin(2\pi y)/2\pi+0.25$ and $f_2(x) = \sin(x)/2$. For each sparsity level $\alpha=\{0,0.1,0.2,0.3\}$, we create one dataset consisting of $50K$ graphs of size $50$ from each RGSM. We split each dataset to 90\% training examples and 10\% test. Hence, the relevant constants for our generalization bound in \Cref{thm:main_gen_bound_deformed_graphon} are given by $m=90.000, N=50, \Gamma=2, \varepsilon=0, \|W\|_\infty=0.41, L_W = 0.5, \|f\|_\infty = 0.5, L_f = 0.5$ and $\cmin=0.25$. Only the sparsity level $\alpha$ varies between the datasets, taking values in $\{0,0.1,0.2,0.3\}$.  

\subsection{Model Details}
For our network, we choose MPNNs with GraphSage layers \citep{hamilton2017inductive}. We implement this with Pytorch Geometric \citep{Fey/Lenssen/2019}. We consider MPNNs with $1,2$ and $3$ layers. 
Given a node $u$, GraphSage updates node features according to 
\begin{equation*}
    \mathbf{f}^{(t)}_u  = \mathbf{W}^{(1)}\mathbf{f}^{(t-1)}_u + \mathbf{W}^{(2)} \mathbf{AGG}\left(\left\{\left\{ \mathbf{f}^{(t-1)}_v \right\}\right\}_{v \in \mathcal{N}(u)}\right),
\end{equation*}
where $\mathbf{W}^{(1)} \in \mathbb{R}^{128\times128}$, $\mathbf{W}^{(1)} \in \mathbb{R}^{128\times1}$ and $\mathbf{W}^{(2)}, \mathbf{W}^{(3)} \in \mathbb{R}^{128\times128}$. The aggregation $\mathbf{AGG}$ given by mean aggregation or normalized sum aggregation.
The message functions are then defined by
\begin{equation*}
\phi^{(t)}(\mathbf{f}^{(t-1)}_u, \mathbf{f}^{(t-1)}_v) = \mathbf{f}^{(t-1)}_v
\end{equation*}
Finally, the update functions are given by
\begin{equation*}
\psi^{(t)}(\mathbf{f}^{(t-1)}_u, \mathbf{m}^{(t-1)}) = \mathbf{W}^{(1)}\mathbf{f}^{(t-1)}_u + \mathbf{W}^{(2)}\mathbf{m}^{(t-1)},
\end{equation*}

We then consider a global mean pooling layer, and apply a last linear layer $\mathbf{Q} \in \mathbb{R}^{2\times128}$ (including bias) with input dimension 128 and output dimension 2. This last linear layer is seen as part of the loss function in the analysis, and contributes to the generalization bound via the Lipschitz constant and infinity norm of the loss, as seen in Theorem 3.3.

\subsection{Experimental Setup}
The loss is given by soft-max composed with cross-entropy. We
consider Adam \citep{kingma2017adam} with learning rate $\mathrm{lr} = 0.01$. For experiments including weight decay, we use an
$l_2$-regularization on the weights with factor $0.1$ for mean aggregation, and factor $0.05$ for normalized sum aggregation. We train for $1$ epoch. The batch size is $64$. We consider $1,
2$ and $3$ layers.

 \end{document}